\newtheorem{theorem}{Theorem}
\newtheorem{lemma}{Lemma}
\newtheorem{definition}{Definition}
\newtheorem{corollary}{Corollary}
\def\ie{\emph{i.e.}}
\def\SM{Appendix}
\newtheorem{proposition}{Proposition}
\newcommand{\REVISE}[1]{#1}
\def\eqref#1{equation~\ref{#1}}
\def\1{\bm{1}}
\def\rx{{\textnormal{x}}}
\def\rvx{{\mathbf{x}}}
\def\vb{{\bm{b}}}
\def\vh{{\bm{h}}}
\def\vp{{\bm{p}}}
\def\vq{{\bm{q}}}
\def\vu{{\bm{u}}}
\def\vv{{\bm{v}}}
\def\vw{{\bm{w}}}
\def\vy{{\bm{y}}}
\def\vz{{\bm{z}}}
\def\mC{{\bm{C}}}
\def\mH{{\bm{H}}}
\def\mI{{\bm{I}}}
\def\mM{{\bm{M}}}
\def\mN{{\bm{N}}}
\def\mO{{\bm{O}}}
\def\mP{{\bm{P}}}
\def\mQ{{\bm{Q}}}
\def\mU{{\bm{U}}}
\def\mV{{\bm{V}}}
\def\mW{{\bm{W}}}
\def\mX{{\bm{X}}}
\def\mY{{\bm{Y}}}
\DeclareMathAlphabet{\mathsfit}{\encodingdefault}{\sfdefault}{m}{sl}
\SetMathAlphabet{\mathsfit}{bold}{\encodingdefault}{\sfdefault}{bx}{n}
\def\sB{{\mathbb{B}}}
\def\sD{{\mathbb{D}}}
\def\sI{{\mathbb{I}}}
\def\sJ{{\mathbb{J}}}
\def\sN{{\mathbb{N}}}
\def\sP{{\mathbb{P}}}
\def\sS{{\mathbb{S}}}
\newcommand{\R}{\mathbb{R}}
\def\ie{\emph{i.e.}}
\newcommand{\X}[1]{\mX_{#1}}
\newcommand{\Eqn}{Eqn.}
\newcommand{\eps}[0]{\varepsilon}
\newcommand{\pf}[2]{\frac{\partial #1}{\partial #2}}
\newcommand{\mat}[1]{\begin{bmatrix}#1\end{bmatrix}}
\newcommand{\xn}[1]{{x}_{#1}}
\newcommand{\hatxn}[1]{\hat{x}_{#1}}
\icmltitlerunning{On the Nonlinearity of Layer Normalization}
\begin{document}
	
	\twocolumn[
	\icmltitle{On the Nonlinearity of Layer Normalization}
	
	
	\icmlsetsymbol{equal}{*}
	
	\begin{icmlauthorlist}
		\icmlauthor{Yunhao Ni}{buaa}
		\icmlauthor{Yuxin Guo}{buaa}
        \icmlauthor{Junlong Jia}{buaa}
        \icmlauthor{Lei Huang*}{buaa}
	\end{icmlauthorlist}
	
	\icmlaffiliation{buaa}{SKLCCSE, Institute of Artificial Intelligence,  Beihang University, Beijing, China}
	
	\icmlcorrespondingauthor{Lei Huang}{huangleiAI@buaa.edu.cn}
	
	\icmlkeywords{Machine Learning, ICML}
	
	\vskip 0.3in
	]
	
	
	
	\printAffiliationsAndNotice{}  
		\begin{abstract}
	Layer normalization (LN) is a ubiquitous technique in deep learning but our theoretical understanding to it remains elusive. This paper investigates a new theoretical direction for LN, regarding to its nonlinearity and representation capacity. We investigate the representation capacity of a network with layerwise composition of linear and LN transformations, referred to as LN-Net. We theoretically show that, given $m$ samples with any label assignment, an LN-Net with only 3 neurons in each layer and $O(m)$ LN layers can correctly classify them. We further show the lower bound of the VC dimension of
    an LN-Net. The nonlinearity of LN can be amplified by group partition, which is also theoretically demonstrated with mild assumption and empirically supported by our experiments. Based on our analyses, we consider to design neural architecture by exploiting and amplifying the nonlinearity of LN, and the effectiveness is supported by our experiments. 
	\end{abstract}
	\vspace{-0.3in}
		\section{Introduction}
	\label{introduction}
	 Layer normalization (LN)~\cite{2016_LN_Ba} is a ubiquitous technique in deep learning, enabling varies neural networks to train effectively. It was initially proposed to address the train-inference inconsistency problem of Batch Normalization (BN)~\cite{2015_ICML_Ioffe} applied in the recurrent neural networks for Natural Language Processing (NLP) tasks. It then became the key component of Transformer~\cite{2017_NIPS_Vaswani} and its variants~\cite{2019_ACL_Dai,2020_ICML_Xiong,2021_ICLR_Dosovitskiy},  spreading from NLP~\cite{Radford2018GPT1,2019_ACL_Devlin,2020_JMLR_Raffel} to Computer Vision (CV)~\cite{2021_ICLR_Dosovitskiy,2020_ECCV_Carion,2022_CVPR_Cheng} communities.
	  LN has got its firm position~\cite{2023_TPAMI_Huang} in the evolution of neural architectures and is currently a basic layer in almost all the foundation models~\cite{Brown2020GPT3,2022_NIPS_Flamingo,2023_ICCV_Kirillov}.  
	 
	 While LN is extensively used in practice, our theoretical understanding to it remains elusive.  
	 One main theoretical work for LN is its scale-invariant property, which is initially discussed in~\cite{2016_LN_Ba} to illustrate its ability in stabilizing training and is further extended in~\cite{2018_NeurIPS_Hoffer,2019_ICLR_Arora,2020_ICLR_Li} to consider its potential affects in optimization dynamics. 
	 Different from the previous work focusing on theoretical analyses of LN from the perspective of optimization, this paper investigates a new theoretical direction for LN, regarding to its nonlinearity and representation capacity. 

       We mathematically demonstrate that LN is a nonlinear transformation. We highlight that LN might be a nonlinear transformation by intuition, but there is no work, to our best knowledge, demonstrating it.  Our demonstration is based on the defined lower bound named LSSR (Definition~\ref{def:LSSR}). The LSSR will not be broken under any linear transformation by definition, but we show that a linear neural network combined with LN can break the LSSR. Therefore, LN has nonlinearity. We also show that an LN-Net, which is a layerwise composition of linear and LN transformations, has nonlinearity.
       

       One interesting question is that how powerful the nonlinear of an LN-Net is in theory. 
       We theoretically show that, given $m$ samples with any label assignment, an LN-Net with only 3 neurons in each layer and $O(m)$ LN layers can correctly classify them. We further show  the lower bound of the VC dimension of an LN-Net. In particular, given an LN-Net with width only 3 neurons in each layer and $L$ LN layers, its VC dimension is lower bounded by $L+2$. 
  These results show that LN-Net has great representation capacity in theory, implying the possibility that a network with linear and LN layer only can work well in practice.
 

       We further investigate how to amplify and exploit the nonlinearity of LN. We find that Group based LN (LN-G)---which divides neurons of a layer into groups and perform LN in each group in parallel---has stronger nonlinearity than the naive LN countpart. This is also theoretically demonstrated with mild assumption and empirically supported by our comprehensive experiments. 
       We also consider practical scenario, where we replace LN with LN-G on Transformer and ViT, since we believe the amplified nonlinearity can benefits the models. The preliminary results show the potentiality of this design in neural architecture.

 	\vspace{-0.1in}
 \section{Preliminary and Notation}
 \label{sec_relatedWork}
 We use a lowercase letter $x \in \R$ to denote a scalar,  boldface lowercase letter $\rvx \in \R^{d}$  for a vector and boldface uppercase letter for a matrix $\mX \in \R^{d \times m}$, where $\R$ is the set of real-valued numbers, and $d, m$ are positive integers.
 \paragraph{Neural Network.} Given the input $\rvx$, a classical neural network $f_{\theta}(\rvx)$ is typically represented as a layer-wise linear\footnote{We follow the convention in deep learning community, and do not differentiate between the linear and affine transformation.} and nonlinear transformation:
 	\begin{eqnarray}
 		\label{eqn:Linear}
 		\vh^{(l)}&= &\mW^{(l)} \rvx^{(l-1)}+\vb^{(l)},   \\
 		\label{eqn:Nonlinear}
 		\rvx^{(l)}&=& \phi(\vh^{(l)}), ~~~ l=1,..., L,
 	\end{eqnarray}
where $\mathbf{\theta}=\{ (\mW^{(l)}, \vb^{(l)}), l=1,\cdots,L \}$ are learnable parameters, $\rvx^{(0)} = \rvx$, $\mW^{(l)} \in \R^{d_{l} \times d_{l-1}}$, $\vb^{(l)}\in \R^{d_{l}}$ and $d_{l}$ indicates the number of neurons in the $l$-th layer. We  set  $\rvx^{(L)}=\vh^{(L)} $ as the output of the network $f_{\theta}(\rvx)$ to simplify denotations.
 A neural network without nonlinear transformation $\phi(\cdot)$ (Eqn.~\ref{eqn:Nonlinear}) is referred to as a \textit{linear neural network}, which is still a linear transformation in native. 

 \paragraph{Layer Normalization.} Layer Normalization (LN) is an essential layer in modern deep neural networks mainly for stabilizing training. Given a single sample of layer input $\rvx=[x^{(1)}, x^{(2)},\cdots, x^{(d)}] \in \mathbb{R}^{d}$ with $d$ neurons in a neural network, LN standardizes $\rvx$ within the neurons as~\footnote{LN usually uses extra learnable scale and shift parameters~\cite{2015_ICML_Ioffe}, and we omit them for simplifying discussion as they are affine transformation in native}:
 	\begin{equation}
 		\label{eqn:LN}
 		\hat{x}^{(j)}=LN(x^{(j)})= \frac{x^{(j)} - \mu}{\sigma}, ~~j=1,2, \cdots, d,
 	\end{equation}
where $\mu=\frac{1}{d}  \sum\limits_{i=1}^{d}  x^{(j)}$ and $\sigma = \sqrt{\frac{1}{d}   \sum\limits_{i=1}^{d} (x^{(j)}-\mu)^2} $ are the  mean and variance for each sample, respectively. The standardization operation can be viewed as a combination of centering and scaling operations.  Centering projects $ \rvx $ onto the hyperplane $ \{ \rvx\in\R^d: x^{(1)} + \cdots + x^{(d)} = 0 \}$, by $ \tilde\rvx = (\mI - \frac1d \bm{1}_d \bm{1}_d^\top) \rvx $.
 Scaling projects $ \tilde\rvx $ onto the sphere $ \{ \rvx\in\R^d: [x^{(1)}]^2 + \cdots + [x^{(d)}]^2=d \} $, by $ \hat\rvx = \sqrt{d}\tilde\rvx/\|\tilde\rvx\|_2 $. We thus also call scaling as Spherical Projection (SP), from the geometric perspective.  \REVISE{Note that SP is the  only operation for normalization in RMSNorm~\cite{2019_NIPS_Zhang}.}
    

    
\paragraph{Sum of Squares.} Sum of Squares (SS)~\cite{fisher1970statistical}  is a statistical concept that measures the variability or dispersion within a set of data.
Denote $ m $ samples from class $ c $ as $ \rvx_{c1}, \cdots, \rvx_{cm} \in \R^d $, represented as a matrix $ \mX_c = [\rvx_{c1}, \cdots, \rvx_{cm}] $, then SS of $ \mX_c $ is defined as 
	\begin{equation}
		SS(\mX_c) = \sum_{i=1}^m \left\Vert \rvx_{ci} - \bar{\rvx}_{c} \right\Vert^2,
	\end{equation}
	where $ \bar{\rvx}_{c} = (\rvx_{c1} + \cdots + \rvx_{cm}) / m $.

 	\section{The Existence of Nonlinearity in LN}
    \vspace{-0.05in}
	
	In this section, we define Sum of Squares Ratio (SSR) and its linear invariant lower bound named LSSR. We then show that LN can break the boundary of SSR and plays a role in nonlinear representation. 

	\vspace{-0.05in}
	\subsection{Linear Invariant Lower Bound}

	We take binary classification for simplifying discussion. Let $ \mX_c = [\rvx_{c1}, \cdots, \rvx_{cm}]$ represents $ m $ samples\footnote{We use the same number ($m$) of samples  in each class for simplifying notation, and our subsequent definition and conclusion are also apply to different number for different classes.} in $ \R^d $ from the corresponding class $ c \in \{1,2\} $, and $[\mX_1,\mX_2] \in \R^{d\times 2m}$ represents all the samples together.
	
	\begin{definition}{(SSR.)}
		\label{def:SSR}
		Given $SS([\mX_{1}, \mX_{2}]) \neq 0$, the Sum of Squares Ratio (SSR) between $ \mX_{1} $ and $ \mX_{2} $ is defined as
		\begin{equation}
			SSR(\mX_{1},\mX_{2}) = \frac{SS(\mX_{1}) + SS(\mX_{2})}{SS([\mX_{1}, \mX_{2}])}. 
		\end{equation}
	\end{definition}
	 It is easy to demonstrate that $ SSR(\mX_1,\mX_2) \in [0,1] $. 
	 SSR can be an indicator to show how easy the samples in the Euclidean space from different classes can be separated. I.e., the smaller SSR is, the more easily $ \mX_1 $ and $ \mX_2 $ are to be separated with Eulcidean distance as a measurement in most cases. Based on SSR, we further define its lower bound under any linear transformation as follows.
		\begin{definition}{(LSSR.)}
        \label{def:LSSR}
	    The Linear SSR (LSSR) between $ \mX_1 $ and $ \mX_2 $ is defined as
		\begin{equation}
			LSSR(\mX_1,\mX_2) = \inf_{\varphi\in \sD_{\varphi}(d)} SSR(\varphi(\mX_1),\varphi(\mX_2)),
		\end{equation}
		where $ \sD_\varphi(d) $ is the set of all linear functions defined on $ \R^d $.
	\end{definition}
	
	By definition, LSSR is the lower bound of SSR under any linear transformation. 	LSSR can be an indicator to show how easy the samples from different classes can be linearly separated. We provide illustrative examples in \textit{\SM~\ref{section:LSSR-illu}} for details. 
    In the following proposition, we show a linear neural network can not break LSSR.
	
	\begin{proposition}
		\label{prop:invariance}
		Given $ \mX_1,\mX_2 \in \R^{d_0\times m} $ and a linear neural network represented as $ \tilde{\varphi} = \varphi_1 \circ \cdots \circ \varphi_L $, where  $ \varphi_l : \R^{d_{l-1}} \to \R^{d_{l}}, (l = 1, \cdots, L) $ are all linear transformations as shown in Eqn.~\ref{eqn:Linear}, we have that
		\begin{equation}
			SSR(\tilde{\varphi} (\mX_1), \tilde{\varphi} (\mX_2)) \ge LSSR(\mX_1,\mX_2).
		\end{equation}
	\end{proposition}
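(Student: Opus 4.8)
The plan is to reduce the statement to the elementary fact that a composition of affine maps is again an affine map, after which the bound follows directly from the definition of $LSSR$. First I would observe that each layer $\varphi_l$ in Eqn.~\ref{eqn:Linear} has the form $\varphi_l(\rvx) = \mW^{(l)}\rvx + \vb^{(l)}$, so the composite $\tilde{\varphi} = \varphi_1 \circ \cdots \circ \varphi_L : \R^{d_0} \to \R^{d_L}$ can be written explicitly as $\tilde{\varphi}(\rvx) = \mW\rvx + \vb$, where $\mW = \mW^{(L)}\cdots\mW^{(1)}$ and $\vb$ is the accumulated bias obtained by unrolling the recursion. The precise bookkeeping of $\vb$ is routine; the only thing that matters is that $\tilde{\varphi}$ is itself a single linear (affine) transformation of $\R^{d_0}$, i.e.\ $\tilde{\varphi}\in\sD_\varphi(d_0)$. (It is also worth noting in passing that, since $SS$ is translation invariant, the bias term plays no role in any of the sums of squares involved.)

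Given this, the inequality is immediate: because $\tilde{\varphi}$ is one particular member of the set $\sD_\varphi(d_0)$ over which the infimum in Definition~\ref{def:LSSR} is taken,
\begin{equation}
SSR(\tilde{\varphi}(\mX_1), \tilde{\varphi}(\mX_2)) \;\ge\; \inf_{\varphi\in\sD_\varphi(d_0)} SSR(\varphi(\mX_1), \varphi(\mX_2)) \;=\; LSSR(\mX_1,\mX_2).
\end{equation}
So almost no computation is needed; the content of the proposition is entirely structural, namely that linear networks do not enlarge the class of reachable transformations beyond a single linear map.

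The one point that genuinely requires care is the well-definedness of the left-hand side, since $SSR$ is only defined when $SS([\tilde{\varphi}(\mX_1),\tilde{\varphi}(\mX_2)])\neq 0$. I would dispose of the degenerate case $SS([\tilde{\varphi}(\mX_1),\tilde{\varphi}(\mX_2)]) = 0$ separately: there all transformed samples collapse to a common point, hence $SS(\tilde{\varphi}(\mX_c)) = 0$ as well, and the asserted bound is either vacuous or handled by the same convention that governs such maps inside the infimum defining $LSSR$. In the non-degenerate case the one-line argument above applies verbatim. I expect this bookkeeping — ensuring the infimum in $LSSR$ ranges over a set that really contains $\tilde{\varphi}$ and on which $SSR$ is defined — to be the only (mild) obstacle; the mathematical core, closure of affine maps under composition, is trivial.
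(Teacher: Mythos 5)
Your proof is correct and is essentially identical to the paper's argument: the paper likewise notes that $\tilde\varphi$, being a composition of affine maps, lies in $\sD_\varphi(d_0)$, so the inequality follows immediately from the infimum in Definition~\ref{def:LSSR}. Your extra remark about the degenerate case $SS([\tilde\varphi(\mX_1),\tilde\varphi(\mX_2)])=0$ is a reasonable bit of care that the paper omits.
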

 
    Proposition \ref{prop:invariance} is easily proved by the definition of LSSR, since we have $ \tilde\varphi \in \sD_{\varphi}(d_0)$. Proposition \ref{prop:invariance} implies that the SSR will not break the lower bound if we use an arbitrary \textit{linear neural network} as a representation transformation over the samples. One interesting question is that whether a \textit{linear neural network} combined with LN can break the lower bound of SSR. If Yes, we can show that LN has nonlinearity.
    
	
	
	\subsection{Break the Lower Bound of SSR with LN}
	\label{section:decrease}
	Here, we focus on the \textit{linear neural network} combined with LN. To state more precisely, we denote LN-Net as follows. 
		\begin{definition}{(LN-Net.)}
		\label{def:LN-Net}
		The LN-Net $f_{\theta}(\rvx)$ is defined as layer-wise composition of linear and LN transformation:
	\begin{eqnarray}
	\vh^{(l)}&= &\mW^{(l)} \rvx^{(l-1)}+\vb^{(l)}, ~~ l=1,..., L,  \\ \nonumber
	\rvx^{(l)}&=& LN(\vh^{(l)}), ~~~ l=1,..., L-1,  \nonumber
	\end{eqnarray}
where $\mathbf{\theta}=\{ (\mW^{(l)}, \vb^{(l)}), l=1,...,L \}$ are learnable parameters, $\rvx^{(0)} = \rvx$ and $ LN(\cdot) $ denotes the LN operation. We set $\rvx^{(L)}=\vh^{(L)} $ as the output of the network $f_{\theta}(\rvx)$ to simplify denotations.
	\end{definition}

    We first provide a tractable method to calculate LSSR, stated by the following proposition.
    \begin{proposition}
		\label{prop:LSSR}
		Given $ \mX_1, \mX_2 \in \R^{d \times m} $, we denote $ \mM = \sum\limits_{c=1}^{2} \sum\limits_{i=1}^m (\rvx_{ci}-\bar{\rvx}_{c})(\rvx_{ci}-\bar{\rvx}_{c})^\top $, and $ \mN = \sum\limits_{c=1}^{2} \sum\limits_{i=1}^m (\rvx_{ci}-\bar{\rvx})(\rvx_{ci}-\bar{\rvx})^\top $, where $\bar\rvx = (\bar\rvx_1 + \bar\rvx_2)/2 $. Supposing that $ \mN $ is reversible, we have  
		\begin{equation}
			LSSR(\mX_1, \mX_2) = \lambda^*, 
		\end{equation}
		and correspondingly, 
		\begin{equation}
            \label{eqn:9}
			LSSR(\mX_1, \mX_2) = SSR((\vu^*)^\top\mX_1, (\vu^*)^\top\mX_2), 
		\end{equation}
	where  $ \lambda^* $ and  $\vu^* $ are the minimum eigenvalue and corresponding eigenvector of $ \mN^{-1} \mM $.
	\end{proposition}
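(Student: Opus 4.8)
The plan is to recognize $LSSR(\mX_1,\mX_2)$ as the smallest generalized eigenvalue of the matrix pencil $(\mM,\mN)$, by reducing the infimum over linear maps to a generalized Rayleigh quotient. First I would remove the affine and multi-output generality: the constant shift of an affine map $\varphi$ cancels in every centered difference — the mean of $\varphi(\rvx_{c1}),\dots,\varphi(\rvx_{cm})$ is $\mathbf{A}\bar\rvx_c$ where $\mathbf{A}$ is the linear part, and since the two classes contain equally many samples the grand mean of all $2m$ transformed points is $\mathbf{A}\bar\rvx$ — so we may take $\varphi(\rvx)=\mathbf{A}\rvx$ with $\mathbf{A}\in\R^{k\times d}$ for arbitrary $k\ge 1$. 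Expanding $\|\mathbf{A}\rvx_{ci}-\mathbf{A}\bar\rvx_c\|^2=\mathrm{tr}\big(\mathbf{A}(\rvx_{ci}-\bar\rvx_c)(\rvx_{ci}-\bar\rvx_c)^\top\mathbf{A}^\top\big)$ and summing over $i$ and $c$ gives $SS(\varphi(\mX_1))+SS(\varphi(\mX_2))=\mathrm{tr}(\mathbf{A}\mM\mathbf{A}^\top)$, and in the same way (now centering at $\mathbf{A}\bar\rvx$) $SS([\varphi(\mX_1),\varphi(\mX_2)])=\mathrm{tr}(\mathbf{A}\mN\mathbf{A}^\top)$. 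Therefore
\[
LSSR(\mX_1,\mX_2)=\inf_{\mathbf{A}\neq 0}\frac{\mathrm{tr}(\mathbf{A}\mM\mathbf{A}^\top)}{\mathrm{tr}(\mathbf{A}\mN\mathbf{A}^\top)}.
\]
Since $\mN$ is a sum of outer products and is assumed invertible it is positive definite, so writing $\mathrm{tr}(\mathbf{A}\mN\mathbf{A}^\top)$ as the sum of $\mathbf{a}^\top\mN\mathbf{a}$ over the rows $\mathbf{a}$ of $\mathbf{A}$ shows the denominator is strictly positive whenever $\mathbf{A}\neq 0$, so every quotient is well defined.

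The second step is to match a lower bound with an attaining choice. Let $\widetilde{\mM}:=\mN^{-1/2}\mM\mN^{-1/2}$; this matrix is symmetric positive semidefinite and is similar to $\mN^{-1}\mM$, hence shares its spectrum, which is therefore real and nonnegative. Let $\lambda^*$ be the least eigenvalue of $\widetilde{\mM}$ and $\mathbf{v}^*$ a corresponding real unit eigenvector. For any vector $\mathbf{a}$, put $\mathbf{v}=\mN^{1/2}\mathbf{a}$; the Rayleigh-quotient bound for $\widetilde{\mM}$ gives $\mathbf{a}^\top\mM\mathbf{a}=\mathbf{v}^\top\widetilde{\mM}\mathbf{v}\ge\lambda^*\|\mathbf{v}\|^2=\lambda^*\,\mathbf{a}^\top\mN\mathbf{a}$. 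Applying this to each row of $\mathbf{A}$ and summing yields $\mathrm{tr}(\mathbf{A}\mM\mathbf{A}^\top)\ge\lambda^*\,\mathrm{tr}(\mathbf{A}\mN\mathbf{A}^\top)$, so the infimum above is at least $\lambda^*$. Conversely, $\vu^*:=\mN^{-1/2}\mathbf{v}^*$ satisfies $\mM\vu^*=\lambda^*\mN\vu^*$, i.e. it is precisely the minimum-eigenvalue eigenvector of $\mN^{-1}\mM$ named in the statement, and the $1\times d$ choice $\mathbf{A}=(\vu^*)^\top$ makes the quotient equal exactly $(\vu^*)^\top\mM\vu^*/(\vu^*)^\top\mN\vu^*=\lambda^*$. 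Hence $LSSR(\mX_1,\mX_2)=\lambda^*$, and since the value is attained at $\varphi=(\vu^*)^\top$ we obtain Eqn.~(\ref{eqn:9}) as well.

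I do not expect a serious obstacle; the two points that need care are (i) that allowing the codomain dimension $k$ to exceed $1$ buys nothing over a single linear functional — which is exactly what the row-by-row inequality establishes, since no multi-output map can beat the optimal rank-one one — and (ii) that the generalized eigenproblem is genuinely real with non-vanishing denominators, both of which follow from $\mN\succ 0$ (the detour through $\widetilde{\mM}$ is what produces a bona fide real eigenvector $\vu^*=\mN^{-1/2}\mathbf{v}^*$). As a consistency check one can invoke the scatter decomposition $\mN=\mM+\mathbf{B}$ with $\mathbf{B}=m\sum_{c=1}^{2}(\bar\rvx_c-\bar\rvx)(\bar\rvx_c-\bar\rvx)^\top\succeq 0$: it forces the eigenvalues of $\widetilde{\mM}$, and hence $\lambda^*$, to lie in $[0,1]$, in agreement with $SSR\in[0,1]$.
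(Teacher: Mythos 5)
Your proof is correct, and it reaches the same destination as the paper's by a cleaner and more self-contained route. Both arguments share the first reduction (the bias cancels, and one may restrict to purely linear maps), and both ultimately identify $LSSR$ with the smallest generalized eigenvalue of the pencil $(\mM,\mN)$; the difference lies in how the two remaining steps are executed. The paper eigendecomposes $\mW^\top\mW=\mU\Lambda\mU^\top$ and shows that $SSR(\mW\mX_1,\mW\mX_2)$ is a convex combination $\sum_j w_j\,SSR(\vu_j^\top\mX_1,\vu_j^\top\mX_2)$ of directional SSRs (its Lemma~\ref{lemma:2}), then solves the resulting one-dimensional problem by a change of normalization plus Lagrange multipliers/KKT (its Lemmas~\ref{lemma:3} and~\ref{lemma:4}). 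You instead write both sums of squares as traces, $\mathrm{tr}(\mathbf{A}\mM\mathbf{A}^\top)$ and $\mathrm{tr}(\mathbf{A}\mN\mathbf{A}^\top)$, whiten by $\mN^{-1/2}$, and apply the ordinary Rayleigh-quotient bound row by row; this disposes of the ``multi-output maps buy nothing'' issue and the eigenvalue identification in one stroke, avoids the KKT machinery and the bijection lemma entirely, and has the side benefit of making explicit that the spectrum of $\mN^{-1}\mM$ is real and that a real eigenvector $\vu^*=\mN^{-1/2}\mathbf{v}^*$ exists --- a point the paper's Lagrange argument takes for granted. What the paper's convex-combination lemma buys in exchange is a sharper description of \emph{all} optimal $\mW$ (equality forces the spectral mass of $\mW^\top\mW$ onto the minimizing directions), which is what powers its Corollaries~\ref{prop:4} and~\ref{prop:5}; your row-wise inequality could recover this by tracking the equality cases, but you do not need it for the proposition itself. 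Your closing observation that $\mN=\mM+\mathbf{B}$ with $\mathbf{B}\succeq 0$ forces $\lambda^*\in[0,1]$ is a nice sanity check not present in the paper's proof.
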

    The proof of Proposition \ref{prop:LSSR} are shown in \textit{\SM~\ref{section:Prop2}}. 
    Based on Proposition \ref{prop:LSSR}, we further define $f_{SSR}(t)$ as
    \begin{equation}
    	f_{SSR}(t) = \begin{cases}
    		LSSR(\mX_1, \mX_2), &t=0, \\
    		SSR(\bar\psi(t;\mX_1), \bar\psi(t;\mX_2)), &t\ne0, 
    	\end{cases}
    \end{equation}
    where $ \bar\psi(t;\rvx_{ci}) =  \bm1^\top\bar\varphi(t;\rvx_{ci}) / \| \bar\varphi(t;\rvx_{ci}) \|_2 $, $ \bar\varphi(t;\rvx_{ci}) = [(\vu^*)^\top\rvx_{ci}t, 1]^\top $ and $ t\in \R $.
      We point out that $ f_{SSR}(t) $ is derivable at $ t=0 $, and $ f_{SSR}'(0) $ is only decided by $ \mX_1 $ and $ \mX_2 $, which is proved in \textit{\SM~\ref{section:LSSRbreak}}.

    Based on the definition of $ f_{SSR}(t) $, we show that LN-Net can decrease the LSSR as stated by the following theorem.
    	\begin{theorem}
		\label{thm:break}
		Let $ \psi = \varphi_1 \circ LN(\cdot) \circ \varphi_2 $, performing over the input $ \mX_1, \mX_2 \in \R^{d \times m} $. If $ f_{SSR}'(0) \ne 0 $ , we can always find suitable linear functions $ \varphi_1$ and $\varphi_2 $, such that
		\begin{equation}
			SSR(\psi(\mX_1), \psi(\mX_2)) < LSSR(\mX_1, \mX_2). 
		\end{equation}
	\end{theorem}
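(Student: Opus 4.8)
The plan is to separate the statement into a one-variable analytic fact and a short explicit construction. By definition $f_{SSR}(0)=LSSR(\mX_1,\mX_2)$, and (from the auxiliary claim established in \SM~\ref{section:LSSRbreak}) $f_{SSR}$ is differentiable at $t=0$; so the hypothesis $f_{SSR}'(0)\neq 0$ means $f_{SSR}$ is strictly monotone across $0$, and there is a small $t^*\neq 0$, with $\operatorname{sign}(t^*)=-\operatorname{sign}(f_{SSR}'(0))$, for which $f_{SSR}(t^*)<f_{SSR}(0)=LSSR(\mX_1,\mX_2)$. It then suffices to produce linear maps $\varphi_1,\varphi_2$ such that $\psi=\varphi_1\circ LN(\cdot)\circ\varphi_2$ realizes $\psi(\rvx_{ci})=\bar\psi(t^*;\rvx_{ci})$ for every $c,i$, since then $SSR(\psi(\mX_1),\psi(\mX_2))=SSR(\bar\psi(t^*;\mX_1),\bar\psi(t^*;\mX_2))=f_{SSR}(t^*)$.

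For the construction, write $s_{ci}=(\vu^*)^\top\rvx_{ci}$ with $\vu^*$ the minimal eigenvector from Proposition~\ref{prop:LSSR}, so the target is $\bar\psi(t^*;\rvx_{ci})=(t^*s_{ci}+1)/\sqrt{(t^*)^2 s_{ci}^2+1}$. I would take $\varphi_2:\R^d\to\R^3$ to be the affine map $\varphi_2(\rvx)=t^*\big((\vu^*)^\top\rvx\big)\va+\vc$ with $\va=(1,-1,0)^\top$ and $\vc=(-1,-1,2)^\top/\sqrt{3}$. These are chosen so that $\bm{1}^\top\va=\bm{1}^\top\vc=0$, $\va^\top\vc=0$, and $\|\va\|_2^2=\|\vc\|_2^2=2$: the first condition forces the image of $\varphi_2$ into the zero-mean hyperplane of $\R^3$, so the centering step of $LN$ acts as the identity there; the other two give $\|\varphi_2(\rvx_{ci})\|_2^2=2\big((t^*)^2 s_{ci}^2+1\big)$, hence
\[
LN(\varphi_2(\rvx_{ci}))=\sqrt{3}\,\frac{\varphi_2(\rvx_{ci})}{\|\varphi_2(\rvx_{ci})\|_2}=\sqrt{\tfrac{3}{2}}\;\frac{t^*s_{ci}\,\va+\vc}{\sqrt{(t^*)^2 s_{ci}^2+1}}.
\]
Since $\va,\vc$ are linearly independent in $\R^3$, there is a linear functional $\varphi_1:\R^3\to\R$ with $\varphi_1(\va)=\varphi_1(\vc)=\sqrt{2/3}$, and for that choice $\varphi_1(LN(\varphi_2(\rvx_{ci})))=(t^*s_{ci}+1)/\sqrt{(t^*)^2 s_{ci}^2+1}=\bar\psi(t^*;\rvx_{ci})$, exactly as needed. (If a wider output is preferred, pad $\varphi_1$ with zero rows; $SSR$ is unchanged.)

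Chaining the two pieces gives $SSR(\psi(\mX_1),\psi(\mX_2))=f_{SSR}(t^*)<LSSR(\mX_1,\mX_2)$. The only genuine obstacle is the middle construction: with a single linear feature, $LN$ in $\R^2$ collapses every input to $\pm(1,-1)$ and destroys all nonlinearity, so the target space of $\varphi_2$ must have dimension at least $3$, and one has to engineer the ``slope'' direction $\va$ and the ``intercept'' direction $\vc$ to be orthogonal, of equal norm, and both zero-sum --- precisely so that $LN$ degenerates to a clean spherical projection with denominator $\sqrt{(t^*)^2 s^2+1}$ and a single linear functional $\varphi_1$ can read off $\bar\psi(t^*;\cdot)$. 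The rest is bookkeeping together with the elementary fact that a function differentiable at a point with nonzero derivative takes strictly smaller values on one side of it.
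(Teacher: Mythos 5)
Your proposal is correct and takes essentially the same route as the paper: you use the differentiability of $f_{SSR}$ at $0$ together with $f_{SSR}'(0)\neq 0$ to obtain a $t^*$ with $f_{SSR}(t^*)<LSSR(\mX_1,\mX_2)$ (the paper's Lemma~\ref{lemma:7}), and you realize $\bar\psi(t^*;\cdot)$ as a linear--LN--linear sandwich through $\R^3$; the only difference is that where the paper invokes its general identity $SP(\cdot)=\hat\varphi_1\circ LN(\cdot)\circ\hat\varphi_2$ (Lemma~\ref{lemma:equivalence}, built from an orthogonal matrix containing the column $\bm{1}/\sqrt{3}$), you inline an explicit instance of it via your two zero-sum, mutually orthogonal, equal-norm direction vectors, and that computation checks out. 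One small wording issue: a nonzero derivative at the single point $0$ does not make $f_{SSR}$ ``strictly monotone across $0$,'' but the weaker statement you actually use --- that $f_{SSR}$ takes a strictly smaller value at some $t^*$ on one side of $0$ --- follows directly from the definition of the derivative, exactly as in the paper's Lemma~\ref{lemma:7}, so nothing breaks.
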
	
    The proof of Theorem \ref{thm:break} requires complicated derivation. Please refer to \textit{\SM~\ref{section:LSSRbreak}} for details. Note that LN-Net is a more general form of $\psi$ in Theorem \ref{thm:break}, which implies that LN-Net can break the lower bound of SSR. 
 
    Based on Theorem \ref{thm:break}, we can obtain the following statement. We deduce that LN is a nonlinear transformation. 
    \begin{corollary}
	       LN is a nonlinear transformation.
	   \end{corollary}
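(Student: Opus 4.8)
The plan is a short proof by contradiction that plays Proposition~\ref{prop:invariance} against Theorem~\ref{thm:break}. Suppose, for contradiction, that $LN(\cdot)$ were a linear transformation on $\R^d$ (recall that throughout this paper ``linear'' subsumes affine, see the footnote to Eqn.~\ref{eqn:Linear}). Then the map $\psi = \varphi_1 \circ LN(\cdot) \circ \varphi_2$ appearing in Theorem~\ref{thm:break} would be a composition of three linear transformations, hence a \textit{linear neural network} with $L=3$ in the sense of Proposition~\ref{prop:invariance}. Applying that proposition with $\tilde\varphi = \psi$ would give $SSR(\psi(\mX_1),\psi(\mX_2)) \ge LSSR(\mX_1,\mX_2)$ for \emph{every} pair of data matrices $\mX_1,\mX_2 \in \R^{d\times m}$.

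On the other hand, Theorem~\ref{thm:break} guarantees that whenever the data satisfy $f_{SSR}'(0)\ne 0$ one can choose linear $\varphi_1,\varphi_2$ with $SSR(\psi(\mX_1),\psi(\mX_2)) < LSSR(\mX_1,\mX_2)$, in direct contradiction with the inequality just derived. So the one genuine preliminary step is to certify that the hypothesis of Theorem~\ref{thm:break} is not vacuous: I would exhibit (or cite from \textit{\SM~\ref{section:LSSRbreak}}) one explicit low-dimensional configuration $\mX_1,\mX_2$ for which $f_{SSR}'(0)\ne0$. Since, as noted after the definition of $f_{SSR}$, this derivative depends only on $\mX_1$ and $\mX_2$, such a check is a finite computation using the eigenvector $\vu^*$ from Proposition~\ref{prop:LSSR}. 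Fixing such data and invoking Theorem~\ref{thm:break} completes the contradiction; therefore $LN$ is not a linear transformation, i.e.\ it is nonlinear.

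I expect the only real obstacle to be that first step, namely verifying $f_{SSR}'(0)\ne0$ for a concrete example, because it requires unwinding the definitions of $f_{SSR}$, $\bar\psi$, $\bar\varphi$ and the formula for $\vu^*$; however this is exactly the computation already carried out in \textit{\SM~\ref{section:LSSRbreak}} when proving Theorem~\ref{thm:break}, so it can be quoted rather than redone. Everything else in the argument is a one-line application of Proposition~\ref{prop:invariance} together with the closure of linear (affine) maps under composition.
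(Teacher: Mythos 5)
Your proposal is correct and follows essentially the same route as the paper: assume $LN(\cdot)$ is linear, conclude that $\psi=\varphi_1\circ LN(\cdot)\circ\varphi_2$ is a linear neural network so Proposition~\ref{prop:invariance} forces $SSR(\psi(\mX_1),\psi(\mX_2))\ge LSSR(\mX_1,\mX_2)$, and derive a contradiction with Theorem~\ref{thm:break}. Your added step of certifying that the hypothesis $f_{SSR}'(0)\ne0$ is satisfiable by some concrete $\mX_1,\mX_2$ is a point of rigor the paper's own one-line proof silently omits, and it is indeed needed for the contradiction to be non-vacuous.
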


  \begin{proof}
	We assume that $ LN(\cdot) $ is a linear transformation. We thus have LN-Net is also a linear transformation. Based on Proposition \ref{prop:invariance}, we have LN-Net can not break LSSR. This contradicts Theorem \ref{thm:break}. Therefore, $ LN(\cdot) $ must be a nonlinear transformation. 
\end{proof}

	\paragraph{Summary.} In this section, we mathematically show that LN is a nonlinear transformation, and LN-Net is a network with nonlinearity. One interesting question is that how powerful the nonlinearity of an LN-Net is in theory. We will discuss about it in the following section. 
	
	\section{Capacity of a Network with LN}
    \label{section:capacity}
	
	In this section, we apply LN-Net to classify $ m $ samples with any label assignment. To prove the existence of such LN-Net, we propose Projection Merge Algorithm (PMA) and Parallelization Breaking Algorithm (PBA) to help find the parameters of the LN-Net. 

	\subsection{LN for Xor Classification}
	\label{section:xor}

    To understand PMA intuitively, we use Spherical Projection (SP) rather than LN at the beginning. But we replace SP with LN and linear layers back in the end, according to the lemma as follows. 
	\begin{lemma}
		\label{lemma:equivalence}
		Denote $ LN(\cdot) $ as the LN operation in $ \R^d (d\ge3) $, and $ SP(\cdot) $ as the SP operation\footnote{If there are no special instructions, we denote SP projects the sample on to the unit circle, namely $ \rvx \mapsto \rvx/\|\rvx\|_2 $. } in $ \R^{d-1} $. We can find some linear transformations $ \hat \varphi_1 $ and $ \hat \varphi_2 $, such that 
		\begin{equation}
			SP(\cdot) = \hat \varphi_1 \circ LN(\cdot) \circ \hat \varphi_2. 
		\end{equation} 
	\end{lemma}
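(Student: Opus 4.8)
The plan is to use the factorization of $LN$ into a linear projection followed by a radial scaling. Recall from Section~\ref{sec_relatedWork} that for $\rvx\in\R^d$ we have $LN(\rvx)=\sqrt d\,\tilde\rvx/\|\tilde\rvx\|_2$, where $\tilde\rvx=(\mI-\tfrac1d\bm{1}_d\bm{1}_d^\top)\rvx$ is the orthogonal projection of $\rvx$ onto the hyperplane $H=\{\rvx\in\R^d:\bm{1}_d^\top\rvx=0\}$, a subspace of dimension $d-1$. The key observation is that if the first linear map $\hat\varphi_2$ sends $\R^{d-1}$ isometrically \emph{onto} $H$, then on the range of $\hat\varphi_2$ the centering step of $LN$ acts as the identity, so $LN$ collapses to pure normalization to $\R^d$-radius $\sqrt d$; a final linear map $\hat\varphi_1$ then only has to invert the embedding and divide by $\sqrt d$, which turns the output into the unit-norm projection $SP$ of the original $\R^{d-1}$ vector.

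Carrying this out, I would first fix a matrix $\mathbf{Q}\in\R^{d\times(d-1)}$ whose columns form an orthonormal basis of $H$; such a $\mathbf{Q}$ exists since $\dim H=d-1$, and it satisfies $\mathbf{Q}^\top\mathbf{Q}=\mI_{d-1}$ and $\mathbf{Q}^\top\bm{1}_d=\bm{0}$. Then I would define the two linear transformations $\hat\varphi_2(\bm{y})=\mathbf{Q}\bm{y}$ for $\bm{y}\in\R^{d-1}$ and $\hat\varphi_1(\rvx)=\tfrac1{\sqrt d}\mathbf{Q}^\top\rvx$ for $\rvx\in\R^d$. The remaining work is a short verification: for nonzero $\bm{y}$, the vector $\mathbf{Q}\bm{y}$ already lies in $H$ (because $\bm{1}_d^\top\mathbf{Q}\bm{y}=(\mathbf{Q}^\top\bm{1}_d)^\top\bm{y}=0$), so centering fixes it, and $\|\mathbf{Q}\bm{y}\|_2=\|\bm{y}\|_2$ since $\mathbf{Q}$ has orthonormal columns; hence $LN(\mathbf{Q}\bm{y})=\sqrt d\,\mathbf{Q}\bm{y}/\|\bm{y}\|_2$, and therefore $\hat\varphi_1(LN(\mathbf{Q}\bm{y}))=\mathbf{Q}^\top\mathbf{Q}\,\bm{y}/\|\bm{y}\|_2=\bm{y}/\|\bm{y}\|_2=SP(\bm{y})$. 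This establishes $SP(\cdot)=\hat\varphi_1\circ LN(\cdot)\circ\hat\varphi_2$.

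I do not expect a deep obstacle here; the only thing that really requires care is reconciling the two mismatches between $LN$ and $SP$, namely the differing ambient dimensions ($d$ versus $d-1$) and the differing target radii ($\sqrt d$ versus $1$). These two mismatches are exactly what dictate the construction — an isometric embedding onto the hyperplane $H$ on one side and the compensating $1/\sqrt d$ factor in $\hat\varphi_1$ on the other — and mishandling the scaling is the easiest way to break the identity. A last small point is the degenerate input $\bm{y}=\bm{0}$: there both $SP$ and $LN\circ\hat\varphi_2$ are undefined (since $\mathbf{Q}\bm{y}=\bm{0}$ iff $\bm{y}=\bm{0}$), so the two sides have the same domain and the claimed equality holds wherever either side is defined.
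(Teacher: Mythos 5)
Your proof is correct and takes essentially the same route as the paper: the paper likewise embeds $\R^{d-1}$ isometrically onto the zero-sum hyperplane using the first $d-1$ columns of an orthogonal matrix whose last column is $\bm{1}_d/\sqrt{d}$, observes that the centering step then acts as the identity so LN reduces to scaling by $\sqrt{d}/\|\cdot\|_2$, and undoes the embedding with a compensating $1/\sqrt{d}$ factor. The only cosmetic difference is that the paper writes compositions left-to-right, so its $\hat\varphi_1$ is your embedding $\hat\varphi_2$ and vice versa.
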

	
	The proof of Lemma \ref{lemma:equivalence} is shown in \textit{\SM~\ref{section:LSSRbreak}}. And we can easily obtain the following corollary. 
    \begin{corollary}
        \label{coro:SP is LN-Net}
        $ SP(\cdot) $ can be represented by an LN-Net. 
    \end{corollary}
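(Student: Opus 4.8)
The plan is to obtain the corollary as an immediate bookkeeping consequence of Lemma~\ref{lemma:equivalence} together with Definition~\ref{def:LN-Net}. First I would fix any $d\ge 3$ (taking $d=3$ already suffices, since we only need \emph{some} LN-Net that realizes $SP$) and apply Lemma~\ref{lemma:equivalence} to $SP$ acting on $\R^{d-1}$: this yields linear maps $\hat\varphi_1,\hat\varphi_2$, that is, weight matrices $\mW^{(1)}\in\R^{d\times(d-1)}$, $\mW^{(2)}\in\R^{(d-1)\times d}$ and bias vectors $\vb^{(1)}\in\R^{d}$, $\vb^{(2)}\in\R^{d-1}$, such that $SP(\rvx)=\mW^{(2)}\,LN(\mW^{(1)}\rvx+\vb^{(1)})+\vb^{(2)}$ for every $\rvx\in\R^{d-1}$, where the inner $LN(\cdot)$ is the LN operation in $\R^{d}$.

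Next I would check that the right-hand side is literally an instance of Definition~\ref{def:LN-Net} with depth $L=2$ and layer widths $d_0=d_2=d-1$, $d_1=d$. Unrolling that definition with the parameters $\{(\mW^{(1)},\vb^{(1)}),(\mW^{(2)},\vb^{(2)})\}$ gives $\rvx^{(0)}=\rvx$, $\vh^{(1)}=\mW^{(1)}\rvx^{(0)}+\vb^{(1)}$, $\rvx^{(1)}=LN(\vh^{(1)})$, $\vh^{(2)}=\mW^{(2)}\rvx^{(1)}+\vb^{(2)}$, and output $\rvx^{(2)}=\vh^{(2)}$, which is exactly $\hat\varphi_1(LN(\hat\varphi_2(\rvx)))=SP(\rvx)$. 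Hence this LN-Net represents $SP$. The two small points worth making explicit are: (i) Definition~\ref{def:LN-Net} applies a linear transformation \emph{after} the last LN (the convention $\rvx^{(L)}=\vh^{(L)}$), which is precisely what lets $\hat\varphi_1$ be absorbed into the network; and (ii) the width changes $d_0=d-1$, $d_1=d$, $d_2=d-1$ are permitted, since LN-Net allows $\mW^{(l)}\in\R^{d_l\times d_{l-1}}$ with arbitrary $d_l$.

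I do not anticipate any real obstacle: all the content sits in Lemma~\ref{lemma:equivalence}, and the corollary is just matching its factorization against the LN-Net template. The only things to stay careful about are the dimension shift ($SP$ lives in $\R^{d-1}$ while the intermediate $LN$ lives in $\R^{d}$) and the hypothesis $d\ge 3$ inherited from the lemma; both are harmless because we are free to choose $d$, and the natural choice $d=3$ even exhibits $SP$ as a width-$3$, depth-$2$ LN-Net, consistent with the constructions used later in the paper.
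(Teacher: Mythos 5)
Your proposal is correct and matches the paper's (implicit) argument: the corollary is stated as an immediate consequence of Lemma~\ref{lemma:equivalence}, since the factorization $SP(\cdot)=\hat\varphi_1\circ LN(\cdot)\circ\hat\varphi_2$ is literally a one-LN-layer instance of Definition~\ref{def:LN-Net}, exactly as you unroll it. The only nitpick is your final expression $\hat\varphi_1(LN(\hat\varphi_2(\rvx)))$, which swaps the roles you assigned to $\mW^{(1)}$ and $\mW^{(2)}$ a few lines earlier; the substance (a linear map into $\R^{d}$, LN there, a linear map back to $\R^{d-1}$) is unaffected.
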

    Taking xor classification as an example, we primarily show how we use LN-Net to classify linearly inseparable samples. 

    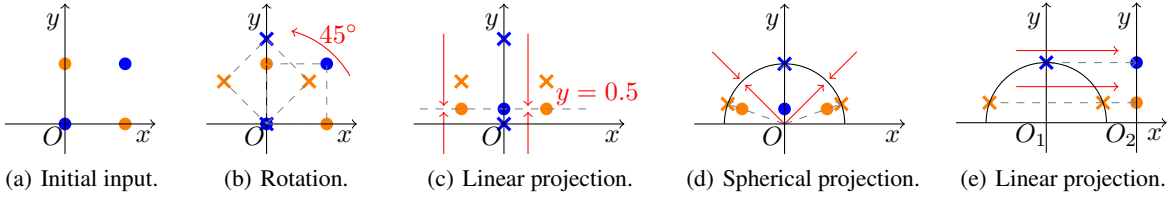
\begin{figure*}[t]
 \vspace{-0.1in}
		\centering
		\subfigure[Initial input.]{
			\label{fig:1a}
			\begin{tikzpicture}[scale=0.8]
				\fill[blue, shift={(0,0)}] (0,0) circle (3pt);
				\fill[orange, shift={(0,1)}] (0,0) circle (3pt);
				\fill[orange, shift={(1,0)}] (0,0) circle (3pt);
				\fill[blue, shift={(1,1)}] (0,0) circle (3pt);
				
				\draw [->, black] (-1,0) -- (1.5,0);
				\draw [->, black] (0,-0.5) -- (0,2);
                \node[] at (-0.2,1.7) {$y$};
                \node[] at (1.3,-0.2) {$x$};
                \node[] at (-0.2,-0.2) {$O$};
                
			\end{tikzpicture}
		}
		\quad
        \subfigure[Rotation.]{
			\label{fig:1b}
			\begin{tikzpicture}[scale=0.8]
				\fill[blue, shift={(0,0)}] (0,0) circle (3pt);
				\fill[orange, shift={(0,1)}] (0,0) circle (3pt);
				\fill[orange, shift={(1,0)}] (0,0) circle (3pt);
				\fill[blue, shift={(1,1)}] (0,0) circle (3pt);

                \draw[blue, shift={(0,0)}, very thick] (-3pt,-3pt) -- (3pt,3pt) (-3pt,3pt) -- (3pt,-3pt);
				\draw[orange, shift={(-0.707,0.707)}, very thick] (-3pt,-3pt) -- (3pt,3pt) (-3pt,3pt) -- (3pt,-3pt);
                \draw[orange, shift={(0.707,0.707)}, very thick] (-3pt,-3pt) -- (3pt,3pt) (-3pt,3pt) -- (3pt,-3pt);
                \draw[blue, shift={(0,1.414)}, very thick] ((-3pt,-3pt) -- (3pt,3pt) (-3pt,3pt) -- (3pt,-3pt);

                \draw [->, red] (1.386,0.8) arc [start angle=30,end angle=75, radius=1.6];

				\draw [->, black] (-1,0) -- (1.5,0);
				\draw [->, black] (0,-0.5) -- (0,2);
                \draw [dashed, gray, very thin] (0,0) -- (1,0) -- (1,1) -- (0,1) -- (0,0);
                \draw [dashed, gray, very thin] (0,0) -- (0.707,0.707) -- (0,1.414) -- (-0.707,0.707) -- (0,0);
                \node [red] at (1.2,1.5) {$ 45^\circ $};
                \node[] at (-0.2,1.7) {$y$};
                \node[] at (1.3,-0.2) {$x$};
                \node[] at (-0.2,-0.2) {$O$};
			\end{tikzpicture}
		}
		\quad
        \subfigure[Linear projection.]{
			\label{fig:1c}
			\begin{tikzpicture}[scale=0.8]
				\fill[blue, shift={(0,0.25)}] (0,0) circle (3pt);
				\fill[orange, shift={(-0.707,0)}] (0,0.25) circle (3pt);
				\fill[orange, shift={(0.707,0)}] (0,0.25) circle (3pt);

                \draw[blue, shift={(0,0)}, very thick] (-3pt,-3pt) -- (3pt,3pt) (-3pt,3pt) -- (3pt,-3pt);
				\draw[orange, shift={(-0.707,0.707)}, very thick] (-3pt,-3pt) -- (3pt,3pt) (-3pt,3pt) -- (3pt,-3pt);
                \draw[orange, shift={(0.707,0.707)}, very thick] (-3pt,-3pt) -- (3pt,3pt) (-3pt,3pt) -- (3pt,-3pt);
                \draw[blue, shift={(0,1.414)}, very thick] (-3pt,-3pt) -- (3pt,3pt) (-3pt,3pt) -- (3pt,-3pt);

                \draw [->, red] (-1,1.5) -- (-1,0.3);
                \draw [->, red] (-1,-0.5) -- (-1,0.2);
                \draw [gray, dashed] (-1.4,0.25) -- (1.4,0.25);
                \draw [->, red] (0.4,1.5) -- (0.4,0.3);
                \draw [->, red] (0.4,-0.5) -- (0.4,0.2);
				
				\draw [->, black] (-1.5,0) -- (2,0);
				\draw [->, black] (0,-0.5) -- (0,2);
                \node[] at (-0.2,1.7) {$y$};
                \node[] at (1.8,-0.2) {$x$};
                \node[] at (-0.2,-0.2) {$O$};
                \node[red] at (1.55,0.5) {$y=0.5$};
			\end{tikzpicture}
		}
		\quad
        \subfigure[Spherical projection.]{
			\label{fig:1d}
			\begin{tikzpicture}[scale=0.8]
				\fill[blue, shift={(0,0.25)}] (0,0) circle (3pt);
				\fill[orange, shift={(-0.707,0.25)}] (0,0) circle (3pt);
				\fill[orange, shift={(0.707,0.25)}] (0,0) circle (3pt);

                \draw[blue, shift={(0,1)}, very thick] (-3pt,-3pt) -- (3pt,3pt) (-3pt,3pt) -- (3pt,-3pt);
				\draw[orange, shift={(-0.943,0.333)}, very thick] (-3pt,-3pt) -- (-3pt,-3pt) -- (3pt,3pt) (-3pt,3pt) -- (3pt,-3pt);
                \draw[orange, shift={(0.943,0.333)}, very thick] (-3pt,-3pt) -- (3pt,3pt) (-3pt,3pt) -- (3pt,-3pt);

                \draw [] (1,0) arc [start angle=0,end angle=180, radius=1];
                \draw [->, red] (0,0) -- (0.65,0.65); 
                \draw [->, red] (1.2,1.2) -- (0.75,0.75);
                \draw [->, red] (0,0) -- (-0.65,0.65); 
                \draw [->, red] (-1.2,1.2) -- (-0.75,0.75);

                \draw [->, gray, dashed] (0,0) -- (0.943,0.333);
                \draw [->, gray, dashed] (0,0) -- (-0.943,0.333);
                \draw [->, gray, dashed] (0,0) -- (0,1);
				
				\draw [->, black] (-1.5,0) -- (2,0);
				\draw [->, black] (0,-0.5) -- (0,2);
                \node[] at (-0.2,1.7) {$y$};
                \node[] at (1.8,-0.2) {$x$};
                \node[] at (-0.2,-0.2) {$O$};
			\end{tikzpicture}
		}
        \quad
        \subfigure[Linear projection.]{
			\label{fig:1e}
			\begin{tikzpicture}[scale=0.8]
				\fill[blue, shift={(1.5,1)}] (0,0) circle (3pt);
				\fill[orange, shift={(1.5,0.333)}] (0,0) circle (3pt);

                \draw[blue, shift={(0,1)}, very thick] (-3pt,-3pt) -- (3pt,3pt) (-3pt,3pt) -- (3pt,-3pt);
				\draw[orange, shift={(-0.943,0.333)}, very thick] (-3pt,-3pt) -- (-3pt,-3pt) -- (3pt,3pt) (-3pt,3pt) -- (3pt,-3pt);
                \draw[orange, shift={(0.943,0.333)}, very thick] (-3pt,-3pt) -- (3pt,3pt) (-3pt,3pt) -- (3pt,-3pt);

                \draw [<-, dashed, gray] (1.5,0.333) -- (-0.943,0.333); 
                \draw [->, dashed, gray] (0,1) -- (1.5,1); 
                \draw [->, red] (-0.5,1.2) -- (1.2,1.2); 
                \draw [->, red] (-0.5,0.6) -- (1.2,0.6);
				
				\draw [->, black] (-1.5,0) -- (2,0);
				\draw [->, black] (0,-0.5) -- (0,2);
                \draw [->, black] (1.5,-0.5) -- (1.5,2);
                \draw [] (1,0) arc [start angle=0,end angle=180, radius=1];
                \node[] at (-0.2,1.7) {$y$};
                \node[] at (1.3,1.7) {$y$};
                \node[] at (1.8,-0.2) {$x$};
                \node[] at (-0.25,-0.2) {$O_1$};
                \node[] at (1.25,-0.2) {$O_2$};
			\end{tikzpicture}
		}
        \vspace{-0.15in}
		\caption{Solution to the Xor Classification. To begin with, we rotate them by $45^\circ$, as shown in Figure \ref{fig:1b}. Then we vertically project them onto $y=0.5$, as shown in Figure \ref{fig:1c}. Next, we spherically project them onto the circle $ x^2 + y^2 = 1 $, as shown in Figure \ref{fig:1d}. Finally, we \REVISE{horizontally} project them onto \REVISE{$x=0$}, as shown in Figure \ref{fig:1e}. Now we have classified the two classes.}
		\label{fig:1}
	\end{figure*}
	
	As shown in Figure \ref{fig:1a}, $ (0,0), (1,1) $ and $ (0,1), (1,0) $ belong to different classes. Obviously, the two classes are not linearly separable. We can classify them with SP and linear transformations only, please refer to the demonstration in Figure \ref{fig:1} for details.   
 
    By Lemma \ref{lemma:equivalence}, replace SP with LN-Net. Therefore, we can construct an LN-Net according to the operations in Figure \ref{fig:1}, and then classify the xor samples. 
 
    More generally, we discuss binary classification in Section \ref{section:binary} and multi-class classification in Section \ref{section:multi-class}. 
	
	\subsection{LN for Binary Classification}
	\label{section:binary}
 
	\begin{theorem}
		\label{thm:network}
        \REVISE{Given $m$ samples with any binary label assignment in $\{0,1\}$, there always exists an LN-Net with only $3$ neurons per layer and $O(m)$ LN layers can correctly classify them.}
	\end{theorem}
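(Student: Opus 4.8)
I would build the LN-Net as a long alternation of linear maps and one elementary nonlinear operation — spherical projection onto the unit circle in $\R^2$ — and use that operation to \emph{merge} pairs of same-class samples one at a time, until each class has collapsed to a single point. \emph{Step 1 (reduce to width-$3$ spherical projections).} By Lemma~\ref{lemma:equivalence} with $d=3$, the map $SP(\cdot):\R^2\to\R^2$, $x\mapsto x/\|x\|_2$, equals $\hat\varphi_1\circ LN(\cdot)\circ\hat\varphi_2$ for suitable linear $\hat\varphi_1:\R^3\to\R^2$ and $\hat\varphi_2:\R^2\to\R^3$, the $LN$ acting in $\R^3$. Hence any finite composition of affine maps $\R^2\to\R^2$ and copies of $SP$ is realized by an LN-Net with exactly $3$ neurons in every hidden layer, spending only $O(1)$ LN layers per $SP$ (three consecutive linear pieces — $\hat\varphi_1$, the chosen affine map, the next $\hat\varphi_2$ — fuse into one $\R^3\to\R^3$ layer). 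Applying first a generic affine map $\R^{d_0}\to\R^2$, we may assume the $m$ distinct samples already sit in $\R^2$ in \emph{general position}: none is the origin and no two lie on a common ray through the origin. (If only one label occurs, the network is a constant and there is nothing to prove.)

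\emph{Step 2 (the merge; this is PMA).} Suppose we currently hold $k\ge3$ distinct points of $\R^2$ in general position with binary labels, some label occurring at least twice; pick two such points $a,b$ of that label. The affine maps $M:\R^2\to\R^2$ with $M(a)=(1,0)$ and $M(b)=(2,0)$ form a $2$-parameter family, and within it the parameters for which some \emph{other} pair of points is sent to a common ray through $0$, or some point is sent to $0$, or some other point is sent onto the ray $\R_{>0}\times\{0\}$, cut out a finite union of lower-dimensional sets. Choosing $M$ off this set and post-composing with $SP$ sends $a$ and $b$ to the single point $(1,0)$ while all other points stay distinct from one another and from $(1,0)$: we obtain $k-1$ distinct points, the new one labelled by the common label of $a$ and $b$. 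Because only same-label points are ever identified, the two label classes stay disjoint throughout.

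\emph{Step 3 (termination and read-out).} Iterating Step~2, first merge all class-$0$ samples into one point $p_0$ and then all class-$1$ samples into one point $p_1$; this is at most $(m_0-1)+(m_1-1)=m-2$ merges, hence $O(m)$ LN layers, and since the classes never mix we end with $p_0\ne p_1$. A final linear (output) layer whose value has one sign at $p_0$ and the other at $p_1$ then returns the correct label on every one of the $m$ samples, since the whole composition maps each sample onto $p_0$ or $p_1$ according to its label. The resulting network has $O(m)$ LN layers and exactly $3$ neurons per hidden layer, as claimed.

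\emph{Where the difficulty sits.} The genuinely delicate point is the bookkeeping of general position between merges: after an $SP$ the new configuration may fail to be in general position (two points on a common ray, a point at $0$, or the degeneracy that would make the dimension count in Step~2 vacuous), which would block the next merge. This is exactly what the auxiliary ``parallelization breaking'' step handles — an extra $O(1)$-layer affine-plus-$SP$ perturbation that restores general position without un-merging anything — and one must verify that it always applies and that no worst-case placement of the remaining points makes one of the excluded polynomials in Step~2 vanish identically. The rest (the explicit $\hat\varphi_1,\hat\varphi_2$ of Lemma~\ref{lemma:equivalence}, and fusing consecutive linear layers so the width never exceeds $3$) is routine.
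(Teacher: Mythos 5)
Your merging mechanism is genuinely different from the paper's: you collapse two same-labelled points by placing them on a common ray through the origin so that $SP$ itself identifies them, whereas the paper's PMA translates the configuration so that the two chosen points become symmetric about the $y$-axis and then reads off the $y$-coordinate after $SP$, using an ordering rule (leftmost point, then leftmost same-labelled point to its right) to guarantee that every accidental identification involves only same-labelled points. Your route could in principle work, but the dimension count in Step 2 --- the step you yourself flag as delicate --- actually fails for the family you chose, and this is a genuine gap rather than unfinished bookkeeping. Write $M(x)=A(x-a)+(1,0)^\top$ with $A(b-a)=(1,0)^\top$; then $M(2a-b)=(0,0)$ for \emph{every} member of your $2$-parameter family, i.e.\ the preimage of the origin is pinned at $z=2a-b$ and does not move as the free parameters vary. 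Consequently, for any other pair $p,q$, the condition ``$M(p)$ and $M(q)$ lie on a common ray through $0$'' is equivalent to ``$p$ and $q$ lie on a common ray emanating from $z$'', a property of the configuration alone: writing the rows of $A$ as $r_1=r_1^0+sw$ and $r_2=tw$ with $w\perp(b-a)$, one finds $\det[M(p),M(q)]=t\,C_{p,q}$ where $C_{p,q}$ depends on neither $s$ nor $t$, so your excluded polynomial either never vanishes or vanishes identically on the family. The same rigidity hits the other exclusions: a point $p=a+\mu(b-a)$ with $\mu>-1$ is sent onto $\R_{>0}\times\{0\}$ by every admissible $M$. Concretely, take $a=(1,0)$, $b=(0,1)$, $p=(2,-1)/\sqrt{5}$, $q=-(2,-1)/\sqrt{5}$, all on the unit circle: $p$ and $q$ lie on a common ray from $z=(2,-1)$, so every admissible $M$ merges them along with $a,b$; if they carry different labels this merge is blocked for all $M$. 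Worse, for collinear input with alternating labels (which survives your Step 1, since affine maps preserve collinearity and affine ratios) every same-labelled pair has an oppositely-labelled point strictly between them on the line, so the very first merge is blocked for every choice of pair.

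The repair is to make the preimage of the origin a free parameter: require only that $M(a)$ and $M(b)$ be positive multiples of a common vector, so that $z=M^{-1}(0)$ ranges over the line through $a,b$ outside the segment $[a,b]$, and insert one initial $SP$ so that all current points lie on the unit circle (hence no three are collinear) before each genericity argument. This is essentially what the paper does in its wider-network generalization (Appendix~D.5), where the genericity is taken over the shift $\vb$ and Lemma~9 shows the bad sets are proper hypersurfaces. Alternatively, adopt the paper's one-dimensional ordering trick, which sidesteps genericity entirely in the binary case --- note that PBA is needed only for the multi-class theorem, so deferring your difficulty to an unspecified ``parallelization breaking'' perturbation both fails to close the gap (the obstruction is not a loss of general position that a perturbation restores, but the absence of any parameter in your family that influences which points get accidentally merged) and concedes more machinery than the binary statement requires.
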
 

    To prove Theorem \ref{thm:network}, we represent the LN-Net with SP and linear layers. Then we design an algorithm to help compute the parameters according to the input. We hence get an LN-Net with proper parameters to classify the samples. The proof is shown as follows. 

    We represent an LN-Net as
	\begin{equation}
		\label{eqn:14}
		f_{\theta}(\cdot) = \varphi_1 \circ LN(\cdot) \circ \varphi_2 \circ \cdots \circ \varphi_{L-1} \circ LN(\cdot) \circ \varphi_L,
	\end{equation}
	where $ \varphi_1, \cdots, \varphi_L $ denote the linear layers, and $ LN(\cdot) $ denotes the LN layers. For convenience, we replace LN with SP temporarily. 

    \begin{proposition}
        The LN-Net $ f_\theta(\cdot) $ in \Eqn\ref{eqn:14} can be represented by SP and linear layers equivalently. 
    \end{proposition}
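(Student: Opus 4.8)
The plan is to reverse the direction of Lemma~\ref{lemma:equivalence}: instead of writing $SP$ in terms of $LN$, I would express a single $LN$ layer as an $SP$ layer sandwiched between two linear maps, substitute this into Eqn.~\ref{eqn:14}, and then collapse the adjacent linear maps.

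First I would invoke the decomposition of $LN$ recalled in Section~\ref{sec_relatedWork}, namely that $LN$ is centering followed by scaling: $LN(\rvx)=\sqrt{d}\,\tilde\rvx/\|\tilde\rvx\|_2$ with $\tilde\rvx=(\mI-\frac1d\bm1_d\bm1_d^\top)\rvx$. Writing $\mC=\mI-\frac1d\bm1_d\bm1_d^\top$ (a fixed linear map) and absorbing the scalar $\sqrt d$ into a linear map, this is exactly $LN(\cdot)=(\sqrt d\,\mI)\circ SP(\cdot)\circ\mC$, where $SP(\rvx)=\rvx/\|\rvx\|_2$. If one prefers $SP$ to act in dimension $d-1$ (as in Lemma~\ref{lemma:equivalence}), I would further compose $\mC$ with a coordinate isometry onto an orthonormal basis of the centering hyperplane and its inverse embedding back into $\R^d$, both linear; this is the mirror image of Lemma~\ref{lemma:equivalence} and follows from the same argument. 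Either way, one $LN$ layer equals $\hat\varphi_{\mathrm{post}}\circ SP(\cdot)\circ\hat\varphi_{\mathrm{pre}}$ for suitable linear $\hat\varphi_{\mathrm{pre}},\hat\varphi_{\mathrm{post}}$.

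Next I would substitute this identity for every $LN(\cdot)$ appearing in Eqn.~\ref{eqn:14}. After substitution, each original linear layer $\varphi_l$ stands between an $\hat\varphi_{\mathrm{pre}}$ produced by the $LN$ to its right and an $\hat\varphi_{\mathrm{post}}$ produced by the $LN$ to its left (the outermost layers $\varphi_1$ and $\varphi_L$ acquiring only one such factor). Since a composition of linear (affine) maps is again linear, I would merge $\hat\varphi_{\mathrm{post}}\circ\varphi_l\circ\hat\varphi_{\mathrm{pre}}$ into a single linear map $\varphi_l'$ for each $l$, obtaining $f_\theta(\cdot)=\varphi_1'\circ SP(\cdot)\circ\varphi_2'\circ\cdots\circ\varphi_{L-1}'\circ SP(\cdot)\circ\varphi_L'$, which is of the claimed form.

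The argument is essentially bookkeeping once the per-layer identity is in place, so there is no deep obstacle; the one point needing care is well-definedness. Spherical projection is defined only away from the origin, so the substitution is literally valid only on inputs whose intermediate centered activations never vanish --- but this is precisely the regime on which the original $LN$ layers are themselves defined (a constant activation vector is the standard degenerate case, excluded in practice by the $\epsilon$ inside $LN$). I would therefore phrase the equivalence on that domain, which is all that is needed for the later constructions, since the LN-Net we build is only ever evaluated on non-degenerate activations.
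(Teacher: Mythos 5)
Your proof is correct, but it runs the equivalence in the opposite direction from the paper, and the two directions are not interchangeable here. You factor a single LN layer as $LN(\cdot)=(\sqrt d\,\mI)\circ SP(\cdot)\circ\mC$ with $\mC=\mI-\frac1d\bm1\bm1^\top$, substitute into Eqn.~\ref{eqn:14}, and merge adjacent affine maps; this shows that \emph{every} LN-Net (with arbitrary linear layers) computes the same function as some SP-and-linear network, which is a clean and in fact more general proof of the proposition as literally worded. The paper instead starts from Lemma~\ref{lemma:equivalence}, $SP(\cdot)=\hat\varphi_1\circ LN(\cdot)\circ\hat\varphi_2$, and \emph{chooses} the LN-Net's linear layers in the factored form of Eqn.~\ref{eqn:15} so that the composition collapses to the SP-network of Eqn.~\ref{eqn:16}; this establishes the converse inclusion --- every SP-and-linear network of that shape is realizable as an LN-Net --- which is the direction actually consumed downstream, since PMA constructs an SP-network and one must then convert it back into an LN-Net to conclude Theorem~\ref{thm:network}. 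Note that your per-layer identity cannot simply be inverted to recover that direction: $\mC$ is singular, so $SP$ is not $(\text{linear})^{-1}\circ LN\circ(\text{linear})^{-1}$ in the same dimension; the dimension drop from $\R^d$ to $\R^{d-1}$ in Lemma~\ref{lemma:equivalence} (embedding $\R^{d-1}$ isometrically into the centering hyperplane, where $LN$ reduces to pure scaling) is exactly what makes the converse work. So your argument is a valid proof of the stated proposition and buys generality (no constraint on the $\varphi_l$) plus an explicit treatment of the degenerate domain (constant activations / zero centered vectors), which the paper glosses over; but to support the rest of Section~\ref{section:binary} you would still need to invoke Lemma~\ref{lemma:equivalence} in the paper's direction, as you implicitly acknowledge when you cite it.
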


    \begin{proof}
        Since each $ LN(\cdot) $ acts on $ \R^3 $, by Lemma \ref{lemma:equivalence}, we can construct a $2$-dimensional $ SP(\cdot) = \hat \varphi_1 \circ LN(\cdot) \circ \hat \varphi_2 $. Define each $ \varphi_l $ in Theorem \ref{thm:network} as
	\begin{equation}
		\label{eqn:15}
		\varphi_l = 
		\begin{cases}
			\varphi_l^{(1)} \circ \varphi_l^{(2)} \circ \hat \varphi_1, &l=1,\\
			\hat \varphi_2 \circ \varphi_l^{(1)} \circ \varphi_l^{(2)} \circ \hat \varphi_1, &1< l < L,\\
			\hat \varphi_2 \circ \varphi_l^{(1)}, &l=L,
		\end{cases}
	\end{equation}
	where $ \varphi_{l}^{(1)} $ and $ \varphi_{l}^{(2)} $ are both linear functions. By \Eqn\ref{eqn:15} and Lemma \ref{lemma:equivalence}, we can rewrite $ f_\theta(\cdot) $ as
	\begin{equation}
        \label{eqn:16}
		\tilde f_\theta(\cdot) = \varphi_1^{(1)} \circ \varphi_1^{(2)} \circ SP(\cdot) \circ \varphi_2^{(1)} \circ \cdots \circ SP(\cdot) \circ \varphi_L^{(1)}, 
	\end{equation}
    namely, $ f_\theta(\cdot) $ can be represented by SP and linear layers equivalently. 
    \end{proof}



    Hereafter, we consider to compute the parameters of $\tilde f_\theta(\cdot)$. Specifically, for each layer, we denote
    \begin{equation}
        \label{eqn:17}
        \begin{cases}
            \varphi_l^{(1)}: \mX^{(l-1)} \mapsto \mP^{(l)}, &1\le l \le L; \\
            \varphi_l^{(2)}: \mP^{(l)} \mapsto \mH^{(l)}, &1\le l \le L-1; \\
            SP(\cdot): \mH^{(l)} \mapsto \mX^{(l)}, &1\le l \le L-1.
        \end{cases}
    \end{equation}
    Besides, the input of $ \tilde f_\theta(\cdot) $ is $ \mX^{(0)} = [\rvx_1^{(0)}, \cdots, \rvx_m^{(0)}] $, and the output is $ \mP^{(L)} $. Now we construct $ \tilde f_\theta(\cdot) $ step by step. 
    
    We denote that for each $ \mP^{(l)}, (l=1,\cdots,L) $, these points are on the $x$-axis, namely $ \vp_k^{(l)} = [p_k^{(l)},0]^\top, (k=1,\cdots,m) $. To get $ \mP^{(1)}$, we apply $ \varphi_1^{(1)} $ for initialization as below. 

    \begin{proposition}
        \label{prop:initial}
        For any input $ \mX^{(0)} $, we can find some $ \vu $, such that 
        \begin{equation}
            \label{eqn:18}
            \varphi_1^{(1)}: \rvx_k^{(0)} \mapsto \vp_k^{(1)} = [\vu^\top\rvx_k^{(0)}, 0]^\top, 
        \end{equation}
        where $ \vp_i^{(1)} \ne \vp_j^{(1)} $ if $ \rvx_i^{(0)} \ne \rvx_j^{(0)} $. 
    \end{proposition}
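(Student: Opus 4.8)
The plan is to choose the row vector $\vu^\top$ by a generic-position argument so that the scalar functional $\rvx \mapsto \vu^\top\rvx$ never collapses two distinct input points, and then simply read off $\varphi_1^{(1)}$ as the linear map $\rvx \mapsto [\vu^\top\rvx,\,0]^\top$, whose image lies on the $x$-axis of $\R^2$ exactly as the construction of $\tilde f_\theta(\cdot)$ in Eqn.~\ref{eqn:16}--\ref{eqn:17} requires.

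First I would list the constraints that $\vu$ must satisfy. For every pair $i\ne j$ with $\rvx_i^{(0)}\ne\rvx_j^{(0)}$, put $\bm{\delta}_{ij}=\rvx_i^{(0)}-\rvx_j^{(0)}\ne\bm{0}$; the requirement $\vp_i^{(1)}\ne\vp_j^{(1)}$ is exactly $\vu^\top\bm{\delta}_{ij}\ne 0$. The ``forbidden'' set $\mathcal{B}_{ij}=\{\vu\in\R^{d_0}:\vu^\top\bm{\delta}_{ij}=0\}$ is a proper linear subspace (a hyperplane) of $\R^{d_0}$, hence has Lebesgue measure zero and empty interior. Since there are at most $\binom{m}{2}$ such pairs, the union $\bigcup_{i\ne j}\mathcal{B}_{ij}$ still has measure zero, so its complement is nonempty --- in fact dense. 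Choosing any $\vu$ outside this union, every $\rvx_i^{(0)}\ne\rvx_j^{(0)}$ gives $\vu^\top\rvx_i^{(0)}-\vu^\top\rvx_j^{(0)}=\vu^\top\bm{\delta}_{ij}\ne 0$, so $\vp_i^{(1)}\ne\vp_j^{(1)}$. If an explicit witness is preferred, one may instead take $\vu=(1,t,t^2,\dots,t^{d_0-1})^\top$: each $\vu^\top\bm{\delta}_{ij}$ is then a nonzero polynomial in $t$ of degree less than $d_0$, so any $t$ avoiding the finite set of all such roots works.

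With $\vu$ fixed, I would define $\varphi_1^{(1)}(\rvx)=[\vu^\top\rvx,\,0]^\top$. This is manifestly a linear (matrix) transformation $\R^{d_0}\to\R^2$, it sends $\rvx_k^{(0)}$ to $[\vu^\top\rvx_k^{(0)},0]^\top=\vp_k^{(1)}$, it places every image on the $x$-axis as needed for the subsequent $SP(\cdot)$, and by the previous paragraph it is injective on the distinct inputs. There is no genuine obstacle in the argument; the only points needing care are bookkeeping ones --- restricting the union of bad hyperplanes to pairs with $\rvx_i^{(0)}\ne\rvx_j^{(0)}$ (repeated inputs are allowed and should map to the same point), and making sure the codomain is $\R^2$ with images on the $x$-axis so that the later layers in Eqn.~\ref{eqn:16}--\ref{eqn:17} receive data of the expected shape. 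Both follow directly from the displayed definition of $\varphi_1^{(1)}$.
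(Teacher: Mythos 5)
Your proposal is correct and follows essentially the same route as the paper: both identify the ``bad'' set of directions as a finite union of hyperplanes $\{\vu:\vu^\top(\rvx_i^{(0)}-\rvx_j^{(0)})=0\}$ over pairs of distinct inputs and conclude this union cannot exhaust $\R^{d_0}$ (you via Lebesgue measure, the paper via the fact that finitely many proper subspaces cannot cover the space). The explicit Vandermonde witness is a pleasant addition but does not change the argument.
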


    Proposition \ref{prop:initial} parameterizes $ \varphi_1^{(1)} $ and initializes $ \mP^{(1)} $ onto the $x$-axis, without merging different points\footnote{In this paper, we claim that $ \vp_i^{(l)} $ and $ \vp_j^{(l)} $ are "different points" means $ \vp_i^{(l)} \ne \vp_j^{(l)} $ rather than $ i \ne j $, for each hidden layer (applies to $\rvx$ and $\vh$ as well).}. Please refer to \textit{\SM~\ref{section:proofofAlgorithms}} for the proof. 
    
    As for other linear functions, the suitable parameters are generated from the Projection Merge Algorithm, as shown in Algorithm \ref{alg:PMA}. 
	
	\begin{algorithm}[h]
		\begin{algorithmic}[1]
			\INPUT The initial input $ \mP^{(1)} $.
			\OUTPUT The final output $ \mP^{(L)} $.
			\STATE $ l \gets 1 $;
			\STATE $ \sP \gets \{\vp_1^{(l)}, \vp_2^{(l)}, \cdots, \vp_m^{(l)}\} $;
			\WHILE{$ \sP \ne \emptyset $} 
			\STATE $ i \gets \arg\min\limits_{k}\{p_k^{(l)}:\vp_k^{(l)} \in \sP\} $; 
			\STATE $ \sJ_i \gets \{\vp_j^{(l)} \in \sP :\vp_j^{(l)} \ne \vp_i^{(l)}, y_j = y_i\} $;
			\IF{$ \sJ_i \ne \emptyset $}
			\STATE $ j \gets \arg\min\limits_{k} \{p_k^{(l)}: \vp_k^{(l)} \in \sJ_i\} $;
			\FOR{$ k \gets 1 $ to $ m $}
			\STATE $ \vh_k^{(l)} \gets \vp_k^{(l)} - \mat {p_i^{(l)} + p_j^{(l)} \\ p_i^{(l)} - p_j^{(l)}}/2 $; 
			\STATE $ \rvx_k^{(l)} \gets \vh_k^{(l)}/\|\vh_k^{(l)}\| $; 
			\STATE $ \vp_k^{(l+1)} \gets \mat{0 & 1 \\ 0 & 0} \rvx_k^{(l)} $;
			\ENDFOR
			\STATE $ l \gets l + 1 $;
			\STATE $ \sP \gets \{\vp_1^{(l)}, \vp_2^{(l)}, \cdots, \vp_m^{(l)}\} $;
			\ELSE
			\STATE remove $ \vp_j^{(l)} $ from $ \sP $, as long as $ \vp_j^{(l)} = \vp_i^{(l)} $;
			\ENDIF
			\ENDWHILE
			\STATE {\bf return} $ \mP^{(l)} $;
		\end{algorithmic}
		\caption{Projection Merge Algorithm}
		\label{alg:PMA}
	\end{algorithm}

    In Algorithm \ref{alg:PMA}, $ \mP^{(L)} $ is the output, as well as that of $ \tilde f_\theta(\cdot) $. Factually, by Algorithm \ref{alg:PMA}, we get each $ \mP^{(l)} $ in a recursive way. For the case $ \sJ_i \ne \emptyset $, we take $5$ points as an example to show how we get $\mP^{(l+1)}$ from $ \mP^{(l)} $ in Figure \ref{fig:2}. 

    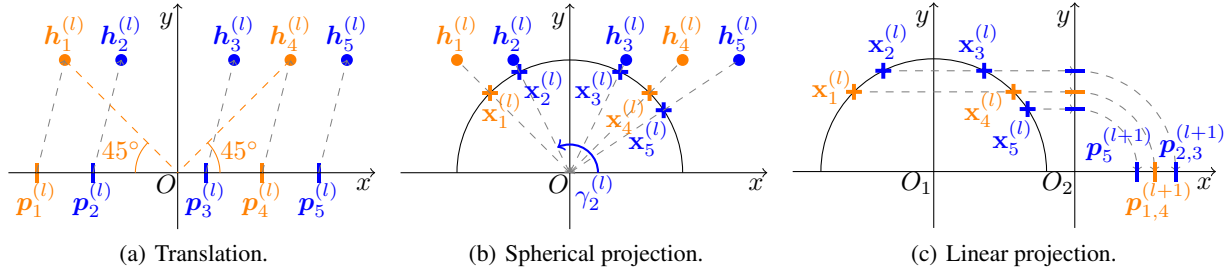
\begin{figure*}[t]
  \vspace{-0.1in}
		\centering
        \subfigure[Translation.]{
        \label{fig:2a}
        \begin{tikzpicture}[scale=0.75]
			
			\fill[blue, shift={(-1.5,0)}]  (-1pt,-5pt) rectangle         (+1pt,+5pt);
		      \fill[blue, shift={(0.5,0)}] (-1pt,-5pt) rectangle  (+1pt,+5pt);
            \fill[blue, shift={(2.5,0)}]  (-1pt,-5pt) rectangle (+1pt,+5pt);
			\fill[orange, shift={(-2.5,0)}] (-1pt,-5pt) rectangle (+1pt,+5pt);
			\fill[orange, shift={(1.5,0)}] (-1pt,-5pt) rectangle (+1pt,+5pt);
			
			\fill[blue, shift={(-1,2)}] (0,0) circle (3pt);
			\fill[blue, shift={(1,2)}] (0,0) circle (3pt);
			\fill[orange, shift={(-2,2)}] (0,0) circle (3pt);
			\fill[orange, shift={(2,2)}] (0,0) circle (3pt);
			\fill[blue, shift={(3,2)}] (0,0) circle (3pt);
			
			\draw [->, black] (-3,0) -- (3.5,0);
			\draw [->, black] (0,-1) -- (0,3);
            \node[] at (-0.2,2.7) {$y$};
            \node[] at (3.3,-0.2) {$x$};
            \node[] at (-0.2,-0.2) {$O$};
			
			\draw[->, dashed, gray, very thin] (-2.5,0) -- (-2,2);
            \draw[->, dashed, gray, very thin] (-1.5,0) -- (-1,2);
            \draw[->, dashed, gray, very thin] (0.5,0) -- (1,2);
            \draw[->, dashed, gray, very thin] (1.5,0) -- (2,2);
            \draw[->, dashed, gray, very thin] (2.5,0) -- (3,2);

            \draw [orange, dashed] (-2,2) -- (0,0);
            \draw [orange] (-0.75,0) arc [start angle=180, end angle=135, radius=0.75];
            \draw [orange, dashed] (2,2) -- (0,0);
            \draw [orange] (0.75,0) arc [start angle=0, end angle=45, radius=0.75];
            \node [orange] at (1.1,0.4) {$45^\circ$};
            \node [orange] at (-1.0,0.4) {$45^\circ$};

            \node [orange] at (-2.5,0) [below] {$\vp_1^{(l)}$};
			\node [orange] at (1.5,0) [below] {$\vp_4^{(l)}$};
			\node [blue] at (-1.5,0) [below] {$\vp_2^{(l)}$};
			\node [blue] at (0.5,0) [below] {$\vp_3^{(l)}$};
			\node [blue] at (2.5,0) [below] {$\vp_5^{(l)}$};
			\node [orange] at (-2,2) [above] {$\vh_1^{(l)}$};
			\node [orange] at (2,2) [above] {$\vh_4^{(l)}$};
			\node [blue] at (-1,2) [above] {$\vh_2^{(l)}$};
			\node [blue] at (1,2) [above] {$\vh_3^{(l)}$};
			\node [blue] at (3,2) [above] {$\vh_5^{(l)}$};
		\end{tikzpicture}
        }
        \subfigure[Spherical projection.]{
        \label{fig:2b}
        \begin{tikzpicture}[scale=0.75]	
			\fill[blue, shift={(-1,2)}] (0,0) circle (3pt);
			\fill[blue, shift={(1,2)}] (0,0) circle (3pt);
			\fill[orange, shift={(-2,2)}] (0,0) circle (3pt);
			\fill[orange, shift={(2,2)}] (0,0) circle (3pt);
			\fill[blue, shift={(3,2)}] (0,0) circle (3pt);
			
			\draw [->, black] (-3,0) -- (3.5,0);
			\draw [->, black] (0,-1) -- (0,3);
            \node[] at (-0.2,2.7) {$y$};
            \node[] at (3.3,-0.2) {$x$};
            \node[] at (-0.2,-0.2) {$O$};
			\draw [] (2,0) arc [start angle=0,end angle=180, radius=2];

            \draw [->, blue, thick] (0.5,0) arc [start angle=0,end angle=116.56, radius=0.5];
            \node [blue] at (0.45,-0.35) {$\gamma_2^{(l)}$};

			\fill[blue, shift={(-0.894,1.788)}] (-4pt,-1pt) rectangle (+4pt,+1pt) (-1pt,-4pt) rectangle (+1pt,+4pt);
			\fill[blue, shift={(0.894,1.788)}] (-4pt,-1pt) rectangle (+4pt,+1pt) (-1pt,-4pt) rectangle (+1pt,+4pt);
			\fill[orange, shift={(-1.414,1.414)}] (-4pt,-1pt) rectangle (+4pt,+1pt) (-1pt,-4pt) rectangle (+1pt,+4pt);
			\fill[orange, shift={(1.414,1.414)}] (-4pt,-1pt) rectangle (+4pt,+1pt) (-1pt,-4pt) rectangle (+1pt,+4pt);
			\fill[blue, shift={(1.664,1.109)}] (-4pt,-1pt) rectangle (+4pt,+1pt) (-1pt,-4pt) rectangle (+1pt,+4pt);
			
			\draw[<-, dashed, gray, very thin] (0,0) -- (-2,2);
            \draw[<-, dashed, gray, very thin] (0,0) -- (-1,2);
            \draw[<-, dashed, gray, very thin] (0,0) -- (1,2);
            \draw[<-, dashed, gray, very thin] (0,0) -- (2,2);
            \draw[<-, dashed, gray, very thin] (0,0) -- (3,2);
			
			\node [orange] at (-2,2) [above] {$\vh_1^{(l)}$};
			\node [orange] at (2,2) [above] {$\vh_4^{(l)}$};
			\node [blue] at (-1,2) [above] {$\vh_2^{(l)}$};
			\node [blue] at (1,2) [above] {$\vh_3^{(l)}$};
			\node [blue] at (3,2) [above] {$\vh_5^{(l)}$};
			
			\node [orange] at (-1.2,1.1) {$ \rvx_1^{(l)} $};
			\node [blue] at (-0.45,1.5) {$ \rvx_2^{(l)} $};
			\node [blue] at (0.45,1.5) {$ \rvx_3^{(l)} $};
			\node [orange] at (1.0,1.0) {$ \rvx_4^{(l)} $};
			\node [blue] at (1.4,0.6) {$ \rvx_5^{(l)} $};
		\end{tikzpicture}
        }
        \subfigure[Linear projection.]{
        \label{fig:2c}
        \begin{tikzpicture}[scale=0.75]
			
			\draw [->, black] (-2.5,0) -- (5,0);
			\draw [->, black] (0,-1) -- (0,3);
            \node[] at (-0.2,2.7) {$y$};
            \node[] at (2.3,2.7) {$y$};
            \node[] at (4.8,-0.2) {$x$};
            \node[] at (-0.3,-0.2) {$O_1$};
            \node[] at (2.2,-0.2) {$O_2$};
			\draw [] (2,0) arc [start angle=0,end angle=180, radius=2];
            \draw [->, black] (2.5,-1) -- (2.5,3);
            \draw [->, gray, dashed] (1.664,1.109) -- (2.5,1.109); 
            \draw [->, gray, dashed] (-0.894,1.788) -- (2.5,1.788); 
            \draw [->, gray, dashed] (-1.414,1.414) -- (2.5,1.414); 
            \draw [->, gray, dashed] (2.5,1.414) arc [start angle=90,end angle=0, radius=1.414];
            \draw [->, gray, dashed] (2.5,1.109) arc [start angle=90,end angle=0, radius=1.109];
            \draw [->, gray, dashed] (2.5,1.788) arc [start angle=90,end angle=0, radius=1.788];
            
            \fill[blue, shift={(3.609,0)}]  (-1pt,-5pt) rectangle         (+1pt,+5pt);
            \fill[blue, shift={(4.288,0)}]  (-1pt,-5pt) rectangle (+1pt,+5pt);
			\fill[orange, shift={(3.914,0)}] (-1pt,-5pt) rectangle (+1pt,+5pt);
			\fill[blue, shift={(-0.894,1.788)}] (-4pt,-1pt) rectangle (+4pt,+1pt) (-1pt,-4pt) rectangle (+1pt,+4pt);
			\fill[blue, shift={(0.894,1.788)}] (-4pt,-1pt) rectangle (+4pt,+1pt) (-1pt,-4pt) rectangle (+1pt,+4pt);
			\fill[orange, shift={(-1.414,1.414)}] (-4pt,-1pt) rectangle (+4pt,+1pt) (-1pt,-4pt) rectangle (+1pt,+4pt);
			\fill[orange, shift={(1.414,1.414)}] (-4pt,-1pt) rectangle (+4pt,+1pt) (-1pt,-4pt) rectangle (+1pt,+4pt);
			\fill[blue, shift={(1.664,1.109)}] (-4pt,-1pt) rectangle (+4pt,+1pt) (-1pt,-4pt) rectangle (+1pt,+4pt);

            \fill[blue, shift={(2.5,1.788)}] (-5pt,-1pt) rectangle (+5pt,+1pt);
			\fill[blue, shift={(2.5,1.109)}] (-5pt,-1pt) rectangle (+5pt,+1pt);
			\fill[orange, shift={(2.5,1.414)}] (-5pt,-1pt) rectangle (+5pt,+1pt);
			
			\node [orange] at (-1.8,1.5) {$ \rvx_1^{(l)} $};
            \node [blue] at (-0.8,2.3) {$ \rvx_2^{(l)} $};
			\node [blue] at (0.8,2.3) {$ \rvx_3^{(l)} $};
			\node [orange] at (0.9,1.1) {$ \rvx_4^{(l)} $};
			\node [blue] at (1.4,0.6) {$ \rvx_5^{(l)} $};
            \node [orange] at (4,-0.03) [below] {$\vp_{1,4}^{(l+1)}$};
			\node [blue] at (4.6,0) [above] {$\vp_{2,3}^{(l+1)}$};
			\node [blue] at (3.3,0) [above] {$\vp_5^{(l+1)}$};
		\end{tikzpicture}
        }
        \vspace{-0.15in}
		\caption{Get $ \mP^{(l + 1)} $ from $ \mP^{(l)} $ geometrically. In Figure \ref{fig:2a}, $ \mP^{(l)} $ is shown as the bars on the $ x $-axis. At first, find the leftmost point, namely $ \vp_1^{(l)} $. Then we find another point with the same label as $ \vp_1^{(l)} $, but right of $ \vp_1^{(l)} $, choose the leftmost one, namely $ \vp_4^{(l)} $. Afterwards, shift all the points up by $ (p_4^{(l)} - p_1^{(l)})/2 $, and left by $ (p_4^{(l)} + p_1^{(l)})/2 $, then we get $ \mH^{(l)} $, as shown in Figure \ref{fig:2a}. Next, spherically project $ \mH^{(l)} $ onto the unit circle and get $ \mX^{(l)} $, shown as '+'s in Figure \ref{fig:2b}. Finally merge the points in $ \mX^{(l)} $ by their ordinates, as the new abscissas of $ \mP^{(l+1)} $, and take $ 0 $ as the new ordinates of $ \mP^{(l+1)} $, as shown in Figure \ref{fig:2c}. Now, we have $ \mP^{(l+1)} $. }
		\label{fig:2}
	\end{figure*}
    
    As for the case $ \sJ_i = \emptyset $, it indicates that all points with the same label as $ \vp_i^{(l)} $ are merged together. Therefore, we remove them from $ \sP $, and choose the leftmost point from the remaining $ \sP $, until $ \sP = \emptyset $. 

    Based above, we give the properties of each layer as follows. 
    \begin{proposition}
        \label{prop:layer property}
		For each layer, $ \varphi_l^{(1)} (2\le l \le L) $ only merges points with the same label. Nevertheless, $ \varphi_1^{(1)}, SP(\cdot) $ and $ \varphi_l^{(2)} (1\le l \le L-1) $ do not merge any points. 
    \end{proposition}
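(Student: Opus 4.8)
The plan is to treat the four maps separately, dispatching the three that merge nothing before turning to $\varphi_l^{(1)}$ with $l\ge 2$, which carries all the content. For $\varphi_1^{(1)}$ there is nothing to do: Proposition~\ref{prop:initial} already states $\vp_i^{(1)}\ne\vp_j^{(1)}$ whenever $\rvx_i^{(0)}\ne\rvx_j^{(0)}$. Each $\varphi_l^{(2)}$ ($1\le l\le L-1$) is, by Algorithm~\ref{alg:PMA}, the translation $\vp\mapsto\vp-\tfrac12\mat{p_i^{(l)}+p_j^{(l)}\\ p_i^{(l)}-p_j^{(l)}}$, an injective affine map of $\R^2$, so it merges nothing. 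For $SP(\cdot)$ on $\mH^{(l)}$, the observation is that every $\vh_k^{(l)}$ has the \emph{same} second coordinate $h_0^{(l)}:=(p_j^{(l)}-p_i^{(l)})/2$; since $i=\arg\min_k\{p_k^{(l)}:\vp_k^{(l)}\in\sP\}$ while $\vp_j^{(l)}\in\sJ_i\subseteq\sP$ with $\vp_j^{(l)}\ne\vp_i^{(l)}$, we get $p_i^{(l)}<p_j^{(l)}$ and hence $h_0^{(l)}>0$, so no $\vh_k^{(l)}$ is the origin. If $\rvx_a^{(l)}=\vh_a^{(l)}/\|\vh_a^{(l)}\|$ equalled $\rvx_b^{(l)}=\vh_b^{(l)}/\|\vh_b^{(l)}\|$, comparing second coordinates would force $\|\vh_a^{(l)}\|=\|\vh_b^{(l)}\|$ and therefore $\vh_a^{(l)}=\vh_b^{(l)}$; hence $SP$ merges nothing either.

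For the main case, $\varphi_l^{(1)}$ with $2\le l\le L$ sends $\rvx_k^{(l-1)}\mapsto\mat{0&1\\0&0}\rvx_k^{(l-1)}$; composing it with the translation and $SP$ of the previous layer gives
\begin{equation}
p_k^{(l)}=\frac{h_0^{(l-1)}}{\sqrt{\bigl(p_k^{(l-1)}-M^{(l-1)}\bigr)^2+\bigl(h_0^{(l-1)}\bigr)^2}},\qquad M^{(l-1)}:=\frac{p_i^{(l-1)}+p_j^{(l-1)}}{2},
\end{equation}
where $i,j$ are the indices picked at step $l-1$. As $t\mapsto h_0/\sqrt{t^2+h_0^2}$ is strictly decreasing in $|t|$, we have $\vp_a^{(l)}=\vp_b^{(l)}$ exactly when $p_a^{(l-1)}=p_b^{(l-1)}$ or when $p_a^{(l-1)}$ and $p_b^{(l-1)}$ are mirror images through $M^{(l-1)}$; so it remains to show every mirror pair of distinct layer-$(l-1)$ positions is monochromatic. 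I would run an induction on $l$ carrying the invariant ``points of $\mP^{(l)}$ at a common position carry a common label'' (true at $l=1$ by Proposition~\ref{prop:initial}; propagated to level $l$ once the merging claim for $\varphi_l^{(1)}$ is proved, because $\varphi_l^{(2)}$ and $SP$ never merge). Working with the distinct positions of $\mP^{(l-1)}$, each with a well-defined label, a mirror pair $\{p_a^{(l-1)},p_b^{(l-1)}\}$ with $p_a^{(l-1)}<M^{(l-1)}<p_b^{(l-1)}$ falls into one of: $\{p_i^{(l-1)},p_j^{(l-1)}\}$, which is monochromatic by the defining condition $y_j=y_i$ in the choice of $j$; or a pair with an endpoint strictly inside $(p_i^{(l-1)},p_j^{(l-1)})$ — and since $p_i^{(l-1)}$ is the leftmost element of $\sP$ while $p_j^{(l-1)}$ is the leftmost $\sP$-element of label $y_i$ distinct from $p_i^{(l-1)}$, no position in $(p_i^{(l-1)},p_j^{(l-1)})$ can have label $y_i$, so both endpoints carry the opposite label because the label set is $\{0,1\}$.

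The step I expect to be the real obstacle is completing this last classification, in particular accounting for points that are \emph{not} in $\sP$: a class already collapsed to a single location is skipped by the \textbf{else} branch of Algorithm~\ref{alg:PMA}, yet its points still pass through the \textbf{for} loop, so such a point can a priori be the mirror image through $M^{(l-1)}$ of an active point lying to the right of $p_j^{(l-1)}$, which would be an opposite-label merge. Ruling this out needs more than counting labels: it requires a geometric invariant on the full configuration of $\mP^{(l)}$ that forbids bichromatic mirror pairs, maintained through the recursion of Algorithm~\ref{alg:PMA} together with a careful description of where the \textbf{else} branch places the collapsed class so that it never sits symmetrically opposite an active point. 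Establishing and propagating that invariant is where the argument needs the most care; with it in hand, the four cases above combine to prove the proposition.
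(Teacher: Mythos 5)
Your handling of $\varphi_1^{(1)}$, $\varphi_l^{(2)}$ and $SP(\cdot)$ is correct and matches the paper's proof in substance (the paper argues injectivity of $SP$ via strict monotonicity of the angle $\gamma_k^{(l)}$ in $p_k^{(l)}$; your equal-second-coordinate argument is equivalent). Your reduction of the $\varphi_l^{(1)}$ case to ``every mirror pair about $M^{(l-1)}$ is monochromatic,'' and your disposal of the pair $\{p_i^{(l-1)},p_j^{(l-1)}\}$ and of pairs with both endpoints in $(p_i^{(l-1)},p_j^{(l-1)})$, also track the paper's three-case analysis (both right of $p_j$; one on each side of $p_j$, separated by the threshold $\sqrt2/2$; both left of $p_j$). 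So the approach is the paper's approach; the difference is that you stop short of the last sub-case.

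That last sub-case is a genuine gap, and your proof is incomplete without it; worse, the invariant you hope to establish does not exist for Algorithm~\ref{alg:PMA} as written. The paper's Case~3 asserts that two differently labelled points both left of $p_j^{(l)}$ ``will never happen'' because one of them would contradict the minimality of $j$, but that contradiction requires the $y_i$-labelled one to be an element of $\sP$ at a position different from $p_i^{(l)}$ --- exactly what fails once a class has been collapsed and removed by the \textbf{else} branch (every removed position lies strictly left of $p_i^{(l)}$, since the minimum of $\sP$ strictly increases after each removal). Concretely, take four collinear samples whose layer-$1$ positions are $0.1,0.3,0.5,0.7$ with labels $0,1,1,1$; the relation $\rvx_1+\rvx_4=\rvx_2+\rvx_3$ forces $p_1^{(1)}+p_4^{(1)}=p_2^{(1)}+p_3^{(1)}$ for \emph{every} choice of $\vu$ in Proposition~\ref{prop:initial}. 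The singleton class at $0.1$ is removed by the \textbf{else} branch, the merge step then uses $p_i=0.3$, $p_j=0.5$, $M=0.4$, $h_0=0.1$, and both $0.1$ and $0.7$ are sent to $h_0/\sqrt{0.09+h_0^2}$: a bichromatic merge. So no configuration invariant of PMA alone can close your case (iii); the repair has to go through the parallelogram obstruction of Proposition~\ref{prop:confusion} and the PBA preprocessing (or an added genericity hypothesis on the data), which the paper invokes only for the multi-class setting. In short, you have correctly isolated the one step where the paper's own argument is not airtight, but identifying the hole is not the same as filling it, and as stated the proposal does not prove the proposition.
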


    Please refer to \textit{\SM~\ref{section:proofofAlgorithms}} for the proof of Proposition \ref{prop:layer property}. 

    By Proposition \ref{prop:layer property}, we figure out that Algorithm \ref{alg:PBA} will only merge points with the same label. Besides, we find that from $ \mP^{(l)} $ to $ \mP^{(l+1)} $, the number of different points will decrease at least $1$. Since the input is $ m $ different points from two classes, we merge at most $ m-2 $ times by Algorithm \ref{alg:PMA}, we thus have $ L-1 \le m-2 $. 
    
    By Algorithm \ref{alg:PMA}, we can construct other linear functions with exact parameters as follows. 
    \begin{equation}
        \label{eqn:19}
        \begin{cases}
            \varphi_l^{(1)}: \rvx_k^{(l-1)} \mapsto \mat{0 & 1 \\ 0 & 0}\rvx_k^{(l-1)}, & 1 < l \le L, \\
            \varphi_l^{(2)}: \vp_k^{(l)} \mapsto \vp_k^{(l)} - \mat {p_i^{(l)} + p_j^{(l)} \\ p_i^{(l)} - p_j^{(l)}}/2, & 1\le l < L.
        \end{cases}
    \end{equation}

    Therefore, $ \tilde f_\theta(\cdot) $ with the parameters in \Eqn\ref{eqn:18} and \Eqn\ref{eqn:19} can classify the samples $ \mX^{(0)} $. Besides, the LN-Net in \Eqn\ref{eqn:14} with depth\footnote{We denote the number of LNs as the depth of an LN-Net. } $ L-1 = O(m) $ can also classify the $m$ samples. We hence have proved Theorem \ref{thm:network}. 

    Our results above are based on an LN-Net with $3$ neurons each layer. 
    Furthermore, we can generalize PMA for a wider neural network, but it is much more complex. Please refer to \textit{\SM ~\ref{section:proofofAlgorithms}} for more details. 

   Based on Theorem \ref{thm:network}, we can easily obtain the following corollary related to VC dimension~\cite{1998_NIPS_Bartlett} of an LN-Net. 
    \begin{corollary}
        Given an LN-Net $f_\theta(\cdot)$ with width 3 and depth $L$, its VC dimension $VCdim(f_\theta(\cdot))$ is lower bounded by $L+2$.  
    \end{corollary}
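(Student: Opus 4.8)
The plan is to prove the lower bound straight from the definition of VC dimension: I will exhibit a set of $L+2$ points in $\R^3$ that is shattered by the class of LN-Nets with $3$ neurons per layer and $L$ LN layers, which gives $VCdim(f_\theta(\cdot)) \ge L+2$. Throughout, the $\{0,1\}$-valued classifier obtained from $f_\theta$ is the one already implicit in Theorem~\ref{thm:network} (e.g.\ thresholding the scalar output), so "realizing a labeling" means exactly what it does there.

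First I would fix any $L+2$ pairwise distinct points $\rvx_1,\dots,\rvx_{L+2}\in\R^3$ and set $m=L+2$. For an arbitrary label assignment in $\{0,1\}^m$, I invoke Theorem~\ref{thm:network} to get a width-$3$ LN-Net that classifies the points correctly. The key quantitative point is that the proof of Theorem~\ref{thm:network} gives the sharper statement that the number of LN layers used is at most $m-2=L$: PMA performs at most $m-2$ merges (never merging across classes, so it collapses $m$ distinct points down to the at most two surviving class representatives), and each merge consumes one LN layer. A labeling in which only one class is present is realized trivially by a single linear map, i.e.\ zero LN layers. Hence every one of the $2^{m}$ labelings is realized by a width-$3$ LN-Net using \emph{at most} $L$ LN layers.

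The one remaining gap is that the corollary asks for depth exactly $L$, so I would pad a net that uses $D<L$ LN layers up to exactly $L$ without changing the realized labeling. After PMA finishes, the points sit in at most two clusters occupying at most two distinct points on a line in the SP picture of Lemma~\ref{lemma:equivalence}; since PMA and Proposition~\ref{prop:initial} only ever require the current points to be \emph{distinct} (not in any prescribed position), and two distinct points are always linearly separable, I can append $L-D$ "dummy" stages, each a linear map followed by $LN$. Choosing the linear wrapper so that the surviving points are not mapped onto a common ray through the origin, $LN$ keeps them distinct and keeps the two clusters apart, and the final linear layer still separates the classes. This produces, for every labeling, a width-$3$ LN-Net with exactly $L$ LN layers realizing it, so the $L+2$ points are shattered and $VCdim(f_\theta(\cdot))\ge L+2$.

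The main obstacle is precisely this depth-exactness bookkeeping — equivalently, checking monotonicity of the hypothesis class in the number of LN layers: one must verify that the dummy $LN$ stages can always be made to preserve distinctness of the at-most-two surviving points, and that the merge count in the proof of Theorem~\ref{thm:network} is genuinely bounded by $m-2$ (the one-class labelings, which would otherwise need $m-1$ merges, being dispatched separately by the trivial classifier). Everything else is a routine unwinding of the definition of shattering together with Theorem~\ref{thm:network}.
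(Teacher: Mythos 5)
Your proposal is correct and follows essentially the same route the paper intends: the corollary is obtained by instantiating Theorem~\ref{thm:network} with $m=L+2$ points and using the merge-count bound (at most $m-2$ merges, one LN layer per merge) to conclude that depth $L$ suffices for every labeling, hence the points are shattered. The paper leaves the proof implicit, and your two additions — dispatching the one-class labelings separately (which would otherwise cost $m-1$ merges) and padding shallower nets up to exactly $L$ LN layers while preserving distinctness of the two surviving cluster points — are exactly the bookkeeping the paper glosses over, and both are handled correctly.
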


	
	\subsection{LN for Multi-class Classification}
	\label{section:multi-class}

   \begin{theorem}
		\label{thm:network-multi}
        \REVISE{Given $m$ samples with any binary label assignment, there always exists an LN-Net with only $3$ neurons per layer and $O(m)$ LN layers can correctly classify them.}
	\end{theorem}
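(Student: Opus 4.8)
The claim, as stated, is a \emph{binary} classification result, so the plan is to construct an LN-Net that collapses each of the two classes to a single point and then separates them linearly, reusing the Projection Merge machinery already developed for Theorem~\ref{thm:network}. The first step is to trade LN for the geometrically transparent Spherical Projection: by Corollary~\ref{coro:SP is LN-Net}, every $SP(\cdot)$ on $\R^2$ is realizable by an LN-Net acting on $\R^3$, so I would design the entire classifier in the ``$SP$ $+$ linear layers'' form of Eqn.~\ref{eqn:16} and translate it back through Lemma~\ref{lemma:equivalence} at the very end. This keeps every hidden layer at width $3$, as the statement demands.

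For the explicit construction, I would first invoke Proposition~\ref{prop:initial} to map all $m$ samples injectively onto the $x$-axis through a generic linear functional $\vu$, yielding distinct points $\vp_1^{(1)},\dots,\vp_m^{(1)}$. I would then iterate the merge step of Algorithm~\ref{alg:PMA} (depicted in Figure~\ref{fig:2}): at layer $l$, select the leftmost point $\vp_i^{(l)}$ together with the nearest same-label point $\vp_j^{(l)}$ to its right, translate so that the pair is symmetric about the origin, spherically project onto the unit circle, and read off the resulting ordinates as the abscissas of a fresh $x$-axis. The symmetry forces $\vp_i^{(l)}$ and $\vp_j^{(l)}$ to share a common circle ordinate and hence merge into one point, while Proposition~\ref{prop:layer property} guarantees that neither $SP$ nor any $\varphi_l^{(2)}$ ever merges two \emph{opposite}-label points.

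Termination and the $O(m)$ depth bound then fall out of a monotone counting argument: each merge step strictly decreases the number of distinct points by at least one, and starting from at most $m$ distinct points drawn from two classes, at most $m-2$ merges suffice before each class is reduced to a single point. Hence the number of LN layers satisfies $L-1 \le m-2 = O(m)$, and at that stage the two classes are two distinct points on a line, which the final linear map $\varphi_1$ separates by thresholding one coordinate. Folding the $\varphi_l^{(1)}$ and $\varphi_l^{(2)}$ back into the composite linear layers of Eqn.~\ref{eqn:16} and reinstating LN via Lemma~\ref{lemma:equivalence} produces the required LN-Net.

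The hardest part will be pinning down the invariant underlying Proposition~\ref{prop:layer property}, namely that the translate-then-project step \emph{always} merges the two chosen same-label points exactly while \emph{never} forcing two opposite-label points to coincide. This splits into two algebraic verifications: that the prescribed translation really does place the target pair symmetrically about the origin, so that $SP$ equalizes their ordinates; and that, for a generic $\vu$ and generic projection direction, points of opposite label keep distinct images throughout the iteration. The remaining care is the degenerate branch $\sJ_i = \emptyset$, where all same-label points are already merged --- these must be discarded and the next leftmost survivor chosen --- which is exactly the bookkeeping that secures termination.
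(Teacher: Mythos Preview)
Your reading of the statement as a binary classification claim is literal and, under that reading, your construction is correct and essentially reproduces the paper's proof of Theorem~\ref{thm:network}: initialize with Proposition~\ref{prop:initial}, iterate PMA, invoke Proposition~\ref{prop:layer property} to guarantee no cross-class merges, and count $L-1 \le m-2$ layers.

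However, the paper's own proof of this theorem treats the \emph{multi-class} case (the word ``binary'' in the statement is almost certainly a typo carried over from Theorem~\ref{thm:network}; note the section heading and the surrounding discussion). For multi-class labels, PMA alone is insufficient: when you merge $\vp_i^{(l)}$ and $\vp_j^{(l)}$ via the symmetric translation and circle projection, any other pair $\vp_s^{(l)},\vp_t^{(l)}$ satisfying $\vp_s^{(l)}+\vp_t^{(l)} = \vp_i^{(l)}+\vp_j^{(l)}$ will also be merged (Proposition~\ref{prop:confusion}). In the binary case Proposition~\ref{prop:layer property} rules this out by a leftmost-ordering argument, but with three or more classes such an unintended pair can carry different labels. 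The paper's fix is to insert, before each PMA merge, a Parallelization Breaking step (Algorithm~\ref{alg:PBA}): lift $\mP^{(l)}$ to a line above the origin, spherically project to the upper half-circle (where no four points can form a parallelogram), and then project to a generic line chosen via Proposition~\ref{prop:PBA} to avoid creating any new parallelograms or coincidences. This costs one extra $SP$ per merge, so the depth bound becomes at most $2m$ rather than $m-2$, still $O(m)$.

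So: for the statement as literally written you are fine and aligned with the paper's binary argument; for the statement as intended you are missing the PBA preprocessing and the parallelogram analysis that justifies it.
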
 

    Applying Algorithm \ref{alg:PMA} for a multi-class classification may confuse two samples with different labels. We thus introduce Parallelization Breaking Algorithm to avoid such confusion. Besides, we can also construct an LN-Net to classify the samples. The detailed analysis and proof are as below. 

    To begin with, we are concerned about whether Algorithm \ref{alg:PMA} applies to multi-class classification---the answer is Not. Based on Figure \ref{fig:2c}, we recolor $ \rvx_3^{(l)} $ red, as shown in Figure \ref{fig:3}. When we merge $ \rvx_1^{(l)} $ and $\rvx_4^{(l)}$, $ \rvx_2^{(l)} $ and $\rvx_3^{(l)}$ will be merged in the meanwhile. In other words, the algorithm will confuse them to be in the same class. Proposition \ref{prop:confusion} indicates the necessary condition for such confusion. 

    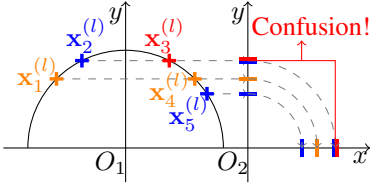
\begin{figure}[h]
		\centering
		\begin{tikzpicture}[scale=0.65]
			
			\draw [->, black] (-2.5,0) -- (5,0);
			\draw [->, black] (0,-0.7) -- (0,3);
            \node[] at (-0.2,2.7) {$y$};
            \node[] at (2.3,2.7) {$y$};
            \node[] at (4.8,-0.2) {$x$};
            \node[] at (-0.3,-0.3) {$O_1$};
            \node[] at (2.2,-0.3) {$O_2$};
			\draw [] (2,0) arc [start angle=0,end angle=180, radius=2];
            \draw [->, black] (2.5,-0.7) -- (2.5,3);
            \draw [->, gray, dashed] (1.664,1.109) -- (2.5,1.109); 
            \draw [->, gray, dashed] (-0.894,1.788) -- (2.5,1.788); 
            \draw [->, gray, dashed] (-1.414,1.414) -- (2.5,1.414); 
			
			\fill[blue, shift={(-0.894,1.788)}] (-4pt,-1pt) rectangle (+4pt,+1pt) (-1pt,-4pt) rectangle (+1pt,+4pt);
			\fill[red, shift={(0.894,1.788)}] (-4pt,-1pt) rectangle (+4pt,+1pt) (-1pt,-4pt) rectangle (+1pt,+4pt);
			\fill[orange, shift={(-1.414,1.414)}] (-4pt,-1pt) rectangle (+4pt,+1pt) (-1pt,-4pt) rectangle (+1pt,+4pt);
			\fill[orange, shift={(1.414,1.414)}] (-4pt,-1pt) rectangle (+4pt,+1pt) (-1pt,-4pt) rectangle (+1pt,+4pt);
			\fill[blue, shift={(1.664,1.109)}] (-4pt,-1pt) rectangle (+4pt,+1pt) (-1pt,-4pt) rectangle (+1pt,+4pt);

            \fill[blue, shift={(2.5,1.788)}] (-5pt,-2pt) rectangle (5pt,0pt);
            \fill[red, shift={(2.5,1.788)}] (-5pt,0pt) rectangle (+5pt,2pt);
			\fill[blue, shift={(2.5,1.109)}] (-5pt,-1pt) rectangle (+5pt,+1pt);
			\fill[orange, shift={(2.5,1.414)}] (-5pt,-1pt) rectangle (+5pt,+1pt);

            \fill[blue, shift={(3.609,0)}]  (-1pt,-5pt) rectangle         (+1pt,+5pt);
            \fill[blue, shift={(4.288,0)}]  (-2pt,-5pt) rectangle (0pt,+5pt);
            \fill[red, shift={(4.288,0)}]  (0pt,-5pt) rectangle (2pt,+5pt);
			\fill[orange, shift={(3.914,0)}] (-1pt,-5pt) rectangle (+1pt,+5pt);

            \draw [->, gray, dashed] (2.5,1.414) arc [start angle=90,end angle=0, radius=1.414];
            \draw [->, gray, dashed] (2.5,1.109) arc [start angle=90,end angle=0, radius=1.109];
            \draw [->, gray, dashed] (2.5,1.788) arc [start angle=90,end angle=0, radius=1.788];
            \draw [->, red] (2.5,1.788) -- (3.6,1.788) -- (3.6,2.2);
            \node [red,very thick] at (3.8,2.5) {Confusion!}; 
            \draw [red] (4.288,0) -- (4.288,1.788) -- (3.6,1.788);
			
			\node [orange] at (-1.8,1.5) {$ \rvx_1^{(l)} $};
			\node [blue] at (-0.8,2.3) {$ \rvx_2^{(l)} $};
			\node [red] at (0.8,2.3) {$ \rvx_3^{(l)} $};
            \node [orange] at (0.9,1.2) {$ \rvx_4^{(l)} $};
			\node [blue] at (1.3,0.7) {$ \rvx_5^{(l)} $}; 
		\end{tikzpicture}
        \vspace{-0.1in}
		\caption{The case of confusion in the merging process.}
		\label{fig:3}
	\end{figure}

    \begin{proposition}
        \label{prop:confusion}
        Confusion refers to merging two points with different labels. If confusion happens when we project $ \mX^{(l+1)} $ onto the $y$-axis, there must be a parallelogram\footnote{The parallelogram may be degenerate. Given four points $ \rvx_1, \rvx_2, \rvx_3, \rvx_4 $, if the sum of two points is the same with that of the other two, we regard they form a parallelogram.} consisting of four different points in $ \mP^{(l)} $. 
    \end{proposition}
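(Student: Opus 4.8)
The plan is to trace the single step of Algorithm~\ref{alg:PMA} that produces $\mP^{(l+1)}$ from $\mP^{(l)}$ (this happens only in the branch $\sJ_i\ne\emptyset$, the only branch that can merge points), write an explicit formula for the one nonzero coordinate of each $\vp_k^{(l+1)}$, and then read off what it means for two of them to coincide. By lines~8--9, $\vh_k^{(l)}=\vp_k^{(l)}-\tfrac12\mat{p_i^{(l)}+p_j^{(l)}\\p_i^{(l)}-p_j^{(l)}}$; since every $\vp_k^{(l)}$ lies on the $x$-axis, all the $\vh_k^{(l)}$ share the common ordinate $h:=\tfrac12(p_j^{(l)}-p_i^{(l)})$, i.e. $\vh_k^{(l)}=[u_k,h]^\top$ with $u_k:=p_k^{(l)}-\tfrac12(p_i^{(l)}+p_j^{(l)})$. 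Because $i$ minimises $p$ over $\sP$ whereas $j\in\sJ_i$ is a \emph{different} point of $\sP$, we have $p_i^{(l)}<p_j^{(l)}$, so $h>0$, $\|\vh_k^{(l)}\|=\sqrt{u_k^2+h^2}>0$, and the abscissa of $\vp_k^{(l+1)}$ — which by line~11 is the ordinate of $\rvx_k^{(l)}=\vh_k^{(l)}/\|\vh_k^{(l)}\|$ — equals $h/\sqrt{u_k^2+h^2}$.

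Next I would translate ``confusion'' into an algebraic identity. If two points with labels $y_a\ne y_b$ are merged, then $h/\sqrt{u_a^2+h^2}=h/\sqrt{u_b^2+h^2}$, which (using $h>0$) is exactly $u_a^2=u_b^2$, i.e. $u_a=\pm u_b$. Here I invoke the invariant that within any layer a position determines a label: no step of the algorithm ever merges points of different labels (Proposition~\ref{prop:layer property}) and the initial samples are pairwise distinct, so any two points at the same position carry the same label. Since $y_a\ne y_b$, this forces $\vp_a^{(l)}\ne\vp_b^{(l)}$, hence $u_a\ne u_b$, hence $u_a=-u_b$. Unwinding the definition of $u$ gives $p_a^{(l)}+p_b^{(l)}=p_i^{(l)}+p_j^{(l)}$, i.e. $\vp_a^{(l)}+\vp_b^{(l)}=\vp_i^{(l)}+\vp_j^{(l)}$ in $\R^2$ — precisely the (necessarily degenerate, the four points being collinear on the $x$-axis) parallelogram relation of the footnote.

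It remains to check that $\vp_a^{(l)},\vp_b^{(l)},\vp_i^{(l)},\vp_j^{(l)}$ are four \emph{distinct} points of $\mP^{(l)}$. We already have $\vp_a^{(l)}\ne\vp_b^{(l)}$, and $\vp_i^{(l)}\ne\vp_j^{(l)}$ since $j\in\sJ_i$; and if, say, $\vp_a^{(l)}=\vp_i^{(l)}$ then the sum relation forces $\vp_b^{(l)}=\vp_j^{(l)}$, whence $y_a=y_i=y_j=y_b$ (using $y_i=y_j$ from $j\in\sJ_i$ together with the position-determines-label invariant), contradicting $y_a\ne y_b$; the remaining three coincidences are symmetric. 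This completes the argument. The computation is elementary, so I do not expect a real obstacle; the step I would state most carefully is the repeated appeal to the ``position determines label'' invariant, which does double duty — it rules out the trivial branch $u_a=u_b$ and it certifies that the four vertices are genuine rather than a collapsed pair — together with the explicit acknowledgement that the guaranteed parallelogram is always the degenerate, collinear one, which is exactly the configuration the Parallelization Breaking Algorithm is later designed to destroy.
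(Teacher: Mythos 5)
Your argument is correct and is essentially the paper's own proof in explicit coordinates: where the paper observes $\sin\gamma_s^{(l)}=\sin\gamma_t^{(l)}$ forces the two merged points to be mirror images about the $y$-axis, you compute the projected ordinate $h/\sqrt{u_k^2+h^2}$ and deduce $u_a=-u_b$, which is the same symmetry, and both unwind to $\vp_a^{(l)}+\vp_b^{(l)}=\vp_i^{(l)}+\vp_j^{(l)}$. Your explicit check that the four vertices are pairwise distinct (and the careful use of the position-determines-label invariant to exclude $u_a=u_b$) is a welcome refinement the paper leaves implicit, though in the multi-class setting that invariant is safest justified by taking $l$ to be the first layer at which confusion occurs rather than by citing Proposition~\ref{prop:layer property} directly.
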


    In reverse, if there is no parallelograms in $ \mP^{(l)} $, confusion will never happen when applying Algorithm \ref{alg:PMA}. Please refer to \textit{\SM~\ref{section:proofofAlgorithms}} for the proof of Proposition \ref{prop:confusion}. 
    
    To avoid such confusion, we propose Parallelization Breaking Algorithm (PBA) as follows. 

    \begin{algorithm}[h]
		\begin{algorithmic}[1]
			\INPUT $ \mP^{(l)} $, $ \vu_l $ (got by Proposition \ref{prop:PBA}).
			\OUTPUT $ \hat\mP^{(l)} $.
            \FOR{$k \gets 1 $ to $m$}
                \STATE $ \tilde\vp_k^{(l)} = SP(\vp_k^{(l)} +[0,1]^\top) $;
                \STATE $ \hat\vp_k^{(l)} = [\vu_l^\top \tilde \vp_k^{(l)}, 0]^
                \top $; 
            \ENDFOR
			\STATE {\bf return} $ \hat\mP^{(l)} $;
		\end{algorithmic}
		\caption{Parallelization Breaking Algorithm}
		\label{alg:PBA}
	\end{algorithm}

    \begin{proposition}
        \label{prop:PBA}
        We can always find $ \vu_l \in \R^2 $ for Algorithm \ref{alg:PBA}, such that there is no parallelograms in $ \hat\mP^{(l)} $, and no points merged in the algorithm. 
    \end{proposition}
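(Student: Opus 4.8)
The plan is to reduce the statement to a genericity argument about the single vector $\vu_l\in\R^2$, after making explicit what Algorithm~\ref{alg:PBA} does geometrically. First I would unfold its first two lines. Since every $\vp_k^{(l)}$ lies on the $x$-axis, $\vp_k^{(l)}+[0,1]^\top=[p_k^{(l)},1]^\top$, so $\tilde\vp_k^{(l)}=[p_k^{(l)},1]^\top/\sqrt{(p_k^{(l)})^2+1}$ lies on the unit circle with strictly positive second coordinate, that is, on the open upper semicircle; a one-line check (compare second coordinates, then first) shows $t\mapsto[t,1]^\top/\sqrt{t^2+1}$ is injective on $\R$, so distinct points among $\vp_1^{(l)},\dots,\vp_m^{(l)}$ are sent to distinct $\tilde\vp_k^{(l)}$. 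The third line sets $\hat\vp_k^{(l)}=[\vu_l^\top\tilde\vp_k^{(l)},0]^\top$, so the outputs are collinear and determined by the scalars $\hat p_k^{(l)}:=\vu_l^\top\tilde\vp_k^{(l)}$. Hence ``no points merged'' is equivalent to $\vu_l^\top(\tilde\vp_i^{(l)}-\tilde\vp_j^{(l)})\neq0$ for all distinct points, and --- using the degeneracy convention in the footnote of Proposition~\ref{prop:confusion} --- ``no parallelogram in $\hat\mP^{(l)}$'' is equivalent to $\vu_l^\top(\tilde\vp_a^{(l)}+\tilde\vp_b^{(l)}-\tilde\vp_c^{(l)}-\tilde\vp_d^{(l)})\neq0$ for every choice of four distinct points split into pairs $\{a,b\}$, $\{c,d\}$.

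The heart of the argument --- and the step I expect to be the main obstacle --- is to verify that every vector occurring in these two families is a \emph{nonzero} vector of $\R^2$; granting this, each constraint merely forbids $\vu_l$ from a single line through the origin. Nonvanishing of the differences $\tilde\vp_i^{(l)}-\tilde\vp_j^{(l)}$ is just the injectivity above. For the combinations $\tilde\vp_a^{(l)}+\tilde\vp_b^{(l)}-\tilde\vp_c^{(l)}-\tilde\vp_d^{(l)}$ I would prove the clean geometric fact that four distinct points on the open upper semicircle never form a parallelogram: if $\tilde\vp_a^{(l)}+\tilde\vp_b^{(l)}=\tilde\vp_c^{(l)}+\tilde\vp_d^{(l)}=:2M$, then dotting this identity with $\tilde\vp_a^{(l)}$, with $\tilde\vp_b^{(l)}$, and with itself, and using $\|\tilde\vp_a^{(l)}\|=\|\tilde\vp_b^{(l)}\|=1$, gives $M^\top\tilde\vp_a^{(l)}=M^\top\tilde\vp_b^{(l)}=\|M\|^2$, and likewise $M^\top\tilde\vp_c^{(l)}=M^\top\tilde\vp_d^{(l)}=\|M\|^2$. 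If $M\neq\mathbf0$ all four points then lie on the line $\{\rvx:M^\top\rvx=\|M\|^2\}$, which meets the unit circle in at most two points --- impossible for four distinct points; and if $M=\mathbf0$ then $\tilde\vp_b^{(l)}=-\tilde\vp_a^{(l)}$, impossible since both have strictly positive second coordinate. (When fewer than four distinct source points exist the parallelogram condition is vacuous, and when the two pairs share an index two of the four points coincide, so only the genuinely four-distinct case needs this fact.)

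Finally I would close with a measure argument. The set of forbidden directions is a union of at most $\binom{r}{2}+3\binom{r}{4}$ lines through the origin of $\R^2$, where $r\le m$ is the number of distinct points among the $\vp_k^{(l)}$; this is a closed set of Lebesgue measure zero, hence a proper subset of $\R^2$, so some $\vu_l$ --- indeed almost every $\vu_l$ --- avoids all of them. For such a $\vu_l$, step~2 of Algorithm~\ref{alg:PBA} merges nothing because it applies an injective map, step~3 merges nothing by the first family of constraints, and the output $\hat\mP^{(l)}$ contains no parallelogram by the second --- which is precisely the assertion of Proposition~\ref{prop:PBA}. Combined with Proposition~\ref{prop:confusion}, this is exactly what is needed so that the subsequent run of Algorithm~\ref{alg:PMA} on $\hat\mP^{(l)}$ never produces a confusion.
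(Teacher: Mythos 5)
Your proposal is correct and follows essentially the same route as the paper's proof: reduce both requirements to finitely many linear constraints $\vu_l^\top \vv \neq 0$ with each $\vv$ nonzero, and conclude because finitely many lines through the origin cannot cover $\R^2$. The only notable difference is that where the paper dismisses parallelograms among four distinct points of $\tilde\mP^{(l)}$ with a one-line remark about two adjacent obtuse angles, you supply a fully rigorous algebraic justification (dotting $\tilde\vp_a^{(l)}+\tilde\vp_b^{(l)}=2M$ with the unit vectors to place all four points on a single line meeting the circle in at most two points), which is a genuine improvement in precision but not a different method.
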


    Please refer to \textit{\SM}~\ref{section:proofofAlgorithms} for the proof of Proposition \ref{prop:PBA}.




    PBA helps us transform $ \mP^{(l)} $ to $ \hat\mP^{(l)} $, based on which confusion will never happen. For multi-class classification, we insert PBA between $ \varphi_l^{(1)} $ and $ \varphi_l^{(2)} $ in \Eqn\ref{eqn:16}, then given $ m $ samples with any label assignment, $ \tilde f_\theta(\cdot) $ with PBA can classify them. Based above, we replace SP with LN and linear layers in $ \tilde f_\theta(\cdot) $ with PBA, and then merge the adjacent linear layers. We figure out $ \tilde f_\theta(\cdot) $ with PBA is also an LN-Net. We point out that the depth of this LN-Net is no more than $ 2m $. We hence have proved Theorem \ref{thm:network-multi}.

    \paragraph{Summary.} In this section, we show that LN-Net also has powerful capacity in theory. Our theoretical results show that an LN-Net with width $3$ and depth $O(m)$ is able to classify given $m$ samples with any label assignment. We see an LN-Net performing over $3$ neurons can introduce nonlinearity. One question is that whether the nonlinearity of an LN-Net with $d>3$ neurons can be amplified, if we group neurons and perform LN in each group in parallel? We answer it in the following section.   
	
	\section{Amplify and Exploit the Nonlinearity of LN}

      \subsection{Comparison of Nonlinearity}
      \label{section:5.1}
    
In this part, we first define a measurement over the Hessian matrix to compare the magnitude of the nonlinearity. We then show the Group based LN (LN-G)\footnote{We use the new defined term LN-G rather than Group Normalization (GN)~\cite{2018_ECCV_Wu}, considering that: 1) GN is defined on the convolutional input $\mX \in \mathbb{R}^{d \times h \times w}$ but not on the input $\rx \in \mathbb{R}^d$; 2) Given the sequential input (e.g., text) $\mX \in \mathbb{R}^{d \times T}$ in Transformer/ViT, GN will share statistics over $T$ by definition while LN-G will have no inter-sequence dependence and use separate statistics over T, like LN.}---which divides neurons of a layer into groups and perform LN in each group in parallel---has stronger nonlinearity than the naive LN countpart.  

    \paragraph{Hessian of Linear Function.}
	Given a twice differential function $ f(\rvx) :\R^d \to \R $, we focus on its Hessian Matrix $ \nabla^2 f(\rvx) $. If $ f(\rvx) $ is a linear function, we have $ \nabla^2 f(\rvx) \equiv \mO $. 
	More generally, suppose that $ \varphi: \R^d \to \R^d $ is a linear transformation, we define $ \varphi(\rvx) = \mat{\varphi_1(\rvx), \cdots, \varphi_d(\rvx)}^\top $, and each $ \varphi_i(\rvx): \R^d \to \R $ is a linear function, namely each Hessian matrix $ \nabla^2_{\rvx}\varphi_i(\rvx) = \mO $. 

    \paragraph{Measurement of Nonlinearity.}
	
  Given a twice differential function\footnote{For $ \vy = f(\rvx) $, we require each $ y_i(i=1,\cdots,d) $ is twice differential about $ \rvx $.} $ f: \R^d \to \R^d $ and $ \vy = f(\rvx) $. Denote $ \vy = [y_1,\cdots,y_d]^\top $ and $ \rvx = [x_1,\cdots,x_d]^\top $. We define $ \mathcal{H}(f;\rvx) $ as an indicator to describe the Hessian information of $ f:\R^d\to\R^d $ as 
    \begin{equation}
        \mathcal{H}(f;\rvx) = \sum_{i=1}^d  \left\| \pf{^2 y_i}{\rvx^2} \right\|_F^2, 
    \end{equation}
    where each $\displaystyle \pf{^2 y_i}{\rvx^2} $ is a Hessian matrix. 
    
    We use the Frobenius norm rather than the operator norm, For easier calculations. Note that $\mathcal{H}(f;\rvx) \ge 0$, and $\mathcal{H}(f;\rvx)=0$ if and only if $f$ is a linear function. We thus\textbf{ assume} that the larger $\mathcal{H}(f;\rvx)$ is, the more nonlinearity $f$ contains.
        
    \paragraph{Amplifying Nonliearity by Group.} Denote $ \psi_G(g;\cdot) $ as Group based LN (LN-G) on $ \R^d $ with group number $g$, and $ \psi_L(\cdot) $ as LN on $ \R^d $. Compare LN with LN-G, the result is shown in Proposition \ref{prop:Hessian}. 
        
        \begin{proposition}   
        \label{prop:Hessian}
            Given $ g\le d/3 $, we have
            \begin{equation}
            \frac{\mathcal{H}( \psi_G(g;\cdot); \rvx )}{\mathcal{H}( \psi_L(\cdot); \rvx )} \ge 1. 
        \end{equation}
        Specifically, when $ g = d/4 $, we figure out that
        \begin{equation}
            \frac{\mathcal{H}( \psi_G(g;\cdot); \rvx )}{\mathcal{H}( \psi_L(\cdot); \rvx )} \ge \frac{d}{8} . 
        \end{equation}
        \end{proposition}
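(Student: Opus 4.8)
The plan is to compute $\mathcal{H}(\psi_L(\cdot);\rvx)$ and $\mathcal{H}(\psi_G(g;\cdot);\rvx)$ directly from the definition of LN, exploit the fact that LN-G is a block-wise application of LN on $g$ disjoint coordinate groups, and then reduce the comparison to a convexity/superadditivity statement in the group sizes. First I would establish a clean formula for the Hessian of a single LN map $\psi_L:\R^n\to\R^n$. Writing $\psi_L(\rvx)=\sqrt{n}\,\tilde\rvx/\|\tilde\rvx\|_2$ with $\tilde\rvx=(\mI-\tfrac1n\bm1\bm1^\top)\rvx$, one differentiates twice; since centering is linear it contributes nothing to the Hessian beyond a constant projection, and the only nonlinearity comes from the spherical projection $\rvy\mapsto\sqrt n\,\rvy/\|\rvy\|$ restricted to the centered subspace. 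A direct computation gives $\partial^2 \psi_L^{(i)}/\partial\rvx^2$ as a rational expression in $\tilde\rvx$ whose Frobenius norm squared, summed over $i$, yields $\mathcal{H}(\psi_L(\cdot);\rvx) = c_n/\|\tilde\rvx\|_2^2$ for an explicit dimension-dependent constant $c_n$ (after the dust settles, $c_n$ is a simple polynomial in $n$; the key point is its growth rate in $n$, roughly $c_n = \Theta(n)$ up to the scaling conventions). I would record this as a lemma.

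Next I would use that $\psi_G(g;\cdot)$ applied to $\rvx\in\R^d$ splits $\rvx$ into $g$ blocks $\rvx_{[1]},\dots,\rvx_{[g]}$ of sizes $n_1,\dots,n_g$ with $\sum_k n_k=d$, and acts as $\psi_L$ on each block independently. Because the blocks use disjoint input coordinates, the Hessian of the full map is block-diagonal across groups, so $\mathcal{H}$ simply adds: $\mathcal{H}(\psi_G(g;\cdot);\rvx) = \sum_{k=1}^g \mathcal{H}(\psi_L^{(n_k)};\rvx_{[k]}) = \sum_{k=1}^g c_{n_k}/\|\tilde\rvx_{[k]}\|_2^2$. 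Comparing with $\mathcal{H}(\psi_L(\cdot);\rvx)=c_d/\|\tilde\rvx\|_2^2$, and noting $\|\tilde\rvx\|_2^2 = \sum_k \|\text{(centered }\rvx_{[k]})\|_2^2 + (\text{cross terms from the global vs. per-block mean})$ — here I would handle the mild point that the global centering and the per-block centering differ, but the per-block centered norms are each at most the block's contribution, so $\sum_k \|\tilde\rvx_{[k]}\|_2^2 \le \|\tilde\rvx\|_2^2$ — the ratio is bounded below by $\big(\sum_k c_{n_k}\big)/c_d$. With the smallest possible groups forced by $g\le d/3$ (i.e.\ each $n_k\ge 3$), and $c_n$ growing at least linearly, superadditivity $\sum_k c_{n_k}\ge c_{\sum_k n_k}=c_d$ gives ratio $\ge 1$; this is the "mild assumption" referenced, namely that the $\|\tilde\rvx_{[k]}\|$ are comparable so the denominator adjustment is benign.

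For the sharpened bound at $g=d/4$, all groups have size exactly $n_k=4$, so $\sum_k c_{n_k} = (d/4)\,c_4$, and the ratio becomes $\ge (d/4)\,c_4/c_d$. Plugging the explicit $c_4$ and the explicit $c_d=\Theta(d)$ into this and simplifying yields the stated $\ge d/8$; this is a one-line arithmetic check once the constants are in hand. The main obstacle is the first step: getting the Hessian of the spherical projection right and summing its squared Frobenius norms over all output coordinates to obtain $c_n$ in closed form — the second derivative of $\rvy/\|\rvy\|$ involves terms of the shape $-(\delta_{ij}y_k + \delta_{ik}y_j + \delta_{jk}y_i)/\|\rvy\|^3 + 3 y_iy_jy_k/\|\rvy\|^5$, and carefully contracting these against each other (using $\|\tilde\rvx\|$ and the constraint $\bm1^\top\tilde\rvx=0$ to kill several cross terms) is where all the real work lies. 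Everything after that — block-diagonality, additivity of $\mathcal{H}$, superadditivity of $c_n$, and the $g=d/4$ specialization — is routine.
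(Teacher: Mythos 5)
Your overall architecture---compute the Hessian of a single LN in closed form, use block-diagonality across disjoint coordinate groups to get additivity of $\mathcal{H}$, then compare---matches the paper's, and the additivity step is correct. But the comparison step has a genuine gap. First, the closed form cannot be $c_n/\|\tilde\rvx\|_2^2$: the second derivative of $\rvy\mapsto\rvy/\|\rvy\|$ is homogeneous of degree $-2$, so the summed squared Frobenius norms scale as $\|\tilde\rvx\|_2^{-4}$. The paper's computation gives $\mathcal{H}(\psi_L;\rvx)=\tfrac{3(d-2)}{d\sigma^4}=\tfrac{3d(d-2)}{\|\tilde\rvx\|_2^4}$, i.e.\ $c_d=3d(d-2)=\Theta(d^2)$, not $\Theta(d)$. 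Second, and fatally, your reduction ``replace each $\|\tilde\rvx_{[k]}\|$ by $\|\tilde\rvx\|$ and invoke superadditivity of $c_n$'' fails with the true constants: a group of size $c=d/g$ contributes $c_c=3c(c-2)$, so $\sum_k c_{n_k}=3gc(c-2)=3d(c-2)$, which is \emph{smaller} than $c_d=3d(d-2)$ whenever $g\ge 2$. Your lower bound on the ratio is therefore $(c-2)/(d-2)\ll 1$ rather than $\ge 1$, and the same route gives $\Theta(1/d)$ rather than $d/8$ at $g=d/4$. (Also, the ``mild assumption'' in the paper is that $\mathcal{H}$ measures nonlinearity; Proposition~\ref{prop:Hessian} itself needs no comparability hypothesis on the blocks.)

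The missing idea is that the per-group denominators are systematically much smaller than the global one, and bounding each term of the sum separately by $1/\|\tilde\rvx\|_2^4$ throws away exactly the gain that makes the proposition true. Since the group mean minimizes the within-group sum of squares, $\tfrac1g\sum_{i=1}^g\sigma_i^2\le\sigma^2$; combined with Jensen's inequality for the decreasing convex function $t\mapsto t^{-2}$ this yields $\sum_{i=1}^g\sigma_i^{-4}\ge g\,\sigma^{-4}$---a factor of $g$ beyond your termwise bound. Feeding this into $\mathcal{H}(\psi_G(g;\cdot);\rvx)=\sum_{i=1}^g\bigl(\tfrac{3}{\sigma_i^4}-\tfrac{6}{c\sigma_i^4}\bigr)$ gives $\mathcal{H}(\psi_G)-\mathcal{H}(\psi_L)\ge\tfrac{3}{d\sigma^4}(d-2g-2)(g-1)\ge 0$ for $g\le d/3$, and $\mathcal{H}(\psi_G)\ge\tfrac{3}{\sigma^4}\bigl(g-\tfrac{2g^2}{d}\bigr)=\tfrac{3d}{8\sigma^4}$ at $g=d/4$, against $\mathcal{H}(\psi_L)\le 3/\sigma^4$, which is where $d/8$ comes from. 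To salvage your version you would have to keep the denominators $\|\tilde\rvx_{[k]}\|_2^4$ and exploit $\sum_k\|\tilde\rvx_{[k]}\|_2^2\le\|\tilde\rvx\|_2^2$ through a power-mean inequality over the whole sum, not term by term.
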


   Proposition \ref{prop:Hessian} shows that LN-G can amplify the nonliearity of LN by using appropriated group number. Compared with LN, when $d$ is larger, LN-G shows more nonlineaity. Please refer to \textit{\SM~\ref{section:proofofhessian}} for the proof.
     \REVISE{Besides, we generalize our discussion about $\mathcal{H}$ to the typical activation function ReLU, please refer to \textit{\SM~\ref{section:proofofhessian}} for more details. }
     
   One limit of the result above is the assumption, that $\mathcal{H}(f;\rvx)$ is a good indicator for measuring nonlinearity, is from the intuition and can not be well verified. In the subsequent experiments, we empirically show that LN-G indeed can amplify the nonlinearity of LN.

    \subsection{Comparison of Representation Capacity by Fitting Random Labels}
        \label{sec:Experiments}
        \vspace{-0.05in}
    In this part, we follow the non-parametric randomization tests fitting random labels~\cite{2017_ICLR_Zhang} to empirically verify the nonlinearity of LN, and to further compare the representational capacity of LN-Net with different groups for LN-G. 
    The experiments are conducted on CIFAR-10 and MNIST with random label assigned (CIFAR-10-RL and MNIST-RL). We evaluate the classification accuracy on the training set after the model is trained, which indicates that the capacity of models in fitting dataset empirically. We only provide essential components of the experimental setup; for more details, please refer to the \textit{\SM~\ref{section:experiments}}.

  
  
  

  
    \paragraph{Verify the Nonlinearity of LN.}  We conduct experiments on linear neural network and LN-Net with 256 neurons in each layer and various depths. We first train sufficiently a linear classifier and obtain the (nearly) upper bound accuracy (18.51 \% on CIFAR-10 -RL and 15.38\% on MNIST-RL). To rule out the influence in optimization difficulty, we train the linear neural network and LN-Net with various configurations, including different learning rates and (with or without) residual connection\footnote{A linear neural network with residual connection is still a linear model.}. We report the best result from all configurations, as shown in Figure~\ref{fig:Res-CIFAR10}. 
    

   We observe that linear neural network cannot break the bound of linear classifier on all datasets, while LN-Net can reach the accuracy of 55.85\% on CIFAR-10-RL and 19.44\% on MNIST-RL, which is much better than the linear classifier. This result also verifies that LN has nonlinearity empirically. Besides, we observe that LN-Net obtains better performance in general as the depth increases (namely more LN layers and greater nonlinearity). 
   We note that an LN-Net without sufficient depth does not break the bound of linear classifier on MNIST-RL. The reasons leading to this phenomenon are likely to be that: 1) MNIST-RL are more difficult to train, compare to CIFAR-10-RL; 2) LN-Nets have a non-convex optimization landscape and we cannot ensure the weight learned to be the optimal point, given fixed training epochs.

    We also conduct experiments with Batch Normalization (BN)~\cite{2015_ICML_Ioffe}, where we replace LN with BN in LN-Net. We find that BN cannot break the bound of linear classifier on all datasets, like linear neural network. This preliminary result is interesting, which shows the potential advantage of LN over BN, in terms of the representation capacity. 
   
   
    

    \begin{figure}[t]
        \vspace{-0.1in}
        \centering
        \hspace{-0.25in}	
        \subfigure[CIFAR-10-RL.]{
            \label{fig:Res-CIFAR10a}
            \begin{minipage}[c]{.43\linewidth}
                \centering
                \includegraphics[width=4.2cm]{./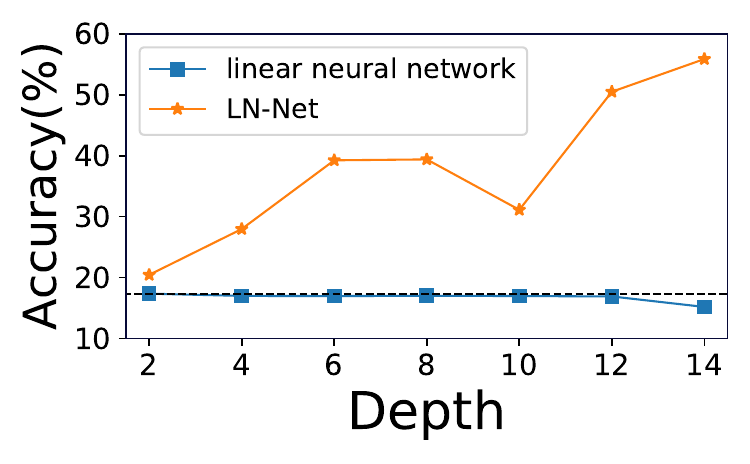}
            \end{minipage}
        }
        \hspace{0.15in}		\subfigure[MNIST-RL.]{
            \begin{minipage}[c]{.43\linewidth}
            \label{fig:Res-CIFAR10b}
                \centering
                \includegraphics[width=4.2cm]{./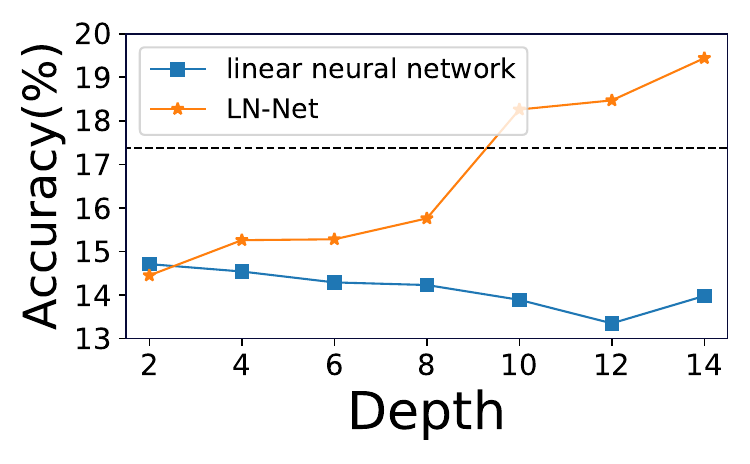}
            \end{minipage}
        }	
        \vspace{-0.1in}
        \caption{Results of linear neural network and LN-Net on fitting random label. The black dashed line represents the upper bound accuracy of linear classifier. (a) Results on CIFAR-10-RL; (b) Results on MNIST-RL.}
        \label{fig:Res-CIFAR10}
        \vspace{-0.16in}
    \end{figure}

    \label{section:6.2}
        \begin{figure*}[t]
        	\centering
        	\hspace{-0.25in}	\subfigure[Accuracy on CIFAR-10-RL.]{
        		\begin{minipage}[c]{.23\linewidth}
        			\centering
        			\includegraphics[width=4.2cm]{./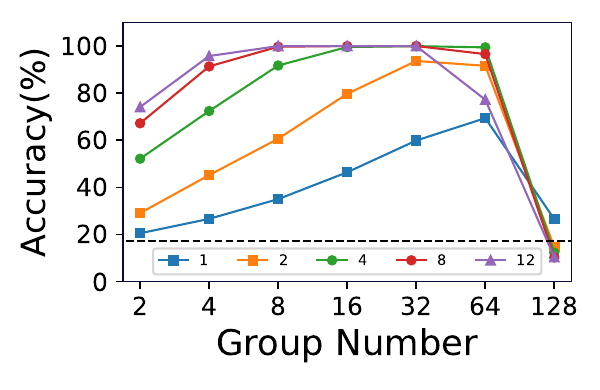}
        		\end{minipage}
        	}
        	\hspace{0in}		\subfigure[Accuracy on MNIST-RL.]{
        		\begin{minipage}[c]{.23\linewidth}
        			\centering
        			\includegraphics[width=4.2cm]{./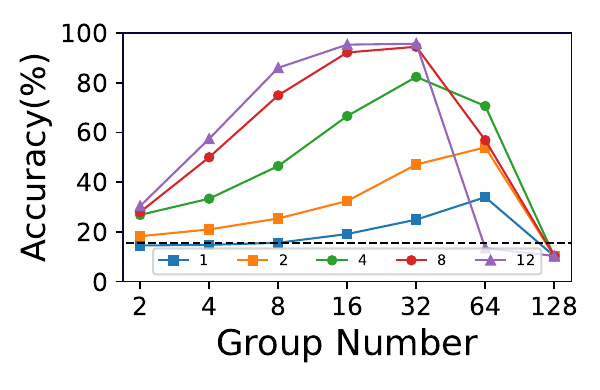}
        		\end{minipage}
        	}	
        	\hspace{0in}	\subfigure[$ \mathcal{H}(f;\rvx)$ on CIFAR-10-RL.]{
        	\begin{minipage}[c]{.23\linewidth}
        		\includegraphics[width=3.69cm]{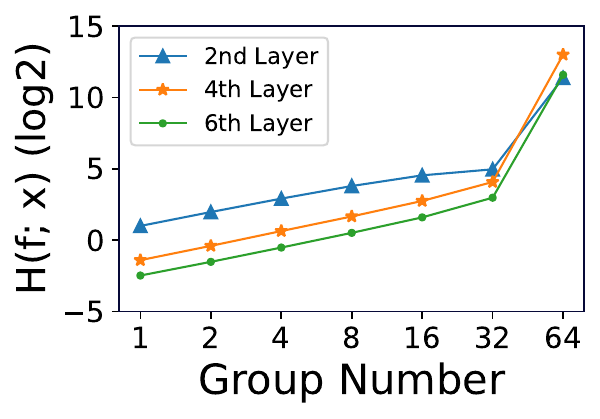}
        	\end{minipage}
                }
                \hspace{-0.15in}	\subfigure[$ \mathcal{H}(f;\rvx)$ on MNIST-RL.]{
        	\begin{minipage}[c]{.23\linewidth}
        		\centering
        		\includegraphics[width=3.69cm]{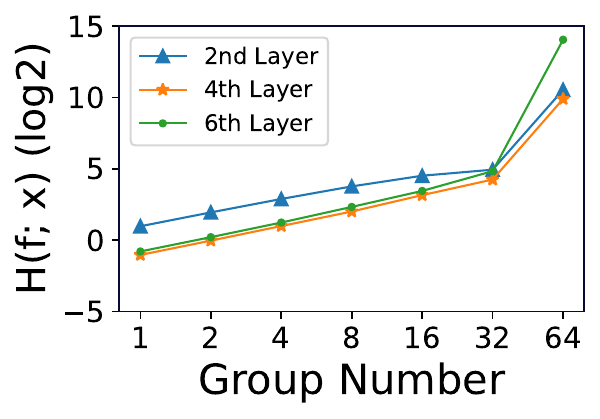}
        	\end{minipage}
                }
        	\vspace{-0.1in}
        	\caption{Results of LN-Net using LN-G. We vary the group number $g$ and show the training accuracy and $ \mathcal{H}(f;\rvx)$. (a) Training accuracy on CIFAR-10-RL; (b) Training accuracy on MNIST-RL; (c)$ \mathcal{H}(f;\rvx)$ on CIFAR-10-RL;(d) $ \mathcal{H}(f;\rvx)$ on MNIST-RL. The black dashed line in (a) and (b) has the same meaning as that in Figure \ref{fig:Res-CIFAR10}.}
        	\label{fig:MLP-GN}
        	\vspace{-0.16in}
        \end{figure*}

%


    \paragraph{Amplifying the Nonlinearity using Group.}
     We conduct experiments on LN-Net  with $d=256$ neurons in each layer and various depths. We replace LN in LN-Net with LN-G and also vary the group number $g$. We train LN-Net with various learning rates and report the best training accuracy on CIFAR-10-RL and MNIST-RL, as shown in Figure~\ref{fig:MLP-GN}.

     We observe that some LN-Net with LN-G (e.g., depth = 8 and $g=32$) can perfectly classify all the random labels on CIFAR-10-RL and MNIST-RL, which suggests that LN-G can amplify the nonlinearity of LN by using group, as stated in Proposition~\ref{prop:Hessian}. We also observe that an LN-Net with appropriate group number (e.g, $g =32$) can obtain better performance, as the depth increases. Besides, an LN-Net has better performance in general with larger group number, along the group number is not too much (relative to the number of neurons). E.g, An LN-Net has significantly degenerated performance when $g=128$, due to $d/g=2<3$ that go against the premise in Proposition~\ref{prop:Hessian}.
     
      \REVISE{We also calculate $ \mathcal{H}(f;\rvx)$ in certain layers and show how  $ \mathcal{H}(f;\rvx)$ varies as the group number increases in Figure~\ref{fig:MLP-GN} (c) and (d).  $ \mathcal{H}(f;\rvx)$ is calculated by averaging over 1000 samples in our experiments. We find  $ \mathcal{H}(f;\rvx)$ increases as the group number of LN-G increases, which matches our theoretical analyses in Section~\ref{section:5.1}.}



    \subsection{Inspiration for Neural Architecture Design}
        \label{section:6.3}
       In this part, we consider designing neural networks in real scenarios, considering that LN-G can amplify the nonlinearity and have great performance in fitting the random label shown in Section~\ref{sec:Experiments}. We conduct experiments on both CNN and Transformer architectures. 
        \subsubsection{CNN without activation function}\label{sec:CNN_EX}
         To validate the representation capacity of LN-G in real scenarios further, we conducted experiments on CIFAR-10 using ResNet~\cite{2015_CVPR_He}.  To exclude the influence of other nonlinearities, we remove all nonlinear activations from the ResNet, and refer the network to ResNet-NA. We set the channel number of each layer to 128 for better ablating the group number of LN-G. We also conduct experiments on more CNNs shown in \textit{\SM~\ref{section:CNN-extension}}
        
                \begin{figure}[t]
        	\centering
        	\hspace{-0.25in}	\subfigure[Training.]{
        		\begin{minipage}[c]{.43\linewidth}
        			\centering
        			\includegraphics[width=4.2cm]{./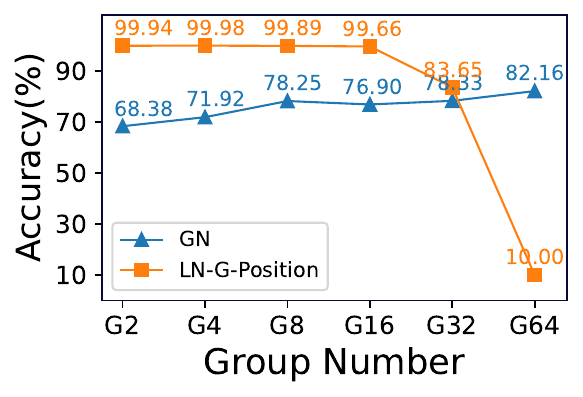}
        		\end{minipage}
        	}
        	\hspace{0.15in}		\subfigure[Test.]{
        		\begin{minipage}[c]{.43\linewidth}
        			\centering
        			\includegraphics[width=4.2cm]{./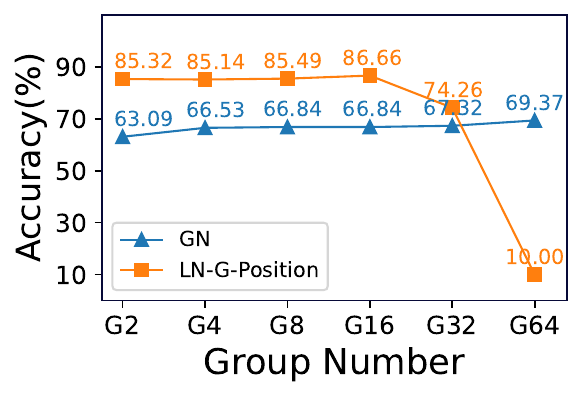}
        		\end{minipage}
        	}	
        	\vspace{-0.12in}
        	\caption{Results of the variants of LN-G (GN and LN-G-Position)  when using different group number. The experiments are conducted on CIFAR-10 dataset using ResNet without ReLU activation. We show (a) the training accuracy and (b) the test accuracy. In the x-axis, G2 refers to a group number of 2.}
        	\label{fig:resnet}
        	\vspace{-0.16in}
        \end{figure}
         
         \paragraph{Investigation of LN-G.}         
         Note that LN-G may have several variants for a convolutional input $\mX \in \mathbb{R}^{c \times h \times w}$, where $c$, $h$ and $w$ indicate the feature mappings' channel, height and width dimensions  respectively. Following the usage of LN on CNNs, LN-G should calculate the mean/variance along all the channel, height and width dimensions, which is  equivalent to Group Normalization (GN)~\cite{2018_ECCV_Wu}. Following the usage of LN on MLP$\&$Transformer, LN-G should calculate the mean/variance along only the channel dimension and use separate statistics over each position (a pair of height and width), and we refer to this method as LN-G-Position. 
         
         We investigate how the group number affects the performance of the variants of LN-G (GN and LN-G-Position).  We vary the group number $g$ ranging in \{2, 4, 8, 16, 32, 64\}. We train a total of 200 epochs using SGD with a mini-batch size of 128, momentum of 0.9 and weight decay of 0.0001.
         The initial learning rate is set to 0.1, and divided by 5 at the 60th, 120th, and 160th epochs. The results are shown in Figure~\ref{fig:resnet}. We find that GN obtains slightly better performance as the group number increases. Note that this observation does not go against the experimental results of LN-G in amplifying the nonlinearity in Section~\ref{sec:Experiments} since the `effective samples' used to calculate the normalization statistics in each group of GN is $\frac{h*w*c}{g}$. We observe that LN-G-Position works particularly well and obtains over $85\%$ test accuracy for multiple group number (Note that there is no ReLU activations.). We also find that LN-G-Position works particularly bad if group number is 64, because the samples used to calculate the normalization statistics in each group of LN-G-Position is $\frac{c}{g}=2$.
         
         
      \begin{table}[t]
      	\caption{Comparison of different normalization methods on CIFAR-10 using ResNet-NA (ResNet without ReLU activation).}
      	\begin{tabular}{c|c|c}
      		\hline
      		Normalization methods   & Train Acc(\%) & Test  Acc(\%) \\ \hline
      		IN          & 10                 & 10                 \\
      		BN             & 36.0               & 39.3               \\
      		LN             & 59.5               & 62.85              \\ 
      		GN  & 82.16              & 69.37              \\
      		LN-G-Position & 99.66              & 86.66              \\ \hline
      	\end{tabular}
      \end{table}  

         \paragraph{Comparison to other Normalization.} 
 		\REVISE{We also conduct experiments to train ResNet-NA by using other normalization methods, including the original Batch Normalization (BN)~\cite{2015_ICML_Ioffe}, Layer Normalization (LN)~\cite{2016_LN_Ba}, Instance Normalization (IN)~\cite{2016_arxiv_Ulyanov}. Besides, we also train ResNet-NA without normalization.  We use the same setting up described in previous experiments. 
          We find that ResNet-NA without normalization is very difficult to train and shows a random guess behavior. Similarly, ResNet-NA with IN is also very difficult to train.  ResNet-NA with BN can be trained normally. However, the performance of the model is relatively low, with only $39.3\%$ test accuracy. ResNet-NA with LN obtains $62.85\%$ test accuracy, which is significantly better than BN. Furthermore, ResNet-NA with LN-G-Position obtains the best performance, e.g., a test accuracy of $86.66\%$ when using a group number 16 for LN-G-Position. We contribute it to the strong nonlinearity of LN-G-Position. }

         \subsubsection{LN-G in Transformers}
         \paragraph{Transformer for Machine Translation.} We conduct experiments to apply LN-G on Transformer~\cite{2017_NIPS_Vaswani} (where LN is the default normalization) for machine translation tasks using \textit{fairseq-py}~\cite{2019_fairseq_Ott}. We evaluate the public  IWSLT14 German-to-English (De-EN) dataset using BLEU (higher is better). We use the hyper-parameters recommended  in \textit{fairseq-py}~\cite{2019_fairseq_Ott} for Transformer and train over 50 epochs with five random seeds. The baseline LN has a BLEU score of $35.01 \pm 0.10$.  LN-G  (replacing all the LNs with LN-G) has a BLEU score of $35.23\pm0.07$.
         \paragraph{ViT for Image Classification.} We conducted experiments by applying LN-G to Tiny-VIT (with the default normalization being LN). We performed classification tests on the test set of the CIFAR-10 dataset, with hyperparameter settings referencing~\cite{how_to_train_vit}. The classification accuracy on the test dataset was $88.81\%$ for LN and $89.26\%$ for LN-G (replacing all the LNs with LN-G).

         These preliminary results show the potentiality of LN-G used for neural architecture design in practice. 

   \section{Related Work} 

Previous theoretical analyses on normalization  are mainly focused on BN, the pioneer work in normalization for deep learning. 
 One main argument is that BN can improve the conditioning of the optimization problem~\cite{2019_ICML_Cai}, either by avoiding the rank collapse of pre-activation matrices~\cite{2020_NIPS_Daneshmand} or by alleviating the pathological sharpness of the landscape~\cite{2018_NIPS_shibani,2019_NIPS_Karakida,2019_ICML_Ghorbani,2022_NIPS_Lv}. The improved conditioning enables large learning rates~\cite{2018_NIPS_Bjorck}, thus improving the generalization~\cite{2019_ICLR_Luo2}.
Another argument is that BN is scale invariant~\cite{2016_LN_Ba}, enabling it to adaptively adjust the learning rate \cite{2019_ICLR_Arora,2019_ICML_Cai,2019_ICLR_Zhang,2020_ICLR_Li}, which stabilizes and further accelerates training. This scale invariant analyses also applies to LN~\cite{2016_LN_Ba,2021_NIPS_Lubana}. Some work address to understanding LN empirically through experiments, showing that the learnable parameters in LN increases the risk of over-fitting~\cite{2019_NIPS_Xu}.

Different from these work, we investigate a new theoretical direction for LN, regarding to its nonlinearity and representation capacity. We note that there are several work~\cite{2021_CVPR_Huang,2021_NIPS_Labatie} investigating the expressive power of normalization empirically by experiments. However, their experiments are conducted on networks with activation functions, while our work focuses on analyzing the representation capacity of a network without activation functions through theory and experiment.

	\section{Conclusion}
			\label{sec:conclusion}
	
        We mathematically demonstrated that LN is a nonlinear transformation. We also theoretically showed the representation capacity of an LN-Net in correctly classifying samples with any label assignment. We demonstrated these results by finely designing algorithms, considering the geometric property of LN. We hope that our techniques will inspire the community to reconsider the analyses of the representation capacity of a network with normalization layer, though it suffers from great challenges~\cite{2023_TPAMI_Huang}.
        
        \vspace{-0.15in}
        \paragraph{Limitation and Future Work.}
      Our results in representation capacity for LN-Net is very loose currently, which is like the initial universal approximation theory in the arbitrary wide shallow neural network~\cite{1989_Universal}. We believe it is interesting to extend our results along the direction as universal approximation theory is extended to the cases of arbitrary depth~\cite{gripenberg2003approximation}, bounded depth and bounded width~\cite{maiorov1999lower}, and the question of minimal possible width~\cite{park2020minimum}. Besides, the effectiveness of group mechanism for LN (\ie, LN-G) is only verified on small-scale networks and datasets, and more results on large-scale networks and datasets are required to support the practicality of LN-G.

\section*{Acknowledgments}
This work was partially supported  by the National Science and Technology Major Project under Grant 2022ZD0116310, National Natural Science Foundation of China (Grant No. 62106012), the Fundamental Research Funds for the Central Universities.

    \section*{Impact Statement}
    This paper presents work whose goal is to advance the field of Machine Learning. There are many potential social consequences of our work, none which feel must be specifically highlighted here. 
     \bibliography{main}
    \bibliographystyle{icml2024}
    	\newpage

	\onecolumn
	\appendix
\clearpage
\renewcommand{\thetable}{\Roman{table}}
\setcounter{table}{0}

\renewcommand{\thefigure}{A\arabic{figure}}
\setcounter{figure}{0}

	
	\section{LSSR as a Linearly Separable Indicator}
        \label{section:LSSR-illu}
	
	\begin{figure*}[h]
		\vspace{-0.1in}
		\centering
		\hspace{-0.05in}	\subfigure[XOR data.]{
			\label{fig:4a}
			\includegraphics[width=4cm]{./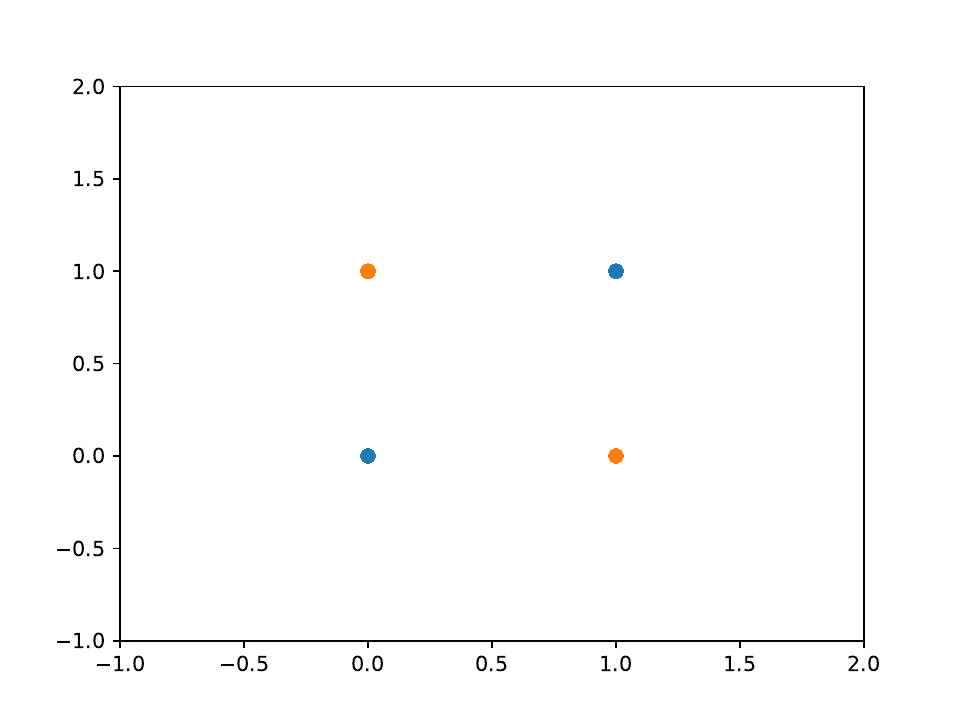}
		}
		\centering
		\hspace{-0.05in}	\subfigure[Inseparable Gaussian data.]{
			\centering
			\label{fig:4b}
			\includegraphics[width=4cm]{./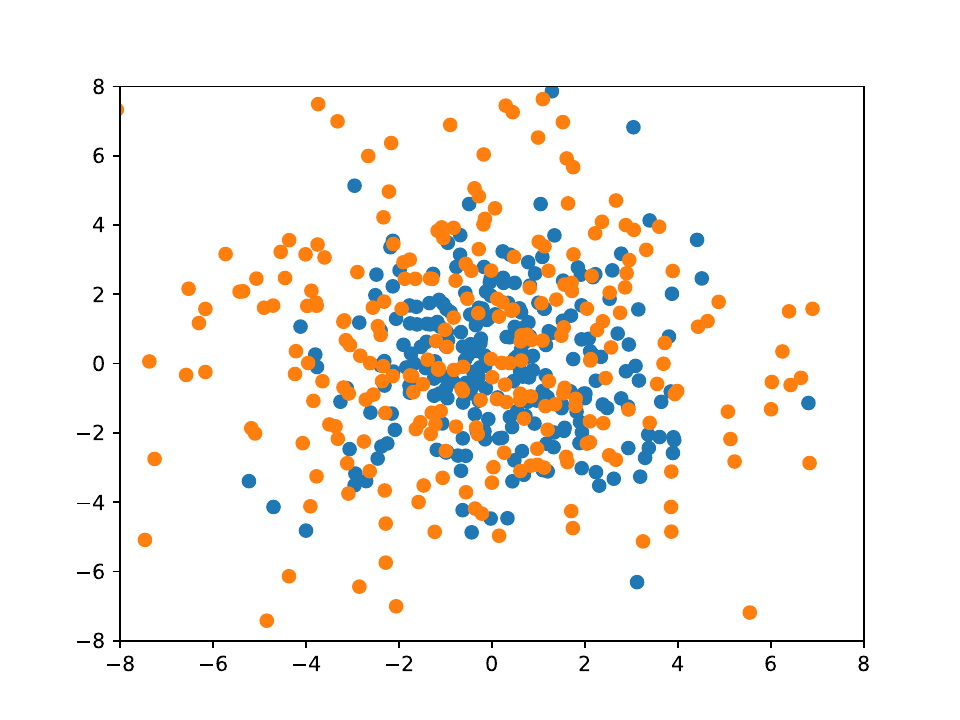}
		}
		\centering
		\hspace{-0.05in}	\subfigure[Separable Gaussian data.]{
			\centering
			\label{fig:4c}
			\includegraphics[width=4cm]{./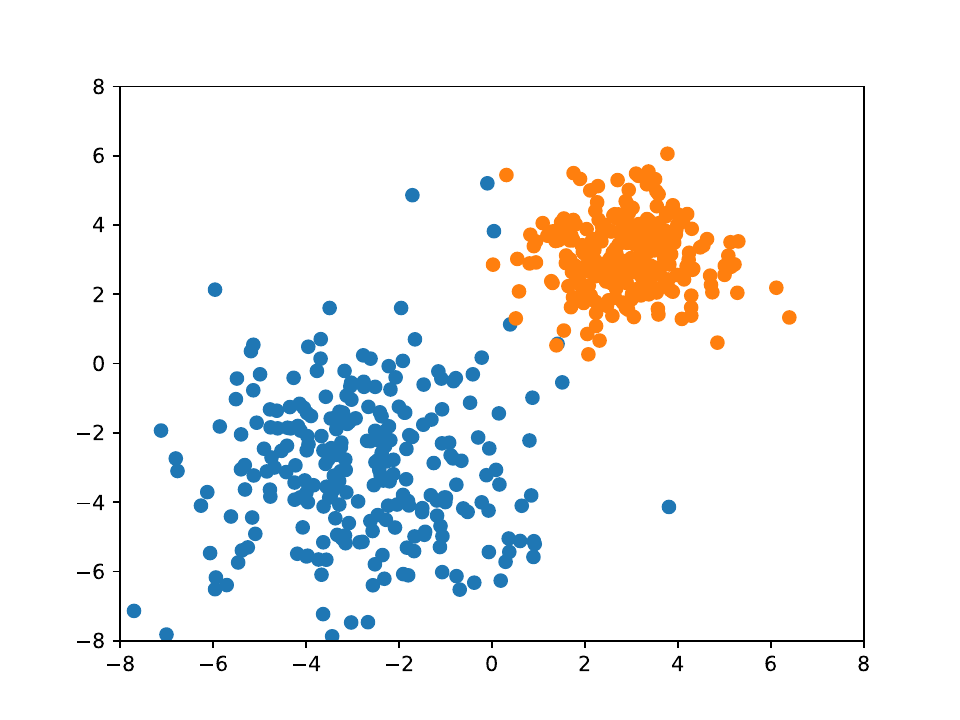}
		}
		\centering
		\hspace{-0.05in}	\subfigure[Parallel Gaussian data.]{
			\centering
			\label{fig:4d}
			\includegraphics[width=4cm]{./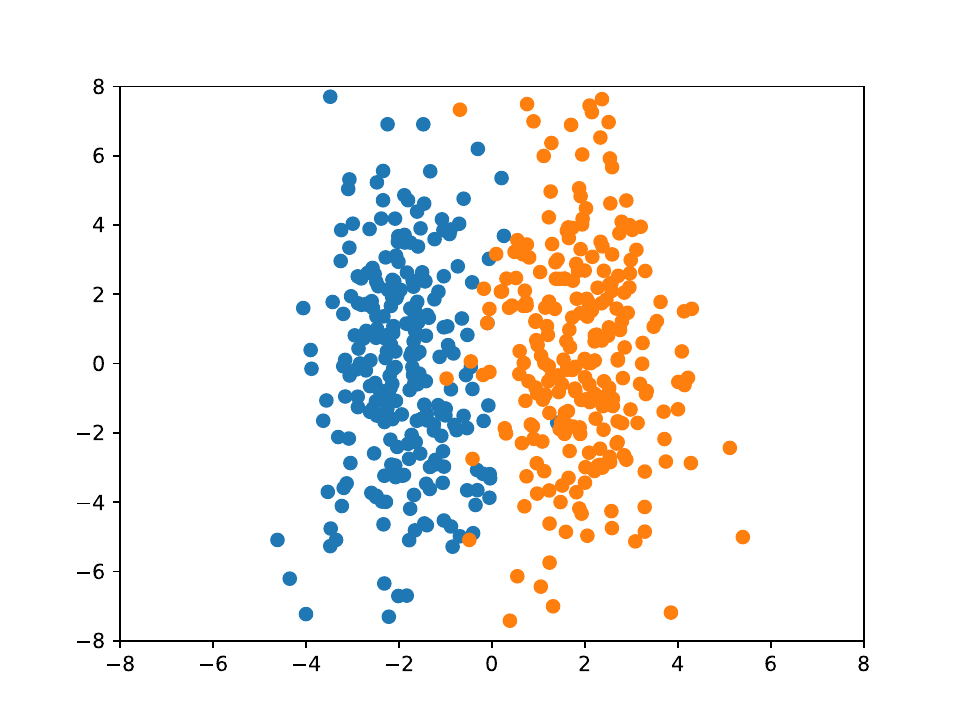}
		}
		\vspace{-0.13in}
		\caption{We randomly sample $256$ points from each class in the four different distributions of data above. The detailed data is shown in Table \ref{tab:1}.} 
		\label{fig:4}
		\vspace{-0.12in}
	\end{figure*}
    To show how SSR and LSSR evaluate the difficulty of separating the samples from different classes linearly, we give four different distributions of data in the figure above and their details in the table below.

    \begin{table}[h]
    \vspace{-0.15in}
            \caption{Detailed data of  Figure \ref{fig:4}. In Figure \ref{fig:4a}, the random variance $ X $ takes values $0$ and $1$ with probabilities $1/2$ each. In the other figures, the sign $ N(\cdot, \cdot) $ denotes the Gaussian distribution.}
	\label{tab:1}
	\centering
	\begin{tabular}{|c|c|c|c|c|}
		\toprule
		Figure & Distribution of $X_1$ & Distribution of $X_2$ & SSR & LSSR \\
		\midrule
		Figure \ref{fig:4a} & $ X_1 = \mat{X\\X} $ & $ X_2 = \mat{X \\ 1-X}$ & 0.9963 & 0.9929 \\
        \midrule
		Figure \ref{fig:4b} & $ X_1 \sim N\left(\mat{0\\0}, \mat{4&0\\0&4} \right) $ & $ X_2 \sim N\left(\mat{0\\0}, \mat{9&0\\0&9} \right) $ & 0.9929 & 0.9859 \\
        \midrule
		Figure \ref{fig:4c} & $ X_1 \sim N\left(\mat{-3\\-3}, \mat{4&0\\0&4} \right) $ & $ X_2 \sim N\left(\mat{3\\3}, \mat{1&0\\0&1} \right) $ & 0.2304 & 0.1312 \\
        \midrule
        Figure \ref{fig:4d} & $ X_1 \sim N\left(\mat{-2\\0}, \mat{1&0\\0&9} \right) $ & $ X_2 \sim N\left(\mat{2\\0}, \mat{1&0\\0&9} \right) $ & 0.7536 & 0.2157 \\
		\bottomrule
	\end{tabular}
        \vspace{-0.1in}
    \end{table}
    According to Figure \ref{fig:4} and Table \ref{tab:1}, we have several conclusions below. In Figure \ref{fig:4a} and Figure \ref{fig:4b}, the classes are hard to be linearly separated, whose SSR and LSSR are both near $1$. In Figure \ref{fig:4c}, the classes are easy to be linearly separated, whose SSR and LSSR are both near $0$. However, in Figure \ref{fig:4d}, the classes are easy to be linearly separated, but harder to be separated if focused on the Euclidean distance. As a result, its SSR is larger, but its LSSR is near $0$. We hence conclude that---LSSR is a better indicator than SSR in judging how two classes are linearly separable.
	
	\section{Proofs of Proposition \ref{prop:LSSR}}
    \label{section:Prop2}

	\paragraph{Proposition 2.}\textit{Given $ \mX_1, \mX_2 \in \R^{d \times m} $, we denote $ \mM = \sum\limits_{c=1}^{2} \sum\limits_{i=1}^m (\rvx_{ci}-\bar{\rvx}_{c})(\rvx_{ci}-\bar{\rvx}_{c})^\top $, and $ \mN = \sum\limits_{c=1}^{2} \sum\limits_{i=1}^m (\rvx_{ci}-\bar{\rvx})(\rvx_{ci}-\bar{\rvx})^\top $, where $\bar\rvx = (\bar\rvx_1 + \bar\rvx_2)/2 $. Supposing that $ \mN $ is reversible, we have  
		\begin{equation}
			LSSR(\mX_1, \mX_2) = \lambda^*, 
		\end{equation}
		and correspondingly, 
		\begin{equation}
			LSSR(\mX_1, \mX_2) = SSR((\vu^*)^\top\mX_1, (\vu^*)^\top\mX_2), 
		\end{equation}
	where  $ \lambda^* $ and  $\vu^* $ are the minimum eigenvalue and corresponding eigenvector of $ \mN^{-1} \mM $.}
	
	Since the definition of LSSR comes from a lower bound, we prove Proposition \ref{prop:LSSR} from solving the optimization problem as follows. 
    \begin{equation}
        (P_{LSSR})\begin{cases}
            \min\limits_{\varphi} \quad &LSSR(\mX_1,\mX_2) = SSR(\varphi(\mX_1),\varphi(\mX_2)), \\
            \text{s.t.} \quad &\varphi(\rvx) = \mW\rvx + \vb, \\
            & \mW \in \R^{n \times d}, \vb \in \R^n, n\in \sN^*, \\
            & SS(\varphi(\mX_1),\varphi(\mX_2)) \ne 0. 
        \end{cases}
    \end{equation}
    To solve this, we first propose four lemmas, and then use them to prove Proposition \ref{prop:LSSR}. Furthermore, we give the optimal $\mW$ as a corollary. 

    \subsection{Required Lemmas for the Proof}
	
	\begin{lemma}
		\label{lemma:21}
		The bias $ \vb \in \R^n $ does not affect SSR, as well as LSSR. 
	\end{lemma}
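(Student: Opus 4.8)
The plan is to exploit the fact that the sum of squares $SS(\cdot)$ is assembled entirely out of deviations about a sample mean, and a global translation leaves every such deviation unchanged. First I would fix an affine map $\varphi(\rvx)=\mW\rvx+\vb$ and examine the image of class $c$: the transformed samples are $\mW\rvx_{ci}+\vb$ and their empirical mean is $\mW\bar{\rvx}_c+\vb$, so the $i$-th centered vector is $(\mW\rvx_{ci}+\vb)-(\mW\bar{\rvx}_c+\vb)=\mW(\rvx_{ci}-\bar{\rvx}_c)$, in which $\vb$ has cancelled. Hence $SS(\varphi(\mX_c))=\sum_{i=1}^m\|\mW(\rvx_{ci}-\bar{\rvx}_c)\|^2$ depends on $\mW$ only. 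Running the identical argument on the pooled matrix $[\mX_1,\mX_2]$ --- whose image has mean $\mW\big(\tfrac1{2m}\sum_{c,i}\rvx_{ci}\big)+\vb$ --- shows $SS(\varphi([\mX_1,\mX_2]))$ is also $\vb$-free. Therefore $SSR(\varphi(\mX_1),\varphi(\mX_2))$ equals the value obtained with $\vb=\mathbf 0$, namely $SSR(\mW\mX_1,\mW\mX_2)$; and since the denominator's positivity $SS(\varphi([\mX_1,\mX_2]))\ne0$ is itself bias-independent, the quotient is well-defined exactly when it is for $\vb=\mathbf 0$. This establishes the claim for SSR.

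For LSSR, recall it is the infimum of $SSR(\varphi(\mX_1),\varphi(\mX_2))$ over all $\varphi\in\sD_\varphi(d)$, which includes the bias. By the paragraph above the objective factors through $\mW$ alone, so the infimum over pairs $(\mW,\vb)$ coincides with the infimum over $\mW$ of $SSR(\mW\mX_1,\mW\mX_2)$; moving along the $\vb$-direction never changes the objective, so adding or deleting the bias cannot change the infimum. Hence LSSR is unaffected by $\vb$ as well.

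I do not expect a genuine obstacle here; the whole lemma turns on the one-line cancellation $(\mW\rvx+\vb)-(\mW\bar{\rvx}+\vb)=\mW(\rvx-\bar{\rvx})$, which I would display as the crux. The only places that need a little attention are keeping the pooled-mean bookkeeping correct so the cancellation is visibly the same in the numerator and denominator of SSR, and remarking that the well-definedness condition and the quantifier in the definition of LSSR are themselves translation-invariant, so nothing in either definition is disturbed by allowing or forbidding a bias.
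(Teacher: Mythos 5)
Your proposal is correct and matches the paper's own argument: both hinge on the cancellation $(\mW\rvx_{ci}+\vb)-(\mW\bar\rvx_c+\vb)=\mW(\rvx_{ci}-\bar\rvx_c)$ applied to each class and to the pooled matrix, after which SSR and hence its infimum LSSR are visibly independent of $\vb$. Your added remarks on the bias-invariance of the well-definedness condition are a minor elaboration, not a different route.
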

	
	\begin{proof} By the definition of SS, we obtain
		\begin{equation}
			\begin{aligned}
				SS(\varphi(\mX_1)) &= SS(\mW\mX_1+\vb\bm1^\top) \\
				&= \sum_{i=1}^m\left\Vert \mW\rvx_{1i} + \vb - \frac1m\sum_{i=1}^m (\mW\rvx_{1i}+\vb) \right\Vert^2\\
				&= \sum_{i=1}^m\left\Vert \mW\rvx_{1i} - \frac1m\sum_{i=1}^m \mW\rvx_{1i} \right\Vert^2\\
				&= \sum_{i=1}^m\left\Vert \mW\rvx_{1i} - \overline{\mW\rvx_1} \right\Vert^2\\
				&= SS(\mW\mX_1).
			\end{aligned}
		\end{equation}
		Similarly, we have
        \begin{equation}
            SS(\varphi(\mX_2)) = SS(\mW\mX_2) , 
        \end{equation}
        and
        \begin{equation}
            SS([\varphi(\mX_1), \varphi(\mX_2)]) = SS([\mW\mX_1, \mW\mX_2]) .
        \end{equation}
        Since SSR is defined with SS, the conclusion also holds for SSR, namely
		\begin{equation}
			SSR(\varphi(\mX_1),\varphi(\mX_2)) = SSR (\mW\mX_1, \mW\mX_2),  
		\end{equation}
    where the bias $\vb$ is not included. 
	\end{proof}

    \begin{lemma}
	\label{lemma:2}
	Suppose the eigenvalue decomposition of $ \mW^\top\mW $ as
	\begin{equation}
        \label{eqn:25}
		\mW^\top\mW = \mU \Lambda \mU^\top = \sum_{i=1}^d \lambda_i \vu_i \vu_i^\top, 
	\end{equation}
	where $ \mU=[\vu_1,\cdots,\vu_d] $ is an orthogonal matrix, and $ \Lambda=diag\{\lambda_1,\cdots,\lambda_d\} $ is a positive semi-definite and diagonal matrix. We consider to minimize $SSR (\mW\mX_1, \mW\mX_2)$ over  $ \Lambda $ with a fixed $ \mU $, as:
		\begin{equation}
			\min_{\Lambda} SSR(\mW\mX_1, \mW\mX_2) = \min_{1\le j\le d} SSR(\vu_j^\top\mX_1,  \vu_j^\top\mX_2).
		\end{equation}
		The optimal solution is that 
		\begin{equation}
			\lambda_j \begin{cases}
				\ge0, &j \in\arg\min\limits_{1\le j\le d}SSR(\vu_j^\top\mX_1, \vu_j^\top\mX_2),\\
				=0, &\text{otherwise},
			\end{cases}
		\end{equation}
		for $ j=1,\cdots,d$, and $ \lambda_1,\cdots,\lambda_d $ are not all zeros. 
	\end{lemma}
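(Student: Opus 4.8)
The plan is to turn the matrix minimization over $\Lambda$ into a scalar fractional optimization. First I would rewrite the numerator and denominator of $SSR(\mW\mX_1,\mW\mX_2)$ as traces against $\mW^\top\mW$. Writing $\mM_c = \sum_{i=1}^m (\rvx_{ci}-\bar\rvx_c)(\rvx_{ci}-\bar\rvx_c)^\top$, we have $SS(\mW\mX_c) = \sum_i \|\mW(\rvx_{ci}-\bar\rvx_c)\|^2 = \operatorname{tr}(\mW^\top\mW\,\mM_c)$, and likewise $SS([\mW\mX_1,\mW\mX_2]) = \operatorname{tr}(\mW^\top\mW\,\mN)$ with $\mM = \mM_1+\mM_2$ and $\mN$ exactly as in Proposition~\ref{prop:LSSR}. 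Hence
\[
SSR(\mW\mX_1,\mW\mX_2) = \frac{\operatorname{tr}(\mW^\top\mW\,\mM)}{\operatorname{tr}(\mW^\top\mW\,\mN)}.
\]
Substituting the eigendecomposition $\mW^\top\mW = \sum_{j=1}^d \lambda_j\vu_j\vu_j^\top$ from~(\ref{eqn:25}) and setting $a_j := \vu_j^\top\mM\vu_j \ge 0$ and $b_j := \vu_j^\top\mN\vu_j$ (with $b_j>0$, since $\mN$ is positive definite under the reversibility hypothesis carried over from Proposition~\ref{prop:LSSR}), this becomes
\[
SSR(\mW\mX_1,\mW\mX_2) = \frac{\sum_{j=1}^d \lambda_j a_j}{\sum_{j=1}^d \lambda_j b_j}.
\]

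Next I would identify the per-direction ratios. Applying the same trace identity to the rank-one linear map $\varphi(\rvx)=\vu_j^\top\rvx$ gives $a_j/b_j = SSR(\vu_j^\top\mX_1,\vu_j^\top\mX_2)$, so the right-hand side of the claimed equality is $r^\star := \min_{1\le j\le d} a_j/b_j$. The feasibility constraint $SS([\varphi(\mX_1),\varphi(\mX_2)])\ne 0$ translates to $\sum_j\lambda_j b_j\ne 0$, which---because every $b_j>0$---is precisely the requirement that $\lambda_1,\dots,\lambda_d$ are not all zero; note also that varying $\Lambda$ with $\mU$ fixed lets $\mW^\top\mW$ range over all positive semidefinite matrices with eigenvectors $\mU$, so nothing is lost in the reduction.

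The core step is then the elementary fact that a weighted mediant of nonnegative numbers is minimized at the smallest ratio: for $\lambda_j\ge 0$ not all zero,
\[
\frac{\sum_j\lambda_j a_j}{\sum_j\lambda_j b_j} \ge r^\star,
\]
because $a_j\ge r^\star b_j$ for every $j$ forces $\sum_j\lambda_j a_j \ge r^\star\sum_j\lambda_j b_j$ and the denominator is positive. Equality holds iff $\sum_j\lambda_j(a_j - r^\star b_j)=0$, i.e.\ iff $\lambda_j>0$ only for indices $j\in\arg\min_{1\le j\le d} a_j/b_j$. This simultaneously establishes $\min_\Lambda SSR(\mW\mX_1,\mW\mX_2)=\min_{1\le j\le d}SSR(\vu_j^\top\mX_1,\vu_j^\top\mX_2)$ and the stated form of the optimal $\Lambda$---$\lambda_j$ arbitrary nonnegative (not all zero) on the argmin set and $\lambda_j=0$ off it---and shows these are the \emph{only} minimizers.

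I expect the obstacles here to be bookkeeping rather than conceptual: keeping track that $\mW$ has arbitrary output dimension while only $\mW^\top\mW$ matters; ensuring the denominator never vanishes along the optimal family (handled by $b_j>0$); and phrasing the optimality condition as a characterization of all minimizers (via the equality case $\lambda_j>0\Rightarrow a_j=r^\star b_j$) rather than exhibiting a single witness. If $\mN$ were merely positive semidefinite, one would additionally note that $\mN-\mM\succeq 0$ forces $b_j=0\Rightarrow a_j=0$, so such directions contribute nothing, but under the standing assumption of Proposition~\ref{prop:LSSR} this case does not arise.
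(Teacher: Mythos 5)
Your proposal is correct and follows essentially the same route as the paper: expand $SS(\mW\mX_c)$ as a quadratic form in $\mW^\top\mW$, substitute the eigendecomposition to reduce $SSR$ to the ratio $\sum_j\lambda_j\vu_j^\top\mM\vu_j\,/\,\sum_j\lambda_j\vu_j^\top\mN\vu_j$, use positive definiteness of $\mN$ to keep the denominator positive, and apply the weighted-mediant (convex-combination) inequality to identify the minimum and the minimizers. The trace notation is a cosmetic repackaging of the paper's elementwise expansion, not a different argument.
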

	
	\begin{proof} 
        By Lemma \ref{lemma:21}, we find that
	\begin{equation}
		LSSR(\mX_1,\mX_2) = \min_{\mW} SSR(\mW\mX_1,\mW\mX_2). 
	\end{equation} 
	Besides, we figure out that
	\begin{equation}
        \label{eqn:29}
		\begin{aligned}
			SS(\mW\mX_c) 
			= \sum_{i=1}^m\left\Vert \mW\rvx_{ci} - \mW\bar\rvx_c \right\Vert_2^2
			= \sum_{i=1}^m (\rvx_{ci}-\bar\rvx_c)^\top\mW^\top\mW(\rvx_{ci}-\bar\rvx_c), 
		\end{aligned}
	\end{equation}
    where $ \mX_c = \mX_1, \mX_2 $, or even\footnote{In this case, we choose $ \bar \rvx $ as $\bar \rvx_c $ in Eqn.\ref{eqn:29} } $ [\mX_1,\mX_2] $.

	Based on the eigenvalue decomposition, we obtain that
		\begin{equation}
			\begin{aligned}
				SS(\mW\mX_1) 
				=& \sum_{i=1}^m (\rvx_{1i}-\bar\rvx_1)^\top\mW^\top\mW(\rvx_{1i}-\bar\rvx_1) \\
				=& \sum_{i=1}^m \sum_{j=1}^d \lambda_j (\rvx_{1i}-\bar\rvx_1)^\top \vu_j \vu_j^\top (\rvx_{1i}-\bar\rvx_1)\\
				=& \sum_{j=1}^d \lambda_j \sum_{i=1}^m (\vu_j^\top \rvx_{1i} - \vu_j^\top\bar\rvx_1)^2\\
				=&\sum_{j=1}^d \lambda_j SS(\vu_j^\top\mX_1).
			\end{aligned}
		\end{equation}
		The term $ SS(\vu_j^\top\mX_1) $ can be regarded as that we put a linear transformation $ \vu_j^\top $ on $ \mX_1 $, and then calculate its SS. 
		
		Similarly, we have that
        \begin{equation}
            SS(\mW\mX_2) = \sum\limits_{j=1}^d \lambda_j SS(\vu_j^\top \mX_2) , 
        \end{equation}
        and 
        \begin{equation}
            SS([\mW\mX_1, \mW\mX_2]) = \sum\limits_{j=1}^d \lambda_j SS([\vu_j^\top\mX_1, \vu_j^\top\mX_2]) . 
        \end{equation}
        Therefore, we obtain
		\begin{equation}
			\begin{aligned}
				SSR(\mW\mX_1,\mW\mX_2) = \frac{\sum\limits_{j=1}^d \lambda_j [SS(\vu_j^\top\mX_1) + SS(\vu_j^\top \mX_2)]} {\sum\limits_{j=1}^d \lambda_j SS([\vu_j^\top\mX_1, \vu_j^\top \mX_2])}
			\end{aligned}.
		\end{equation}
		By the definition of $\mM$ and $\mN$ in Proposition \ref{prop:LSSR}, we obtain that
		\begin{equation}
            \label{eqn:32}
			\begin{aligned}
			SS(\vu_j^\top\mX_1) + SS(\vu_j^\top \mX_2) &= \sum_{i=1}^m [(\vu_j^\top \rvx_{1i} - \vu_j^\top\bar\rvx_1)^2 + (\vu_j^\top \rvx_{2i} - \vu_j^\top\bar\rvx_2)^2] \\
				& = \sum_{i=1}^m \vu_j^\top[(\rvx_{1i} - \bar\rvx_1)(\rvx_{1i} - \bar\rvx_1)^\top + (\rvx_{2i} - \bar\rvx_2)(\rvx_{2i} - \bar\rvx_2)^\top]\vu_j \\
				&= \vu_j^\top \mM \vu_j, 
			\end{aligned}
		\end{equation}
		and similarly, we have
		\begin{equation}
            \label{eqn:33}
			SS([\vu_j^\top\mX_1, \vu_j^\top \mX_2]) = \vu_j^\top\mN\vu_j. 
		\end{equation}
		
		By the hypothesis in Definition \ref{def:LSSR}, we figure out that $ \lambda_j(j=1,\cdots,d) $ are not all zeros, otherwise $ SS([\mW \mX_1, \mW \mX_2]) = 0 $. Besides, by the hypothesis in Proposition \ref{prop:LSSR}, $ \mN $ is reversible. We point out that $ \mN $ is also a positive semi-definite matrix.
  
        Furthermore, $ \mN $ is a positive definite matrix. When $ \vu_j \ne \bm0 $, we find that
        \begin{equation}
            SS([\vu_j^\top\mX_1, \vu_j^\top \mX_2]) = \vu_j^\top\mN\vu_j > 0 . 
        \end{equation}
		
		Let $ \eta_j = \lambda_j SS([\vu_j^\top\mX_1, \vu_j^\top \mX_2]) $, we thus have $ \eta_1+\cdots+\eta_d > 0 $. We obtain
		\begin{equation}
			\begin{aligned}
				SSR(\mW\mX_1,\mW\mX_2)
				=& \frac{1}{\eta_1+\cdots+\eta_d} \sum_{j=1}^d \frac {\eta_j [SS(\vu_j^\top\mX_1) + SS(\vu_j^\top \mX_2)]}  {SS([\vu_j^\top\mX_1, \vu_j^\top \mX_2])}\\
				=& \sum_{j=1}^d \frac{\eta_j}{\eta_1+\cdots+\eta_d} SSR(\vu_j^\top\mX_1, \vu_j^\top\mX_2)\\
				\ge& \sum_{j=1}^d \frac{\eta_j}{\eta_1+\cdots+\eta_d} \min_{1\le k\le d}SSR(\vu_k^\top\mX_1, \vu_k^\top\mX_2)\\
                =& \min_{1\le k\le d} SSR(\vu_k^\top\mX_1,  \vu_k^\top\mX_2)\\
				=& \min_{1\le j\le d} SSR(\vu_j^\top\mX_1,  \vu_j^\top\mX_2).
			\end{aligned}
		\end{equation}
		We figure out that the equation holds, if and only if
		\begin{equation}
			\eta_j \begin{cases}
				\ge0, &j \in\arg\min\limits_{1\le j\le d}SSR(\vu_j^\top\mX_1, \vu_j^\top\mX_2);\\
				=0, &\text{otherwise}.
			\end{cases}
		\end{equation}
		Here, $ j=1,\cdots,d $, and $ \eta_1,\cdots,\eta_d $ are not all zeros. 
		
		Since $ \lambda_j = \eta_j / SS([\vu_j^\top\mX_1, \vu_j^\top \mX_2]) $ and $ SS([\vu_j^\top\mX_1, \vu_j^\top \mX_2]) > 0 $, we thus have
		\begin{equation}
			\lambda_j \begin{cases}
				\ge0, &j \in\arg\min\limits_{1\le j\le d}SSR(\vu_j^\top\mX_1, \vu_j^\top\mX_2),\\
				=0, &\text{otherwise},
			\end{cases}
		\end{equation}
		holds for $ j=1,\cdots,d$, and $ \lambda_1,\cdots,\lambda_d $ are not all zeros. 
	\end{proof}
	
	\begin{lemma}
	    \label{lemma:3}
        Given $ \sD_\vv = \{ \vv:\vv^\top\mN\vv = 1\} $ and $ \sD_{\vu} = \{\vu:\vu^\top\vu=1\}$ and the map $ \psi : \sD_\vv \to \sD_\vu $, where $ \vu = \psi(\vv) = \vv/(\vv^\top\vv)^{\frac12} $, we have that $ \psi $ is a bijection. 
	\end{lemma}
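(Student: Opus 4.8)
The plan is to exhibit an explicit two-sided inverse of $\psi$. Geometrically $\sD_\vv$ is an ellipsoid (since $\mN$ is positive definite, as established in the proof of Lemma~\ref{lemma:2}) and $\sD_\vu$ is the unit sphere, and $\psi$ merely rescales each $\vv$ to unit length along its own ray; so the natural candidate for the inverse is the map $\vu \mapsto \vu/(\vu^\top\mN\vu)^{1/2}$, and I would verify that these two maps compose to the identity in both orders. Before that, I would check $\psi$ is well-defined: the constraint $\vv^\top\mN\vv = 1$ together with positive definiteness of $\mN$ forces $\vv \ne \bm0$, so $(\vv^\top\vv)^{1/2} > 0$ and $\psi(\vv)$ makes sense; moreover $\psi(\vv)^\top\psi(\vv) = \vv^\top\vv/(\vv^\top\vv) = 1$, hence $\psi(\sD_\vv) \subseteq \sD_\vu$.

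For injectivity, suppose $\psi(\vv_1) = \psi(\vv_2)$. Then $\vv_1$ and $\vv_2$ are positive scalar multiples of the same unit vector, so $\vv_2 = t\vv_1$ with $t > 0$; substituting into the defining constraint gives $1 = \vv_2^\top\mN\vv_2 = t^2\,\vv_1^\top\mN\vv_1 = t^2$, so $t = 1$ and $\vv_1 = \vv_2$.

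For surjectivity, given $\vu \in \sD_\vu$, set $s = (\vu^\top\mN\vu)^{-1/2}$, which is a well-defined positive real since $\mN$ is positive definite and $\vu \ne \bm0$, and put $\vv = s\vu$. Then $\vv^\top\mN\vv = s^2\,\vu^\top\mN\vu = 1$, so $\vv \in \sD_\vv$; and because $s > 0$ and $\|\vu\| = 1$ we get $\psi(\vv) = s\vu/(s^2)^{1/2} = \vu$. Hence $\psi$ is onto, and together with injectivity, $\psi$ is a bijection.

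There is no substantial obstacle here; the only point needing care is bookkeeping of signs — the equality $\psi(\vv_1)=\psi(\vv_2)$ yields only a \emph{positive} proportionality constant, and one must invoke strict positive definiteness of $\mN$ (not merely positive semi-definiteness) to guarantee $\vu^\top\mN\vu > 0$ for every unit $\vu$, which is exactly what makes both the inverse map and the injectivity step well-posed.
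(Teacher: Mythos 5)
Your proof is correct and follows essentially the same route as the paper's: both identify the explicit inverse $\vu \mapsto \vu/(\vu^\top\mN\vu)^{1/2}$ and verify it using the positive definiteness of $\mN$. If anything, your version is slightly more careful, since you check injectivity and surjectivity separately (including the sign of the proportionality constant), whereas the paper only verifies the left-inverse composition and asserts the rest.
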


    \begin{proof}
        For $\mN$ is a positive definite matrix , and $ \vv^\top\mN\vv = 1 $, we have $ \vv\ne\bm0 $. Given $ \vu = \psi(\vv) $, we obtain 
        \begin{equation}
            \vu^\top\vu = \vv^\top\vv/(\vv^\top\vv) =1, 
        \end{equation} 
        for each $ \vv $ in $ \sD_\vv $. 
        
        Therefore, $ \psi $ is a reflection from $ \sD_\vv $ to $ \sD_\vu $. Besides, we find that
        \begin{equation}
            \vu^\top\mN\vu = \vv^\top\mN\vv/(\vv^\top\vv) = 1/(\vv^\top\vv). 
        \end{equation}
        By the definition of $\psi$, we hence have 
        \begin{equation}
            \vu/(\vu^\top\mN\vu)^{\frac12} = \psi(\vv)(\vv^\top\vv)^{\frac12} = \vv . 
        \end{equation}
        Therefore, for each $\vu$, we obtain that
        \begin{equation}
            \vv = \psi^{-1}(\vu) = \vu/(\vu^\top\mN\vu)^{\frac12} , 
        \end{equation}
        namely we find $\psi^{-1}$ as the inverse mapping of $\psi$. 
        
        As a result, $ \psi $ is a bijection. 
    \end{proof}

    \begin{lemma}
        \label{lemma:4}
        Let the optimization problem be
        \begin{equation}
            (P_\vv)\begin{cases}
            \min\limits_\vv\quad &\displaystyle f(\vv) = \frac{\vv^\top \mM\vv}{\vv^\top \mN\vv}, \\
            s.t. \quad &\vv^\top\mN\vv = 1,
        \end{cases}
        \end{equation}
        where $\mM$ and $\mN$ are defined in Proposition \ref{prop:LSSR}. We have that the optimal value is the minimal eigenvalue of $ \mN^{-1} \mM $, namely $ \lambda^* $. And the optimal solution is the eigenvector which belongs to $ \lambda^* $.
    \end{lemma}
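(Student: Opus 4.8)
The plan is to treat $(P_\vv)$ as a generalized Rayleigh-quotient problem and reduce it to the ordinary symmetric case by ``whitening'' with $\mN$. First I would note that on the feasible set the denominator is pinned to $1$, so $f(\vv)=\vv^\top\mM\vv$ there, and $(P_\vv)$ is equivalent to minimizing $\vv^\top\mM\vv$ subject to $\vv^\top\mN\vv=1$. Since $\mN$ is positive definite (this is shown inside the proof of Lemma~\ref{lemma:2} under the hypothesis of Proposition~\ref{prop:LSSR}), the constraint set $\sD_\vv$ is a compact ellipsoid, so the continuous objective attains a minimum there and an optimal solution exists.

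Next I would introduce the symmetric positive-definite square root $\mN^{1/2}$ and substitute $\bm{w}=\mN^{1/2}\vv$, i.e. $\vv=\mN^{-1/2}\bm{w}$. Then $\vv^\top\mN\vv=\bm{w}^\top\bm{w}$ and $\vv^\top\mM\vv=\bm{w}^\top\mathbf{A}\bm{w}$ with $\mathbf{A}:=\mN^{-1/2}\mM\mN^{-1/2}$ symmetric and positive semi-definite, so $(P_\vv)$ becomes $\min_{\|\bm{w}\|_2=1}\bm{w}^\top\mathbf{A}\bm{w}$. The textbook Rayleigh-quotient fact (equivalently: the Lagrange stationarity condition $\mathbf{A}\bm{w}=\mu\bm{w}$, at which the objective equals $\mu$, so the minimum over stationary points is the smallest eigenvalue) gives that this minimum equals the least eigenvalue $\mu^*$ of $\mathbf{A}$, attained exactly at its unit eigenvectors belonging to $\mu^*$.

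Finally I would translate back: $\mN^{-1}\mM=\mN^{-1/2}\mathbf{A}\,\mN^{1/2}$ is similar to $\mathbf{A}$, hence has the same (real) spectrum, so its minimal eigenvalue equals $\mu^*$, which is the $\lambda^*$ of the statement, giving the claimed optimal value. Moreover $\mathbf{A}\bm{w}=\lambda^*\bm{w}$ holds iff $\mN^{-1}\mM(\mN^{-1/2}\bm{w})=\lambda^*(\mN^{-1/2}\bm{w})$, so the optimizer $\vv=\mN^{-1/2}\bm{w}$ is exactly an eigenvector of $\mN^{-1}\mM$ belonging to $\lambda^*$, normalized so that $\vv^\top\mN\vv=1$, which is the second assertion.

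The main thing to be careful about is that $\mN^{-1}\mM$ is not symmetric, so ``its minimal eigenvalue'' must first be shown to be a well-defined real number; the similarity to the symmetric matrix $\mathbf{A}$ handles this cleanly, and after that the argument is routine. A minor but necessary point to record is that $\mN$ is \emph{strictly} positive definite — needed both so that $\mN^{1/2}$ is invertible and so that $\sD_\vv$ is compact — which was already established inside the proof of Lemma~\ref{lemma:2}.
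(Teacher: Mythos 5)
Your proposal is correct, but it takes a genuinely different route from the paper. The paper proves Lemma~\ref{lemma:4} by writing the Lagrangian $L(\vv,\alpha)=\vv^\top\mM\vv-\alpha(\vv^\top\mN\vv-1)$, reading off the stationarity condition $\mM\vv=\alpha\mN\vv$, i.e.\ $\mN^{-1}\mM\vv=\alpha\vv$, observing that the objective value at such a point equals $\alpha$, and concluding that the optimum is the smallest such $\alpha$. Your whitening argument via $\mN^{1/2}$ instead reduces $(P_\vv)$ to the ordinary Rayleigh quotient of the symmetric matrix $\mN^{-1/2}\mM\mN^{-1/2}$ and then transfers the answer back by similarity. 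The paper's route is shorter, but it silently assumes that a minimizer exists and that it is characterized by the KKT stationary points, and it never addresses why the non-symmetric matrix $\mN^{-1}\mM$ has a well-defined real ``minimal eigenvalue.'' Your version buys exactly these two missing pieces: compactness of the ellipsoid $\{\vv:\vv^\top\mN\vv=1\}$ (using positive definiteness of $\mN$, which the paper does establish inside the proof of Lemma~\ref{lemma:2}) gives existence, and similarity to a symmetric positive semi-definite matrix gives reality of the spectrum. The translation of eigenvectors, $\mathbf{A}\mathbf{w}=\lambda^*\mathbf{w}\iff\mN^{-1}\mM(\mN^{-1/2}\mathbf{w})=\lambda^*(\mN^{-1/2}\mathbf{w})$, is also stated correctly, so both assertions of the lemma follow. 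In short: same conclusion, a more self-contained and slightly more rigorous derivation.
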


    \begin{proof}
         To get the minimum, we use the Lagrange multiplier method: 
		\begin{equation}
			L(\vv,\alpha) = \vv^\top \mM\vv - \alpha(\vv^\top \mN\vv-1).
		\end{equation}
		We figure out that the KKT conditions are
        \begin{equation}
            \label{eqn:44}
            \begin{cases}
                \displaystyle\frac{\partial L}{\partial \vv} = 2\mM\vv - 2\alpha\mN\vv = \bm0,\\ 
                \vv^\top \mN\vv-1 = 0.
            \end{cases}
        \end{equation}  
        We hence have $ \mN^{-1} \mM\vv = \alpha\vv $, namely $\alpha$ is an eigenvalue of $\mN^{-1} \mM$, and $ \vv $ is the corresponding eigenvector. Based above, we find that
        \begin{equation}
            \vv^\top\mM\vv = \vv^\top\mN\mN^{-1}\mM\vv =  \vv^\top\mN(\alpha\vv) = \alpha. 
        \end{equation}
        Furthermore, the minimum of $ L(\vv,\alpha) $ is the minimum $\alpha$, namely the minimum eigenvalue of $\mN^{-1}\mM$.
        
        We hence have
		\begin{equation}
			LSSR(\mX_1, \mX_2) = \lambda_{min}(\mN^{-1}\mM)=\lambda^*, 
		\end{equation}
        and the optimal solution is the eigenvector which belongs to $ \lambda^* $.
    \end{proof}

    \subsection{Proof of Proposition \ref{prop:LSSR}}

    Based on the four lemmas above, now we give the proof of Proposition \ref{prop:LSSR}. 

    \begin{proof}
        By Lemma \ref{lemma:21} and Lemma \ref{lemma:2}, given $ \mW^\top\mW = \mU\Lambda\mU^\top $, we have 
        \begin{equation}
            \label{eqn:38}
            \begin{aligned}
                LSSR(\mX_1, \mX_2) &= \min_{\mW}SSR (\mW\mX_1, \mW\mX_2) \\ 
                &= \min_{\mU,\Lambda}SSR (\mW\mX_1, \mW\mX_2) \\
                &= \min_{\mU} \min_{\Lambda} SSR (\mW\mX_1, \mW\mX_2) \\
                &= \min_{\mU}\min\limits_{1\le j\le d} SSR(\vu_j^\top\mX_1,  \vu_j^\top\mX_2).
            \end{aligned}
        \end{equation}
        
        According to \Eqn\ref{eqn:32} and \Eqn\ref{eqn:33}, we define the function $ f(\vu) = SSR(\vu^\top\mX_1,  \vu^\top\mX_2) $, namely
        \begin{equation}
            f(\vu) = \frac{\vu^\top \mM\vu}{\vu^\top \mN\vu}.
        \end{equation}
        
        By \Eqn\ref{eqn:38}, there is some $\mU$, and $ j^* = \arg \min \limits_{1\le j\le d} SSR(\vu_j^\top\mX_1,  \vu_j^\top\mX_2) $, such that 
        \begin{equation}
            LSSR(\mX_1,\mX_2) = \min \limits_{1\le j\le d} SSR(\vu_j^\top\mX_1,  \vu_j^\top\mX_2) = f(\vu_{j^*}). 
        \end{equation} 
        
        Consider the optimization problem 
        \begin{equation}
            (P_{\vu})\begin{cases}
            \min\limits_\vu\quad &f(\vu) = \displaystyle\frac{\vu^\top \mM\vu}{\vu^\top \mN\vu}, \\
            s.t. \quad &\vu^\top\vu = 1.
            \end{cases}
        \end{equation}

        We denote one of the optimal solutions as $ \bar\vu $. Obviously, $ \vu_{j^*} $ is one of the feasible solutions above, we thus have 
    \begin{equation}
        \label{eqn:42}
        LSSR(\mX_1,\mX_2) = f(\vu_{j^*}) \ge f(\bar\vu).
    \end{equation} 

    \REVISE{We remind that $ \lambda^* $ and  $\vu^* $ are the minimum eigenvalue and corresponding eigenvector of $ \mN^{-1} \mM $. }On one hand, let $ \vu_0 = \vu^* / \|\vu^*\|_2 $ and $ \vv_0 = \psi^{-1}(\vu_0) $ ($\psi$ is defined the same as that in Lemma \ref{lemma:3}), namely $ \vv_0 = \vu_0/(\vu_0^\top \mN \vu_0)^{\frac12} $. We first point out that for $k\ne0$, we have
    \begin{equation}
        \label{eqn:43} 
        \begin{aligned}
            f(k\vu) &= \frac{(k\vu)^\top \mM(k\vu)}{(k\vu)^\top \mN(k\vu)} \\
            &= \frac{\vu^\top \mM\vu}{\vu^\top \mN\vu} \\
            &= f(\vu). 
        \end{aligned}
    \end{equation}
    Since $1/||\vu^*||_2\ne0$ and $1/ (\vu_0^\top \mN \vu_0)^{\frac12}\ne0$, we obtain
    \begin{equation}
        \label{eqn:56}
        \begin{aligned}
            f(\vv_0) = f(\vu_0)&= f(\vu^*) \\
            &= \frac{(\vu^*)^\top \mM(\vu^*)}{(\vu^*)^\top \mN(\vu^*)} \\
            &= \frac{(\vu^*)^\top \mN \mN^{-1} \mM(\vu^*)}{(\vu^*)^\top \mN(\vu^*)} \\
            &= \frac{(\vu^*)^\top \mN (\lambda^*\vu^*)}{(\vu^*)^\top \mN(\vu^*)} \\
            &= \lambda^*, 
        \end{aligned}
    \end{equation}
   
    where $ \lambda^* $ is the minimal eigenvalue of $\mN^{-1}\mM$, as shown in Proposition \ref{prop:LSSR}. 

    Therefore, by Lemma \ref{lemma:4}, $\vv_0$ is the optimal solution of $(P_\vv)$. Furthermore, by Lemma \ref{lemma:3}, since $\psi$ is a bijection between $\sD_\vv$ and $\sD_\vu$ and $f(\psi(\vv)) = f(\vv)$, we have that $ \vu_0 = \psi(\vv_0) $ is also the optimal solution of $(P_\vu)$. We hence have $ f(\vu^*) = f(\vu_0) = f(\bar\vu) $. By \Eqn\ref{eqn:42}, we obtain
    \begin{equation}
        \label{eqn:51} 
        LSSR(\mX_1,\mX_2) \ge f(\bar\vu) = f(\vu^*) = SSR((\vu^*)^\top\mX_1, (\vu^*)^\top\mX_2). 
    \end{equation}
    
    On the other hand, the definition of LSSR denotes the lower bound of SSR, we hence have
    \begin{equation}
        \label{eqn:45}
        LSSR(\mX_1,\mX_2) \le SSR((\vu^*)^\top\mX_1, (\vu^*)^\top\mX_2).
    \end{equation} 
    By \Eqn\ref{eqn:56}, \Eqn\ref{eqn:51} and \Eqn\ref{eqn:45}, we obtain 
    \begin{equation}
        LSSR(\mX_1,\mX_2) = SSR((\vu^*)^\top\mX_1, (\vu^*)^\top\mX_2) = \lambda^*.
    \end{equation}
    \end{proof}
    
    \subsection{Corollaries of Proposition \ref{prop:LSSR}}
 
    Suppose $ \lambda^* $ is the minimal eigenvalue of $ \mN^{-1}\mM $, and $ \vu^* $ is its unique linearly independent eigenvector, we give the result in Corollary \ref{prop:4}. If $\lambda^*$ has more than one linearly independent eigenvectors, we give the result in Corollary \ref{prop:5}.
	
	\begin{corollary}
		\label{prop:4}
		Suppose $ \lambda^* $ is the minimal eigenvalue of $ \mN^{-1}\mM $, and $ \vu^* $ is its unique linearly independent eigenvector, we have that the optimal $ \mW $ satisfies that
		\begin{equation}
			\mW^\top\mW = C \vu^*(\vu^*)^\top, C>0.
		\end{equation}
	\end{corollary}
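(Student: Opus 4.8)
The plan is to fix an arbitrary \emph{optimal} $\mW$ --- one for which $SSR(\mW\mX_1,\mW\mX_2)=LSSR(\mX_1,\mX_2)=\lambda^*$ --- and to read off its structure from the eigendecomposition $\mW^\top\mW=\mU\Lambda\mU^\top=\sum_{j=1}^d\lambda_j\vu_j\vu_j^\top$ of Eqn.~\ref{eqn:25}, where $\vu_1,\dots,\vu_d$ are orthonormal and $\lambda_j\ge 0$. First I would apply Lemma~\ref{lemma:21} (the bias is irrelevant to SSR) and Lemma~\ref{lemma:2}: for the particular $\mU$ associated with $\mW$ we have $SSR(\mW\mX_1,\mW\mX_2)\ge\min_{1\le j\le d}SSR(\vu_j^\top\mX_1,\vu_j^\top\mX_2)\ge LSSR(\mX_1,\mX_2)$, so optimality of $\mW$ forces both inequalities to be equalities. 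In particular $\min_j SSR(\vu_j^\top\mX_1,\vu_j^\top\mX_2)=\lambda^*$, and by the equality condition of Lemma~\ref{lemma:2} every $\lambda_j$ whose index does not attain this minimum vanishes, while the $\lambda_j$ are not all zero (otherwise $SS([\mW\mX_1,\mW\mX_2])=0$, contradicting the standing hypothesis).

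Next I would identify, via Eqn.~\ref{eqn:32} and Eqn.~\ref{eqn:33}, the quantity $SSR(\vu_j^\top\mX_1,\vu_j^\top\mX_2)$ with $f(\vu_j):=(\vu_j^\top\mM\vu_j)/(\vu_j^\top\mN\vu_j)$. Thus each $\vu_j$ with $\lambda_j>0$ is a unit-norm point at which $f$ attains its global minimum $\lambda^*$; here the minimum of $f$ over the unit sphere equals the minimum over $\R^d\setminus\{\bm0\}$ because $f$ is scale invariant (Eqn.~\ref{eqn:43}), and this in turn equals the constrained minimum treated in Lemma~\ref{lemma:4} via the bijection of Lemma~\ref{lemma:3}, exactly as in the proof of Proposition~\ref{prop:LSSR}. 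By Lemma~\ref{lemma:4} the minimizers of $f$ are precisely the eigenvectors of $\mN^{-1}\mM$ belonging to $\lambda^*$; under the hypothesis that this eigenspace is one-dimensional and spanned by $\vu^*$, the only unit-norm minimizers are $\pm\vu^*/\|\vu^*\|_2$. Since $\vu_1,\dots,\vu_d$ are pairwise orthogonal, at most one of them can be $\pm\vu^*/\|\vu^*\|_2$; combined with ``not all $\lambda_j$ zero'', exactly one index $j_0$ survives, with $\lambda_{j_0}=:C_0>0$ and $\vu_{j_0}=\pm\vu^*/\|\vu^*\|_2$. Hence $\mW^\top\mW=C_0\,\vu_{j_0}\vu_{j_0}^\top=(C_0/\|\vu^*\|_2^2)\,\vu^*(\vu^*)^\top$, which is the asserted form with $C=C_0/\|\vu^*\|_2^2>0$.

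The step I expect to be the real obstacle is the claim that a unit vector at which the generalized Rayleigh quotient $f$ equals its minimum $\lambda^*$ must be a scalar multiple of $\vu^*$. This amounts to saying that $\lambda^*$ is a \emph{simple} eigenvalue of the symmetric matrix $\mN^{-1/2}\mM\mN^{-1/2}$ --- equivalently, that the $\lambda^*$-eigenspace of $\mN^{-1}\mM$ is one-dimensional, which is exactly the uniqueness hypothesis of the corollary --- together with the standard fact that a Rayleigh quotient attains its minimum only on the bottom eigenspace; the latter is what the KKT analysis in Lemma~\ref{lemma:4} supplies, since the substitution $\vw=\mN^{1/2}\vu$ turns $f$ into an ordinary Rayleigh quotient and makes the equality set transparent. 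Everything else is routine bookkeeping with the eigendecomposition and the lemmas already proved.
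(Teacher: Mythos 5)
Your proposal is correct and follows essentially the same route as the paper's proof: eigendecomposition of $\mW^\top\mW$, the equality condition of Lemma~\ref{lemma:2} to kill the non-minimizing $\lambda_j$, identification of $SSR(\vu_j^\top\mX_1,\vu_j^\top\mX_2)$ with the generalized Rayleigh quotient, and Lemmas~\ref{lemma:3} and~\ref{lemma:4} to force each surviving $\vu_j$ into the $\lambda^*$-eigenspace of $\mN^{-1}\mM$. Your added observation that orthonormality of the $\vu_j$ leaves exactly one surviving index is a small sharpening of the paper's sum $\sum_{\lambda_j\ne 0}\lambda_j\alpha_j^2\,\vu^*(\vu^*)^\top$, but the argument is otherwise the same.
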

	
	\begin{proof}
		By Lemma \ref{lemma:21}, we can only consider the eigenvalues and eigenvectors of $ \mW^\top\mW $. By \Eqn\ref{eqn:25}, $\vu_j$ will affect $ \mW^\top\mW $ only when the corresponding eigenvalue $ \lambda_j \ne 0 $. Furthermore, by Lemma \ref{lemma:2}, when $ \lambda_j \ne 0 $, we figure out that
        \begin{equation}
            j \in \arg \min \limits_{1\le j\le d}SSR(\vu_j^\top\mX_1, \vu_j^\top\mX_2) . 
        \end{equation}
        Since $ \vu_j $ is a unit vector, it must be one of the optimal solutions of $ (P_\vu) $. We hence have $ f(\vu_j) = \lambda^* $. By \Eqn\ref{eqn:43} and Lemma \ref{lemma:3}, we have
        \begin{equation}
            f(\psi^{-1}(\vu_j)) = f(\vu_j) = \lambda^* , 
        \end{equation} 
        and $ \psi^{-1}(\vu_j) $ is one of the optimal solutions of $ (P_\vv) $. By Lemma \ref{lemma:4}, we have that $ \psi^{-1}(\vu_j) $ must satisfy the KKT conditions in \Eqn\ref{eqn:44}, namely
        \begin{equation}
            \mN^{-1} \mM \psi^{-1}(\vu_j) = \lambda^* \psi^{-1}(\vu_j) .
        \end{equation}
        Therefore, $ \psi^{-1}(\vu_j) $ is the minimal eigenvector of $\mM^{-1}\mN$. For $ \psi^{-1}(\vu_j) = \vu_j/(\vu_j^\top\mN\vu_j)^{\frac12} $, we obtain $ \vu_j $ is also the eigenvector of $ \lambda^* $. For $ \vu^* $ is the unique linearly independent eigenvector of $ \lambda^* $, we figure out that
        \begin{equation}
            \label{eqn:49}
            \vu_j = \alpha_j \vu^*,\alpha_j\ne0. 
        \end{equation}
        To be reminded, \Eqn\ref{eqn:49} only holds when $ \lambda_j \ne 0 $. Therefore, we have
        \begin{equation}
            \begin{aligned}
                \mW^\top\mW &= \sum_{j=1}^d \lambda_j \vu_j\vu_j^\top \\
                &= \sum_{\lambda_j \ne 0} \lambda_j \alpha_j^2 \vu^*(\vu^*)^\top \\
                &= C \vu^*(\vu^*)^\top,
            \end{aligned}
        \end{equation}
        where $ C = \sum\limits_{\lambda_j \ne 0} \lambda_j \alpha_j^2 $. 

        By Lemma \ref{lemma:2}, we have $ \lambda_1,\cdots,\lambda_j\ge 0 $ are not all zeros. Besides, we figure out that $\alpha_j$ can be any non-zero real number. We thus have $C$ can be any non-zero real number, to demonstrate $\mW^\top\mW$. 
	\end{proof}
	
	\begin{corollary}
		\label{prop:5}
		Suppose that the minimal eigenvalue of $ \mN^{-1}\mM $, namely $\lambda^*$, has $ k $ linearly independent eigenvectors $ \vv_1,\vv_2,\cdots,\vv_k $. We denote $\mV = [\vv_1,\cdots,\vv_k] $, then we have that the optimal $ \mW $ satisfies that 
		\begin{equation}
			\mW^\top\mW = \mV\mC\mV^\top,
		\end{equation}
		where $\mC$ is a $k$-order semi-positive definite and non-zero matrix.
	\end{corollary}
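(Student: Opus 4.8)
The plan is to follow the proof of Corollary~\ref{prop:4} almost verbatim, replacing the one-dimensional span of $\vu^*$ by the full $\lambda^*$-eigenspace $\ker(\mN^{-1}\mM - \lambda^*\mI)$, which by hypothesis is spanned by $\vv_1,\dots,\vv_k$ and hence equals $\mathrm{col}(\mV)$. As before, by Lemma~\ref{lemma:21} it suffices to describe $\mW^\top\mW$; writing its eigenvalue decomposition $\mW^\top\mW = \sum_{j=1}^d \lambda_j\vu_j\vu_j^\top$ with $\{\vu_j\}$ orthonormal, Lemma~\ref{lemma:2} gives $\lambda_j\ge0$ (not all zero), and for the optimal $\mW$ it gives $f(\vu_j) = SSR(\vu_j^\top\mX_1,\vu_j^\top\mX_2) = \lambda^*$ whenever $\lambda_j\neq0$.

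The first step is to show that every $\vu_j$ with $\lambda_j\neq0$ lies in $\mathrm{col}(\mV)$. For such a $j$, the unit vector $\vu_j$ is an optimal solution of $(P_\vu)$; by \Eqn\ref{eqn:43} together with Lemma~\ref{lemma:3}, the vector $\psi^{-1}(\vu_j) = \vu_j/(\vu_j^\top\mN\vu_j)^{1/2}$ is then an optimal solution of $(P_\vv)$, and by the KKT conditions \Eqn\ref{eqn:44} in Lemma~\ref{lemma:4} this forces $\mN^{-1}\mM\,\psi^{-1}(\vu_j) = \lambda^*\,\psi^{-1}(\vu_j)$. Since $\vu_j$ is a nonzero scalar multiple of $\psi^{-1}(\vu_j)$, we conclude $\mN^{-1}\mM\vu_j = \lambda^*\vu_j$, i.e. $\vu_j \in \ker(\mN^{-1}\mM-\lambda^*\mI) = \mathrm{col}(\mV)$, so $\vu_j = \mV\vw_j$ for some $\vw_j\in\R^k$.

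Substituting into the decomposition then yields
\begin{equation}
\mW^\top\mW = \sum_{\lambda_j\neq0}\lambda_j\vu_j\vu_j^\top = \mV\Bigl(\sum_{\lambda_j\neq0}\lambda_j\vw_j\vw_j^\top\Bigr)\mV^\top = \mV\mC\mV^\top,
\end{equation}
with $\mC = \sum_{\lambda_j\neq0}\lambda_j\vw_j\vw_j^\top \in \R^{k\times k}$. As a nonnegative combination of rank-one positive semi-definite matrices, $\mC$ is positive semi-definite; it is nonzero since, picking any $j_0$ with $\lambda_{j_0}>0$, the unit vector $\vu_{j_0}$ is nonzero, so $\vw_{j_0}\neq\bm0$ (the columns of $\mV$ are linearly independent), whence $\vw_{j_0}^\top\mC\vw_{j_0} \ge \lambda_{j_0}(\vw_{j_0}^\top\vw_{j_0})^2 > 0$. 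I would also record the converse for completeness: any $\mW$ with $\mW^\top\mW = \mV\mC\mV^\top$ for a nonzero positive semi-definite $\mC$ is optimal, because its eigenvectors with nonzero eigenvalue lie in $\mathrm{col}(\mV)$ and therefore attain $f = \lambda^*$; this gives a characterization parallel to Corollary~\ref{prop:4}.

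I expect the main obstacle to be the first step --- establishing that each $\vu_j$ with $\lambda_j\neq0$ is a $\lambda^*$-eigenvector --- since it is the only place doing nontrivial work, requiring the chaining of Lemmas~\ref{lemma:2}, \ref{lemma:3}, \ref{lemma:4} and the fact that the optimizers of $(P_\vv)$ are exactly the $\lambda^*$-eigenvectors on the constraint surface. Everything after that is routine linear algebra.
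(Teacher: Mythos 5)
Your proposal is correct and follows essentially the same route as the paper: the paper likewise uses Lemma~\ref{lemma:21} and Lemma~\ref{lemma:2} to reduce to the eigendecomposition of $\mW^\top\mW$, invokes the same chain through $\psi^{-1}$ and the KKT conditions (by reference to the proof of Corollary~\ref{prop:4}) to conclude that every $\vu_j$ with $\lambda_j\neq0$ is a $\lambda^*$-eigenvector of $\mN^{-1}\mM$ and hence of the form $\mV\bm\alpha_j$, and then assembles $\mW^\top\mW=\mV\bigl(\sum_{\lambda_j>0}\lambda_j\bm\alpha_j\bm\alpha_j^\top\bigr)\mV^\top$. Your added verification that $\mC$ is nonzero and your remark on the converse are harmless refinements of the same argument.
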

	
	\begin{proof}
		Suppose $\lambda_j$ is an eigenvalue of $\mW^\top\mW$, and its eigenvector is $\vu_j$. We can identify that $ j \in \arg \min \limits_{1\le j\le d}SSR(\vu_j^\top\mX_1, \vu_j^\top\mX_2) $, if $\lambda_j\ne0 $. Similarly to the proof of Corollary \ref{prop:4}, $\vu_j$ must be an eigenvector of $\mN^{-1}\mM$, and the corresponding eigenvalue is $\lambda^*$. Accordingly, $\vu_j$ is a linear combination of all the linearly independent eigenvectors of $\lambda^* $, namely
        \begin{equation}
            \label{eqn:57}
            \vu_j = \alpha_{j1}\vv_1 + \alpha_{j2}\vv_2 + \cdots + \alpha_{jk}\vv_k = \mV\bm\alpha_j
        \end{equation}
        where $ \bm\alpha_j = [\alpha_{j1}, \cdots, \alpha_{jk}]^\top $, and $ \bm\alpha_j \ne \bm0 $.
        
        We remind that \Eqn\ref{eqn:57} only holds when $ \lambda_j \ne 0 $. We thus have
		\begin{equation}
			\label{eqn:k eigen W}
			\begin{aligned}
				\mW^\top\mW &= \sum_{j=1}^d\lambda_j\vu_j\vu_j^\top \\
				&= \sum_{\lambda_j>0} \lambda_j\vu_j \vu_j^\top \\
				&= \sum_{\lambda_j>0} \lambda_j (\mV\bm\alpha _j)(\mV\bm\alpha _j)^\top \\
				&= \mV\left(\sum_{\lambda_j>0} \lambda_j \bm\alpha_j \bm\alpha_j^\top \right)\mV^\top \\
                &= \mV\mC\mV^\top,
			\end{aligned}
		\end{equation}
		where $\mC = \sum\limits_{\lambda_j>0} \lambda_j \bm\alpha_j \bm\alpha_j^\top $. 
  
        By Lemma \ref{lemma:2}, we have $ \lambda_1,\cdots,\lambda_j\ge 0 $ are not all zeros. Besides, we figure out that $\bm\alpha_j$ can be any non-zero vector. We thus have $\mC$ is any $k$-order semi-positive definite and non-zero matrix, to demonstrate $\mW^\top\mW$. 
	\end{proof}

    \section{Proof Related to Breaking LSSR}
    \label{section:LSSRbreak}
    In this section, we prove Theorem \ref{thm:break} from the perspective of Taylor's expansion. 


   We have defined $f_{SSR}(t)$ as
    \begin{equation}
    	f_{SSR}(t) = \begin{cases}
    		LSSR(\mX_1, \mX_2), &t=0, \\
    		SSR(\bar\psi(t;\mX_1), \bar\psi(t;\mX_2)), &t\ne0, 
    	\end{cases}
    \end{equation}
    where $ \bar\psi(t;\rvx_{ci}) =  \bm1^\top\bar\varphi(t;\rvx_{ci}) / \| \bar\varphi(t;\rvx_{ci}) \|_2 $, $ \bar\varphi(t;\rvx_{ci}) = [(\vu^*)^\top\rvx_{ci}t, 1]^\top $ and $ t\in \R $. We remind Theorem \ref{thm:break} as below. 

    \paragraph{Theorem 1.} 
    \textit{Let $ \psi = \varphi_1 \circ LN(\cdot) \circ \varphi_2 $, performing over the input $ \mX_1, \mX_2 \in \R^{d \times m} $. If $ f_{SSR}'(0) \ne 0 $ , we can always find suitable linear functions $ \varphi_1$ and $\varphi_2 $, such that
	\begin{equation}
		SSR(\psi(\mX_1), \psi(\mX_2)) < LSSR(\mX_1, \mX_2). 
	\end{equation}}

    We prove Lemma \ref{lemma:equivalence} and show three extra lemmas before the formal proof of Theorem \ref{thm:break}. 

    \subsection{Proof of Lemma \ref{lemma:equivalence}}

    \paragraph{Lemma 1.}\textit{Denote $ LN(\cdot) $ as the LN operation in $ \R^d (d\ge3) $, and $ SP(\cdot) $ as the SP operation\footnote{If there are no special instructions, we denote SP projects the sample on to the unit circle, namely $ \rvx \mapsto \rvx/\|\rvx\|_2 $. } in $ \R^{d-1} $. We can find some linear transformations $ \hat \varphi_1 $ and $ \hat \varphi_2 $, such that 
		\begin{equation}
			SP(\cdot) = \hat \varphi_1 \circ LN(\cdot) \circ \hat \varphi_2. 
		\end{equation} }
  
    We denote that $ SP(\cdot) $ is defined on $ \R^{d-1} $, as
    \begin{equation}
        SP(\rvx) = \rvx/\|\rvx\|_2 .
    \end{equation} 
    While $ LN(\cdot) $ is defined on $ \R^{d} $, where
    \begin{equation}
        LN(\rvx) = \sqrt{d}\ (\rvx - \frac1d \bm 1 \bm 1^\top \rvx)/\|\rvx - \frac1d \bm 1 \bm 1^\top \rvx\|_2 . 
    \end{equation} 

    Before the proof of Lemma \ref{lemma:equivalence}, we propose Lemma \ref{lemma:5} as follows. 
	
	\begin{lemma}
		\label{lemma:5}
		There is some orthogonal matrix $ \mQ \in \R^{d \times d} $, such that $ \vz = \mQ\mat{\rvx\\0} \in \{\vz \in \R^d:z^{(1)} + \cdots + z^{(d)} = 0\} $ (namely $ \vz $ is centralized), for $ \rvx \in \R^{d-1} $,
	\end{lemma}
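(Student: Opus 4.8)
The plan is to construct a single orthogonal matrix $\mQ$ that carries the coordinate subspace $\{\vz\in\R^d : z^{(d)}=0\}$ onto the centered hyperplane $H:=\{\vz\in\R^d : z^{(1)}+\cdots+z^{(d)}=0\}$. First I would observe that both of these are linear subspaces of $\R^d$ of dimension $d-1$: the set $\{\mat{\rvx\\0} : \rvx\in\R^{d-1}\}$ is exactly $\mathrm{span}\{\bm e_1,\dots,\bm e_{d-1}\}$, while $H$ is the orthogonal complement of the line $\mathrm{span}\{\bm1_d\}$ and hence also has dimension $d-1$. The lemma is then an instance of the elementary fact that any two subspaces of $\R^d$ of equal dimension are mapped onto one another by some orthogonal transformation, so it suffices to exhibit one.

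Concretely, I would pick an orthonormal basis $\bm q_1,\dots,\bm q_{d-1}$ of $H$ and set $\bm q_d := \bm1_d/\sqrt d$; since $\bm1_d\perp H$, the vectors $\bm q_1,\dots,\bm q_d$ form an orthonormal basis of $\R^d$, so $\mQ:=[\bm q_1,\dots,\bm q_d]$ is orthogonal. For any $\rvx=[x^{(1)},\dots,x^{(d-1)}]^\top\in\R^{d-1}$ we then get
\[
\vz \;=\; \mQ\mat{\rvx\\0} \;=\; \sum_{k=1}^{d-1} x^{(k)}\,\bm q_k \;\in\; H,
\]
because each $\bm q_k$ with $k\le d-1$ lies in $H$; equivalently $\bm1_d^\top\vz = \sum_{k=1}^{d-1} x^{(k)}\,(\bm1_d^\top\bm q_k) = 0$, which is exactly the assertion that $\vz$ is centered, and this $\mQ$ is the same for every $\rvx$. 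An explicit basis of $H$ can be taken as the normalized Helmert vectors $\bm q_k \propto (\underbrace{1,\dots,1}_{k},\,-k,\,0,\dots,0)^\top$ for $k=1,\dots,d-1$, or produced by Gram--Schmidt applied to $\{\bm e_1-\bm e_d,\dots,\bm e_{d-1}-\bm e_d\}$.

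I do not expect any genuine obstacle: this is pure finite-dimensional linear algebra. The only point requiring a little care is to make $\mQ$ \emph{orthogonal} rather than merely invertible, which is why I insist on an orthonormal basis of $H$ together with the unit vector $\bm1_d/\sqrt d$ as the last column; once that is fixed, the verification $\bm1_d^\top\vz=0$ is immediate. This $\mQ$ is precisely what the ensuing proof of Lemma~\ref{lemma:equivalence} uses to realize the spherical projection $SP$ on $\R^{d-1}$ as an $\R^d$ layer normalization sandwiched between linear maps.
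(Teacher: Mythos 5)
Your proof is correct and is essentially the same as the paper's: both take $\bm q_d=\bm1_d/\sqrt d$ as the last column and complete it to an orthonormal basis (the paper via Gram--Schmidt), then verify $\bm1_d^\top\vz=0$ from the orthogonality of the first $d-1$ columns to $\bm1_d$. Your explicit Helmert-basis suggestion is a minor cosmetic addition; no substantive difference.
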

	
	\begin{proof}
		Suppose $ \mQ = \{q_{ij}\}_{d\times d} = [\vq_1, \vq_2, ...,\vq_d] $. We take $ \vq_d = \frac{1}{\sqrt{d}} \bm1 $ specially, and $ \vq_1, \cdots, \vq_{d-1} $ can be calculated by Schmidt orthogonalization. 
		
		Given $ \rvx = [x^{(1)}, \cdots, x^{(d-1)}]^\top \in \mathbb R^{d-1}$, we have
        \begin{equation}
            \vz = \mQ \mat{ \rvx \\ 0 } = [z^{(1)}, \cdots, z^{(d)}]^\top .
        \end{equation}
        Since $ \mQ $ is an orthogonal matrix, we have
        \begin{equation}
            \vq_i^\top \vq_d = \frac{1}{\sqrt{d}}\sum\limits_{k=1}^{d} q_{ki}=0 , 
        \end{equation} 
        for $ i=1, \cdots, d-1 $. Furthermore, we obtain that 
		\begin{equation}
			\begin{aligned}
				\sum_{k=1}^d z^{(k)} &= \sum_{k=1}^d\left(\sum_{i=1}^{d} q_{ki}x^{(i)}\right)\\
                &= \sum_{k=1}^d\left(q_{kd}\cdot0 +  \sum_{i=1}^{d-1} q_{ki}x^{(i)}\right)\\
				&= \sum_{i=1}^{d-1}\left( \sum_{k=1}^d q_{ki}\right)x^{(i)} \\
				&= 0,
			\end{aligned}
		\end{equation}
		which shows that $ \vz \in \{\vz \in \R^d:z^{(1)} + \cdots + z^{(d)} = 0\} $, namely $ \vz $ is centralized. 
	\end{proof}
	
	Now we can design $ \hat\varphi_1 $ and $ \hat\varphi_2 $ based on $ \mQ $ in Lemma \ref{lemma:5}, and then prove Lemma \ref{lemma:equivalence}. 
	
	\begin{proof}
		Based above, we obtain
        \begin{equation}
            \|\vz\|_2 = \left\Vert \mQ \mat{\rvx \\ 0} \right\Vert_2 = \left\Vert \mat{\rvx \\ 0} \right\Vert_2 = \Vert \rvx \Vert_2 .
        \end{equation}
        By Lemma \ref{lemma:5}, we have $ \bm1^\top \vz = 0 $, and $ \vz = \vz - \frac1d\bm1\bm1^\top\vz $. We hence find that
        \begin{equation}
            LN(\vz) = \sqrt{d} (\vz - \frac1d\bm1 \bm1^\top\vz) / \|\vz - \frac1d\bm1 \bm1^\top\vz\|_2 = \sqrt{d}\ \vz/\|\vz\|_2 . 
        \end{equation}
        Let $ \mI_{d} $ denotes the identity matrix in $ \R^{d\times d} $. We thus have
		\begin{equation}
			\begin{aligned}
				\frac{1}{\sqrt{d}}\mat{\mI_{d-1}&\bm0}\mQ^\top LN(\mQ\mat{\mI_{d-1} & \bm0}^\top \rvx) 
				=&\frac{1}{\sqrt{d}}\mat{\mI_{d-1}&\bm0}\mQ^\top LN(\vz) \\
				=&\frac{1}{\sqrt{d}}\mat{\mI_{d-1}&\bm0}\mQ^\top \sqrt{d}\ \vz/\|\vz\|_2 \\
				=&\sqrt{d}\cdot \frac{1}{\sqrt{d}}\mat{\mI_{d-1}&\bm0} \mQ^\top \mQ \mat{\rvx\\0} / \|\rvx\|_2 \\
				=&\rvx / \|\rvx\|_2 \\
				=&SP(\rvx).
			\end{aligned}
		\end{equation}
	
	Let $ \hat\varphi_1(\rvx) = \mQ \mat{\mI_{d-1} & \bm0}^\top \rvx $, and $ \hat\varphi_2(\rvx) = \frac{1}{\sqrt{d}}\mat{\mI_{d-1} & \bm0} \mQ^\top \rvx $. We observe that
	\begin{equation}
		SP(\cdot) = \hat\varphi_1 \circ LN(\cdot) \circ \hat\varphi_2. 
	\end{equation}
    \end{proof}

    \subsection{Extra Lemmas for the Proof}

    Let $ x_{ci} = (\vu^*)^\top\rvx_{ci}, (i=1,\cdots,m;c=1,2) $. We define the mean $ \bar x_{c}$, the variance $\sigma_{c}^2$ and the third-order central moment $ \overline{(x_{c} - \bar x_{c})^3} $ with the equations below:
    \begin{equation}
        \begin{aligned}  
        &\bar x_{c} = \frac1m \sum \limits_{i=1}^m x_{ci}, \\
        &\sigma_{c}^2 = \frac1m \sum\limits_{i=1}^m (x_{ci}-\bar x_{c})^2, \\
        &\overline{(x_{c} - \bar x_{c})^3} = \frac1m \sum\limits_{i=1}^m (x_{ci}-\bar x_{c})^3.
        \end{aligned}
    \end{equation}
 
	Based on $ x_{ci} = (\vu^*)^\top\rvx_{ci} $, we design an linear transformation $ \varphi(t;\cdot) : \R \to \R^2 $, where $ t \in \R $ is a parameter: 
	\begin{equation}
        \label{eqn:55}
		\varphi(t;x_{ci}) = t\cdot\mat{1\\0} x_{ci} + \mat{0\\1} = \mat{x_{ci}t \\ 1}.
	\end{equation}
 
    \begin{lemma}
        \label{lemma:6}
        Let $ \hat \mX = SP(\varphi (t; (\vu^*)^\top \mX)) $, and $ \vv = [1,1]^\top $. Besides, we define three statistics about $ (\vu^*)^\top\mX$:
        \begin{equation}
        \begin{cases}
            T_1 = (\bar x_{1}-\bar x_{2})^2 [\overline{(x_{1}-\bar x_{1})^3} + \overline{(x_{2} - \bar x_{2})^3}], \\
            T_2 = (\bar x_{1} - \bar x_{2}) (\sigma_{1}^2-\sigma_{2}^2) [(\bar x_{1} - \bar x_{2})^2 - (\sigma_{1}^2 + \sigma_{2}^2)], \\
            T_3 = [2\sigma_1^2 + 2 \sigma_2^2 + (\bar x_1 - \bar x_2 )^2]^2. 
        \end{cases}
    \end{equation}
    We figure out that when $t\to0$, we have
    \begin{equation}
        SSR(\vv^\top\hat\mX_1, \vv^\top\hat\mX_2) = LSSR(\mX_1,\mX_2) - \frac{2(T_1+T_2)}{T_3}t + o(t). 
		\end{equation}
    \end{lemma}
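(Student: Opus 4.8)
The plan is to reduce the statement to a one-dimensional Taylor expansion in $t$. Composing $\vu^*$, $\varphi(t;\cdot)$, $SP(\cdot)$ and $\vv^\top$, each sample $\rvx_{ci}$ becomes the single real number
\begin{equation}
	y_{ci}(t) \;=\; \vv^\top SP\big(\varphi(t;(\vu^*)^\top\rvx_{ci})\big) \;=\; \frac{x_{ci}t+1}{\sqrt{x_{ci}^2 t^2+1}},\qquad x_{ci}=(\vu^*)^\top\rvx_{ci},
\end{equation}
so $\vv^\top\hat\mX_c=(y_{c1}(t),\dots,y_{cm}(t))$ and everything below depends only on the scalars $\{x_{ci}\}$. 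First I would Taylor-expand $g(x;t):=(xt+1)(x^2t^2+1)^{-1/2}=1+xt-\tfrac12 x^2t^2-\tfrac12 x^3t^3+O(t^4)$ (from $(1+s)^{-1/2}=1-\tfrac12 s+O(s^2)$ with $s=x^2t^2$). Since $g(x;0)=1$ is the same for every sample, each of $SS(\vv^\top\hat\mX_1)$, $SS(\vv^\top\hat\mX_2)$ and $SS([\vv^\top\hat\mX_1,\vv^\top\hat\mX_2])$ vanishes at $t=0$ and begins at order $t^2$; hence knowing $g$ through order $t^3$ suffices.

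Next, for any finite list of reals $a_1,\dots,a_n$ with mean $\bar a$ I would record
\begin{equation}
	SS\big(g(a_1;t),\dots,g(a_n;t)\big) = \Big(\textstyle\sum_i (a_i-\bar a)^2\Big)t^2 - \Big(\textstyle\sum_i (a_i-\bar a)(a_i^2-\overline{a^2})\Big)t^3 + O(t^4),
\end{equation}
together with the elementary identity $\sum_i (a_i-\bar a)(a_i^2-\overline{a^2}) = n\big(\overline{(a-\bar a)^3}+2\bar a\,\mathrm{Var}(a)\big)$. Applying this to class $1$, to class $2$, and to the pooled $2m$ values $\{x_{ci}\}$ yields $SS(\vv^\top\hat\mX_c)=m\sigma_c^2 t^2-m\big(\overline{(x_c-\bar x_c)^3}+2\bar x_c\sigma_c^2\big)t^3+O(t^4)$ and a matching expansion for $SS([\vv^\top\hat\mX_1,\vv^\top\hat\mX_2])$, in which the pooled variance and pooled third central moment are expressed through $\bar x_c,\sigma_c^2,\overline{(x_c-\bar x_c)^3}$ by the standard pooling formulas for equal group sizes (pooled mean $\tfrac12(\bar x_1+\bar x_2)$, pooled variance $\tfrac12(\sigma_1^2+\sigma_2^2)+\tfrac14(\bar x_1-\bar x_2)^2$, pooled third central moment $\tfrac12(\overline{(x_1-\bar x_1)^3}+\overline{(x_2-\bar x_2)^3})+\tfrac34(\sigma_1^2-\sigma_2^2)(\bar x_1-\bar x_2)$).

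Then I would form $SSR(\vv^\top\hat\mX_1,\vv^\top\hat\mX_2)=(SS(\vv^\top\hat\mX_1)+SS(\vv^\top\hat\mX_2))/SS([\vv^\top\hat\mX_1,\vv^\top\hat\mX_2])$, cancel the common factor $t^2$, and write it as $(A+Bt+O(t^2))/(C+Dt+O(t^2))$ with $A,B,C,D$ explicit in the above moments. Here $A=m(\sigma_1^2+\sigma_2^2)=(\vu^*)^\top\mM\vu^*$ and $C=\tfrac m2[2\sigma_1^2+2\sigma_2^2+(\bar x_1-\bar x_2)^2]=(\vu^*)^\top\mN\vu^*$; since $\mN$ is positive definite we have $C>0$, which both makes $SS([\vv^\top\hat\mX_1,\vv^\top\hat\mX_2])>0$ for small $t\neq0$ (so $SSR$ is well defined there) and gives $A/C=\lambda^*=LSSR(\mX_1,\mX_2)$ via Eqn.~\ref{eqn:56} in the proof of Proposition~\ref{prop:LSSR}. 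The expansion $(A+Bt)/(C+Dt)=A/C+(BC-AD)C^{-2}t+O(t^2)$ then gives $SSR(\vv^\top\hat\mX_1,\vv^\top\hat\mX_2)=LSSR(\mX_1,\mX_2)+\tfrac{BC-AD}{C^2}t+o(t)$, and since $C^2=\tfrac{m^2}{4}T_3$ the claim is equivalent to the single algebraic identity $BC-AD=-\tfrac{m^2}{2}(T_1+T_2)$.

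The main obstacle is verifying this identity — a sizeable but routine polynomial computation, which I would organize by grading in the third moments. Substituting the pooling formulas, the coefficient of $\overline{(x_1-\bar x_1)^3}+\overline{(x_2-\bar x_2)^3}$ in $BC-AD$ collapses, using $C-A=\tfrac m2(\bar x_1-\bar x_2)^2$, to exactly $-\tfrac{m^2}{2}(\bar x_1-\bar x_2)^2$, i.e.\ to the $T_1$ contribution. For the third-moment-free remainder I would change variables to $p=\bar x_1-\bar x_2$, $q=\bar x_1+\bar x_2$, $s=\sigma_1^2+\sigma_2^2$, $r=\sigma_1^2-\sigma_2^2$ (so that $2\bar x_1\sigma_1^2+2\bar x_2\sigma_2^2=qs+pr$); expanding $BC$ and $AD$, the $q$-dependence cancels entirely and the remainder simplifies to $-\tfrac{m^2}{2}\,pr(p^2-s)$, which is precisely $-\tfrac{m^2}{2}T_2$. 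The bookkeeping is error-prone, but the $\{p,q,r,s\}$ substitution and the third-moment grading keep it under control.
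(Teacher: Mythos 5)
Your proposal is correct and follows essentially the same route as the paper's proof: Taylor-expand the spherically projected scalars $\vv^\top\hat\rvx_{ci}$ to order $t^3$, obtain $SS(\vv^\top\hat\mX_c)=m\sigma_c^2t^2-m[\overline{(x_c-\bar x_c)^3}+2\bar x_c\sigma_c^2]t^3+o(t^3)$, expand the quotient to first order, and verify that the resulting coefficient equals $-2(T_1+T_2)/T_3$ (your target identity $BC-AD=-\tfrac{m^2}{2}(T_1+T_2)$ does check out, including the pooled-moment formulas and the $\{p,q,r,s\}$ reduction). The only difference is bookkeeping: the paper handles the denominator via $SS_D=SS([\cdot,\cdot])-SS(\mX_1)-SS(\mX_2)=\tfrac m2(\bar y_1-\bar y_2)^2$, which needs only the means of the projected samples, whereas you expand the pooled variance and pooled third central moment directly — both lead to the same algebra.
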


    \begin{proof}
        Denote $ \hat\rvx_{ci} = SP(\varphi(t;x_{ci}))= [\hat x_{ci}^{(1)},\hat x_{ci}^{(2)}]^\top $. By Newton's binomial expansion, we obtain that
		\begin{equation}
			\begin{aligned}
				\frac {1} {\Vert \varphi(t; x_{ci}) \Vert_2}
				=& \frac {1} {\sqrt{1+(x_{ci}t)^2}} \\
				=& (1 + x_{ci}^2t^2)^{-\frac12}\\
				=& 1 - \frac12(x_{ci}^2t^2) + \frac38 (x_{ci}^2t^2)^2 + o((t^2)^2)\\
				=& 1 - \frac12x_{ci}^2t^2 + o(t^3).
			\end{aligned}
		\end{equation}
		We thus have 
		\begin{equation}
			\hat x_{ci}^{(1)} = \frac {x_{ci}t} {\sqrt{1+(x_{ci}t)^2}} = x_{ci}t-\frac12x_{ci}^3t^3 + o(t^3), 
		\end{equation}
		and 
        \begin{equation}
            \hat x_{ci}^{(2)} = \frac {1} {\sqrt{1+(x_{ci}t)^2}} = 1 - \frac12x_{ci}^2t^2 + o(t^3). 
        \end{equation} 
        Let $\vv =[1,1]^\top$. Then we have
        \begin{equation}
            \vv^\top\hat\rvx_{ci} = 1 + x_{ci}t - \frac12x_{ci}^2t^2 - \frac12x_{ci}^3t^3 +o(t^3) . 
        \end{equation}
        
        We denote that $ a_0=1, a_1=1, a_2=-\frac12 $ and $ a_3=-\frac12 $, therefore 
        \begin{equation}
            \vv^\top\hat\rvx_{ci} = \sum\limits_{s=0}^{3} a_sx_{ci}^st^s+o(t^3).  
        \end{equation}
        We hence obtain that
		\begin{equation}
			\begin{aligned}
				SS(\vv^\top\hat \mX_{c}) &= \sum_{i=1}^m(\vv^\top\hat\rvx_{ci}-\overline{\vv^\top\hat\rvx_c})^2\\
				&=\frac1m \sum_{i=1}^m \sum_{j=1}^m \vv^\top \hat\rvx_{ci} (\vv^\top\hat\rvx_{ci} - \vv^\top\hat\rvx_{cj})\\
				&=\frac1m \sum_{i=1}^m \sum_{j=1}^m \left[\left(\sum_{r=0}^{3} a_rx_{ci}^rt^r+o(t^3)\right)\cdot\left(\sum_{s=0}^{3} a_s(x_{ci}^s-x_{cj}^s)t^s+o(t^3)\right)\right]. 
			\end{aligned}
		\end{equation}
		For $ s+r >3 $, we put the multiplicative term into $ o(t^3) $. Accordingly, we only consider the case $ s+r\le3 $. 
		
		For $ r=0,1,2,3; s=0 $, we have \begin{equation}
			x_{ci}^r(x_{ci}^s-x_{cj}^s) = 0. 
		\end{equation}
		For $ r=0; s = 1,2,3 $, we have
		\begin{equation}
			\sum_{i=1}^m \sum_{j=1}^m x_{ci}^rt^r\cdot (x_{ci}^s-x_{cj}^s)t^s = \sum_{i=1}^m \sum_{j=1}^m (x_{ci}^s-x_{cj}^s)t^s = 0.
		\end{equation}
		For $ r=1, s=1 $, we have 
		\begin{equation}
			\sum_{i=1}^m \sum_{j=1}^m x_{ci}(x_{ci}-x_{cj}) = m \sum_{i=1}^m (x_{ci}-\bar x_c) = m\sigma_c^2. 
		\end{equation}
        For $ r=2, s=1 $, we observe
		\begin{equation}
			\begin{aligned}
				\sum_{i=1}^m \sum_{j=1}^m x_{ci}^2 (x_{ci}-x_{cj}) &= m \sum_{i=1}^m x_{ci}^2 (x_{ci}-\bar x_c) \\
				&= m \sum_{i=1}^m [(x_{ci}^2 - 2x_{ci}\bar x_c +\bar x_c^2) (x_{ci}-\bar x_c) + 2x_{ci}\bar x_c(x_{ci}-\bar x_c) - \bar x_c^2(x_{ci}-\bar x_c)]\\
				&= m^2 [\overline{(x_c-\bar x_{c})^3} + 2\bar x_c\sigma_c^2].  
			\end{aligned}
		\end{equation}
        And for $ r=1, s=2 $, we obtain
		\begin{equation}
			\begin{aligned}
				\sum_{i=1}^m \sum_{j=1}^m x_{ci} (x_{ci}^2-x_{cj}^2) &= \sum_{i=1}^m \sum_{j=1}^m x_{ci}^3 - x_{ci}x_{cj}^2 \\
				&= \sum_{i=1}^m \sum_{j=1}^m x_{ci}^3 - x_{ci}^2x_{cj} \\
				&= \sum_{i=1}^m \sum_{j=1}^m x_{ci}^2 (x_{ci}-x_{cj}) \\
				&= m^2 [\overline{(x_c-\bar x_{c})^3} + 2\bar x_c\sigma_c^2]. 
			\end{aligned}
		\end{equation}
		
		Therefore, we have that 
		\begin{equation}
            \label{eqn:68}
			\begin{aligned}
				SS(\vv^\top\hat \mX_{c}) &= \frac1m \sum_{i=1}^m \sum_{j=1}^m \left[a_1^2x_{ci}(x_{ci}-x_{cj})t^2 + a_1a_2x_{ci}^2(x_{ci}-x_{cj})t^3 + a_1a_2x_{ci}(x_{ci}^2-x_{cj}^2)t^3 + o(t^3)\right]\\
				&= ma_1^2\sigma_c^2t^2 + 2ma_1a_2[\overline{(x_c-\bar x_{ci})^3} + 2\bar x_c\sigma_c^2]t^3 + o(t^3) \\
				&= m\sigma_c^2t^2 - m[\overline{(x_c-\bar x_{ci})^3} + 2\bar x_c\sigma_c^2]t^3 + o(t^3) \\
				&= \beta_{c2} t^2 +\beta_{c3}t^3 +o(t^3), 
			\end{aligned}
		\end{equation}
		where $ \beta_{c2} = m\sigma_c^2 $, and $ \beta_{c3} = - m [\overline{(x_c - \bar x_{ci})^3} + 2\bar x_c \sigma_c^2] $. 
		
		To simplify the calculation, we define 
        \begin{equation}
            \begin{aligned}
                SS_D(\mX_1, \mX_2) &= SS([\mX_1, \mX_2]) - SS(\mX_1) - SS(\mX_2) \\
                &= \sum\limits_{c=1}^{2} \sum\limits_{i=1}^m (\rvx_{ci}-\bar{\rvx})^\top(\rvx_{ci}-\bar{\rvx}) - \sum\limits_{c=1}^{2} \sum\limits_{i=1}^m (\rvx_{ci}-\bar{\rvx}_{c})^\top(\rvx_{ci}-\bar{\rvx}_{c}) \\
                &= \sum\limits_{c=1}^{2} \sum\limits_{i=1}^m (\rvx_{ci}^\top\rvx_{ci} - \bar\rvx^\top\bar\rvx)  - \sum\limits_{c=1}^{2} \sum\limits_{i=1}^m (\rvx_{ci}^\top\rvx_{ci} - \bar\rvx_c^\top\bar\rvx_c) \\
                & = m \bar\rvx_1^\top\bar\rvx_1 + m \bar\rvx_2^\top\bar\rvx_2 - 2m \bar\rvx^\top\bar\rvx \\
                & = m \bar\rvx_1^\top\bar\rvx_1 + m \bar\rvx_2^\top\bar\rvx_2 - \frac m2 (\bar\rvx_1 + \bar\rvx_2)^\top(\bar\rvx_1 + \bar\rvx_2) \\
                &= \frac m2 \| \bar\rvx_1 - \bar\rvx_2 \|_2^2. 
            \end{aligned}
        \end{equation}
        Similar to Eqn.\ref{eqn:68}, we obtain 
		\begin{equation}
			\begin{aligned}
				SS_D(\vv^\top\hat\mX_1, \vv^\top\hat\mX_2) 
				=&\frac m2 (\overline{\vv^\top \hat\rvx_1} - \overline{\vv^\top \hat\rvx_2})^2\\
				=&\frac m2 [a_1(\bar x_1 - \bar x_2)t + a_2(\overline{x_1^2} - \overline{x_2^2})t^2 +o(t^2)]^2\\
				=&\frac m2 [a_1^2(\bar x_1 - \bar x_2)^2t^2 + 2a_1a_2(\bar x_1 - \bar x_2)(\overline{x_1^2} - \overline{x_2^2})t^3 +o(t^3)]\\
				=&\frac m2(\bar x_1 - \bar x_2)^2t^2 - \frac m2(\bar x_1 - \bar x_2)(\overline{x_1^2} - \overline{x_2^2})t^3 +o(t^3) \\
				=&\beta_2t^2 + \beta_3t^3 + o(t^3), 
			\end{aligned}
		\end{equation}
		where $ \beta_{2} = \frac m2(\bar x_1 - \bar x_2)^2 $, and $ \beta_{3} = - \frac m2(\bar x_1 - \bar x_2)(\overline{x_1^2} - \overline{x_2^2}) $. 
		
		We thus have 
		\begin{equation}
			\begin{aligned}
				SSR(\vv^\top\hat\mX_1,\vv^\top\hat\mX_2)&=\frac{(\beta_{12}+\beta_{22})t^2+(\beta_{13}+\beta_{23})t^3+o(t^3)}{(\beta_{12}+\beta_{22}+\beta_2)t^2+(\beta_{13}+\beta_{23}+\beta_3)t^3+o(t^3)}\\
				&=\frac{(\beta_{12}+\beta_{22})t^2+(\beta_{13}+\beta_{23})t^3+o(t^3)}{(\beta_{12}+\beta_{22}+\beta_2)t^2\left[1+\frac{\beta_{13}+\beta_{23}+\beta_3}{\beta_{12} + \beta_{22} + \beta_2}t + o(t)\right]}\\
				&=\left[\frac{\beta_{12}+\beta_{22}}{\beta_{12}+\beta_{22}+\beta_2}+\frac{\beta_{13}+\beta_{23}}{\beta_{12}+\beta_{22}+\beta_2}t+o(t)\right]\cdot \left[ 1 - \frac{\beta_{13} + \beta_{23} + \beta_3}{\beta_{12} + \beta_{22} + \beta_2}t + o(t)\right]\\
				&=\frac{\beta_{12}+\beta_{22}}{\beta_{12}+\beta_{22}+\beta_2} + \frac{(\beta_{12} + \beta_{22} + \beta_2)(\beta_{13}+\beta_{23}) - (\beta_{12}+\beta_{22})(\beta_{13} + \beta_{23} + \beta_3)}{(\beta_{12}+\beta_{22}+\beta_2)^2} t+o(t)\\
				&=\frac{\beta_{12}+\beta_{22}}{\beta_{12}+\beta_{22}+\beta_2} + \frac{\beta_2(\beta_{13}+\beta_{23}) - \beta_3(\beta_{12}+\beta_{22})}{(\beta_{12}+\beta_{22}+\beta_2)^2} t+o(t). 
			\end{aligned}
		\end{equation}
		We find that 
		\begin{equation}
			\frac{\beta_{12}+\beta_{22}}{\beta_{12}+\beta_{22}+\beta_2} = SSR((\vu^*)^\top\mX_1,(\vu^*)^\top\mX_2) = LSSR(\mX_1,\mX_2). 
		\end{equation}
		On the other hand, we have
		\begin{equation}
			\begin{aligned}
				&\beta_2(\beta_{13}+\beta_{23}) - \beta_3(\beta_{12}+\beta_{22}) \\
				=& -\frac{1}{2}m^2(\bar x_1-\bar x_2)^2[\overline{(x_1-\bar x_{1})^3} + 2\bar x_1\sigma_1^2 + \overline{(x_2-\bar x_{2})^3} + 2\bar x_2\sigma_2^2] + \frac12m^2(\bar x_1 - \bar x_2)(\overline{x_1^2} - \overline{x_2^2})(\sigma_1^2+\sigma_2^2)\\
				=&-\frac12m^2(\bar x_1-\bar x_2)^2[\overline{(x_1-\bar x_{1})^3} +  \overline{(x_2-\bar x_{2})^3}]\\
                &-\frac12m^2(\bar x_1-\bar x_2)^2(2\bar x_1\sigma_1^2 + 2\bar x_2\sigma_2^2) + \frac12m^2(\bar x_1 - \bar x_2)(\overline{x_1^2} - \overline{x_2^2})(\sigma_1^2+\sigma_2^2). 
			\end{aligned}
		\end{equation}
		We figure out that 
		\begin{equation}
			\begin{aligned}
				& (\bar x_1-\bar x_2)^2(2\bar x_1\sigma_1^2 + 2\bar x_2\sigma_2^2) - (\bar x_1 - \bar x_2)(\overline{x_1^2} - \overline{x_2^2}) (\sigma_1^2+\sigma_2^2) \\
				=& (\bar x_1-\bar x_2)[(\bar x_1-\bar x_2)(2\bar x_1\sigma_1^2 + 2\bar x_2\sigma_2^2) - (\bar x_1^2 + \sigma_1^2 - \bar x_2^2 - \sigma_2^2) (\sigma_1^2+\sigma_2^2)] \\
				=& (\bar x_1-\bar x_2)[2\bar x_1(\bar x_1 - \bar x_2)\sigma_1^2 + 2\bar x_2(\bar x_1 - \bar x_2)\sigma_2^2 - (\bar x_1^2-\bar x_2^2)\sigma_1^2 - (\bar x_1^2 - \bar x_2^2)\sigma_2^2 - (\sigma_1^2-\sigma_2^2) (\sigma_1^2+\sigma_2^2)] \\
				=& (\bar x_1-\bar x_2)[(\bar x_1-\bar x_2)^2\sigma_1^2 - (\bar x_1-\bar x_2)^2\sigma_2^2 - (\sigma_1^2-\sigma_2^2) (\sigma_1^2+\sigma_2^2)] \\
				=& (\bar x_1-\bar x_2) (\sigma_1^2-\sigma_2^2)[(\bar x_1-\bar x_2)^2- (\sigma_1^2+\sigma_2^2)]. 
			\end{aligned}
		\end{equation}
		By the definition of $ T_1, T_2 $ and $T_3$, we thus obtain
		\begin{equation}
			\begin{aligned}
				&\beta_2(\beta_{13}+\beta_{23}) - \beta_3(\beta_{12}+\beta_{22}) \\
				=& -\frac12m^2(\bar x_1-\bar x_2)^2[\overline{(x_1-\bar x_{1})^3} +  \overline{(x_2-\bar x_{2})^3}]-\frac12m^2(\bar x_1-\bar x_2) (\sigma_1^2-\sigma_2^2)[(\bar x_1-\bar x_2)^2- (\sigma_1^2+\sigma_2^2)]\\
				=& -\frac12 m^2 T_1 - \frac12 m^2 T_2. 
			\end{aligned}
		\end{equation}
		Moreover, we have
		\begin{equation}
			\begin{aligned}
				(\beta_{12}+\beta_{22}+\beta_2)^2 &= [m\sigma_1^2+m\sigma_2^2+\frac m2(\bar x_1-\bar x_2)^2]^2 \\
				&= \frac14 m^2[2\sigma_1^2+2\sigma_2^2+(\bar x_1-\bar x_2)^2]^2 \\
				&= \frac14 m^2 T_3. 
			\end{aligned}
		\end{equation}
		As a result, we obtain that
		\begin{equation}
			\label{eqn:58}
			SSR(\vv^\top\hat\mX_1, \vv^\top\hat\mX_2) = LSSR(\mX_1,\mX_2) - \frac{2(T_1+T_2)}{T_3}t + o(t). 
		\end{equation}
    \end{proof}

    \begin{lemma}
        \label{lemma:pro2lssr}
        For
            \begin{equation}
    	f_{SSR}(t) = \begin{cases}
    		LSSR(\mX_1, \mX_2), &t=0, \\
    		SSR(\bar\psi(t;\mX_1), \bar\psi(t;\mX_2)), &t\ne0, 
    	\end{cases}
    \end{equation}
    where $ \bar\psi(t;\rvx_{ci}) = \bm1^\top \bar\varphi(t;\rvx_{ci}) / \| \bar\varphi(t;\rvx_{ci}) \|_2 $, $ \bar\varphi(t;\rvx_{ci}) = [(\vu^*)^\top\rvx_{ci}t, 1]^\top $ and $ t\in \R $, we have that $ f_{SSR}(t) $ is derivable around $ t=0 $, and $ f_{SSR}'(0) $ is only decided by $ \mX_1 $ and $ \mX_2 $.
    \end{lemma}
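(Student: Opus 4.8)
The plan is to observe that, away from the origin, $f_{SSR}$ is exactly the quantity whose first-order asymptotics were established in Lemma~\ref{lemma:6}, so that the whole statement follows by reading off that expansion. Concretely, set $\vv = \bm1 = [1,1]^\top$ and, with $\varphi(t;\cdot)$ the map of \Eqn\ref{eqn:55} and $x_{ci} = (\vu^*)^\top\rvx_{ci}$, put $\hat\rvx_{ci} = SP(\varphi(t;x_{ci}))$. Then $\bar\varphi(t;\rvx_{ci}) = [(\vu^*)^\top\rvx_{ci}\,t, 1]^\top = \varphi(t;x_{ci})$ and $\bar\psi(t;\rvx_{ci}) = \bm1^\top\bar\varphi(t;\rvx_{ci})/\|\bar\varphi(t;\rvx_{ci})\|_2 = \vv^\top\hat\rvx_{ci}$, hence $f_{SSR}(t) = SSR(\vv^\top\hat\mX_1,\vv^\top\hat\mX_2)$ for every $t\neq 0$.

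First I would verify the non-degeneracy that makes Lemma~\ref{lemma:6} meaningful, namely $T_3 = [2\sigma_1^2 + 2\sigma_2^2 + (\bar x_1 - \bar x_2)^2]^2 > 0$. By Proposition~\ref{prop:LSSR}, $\vu^*$ is a unit eigenvector of $\mN^{-1}\mM$ and $\mN$ is positive definite, so $SS([(\vu^*)^\top\mX_1,(\vu^*)^\top\mX_2]) = (\vu^*)^\top\mN\vu^* > 0$; thus the scalars $x_{ci}$ are not all equal, so $2\sigma_1^2+2\sigma_2^2+(\bar x_1-\bar x_2)^2 > 0$ and $T_3>0$. The same quantity appears (up to the factor $m/2$) as the leading $t^2$-coefficient of $SS([\bar\psi(t;\mX_1),\bar\psi(t;\mX_2)])$, so this denominator is nonzero for $0<|t|<\delta$; hence $f_{SSR}$ is well defined on a punctured neighborhood of $0$ and is smooth there, being a ratio of smooth functions of $t$ (each $\bar\varphi(t;\rvx_{ci})$ has norm $\ge 1$, so $SP$ is applied away from its only singularity).

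Next, Lemma~\ref{lemma:6} gives, as $t\to 0$, $f_{SSR}(t) = LSSR(\mX_1,\mX_2) - \frac{2(T_1+T_2)}{T_3}\,t + o(t)$. Since $f_{SSR}(0) = LSSR(\mX_1,\mX_2)$ by definition, this is precisely a first-order Taylor expansion of $f_{SSR}$ at $0$ with $o(t)$ remainder, which is equivalent to saying that $\lim_{t\to 0}\frac{f_{SSR}(t)-f_{SSR}(0)}{t}$ exists and equals $-\frac{2(T_1+T_2)}{T_3}$. Hence $f_{SSR}$ is differentiable at $0$ with $f_{SSR}'(0) = -\frac{2(T_1+T_2)}{T_3}$, and together with the smoothness on the punctured neighborhood this yields differentiability around $t=0$.

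Finally, $T_1, T_2, T_3$ are fixed polynomial expressions in the mean $\bar x_c$, variance $\sigma_c^2$, and third central moment $\overline{(x_c-\bar x_c)^3}$ of the scalars $x_{ci} = (\vu^*)^\top\rvx_{ci}$, and $\vu^*$ is the eigenvector of $\mN^{-1}\mM$ for its minimum eigenvalue, with $\mM$ and $\mN$ determined by $\mX_1,\mX_2$ alone (Proposition~\ref{prop:LSSR}); therefore $f_{SSR}'(0)$ depends only on $\mX_1$ and $\mX_2$. The main point to be careful about is purely bookkeeping: one must note that the $o(t)$ in Lemma~\ref{lemma:6} is a genuine two-sided $o(t)$ in the variable $t$ (not in $|t|$), so that it actually certifies differentiability rather than mere one-sided behaviour, and one should observe that the opposite sign choice for $\vu^*$ amounts to replacing $f_{SSR}(t)$ by $f_{SSR}(-t)$ and so flips the sign of $f_{SSR}'(0)$ — which is harmless, since Theorem~\ref{thm:break} only uses whether $f_{SSR}'(0)$ vanishes.
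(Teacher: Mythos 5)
Your proposal is correct and follows essentially the same route as the paper: identify $\bar\psi(t;\mX_c)$ with $\vv^\top\hat\mX_c$ for $\vv=[1,1]^\top$, invoke the expansion $f_{SSR}(t)=LSSR(\mX_1,\mX_2)-\frac{2(T_1+T_2)}{T_3}t+o(t)$ from Lemma~\ref{lemma:6}, and read off $f_{SSR}'(0)=-\frac{2(T_1+T_2)}{T_3}$ as a function of the statistics of $(\vu^*)^\top\mX_c$ alone. Your additional checks that $T_3>0$ (via positive definiteness of $\mN$) and that the remainder is a genuine two-sided $o(t)$ are not in the paper's proof but only strengthen it.
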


    \begin{proof}
    It is easy to identify that $ \bar\psi(t;\mX_i) = \vv^\top\hat\mX_i $. 
    Therefore, by Lemma \ref{lemma:6}. We have
   \begin{equation}
        \begin{aligned}
        SSR(\bar\psi(t;\mX_1), \bar\psi(t;\mX_2)) &= SSR(\vv^\top\hat\mX_1,\vv^\top\hat\mX_2) \\
        &= LSSR(\mX_1,\mX_2) - \frac{2(T_1+T_2)}{T_3}t + o(t).
        \end{aligned}
    \end{equation}
    We hence obtain
    \begin{equation}
        \begin{aligned}
            f_{SSR}'(0) &= \lim_{t\to0} \frac{f_{SSR}(t) - f_{SSR}(0)}{t} \\
            &= \lim_{t\rightarrow 0}\frac{SSR(\bar\psi(t;\mX_1), \bar\psi(t;\mX_2))-LSSR(\mX_1,\mX_2)}{t} \\
            &= \frac{- \frac{2(T_1+T_2)}{T_3}t + o(t)}{t} \\
            &= - \frac{2(T_1+T_2)}{T_3}.        
        \end{aligned}
    \end{equation}
    Conclusively, we have that $ f_{SSR}(t) $ is derivable at $t=0$.
    
    \end{proof}

    \begin{lemma}
        \label{lemma:7}
        Given a differentiable function $ f(x) $, with $ f'(0) \ne 0$, we figure out that there is some $x^*$, such that $ f(x^*) < f(0) $.
    \end{lemma}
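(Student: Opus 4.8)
The plan is to reduce the claim to the very definition of the derivative at $0$ together with a case split on the sign of $f'(0)$. Since $f$ is differentiable at $0$, we have
\begin{equation}
f'(0) = \lim_{x\to0} \frac{f(x)-f(0)}{x},
\end{equation}
and by hypothesis this limit is nonzero, so exactly one of $f'(0)>0$ or $f'(0)<0$ holds.

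First I would treat the case $f'(0)>0$. By the definition of the limit (taking $\varepsilon = |f'(0)|$), there exists $\delta>0$ such that for every $x$ with $0<|x|<\delta$ the difference quotient $\frac{f(x)-f(0)}{x}$ is positive. Choose any such $x^*$ with $x^*<0$, for instance $x^* = -\delta/2$. Then $\frac{f(x^*)-f(0)}{x^*}>0$ while $x^*<0$, so multiplying through by $x^*$ reverses the inequality and yields $f(x^*)-f(0)<0$, i.e. $f(x^*)<f(0)$, as required.

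Symmetrically, in the case $f'(0)<0$ I would pick $\delta>0$ so that the difference quotient is \emph{negative} on $0<|x|<\delta$, and take $x^* = \delta/2>0$; multiplying $\frac{f(x^*)-f(0)}{x^*}<0$ by the positive number $x^*$ preserves the inequality and again gives $f(x^*)<f(0)$. In either case the chosen $x^*$ is the witness demanded by the lemma.

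The only point that needs any care — and it is hardly an obstacle — is bookkeeping the sign of $x^*$ when clearing the denominator of the difference quotient: a negative $x^*$ flips the inequality, a positive one does not, which is precisely why the sign of $x^*$ is taken opposite to the ``non-reversing'' choice matching the sign of $f'(0)$. Once this is in place the lemma is immediate, and it is then combined with Lemma~\ref{lemma:pro2lssr} applied to $f_{SSR}$ — whose derivative at $0$ equals the nonzero quantity $-2(T_1+T_2)/T_3$ under the hypothesis $f_{SSR}'(0)\ne0$ — to produce a parameter $t$ with $f_{SSR}(t) < LSSR(\mX_1,\mX_2)$, thereby establishing Theorem~\ref{thm:break} (recalling that $\bar\psi(t;\cdot)$ is realized by a composition $\varphi_1\circ LN(\cdot)\circ\varphi_2$ via Lemma~\ref{lemma:equivalence}).
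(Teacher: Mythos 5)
Your proof is correct and follows essentially the same route as the paper's: both invoke the $\varepsilon$--$\delta$ definition of $f'(0)$ and pick $x^*$ on the side of $0$ opposite in sign to $f'(0)$ so that the (sign-definite) difference quotient forces $f(x^*)<f(0)$. The paper merely packages your two cases into the single choice $x^*=-\frac{|B|\delta}{2B}$ with $\varepsilon=|B|/2$, which is the same idea.
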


    \begin{proof}
        Given that $ f(0)=A $ and $ f'(0) = B \ne 0$, by the definition of derivative, we have $ \lim\limits_{h\to0} \frac{f(h)-A}{h} = B$. That is to say, $\forall \eps>0$, there exists a positive $\delta > 0$, whenever $ 0<|x|<\delta $, we have
        \begin{equation}
            \left\vert\frac{f(x)-A}{x}-B \right\vert\le\eps ,
        \end{equation} 
        then
            \begin{equation}
			-\eps |x|\le f(x) - A - Bx \le \eps |x|. 
		\end{equation}
		Let $ x^* = -\frac{|B|\delta}{2B} $, and $ \eps = \frac{|B|}2 $. We have 
        \begin{equation}
            f(x^*) \le A+Bx+\eps|x| = A-\frac{|B|\delta}{4} < A ,
        \end{equation} 
        namely $ f(x^*) < f(0) $.
    \end{proof}
    
    \subsection{Proof of Theorem \ref{thm:break}}
    Since $ f_{SSR}'(0) = -2(T_1+T_2)/T_3  \ne 0 $, by Lemma \ref{lemma:7}, there is some $ t = t^* $, such that
		\begin{equation}
			SSR(\bar\psi(t^*;\mX_1), \bar\psi(t^*;\mX_2)) < LSSR(\mX_1,\mX_2). 
		\end{equation} 
	We denote $ \tilde\varphi_1(\rvx) = \varphi(t^*;\vu^\top\rvx) $ and $ \tilde\varphi_2(\rvx) = \vv^\top\rvx $. By Lemma \ref{lemma:equivalence}, we have $ SP(\cdot) = \hat\varphi_1 \circ LN(\cdot) \circ \hat\varphi_2 $. We hence have 
    \begin{equation}
        \vv^\top\hat\mX_c = \tilde\varphi_2 (\hat\varphi_2 (LN (\hat\varphi_1 (\tilde\varphi_1 (\mX_c)))) . 
    \end{equation}
    Let $\psi = \varphi_1 \circ LN(\cdot) \circ \varphi_2 $, where $ \varphi_1 = \tilde\varphi_1 \circ \hat\varphi_1 $ and $ \varphi_2 = \hat\varphi_2 \circ \tilde\varphi_2 $. We thus have
    \begin{equation}
		SSR(\psi(\mX_1), \psi(\mX_2)) < LSSR(\mX_1, \mX_2). 
	\end{equation}
    Obviously, $\varphi_1$ and $\varphi_2$ are linear functions. Consequently, we have proved Theorem \ref{thm:break}. 

    \subsection{A Generalized Proof of Theorem \ref{thm:break}}
	
	To begin with, we also need to project $ \mX_c $ to $ \vu^\top\mX_c $. This can reach $LSSR(\mX_1, \mX_2)$, which is necessary in our discussion. More generally, we design a $n$-dimensional linear transformation $ \varphi_n(t;\cdot) : \R \to \R^n $, instead of a $2$-dimensional one in Eqn.\ref{eqn:55}. Specifically, we denote 
	\begin{equation}
		\varphi_n(t;x) = t\cdot\vw x + \vb = \mat{w_1xt +b_1 \\ \cdots \\ w_nxt+b_n}. 
	\end{equation}
 
    Considering SP on $ \varphi_n(t;x) $ with scaling=1, we denote 
    \begin{equation}
        \hat \rvx = SP(\varphi_n(t;x)) = \varphi_n(t;x) / \Vert \varphi_n(t;x) \Vert .
    \end{equation} 
    Owing to the introduce of $ t $, let $ t $ and $ \vw $ represent the direction and length of weight respectively. We thus add the constraint $ \Vert \vw \Vert_2 = 1 $ for convenience. As for the bias, if $ \vb = \bm0 $, $ \hat \rvx = \vw/(\Vert \vw \Vert_2) $ will result in $ SS(\psi(\hat\mX_1),\psi(\hat\mX_2)) = 0 $. Therefore, we require that $ \vb \ne \bm0 $. Now we are concerned about $ LSSR(\hat\mX_1, \hat\mX_2) $. 
	
	Factually, by Proposition \ref{prop:LSSR}, we need not consider all the linear functions on $ \R^n $ to get LSSR. We figure out that there must be some $ \vv\in\R^n $, such that
    \begin{equation}
        LSSR(\hat\mX_1, \hat\mX_2) = SSR( \vv^\top \hat\mX, \vv^\top \hat\mY ). 
    \end{equation}
    We give the Taylor's expansion of $ \hat\rvx $ on its each dimension. 
	
	Let $ \xi_1 = \vw^\top\vb $ and $ \xi_2 = \vb^\top\vb $. We figure out that
	\begin{equation}
		\begin{aligned}
			\frac {1} {\Vert \vw xt+\vb \Vert_2} 
			=& (1+2\xi_1xt+\xi_2x^2t^2)^{-\frac12}\\
			=& 1 - \frac12(2\xi_1xt + \xi_2x^2t^2) + \frac38 (2\xi_1xt + \xi_2x^2t^2)^2 - \frac5{16}(2\xi_1xt + \xi_2x^2t^2)^3 + o(t^3)\\
			=& 1 - \xi_1xt + (\frac32\xi_1^2-\frac12 \xi_2)x^2t^2 + (\frac32\xi_1\xi_2-\frac52\xi_1^3)x^3t^3 + o(t^3). 
		\end{aligned}
	\end{equation}
	We further obtain
	\begin{equation}
		\label{eqn:27}
		\begin{aligned}
			\hat x^{(k)} 
			=& \frac {w_kxt+b_k} {\Vert \vw xt+\vb \Vert_2}\\ =& (b_k+w_kxt)[1 - \xi_1xt + (\frac32\xi_1^2 - \frac12 \xi_2)x^2t^2 + (\frac32\xi_1\xi_2 - \frac52 \xi_1^3)x^3t^3 + o(t^3)]\\
			=&b_k + (w_k-\xi_1b_k)xt + [(\frac32\xi_1^2 - \frac12\xi_2)b_k - \xi_1w_k]x^2t^2 + [(\frac32\xi_1\xi_2-\frac52\xi_1^3)b_k + (\frac32\xi_1^2 - \frac12\xi_2)w_k]x^3t^3 + o(t^3).
		\end{aligned}
	\end{equation}
	Similarly, to simplify our calculation, we denote 
	\begin{equation}
		\hat x_{ci}^{(k)} = a_0^{(k)} + a_1^{(k)} x_{ci}t + a_2^{(k)} x_{ci}^2t^2  + a_3^{(k)} x_{ci}^3 t^3 + o(t^3)
	\end{equation}
	where $ a_0^{(k)} = b_k $, $ a_1^{(k)} = w_k-\xi_1b_k $, $ a_2^{(k)} = (\frac32\xi_1^2 - \frac12\xi_2)b_k - \xi_1w_k $ and $ a_3^{(k)} = (\frac32\xi_1\xi_2-\frac52\xi_1^3)b_k + (\frac32\xi_1^2 - \frac12\xi_2)w_k $. 
	
	Let $\vv = [v_1,\cdots,v_n]^\top $. We have
	\begin{equation}
		\begin{aligned}
			SS(\vv^\top\hat \mX_{c}) &= \sum_{i=1}^m(\vv^\top\hat\rvx_{ci}-\overline{\vv^\top\hat\rvx_c})^2\\
			&=\frac1m \sum_{i=1}^m \sum_{j=1}^m \vv^\top \hat\rvx_{ci} (\vv^\top\hat\rvx_{ci} - \vv^\top\hat\rvx_{cj})\\
			&=\frac1m \sum_{i=1}^m \sum_{j=1}^m \left(\sum_{k=1}^{n} v_k \hat x_{ci}^{(k)}\right)\cdot\left(\sum_{l=1}^{n} v_l[\hat x_{ci}^{(l)} - \hat x_{cj}^{(l)}]\right)\\
			&=\sum_{k=1}^{n} \sum_{l=1}^{n} v_k v_l \left(\frac1m \sum_{i=1}^m \sum_{j=1}^m\hat x_{ci}^{(k)} [\hat x_{ci}^{(l)} - \hat x_{cj}^{(l)}]\right).
		\end{aligned}
	\end{equation}
	Similar to calculation when we discuss the $2$-dimensional case, we have
	\begin{equation}
		\begin{aligned}
			&\frac1m\sum_{i=1}^m \sum_{j=1}^m\hat x_{ci}^{(k)} [\hat x_{ci}^{(l)} - \hat x_{cj}^{(l)}] \\
			=&\frac1m\sum_{i=1}^m \sum_{j=1}^m \left[\left(\sum_{s=0}^{3} a_s^{(k)}x_{ci}^st^s+o(t^3)\right)\cdot\left(\sum_{s=0}^{3} a_s^{(l)}(x_{ci}^s-x_{cj}^s)t^s+o(t^3)\right)\right]\\
			=&\frac1m\sum_{i=1}^m \sum_{j=1}^m \left[a_1^{(k)} a_1^{(l)} x_{ci}(x_{ci}-x_{cj})t^2 + a_2^{(k)}a_1^{(l)} x_{ci}^2(x_{ci} - x_{cj})t^3 + a_1^{(k)}a_2^{(l)} x_{ci}(x_{ci}^2-x_{cj}^2)t^3 + o(t^3)\right]\\
			=& ma_1^{(k)} a_1^{(l)}\sigma_c^2t^2 + m[a_1^{(k)} a_2^{(l)}+a_2^{(k)} a_1^{(l)}][\overline{(x_c-\bar x_{ci})^3} + 2\bar x_c\sigma_c^2]t^3 + o(t^3).
		\end{aligned}
	\end{equation}
	
	We define 
	\begin{equation}
		\theta_1=\sum_{k=1}^n\sum_{l=1}^n v_kv_la_1^{(k)} a_1^{(l)}, 
	\end{equation} 
    and 
    \begin{equation}
        \theta_2 = \sum_{k=1}^n \sum_{l=1}^n v_k v_l a_1^{(k)} a_2^{(l)} = \sum_{k=1}^n \sum_{l=1}^n v_k v_l a_2^{(k)} a_1^{(l)}. 
    \end{equation}
	We thus have that
	\begin{equation}
		\begin{aligned}
			SS(\vv^\top\hat \mX_{c}) 
			&=\sum_{k=1}^{n} \sum_{l=1}^{n} v_k v_l \left(\frac1m \sum_{i=1}^m \sum_{j=1}^m\hat x_{ci}^{(k)} [\hat x_{ci}^{(l)} - \hat x_{cj}^{(l)}]\right) \\
			&=\sum_{k=1}^{n} \sum_{l=1}^{n} v_k v_l \left(ma_1^{(k)} a_1^{(l)}\sigma_c^2t^2 + m[a_1^{(k)} a_2^{(l)}+a_2^{(k)} a_1^{(l)}][\overline{(x_c-\bar x_{ci})^3} + 2\bar x_c\sigma_c^2]t^3 + o(t^3)\right) \\
			&= m\theta_1\sigma_c^2t^2 + 2m\theta_2[\overline{(x_c-\bar x_{ci})^3} + 2\bar x_c\sigma_c^2]t^3 + o(t^3) \\
			&= \beta_{c2} t^2 +\beta_{c3}t^3 +o(t^3), 
		\end{aligned}
	\end{equation}
    where $\beta_{c2} = m\theta_1\sigma_c^2$ and $\beta_{c3} = 2m\theta_2[\overline{(x_c-\bar x_{ci})^3} + 2\bar x_c\sigma_c^2]$. 
    
	On the other hand, we obtain
	\begin{equation}
		\begin{aligned}
			SS_D(\vv^\top\hat\mX_1, \vv^\top\hat\mX_2)
			=&\frac m2(\overline{\vv^\top\hat\rvx_1}-\overline{\vv^\top\hat\rvx_2})^2\\
			=&\frac{1}{2m}\left(\sum_{i=1}^m[\vv^\top\hat\rvx_{1i}-\vv^\top\hat\rvx_{2i}]\right)^2\\
			=&\frac{1}{2m}\left(\sum_{i=1}^m\sum_{k=1}^n v_k(\hat x_{1i}^{(k)}-\hat x_{2i}^{(k)})\right)^2\\
			=&\frac{1}{2m}\left(\sum_{i=1}^m\sum_{k=1}^n v_k[a_1^{(k)}(x_{1i}-x_{2i})t+a_2^{(k)}(x_{1i}^2-x_{2i}^2)t^2+o(t^2)]\right)^2\\
			=&\frac{m}{2}\left(\sum_{k=1}^n v_k[a_1^{(k)}(\bar x_1 - \bar x_2)t + a_2^{(k)}(\overline{x_1^2}-\overline{x_2^2})t^2+o(t^2)]\right)^2\\
			=&\frac{m}{2}\sum_{k=1}^n \sum_{l=1}^n v_k v_l[a_1^{(k)}a_1^{(l)} (\bar x_1- \bar x_2)^2t^2 + [a_1^{(k)}a_2^{(l)} + a_2^{(k)}a_1^{(l)}](\bar x_1- \bar x_2)(\overline{x_1^2}-\overline{x_2^2})t^3+o(t^3)]\\
			=&\frac m2 \theta_1(\bar x_1-\bar x_2)^2t^2 + m\theta_2(\bar x_1 - \bar x_2) (\overline{x_1^2}-\overline{x_2^2})t^3+o(t^3) \\
			=&\beta_2t^2 + \beta_3t^3 + o(t^3), 
		\end{aligned}
	\end{equation}
    where $\beta_2 = \frac m2 \theta_1(\bar x_1-\bar x_2)^2$ and $\beta_3 = m\theta_2(\bar x_1 - \bar x_2) (\overline{x_1^2}-\overline{x_2^2})$.
	Therefore, we figure out that
	\begin{equation}
		\begin{aligned}
		&\beta_2(\beta_{13}+\beta_{23}) - \beta_3(\beta_{12}+\beta_{22}) \\
		=& m^2\theta_1\theta_2 (\bar x_1-\bar x_2)^2[\overline{(x_1-\bar x_{1})^3} + 2\bar x_1\sigma_1^2 + \overline{(x_2-\bar x_{2})^3} + 2\bar x_2\sigma_2^2] - m^2\theta_1\theta_2 (\bar x_1 - \bar x_2)(\overline{x_1^2} - \overline{x_2^2})(\sigma_1^2+\sigma_2^2) \\
		=& m^2\theta_1\theta_2(\bar x_1-\bar x_2)^2[\overline{(x_1-\bar x_{1})^3} +  \overline{(x_2-\bar x_{2})^3}] + m^2\theta_1\theta_2 (\bar x_1-\bar x_2) (\sigma_1^2 -\sigma_2^2) [(\bar x_1-\bar x_2)^2- (\sigma_1^2+\sigma_2^2)]\\
		=& m^2 \theta_1\theta_2 T_1 + m^2 \theta_1\theta_2 T_2, 
		\end{aligned}
	\end{equation}
	and
	\begin{equation}
		\begin{aligned}
			(\beta_{12}+\beta_{22}+\beta_2)^2 &= \left[m\theta_1 \sigma_1^2 + m\theta_1 \sigma_2^2 + \frac m2 \theta_1(\bar x_1-\bar x_2)^2\right]^2 \\
			&= \frac14 m^2\theta_1^2 [2\sigma_1^2 + 2 \sigma_2^2 + (\bar x_1-\bar x_2)^2]^2 \\
			&= \frac14 m^2\theta_1^2 T_3. 
		\end{aligned}
	\end{equation}
	We hence obtain 
	\begin{equation}
		SSR(\vv^\top\hat\mX_1,\vv^\top\hat\mX_2) = LSSR(\mX_1,\mX_2) + \frac{4\theta_2}{\theta_1}\frac{T_1+T_2}{T_3}t + o(t). 
	\end{equation}
	Similarly, $ f_{SSR}'(0)\ne0 $ means $ T_1 + T_2 \ne 0 $, we can find some $t$ and some $ \vw,\vb,\vv $, such that $ \theta_2\ne0 $, and $ SSR(\vv^\top\hat\mX_1, \vv^\top\hat\mX_2) < LSSR(\mX_1,\mX_2) $. We figure out that in the $ n $-dimensional case, $ f_{SSR}'(0)\ne0 $ is also required in our proof. 

    Hereafter, the remaining proof is nearly the same as the $2$-dimensional version. 
	
	Take our $2$-dimensional $\varphi(t;\cdot)$ as an example, $ \vw = [1,0]^\top, \vb = [0,1]^\top, \vv = [1,1]^\top $. We thus have $ s_1 = 0, s_2 = 1 $. Furthermore, we obtain $ a_1^{(1)} = 1, a_1^{(2)} = 0, a_2^{(1)} = 0, a_2^{(2)} = -\frac12 $ and then $ \theta_1 = 1, \theta_2 = -\frac12 $. As a result, we have $ 4\theta_2/\theta_1 = -2 $, which is the same as Eqn.\ref{eqn:58}. 

    \section{Proofs Related to the Algorithms}
    \label{section:proofofAlgorithms}
    
    \subsection{Proof of Proposition \ref{prop:initial}}

    \paragraph{Proposition 4.}\textit{For any input $ \mX^{(0)} $, we can find some $ \vu $, such that 
        \begin{equation}
            \varphi_1^{(1)}: \rvx_k^{(0)} \mapsto \vp_k^{(1)} = [\vu^\top\rvx_k^{(0)}, 0]^\top, 
        \end{equation}
        where $ \vp_i^{(1)} \ne \vp_j^{(1)} $ if $ \rvx_i^{(0)} \ne \rvx_j^{(0)} $. }
    
	\begin{proof} 
        In reverse, we consider to find all the $ \vu $, such that some two different points are coincident after the projection. 

        Given two points $ \rvx_i^{(0)} \ne \rvx_j^{(0)} $ in $ \mX^{(0)} $, if $ \vu $ project them into the same point, we have
        \begin{equation}
            \label{eqn:104}
            \vu^\top \rvx_i^{(0)} = \vu^\top \rvx_j^{(0)}. 
        \end{equation}We use $ \sS_2(\rvx_i^{(0)},\rvx_j^{(0)}) $ to denote the whole solution space of \Eqn\ref{eqn:104}, namely
        \begin{equation}
            \sS_2(\rvx_i^{(0)},\rvx_j^{(0)}) = \{ \vu\in\R^d:\vu^\top (\rvx_i^{(0)} - \rvx_j^{(0)}) = 0 \}. 
        \end{equation}
        Considering all the pairs of different points, we define
        \begin{equation}
            \hat\sS_2(\mX^{(0)}) = \bigcup_{ \rvx_i^{(0)} \ne \rvx_j^{(0)} } \sS_2(\rvx_i^{(0)},\rvx_j^{(0)}) 
        \end{equation}
		Since $ \rvx_i^{(0)} \ne \rvx_j^{(0)} $, we find each solution space $ \sS_2(\rvx_i^{(0)},\rvx_j^{(0)}) $ is $(d-1)$ dimensional, and the number of such sets is no more than $ m^2 $. Therefore, the union of these solution spaces\footnote{The union of finite subspaces of $d-1$ dimensional can not cover the whole space $\R^d$.} is still smaller than $\R^d$, namely $ \hat\sS_2(\mX^{(0)}) \subset \R^d $. 

        We obtain that $ \exists \rvx_i^{(0)} \ne \rvx_j^{(0)}, \vu^\top \rvx_i^{(0)} = \vu^\top \rvx_j^{(0)} $, if and only if $ \vu \in \hat\sS_2(\mX^{(0)}) $. Since $ \hat\sS_2(\mX^{(0)}) \subset \R^{d} $, we obtain 
        \begin{equation}
            \R^d/\hat\sS_2(\mX^{(0)}) \neq \emptyset. 
        \end{equation}
        Therefore, we can always find a $ \vu \in \R^d $, such that we have $ \vp_i^{(1)} \ne \vp_j^{(1)}$, for any $ \rvx_i^{(0)} \ne \rvx_j^{(0)}$.
	\end{proof}

    \subsection{Proof of Proposition \ref{prop:layer property}}




    \paragraph{Proposition 5.}\textit{For each layer, $ \varphi_l^{(1)} (2\le l \le L) $ only merges points with the same label. Nevertheless, $ \varphi_1^{(1)}, SP(\cdot) $ and $ \varphi_l^{(2)} (1\le l \le L-1) $ do not merge any points. }

    \begin{proof}

    1) According to Proposition \ref{prop:initial}, we figure out that $ \varphi_1^{(1)}$ does not merge any points. 

    \begin{figure}[h]
        \centering
        \begin{tikzpicture}
			\fill[blue, shift={(-1,2)}] (0,0) circle (3pt);
			\fill[blue, shift={(1,2)}] (0,0) circle (3pt);
			\fill[orange, shift={(-2,2)}] (0,0) circle (3pt);
			\fill[orange, shift={(2,2)}] (0,0) circle (3pt);
			\fill[blue, shift={(3,2)}] (0,0) circle (3pt);
			
			\draw [->, black] (-3,0) -- (3.5,0);
			\draw [->, black] (0,-1) -- (0,3);
            \node[] at (-0.2,2.7) {$y$};
            \node[] at (3.3,-0.2) {$x$};
            \node[] at (-0.2,-0.2) {$O$};
			\draw [] (2,0) arc [start angle=0,end angle=180, radius=2];

            \draw [->, blue, thick] (0.5,0) arc [start angle=0,end angle=116.56, radius=0.5];
            \node [blue] at (0.45,-0.35) {$\gamma_2^{(l)}$};

			\fill[blue, shift={(-0.894,1.788)}] (-4pt,-1pt) rectangle (+4pt,+1pt) (-1pt,-4pt) rectangle (+1pt,+4pt);
			\fill[blue, shift={(0.894,1.788)}] (-4pt,-1pt) rectangle (+4pt,+1pt) (-1pt,-4pt) rectangle (+1pt,+4pt);
			\fill[orange, shift={(-1.414,1.414)}] (-4pt,-1pt) rectangle (+4pt,+1pt) (-1pt,-4pt) rectangle (+1pt,+4pt);
			\fill[orange, shift={(1.414,1.414)}] (-4pt,-1pt) rectangle (+4pt,+1pt) (-1pt,-4pt) rectangle (+1pt,+4pt);
			\fill[blue, shift={(1.664,1.109)}] (-4pt,-1pt) rectangle (+4pt,+1pt) (-1pt,-4pt) rectangle (+1pt,+4pt);
			
			\draw[<-, dashed, gray, very thin] (0,0) -- (-2,2);
            \draw[<-, dashed, gray, very thin] (0,0) -- (-1,2);
            \draw[<-, dashed, gray, very thin] (0,0) -- (1,2);
            \draw[<-, dashed, gray, very thin] (0,0) -- (2,2);
            \draw[<-, dashed, gray, very thin] (0,0) -- (3,2);
			
			\node [orange] at (-2,2) [above] {$\vh_1^{(l)}$};
			\node [orange] at (2,2) [above] {$\vh_4^{(l)}$};
			\node [blue] at (-1,2) [above] {$\vh_2^{(l)}$};
			\node [blue] at (1,2) [above] {$\vh_3^{(l)}$};
			\node [blue] at (3,2) [above] {$\vh_5^{(l)}$};
			
			\node [orange] at (-1.2,1.1) {$ \rvx_1^{(l)} $};
			\node [blue] at (-0.45,1.5) {$ \rvx_2^{(l)} $};
			\node [blue] at (0.45,1.5) {$ \rvx_3^{(l)} $};
			\node [orange] at (1.0,1.0) {$ \rvx_4^{(l)} $};
			\node [blue] at (1.4,0.6) {$ \rvx_5^{(l)} $};
		\end{tikzpicture}
        \caption{A copied figure from Figure 2(b).}
    \end{figure}
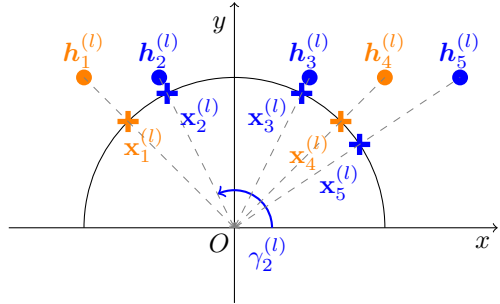

    2) Furthermore, we analyze $SP(\cdot) $. Focused on $ \gamma_k^{(l)} $ (there is an example of $\gamma_2^{(l)}$ copied from Figure \ref{fig:2b}), we figure out that     
    \begin{equation}
        \label{eqn:132}
        \gamma_k^{(l)} = \arctan \frac{2p_k^{(l)} - p_i^{(l)} - p_j^{(l)}} {p_j^{(l)} - p_i^{(l)}} ,
    \end{equation}
    where $p_i^{(l)} , p_j^{(l)}$ is defined in Algorithm~\ref{alg:PMA}. 
    We can obtain that $\gamma_k^{(l)}$ is monotonically decreasing with $ p_k^{(l)} $. 
    
    When $ \vh_{k_i}^{(l)}\ne \vh_{k_j}^{(l)}$, we have $\gamma_{k_i}^{(l)}\ne\gamma_{k_j}^{(l)}$, namely $\rvx_{k_i}^{(l)} \ne \rvx_{k_j}^{(l)}$. In other words, $SP(\cdot) $ does not merge any points.

    3) We then consider $ \varphi_l^{(2)} (1\le l \le L-1) $. Obviously, $\varphi_l^{(2)}$ is a translation transformation, which does not merge any points. 

    4) Finally, we consider $ \varphi_l^{(1)} (2\le l \le L-1) $. We first have 
    \begin{equation}
        p_k^{(l+1)} = \mat{0&1}\varphi_l^{(1)}(\rvx_k^{(l)})=\sin \gamma_k^{(l)}. 
    \end{equation}
    Accordingly, $ p_k^{(l+1)} $ is also monotonically decreasing with $ p_k^{(l)} $ when $\gamma_k^{(l)}\le \frac{\pi}{2}$. 


    Given two points from different classes denoted as $p_{k_1}^{(l)}, p_{k_2}^{(l)}$, we discuss them under three cases. 
    
    Case 1: If $p_{k_1}^{(l)}, p_{k_2}^{(l)} > p_j^{(l)}$, we have $\gamma_{k_1}^{(l)}, \gamma_{k_2}^{(l)} < \frac{\pi}{4}$. Therefore, $p_k^{(l+1)}$ is monotonically decreasing with $p_k^{(l)}$. We have
    \begin{equation}
        p_{k_1}^{(l)} \ne p_{k_2}^{(l)} \Leftrightarrow \vp_{k_1}^{(l+1)} \ne \vp_{k_2}^{(l+1)}.
    \end{equation}
    
    Case 2: If one of $  p_{k_1}^{(l)}, p_{k_2}^{(l)} $ is less than $ p_j^{(l)} $ and the other is not, then one of $  p_{k_1}^{(l+1)}, p_{k_2}^{(l+1)} $ is lager than $ \frac{\sqrt2}{2} $, while the other is not. We hence have $\vp_{k_1}^{(l+1)} \ne \vp_{k_2}^{(l+1)}$
    
    Case 3: If $ p_{k_1}^{(l)}, p_{k_2}^{(l)} $ are both less than $ p_j^{(l)} $---this case will never happen, otherwise one of them belongs to the same class with $ \vp_i^{(l)} $, resulting $ \vp_j^{(l)} $ is not the leftmost point (with the same label as $ \vp_i^{(l)} $), which contradicts the definition of $j$. 
  
    Conclusively, we find the samples from different classes will not merge by $\varphi_l^{(1)} (1\le l \le L-1)$. 

    Based on all the discussions above, we have proved Proposition \ref{prop:layer property}. 
    \end{proof}

    \subsection{Proof of Proposition \ref{prop:confusion}}

    \paragraph{Proposition 6.}\textit{Confusion refers to merging two points with different labels. If confusion happens when we project $ \mX^{(l+1)} $ onto the $y$-axis, there must be a parallelogram\footnote{The parallelogram may be degenerate. Given four points $ \rvx_1, \rvx_2, \rvx_3, \rvx_4 $, if the sum of two points is the same with that of the other two, we regard they form a parallelogram.} consisting of four different points in $ \mP^{(l)} $. }

    \begin{proof}
        If confusion happens, we will merge some two points $ \rvx_s^{(l)} $ and $ \rvx_t^{(l)} $ with different labels. According to \Eqn\ref{eqn:132}, we find that $ \sin\gamma_s^{(l)} = \sin\gamma_t^{(l)} $. Since $ \rvx_s^{(l)} $ and $ \rvx_t^{(l)} $ are different points on the unit circle, we have $ \gamma_s^{(l)} = \pi - \gamma_t^{(l)} $, namely they are symmetric about $y$-axis. Furthermore, $ \vh_s^{(l)} $ and $ \vh_t^{(l)} $ are symmetric about $y$-axis. Besides, $ \vh_i^{(l)} $ and $ \vh_j^{(l)} $ are also symmetric about $y$-axis. Since the four points are on the same line, we have $ \vh_i^{(l)} + \vh_j^{(l)} = \vh_s^{(l)} + \vh_t^{(l)} $. For $ \mH^{(l)} $ is translated from $ \mP^{(l)} $, we have $ \vp_i^{(l)} + \vp_j^{(l)} = \vp_s^{(l)} + \vp_t^{(l)} $. We hence find a parallelogram in $ \mP^{(l)} $. 
    \end{proof}

    \subsection{Proof of Proposition \ref{prop:PBA}}

    \paragraph{Proposition 7.}\textit{We can always find $ \vu_l \in \R^2 $ for Algorithm \ref{alg:PBA}, such that there is no parallelograms in $ \hat\mP^{(l)} $, and no points merged in the algorithm. }
    
	\begin{proof}
		By Algorithm~\ref{alg:PBA}, we shift the points in $ \mP^{(l)} $ up by $ 1 $, and then projects onto the unit circle $ x^2+y^2=1 $, namely
        \begin{equation}
            \tilde\vp_i^{(l)} \gets SP \left( \vp_{i}^{(l)} + \mat{0&1}^\top \right).
        \end{equation}
        We find all points in $ \tilde \mP^{(l)} $ are on the upper half circle. Obviously, any four different points in $ \tilde \mP^{(l)} $ can not form a parallelogram, for the quadrilateral has two adjacent obtuse angles. In other words, give four different points $ \tilde\vp_i, \tilde\vp_j, \tilde\vp_s, \tilde\vp_t $, we have $ \tilde\vp_i + \tilde\vp_j \ne \tilde\vp_s + \tilde\vp_t $. Besides, if $ \vp_i^{(l)} \ne \vp_j^{(l)} $, we have $ \tilde \vp_i^{(l)} \ne \tilde \vp_j^{(l)} $. 
        
        We can intuitively identify the two claims above in Figure~\ref{fig:3}. 

        Similarly, consider to merge different points together by $ \vu_l $, we can find $\vu_l$ from the set
        \begin{equation}
            \hat\sS_2(\tilde\mP^{(l)}) = \bigcup_{ \tilde\vp_i^{(l)} \ne \tilde\vp_j^{(l)} } \sS_2(\tilde\vp_i^{(l)},\tilde\vp_j^{(l)}), 
        \end{equation}
		where $ \sS_2(\tilde\vp_i^{(l)},\tilde\vp_j^{(l)}) = \{ \vu_l\in\R^2:\vu_l^\top (\tilde\vp_i^{(l)} - \tilde\vp_j^{(l)}) = 0 \} $. 
  
		We then consider to form a parallelogram. We need some $ \vu_l $, and four different points $ \tilde\vp_i, \tilde\vp_j, \tilde\vp_s, \tilde\vp_t $, such that $ \vu_l^\top \tilde\vp_i^{(l)} + \vu_l^\top \tilde\vp_j^{(l)} = \vu_l^\top \tilde\vp_s^{(l)} + \vu_l^\top \tilde\vp_t^{(l)} $. Obviously, we can find $ \vu_l $ from the set
        \begin{equation}
            \hat\sS_4(\tilde \mP^{(l)}) = \bigcup_{(i,j,s,t)\in \sI_{4}(\tilde \mP^{(l)})} \sS_4(\tilde\vp_i^{(l)}, \tilde\vp_j^{(l)}, \tilde\vp_s^{(l)}, \tilde\vp_t^{(l)}), 
        \end{equation}
        where 
        \begin{equation}
            \sS_4(\tilde\vp_i^{(l)}, \tilde\vp_j^{(l)}, \tilde\vp_s^{(l)}, \tilde\vp_t^{(l)}) = \{\vu_l\in\R^2: \vu_l^\top(\tilde\vp_i^{(l)} + \tilde\vp_j^{(l)} - \tilde\vp_s^{(l)} -\tilde\vp_t^{(l)}) = 0 \},
        \end{equation}
        and the index set
        \begin{equation}
  \sI_{4}(\tilde \mP^{(l)}) = \{ (i,j,s,t): \tilde\vp_i^{(l)}, \tilde\vp_j^{(l)}, \tilde\vp_s^{(l)}, \tilde\vp_t^{(l)} \text{ are different with each other} \}.
        \end{equation}

        Similarly, we point out that $ \hat\sS_2(\tilde\mP^{(l)}) $ consists of\footnote{$ \hat\sS_2(\tilde\mP^{(l)}) $ is a point set of finite lines, hence can not cover the whole $\R^2$.} no more than $ m^2 $ spaces of $1$-dimensional. On the other hand, since $ \tilde\vp_i + \tilde\vp_j \ne \tilde\vp_s + \tilde\vp_t $ holds for any four different points in $ \tilde\mP^{(l)} $, each $ \sS_4(\tilde\vp_i, \tilde\vp_j, \tilde\vp_s, \tilde\vp_t) $ is a $1$-dimensional space. Therefore, $ \hat\sS_4(\tilde \mP^{(l)}) $ consists of no more than $ m^4 $ spaces of $1$-dimension. 
        We hence obtain that $ \hat\sS_4(\tilde \mP^{(l)}) \cup \hat\sS_2(\tilde\mP^{(l)}) $ consists of no more than $ m^2+m^4 $ spaces of $1$-dimension, namely
        \begin{equation}
            [\hat\sS_4(\tilde \mP^{(l)}) \cup \hat\sS_2(\tilde\mP^{(l)})] \subset \R^2 . 
        \end{equation}
        
        We thus have that---there exists $  \tilde\vp_i^{(l)} \ne \tilde\vp_j^{(l)}$ subjected to $ \vu_l^\top \tilde\vp_i^{(l)} = \vu_l^\top \tilde\vp_j^{(l)} $, if and only if $ \vu_l \in \hat\sS_2(\mP^{(l)}) $. On the other hand, we figure out that---there exists four different points $ \tilde\vp_i, \tilde\vp_j, \tilde\vp_s, \tilde\vp_t $ subjected to $ \vu_l^\top \tilde\vp_i^{(l)} + \vu_l^\top \tilde\vp_j^{(l)} = \vu_l^\top \tilde\vp_s^{(l)} + \vu_l^\top \tilde\vp_t^{(l)} $, if and only if $ \vu_l \in \hat\sS_4(\mP^{(l)}) $. Since $ [\hat\sS_4(\tilde \mP^{(l)}) \cup \hat\sS_2(\tilde\mP^{(l)})] \subset \R^2 $, we obtain $ \R^2/[\hat\sS_4(\tilde \mP^{(l)}) \cup \hat\sS_2(\tilde\mP^{(l)})] \neq \emptyset $. As a result, we can always find a $ \vu_l \in \R^2/[\hat\sS_4(\tilde \mP^{(l)}) \cup \hat\sS_2(\tilde\mP^{(l)})] $ to ensure not to merge different points, and form no parallelograms in $\hat\mP^{(l)} $ as well. 
	\end{proof}

    \subsection{Discussion on a Wider LN-Net}

    We figure out that the algorithm here is suitable for both binary and multi-class classifications. Before giving the algorithm, we propose two lemmas as follows. 

    \begin{lemma}
        \label{lemma:8}
        Given $ \mX^{(l)} $ on the unit sphere, the necessary condition of $ \overline{\rvx_i^{(l)} \rvx_j^{(l)}} /\mskip-2.5mu/  \overline{\rvx_s^{(l)} \rvx_t^{(l)}} $ is that $ \angle\rvx_j^{(l)} O  \rvx_s^{(l)} =  \angle \rvx_i^{(l)} O \rvx_t^{(l)} $, where $O$ is origin of coordinates.
    \end{lemma}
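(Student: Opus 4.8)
}
The plan is to reduce the angular identity to a single inner-product identity and then verify it by a short computation valid in any dimension. Since all four points lie on the unit sphere, $\|\rvx_i^{(l)}\| = \|\rvx_j^{(l)}\| = \|\rvx_s^{(l)}\| = \|\rvx_t^{(l)}\| = 1$, and because $O$ is the origin the vector from $O$ to each point is the point itself, so $\cos\angle\rvx_j^{(l)}O\rvx_s^{(l)} = (\rvx_j^{(l)})^\top\rvx_s^{(l)}$ and $\cos\angle\rvx_i^{(l)}O\rvx_t^{(l)} = (\rvx_i^{(l)})^\top\rvx_t^{(l)}$. Both angles lie in $[0,\pi]$, where $\cos$ is injective, so the lemma is equivalent to the scalar identity $(\rvx_j^{(l)})^\top\rvx_s^{(l)} = (\rvx_i^{(l)})^\top\rvx_t^{(l)}$, and this is what I would prove.

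First I would rewrite the hypothesis: $\overline{\rvx_i^{(l)}\rvx_j^{(l)}}$ parallel to $\overline{\rvx_s^{(l)}\rvx_t^{(l)}}$ means there is a scalar $\mu\neq0$ with $\rvx_t^{(l)}-\rvx_s^{(l)} = \mu(\rvx_j^{(l)}-\rvx_i^{(l)})$. Put $\bm{d}=\rvx_j^{(l)}-\rvx_i^{(l)}$. The one geometric fact I need is that a chord of a sphere is orthogonal to the vector from the centre to the chord's midpoint: from $\|\rvx_i^{(l)}\| = \|\rvx_j^{(l)}\|$ we get $\bm{d}^\top(\rvx_i^{(l)}+\rvx_j^{(l)}) = \|\rvx_j^{(l)}\|^2 - \|\rvx_i^{(l)}\|^2 = 0$, and since $\rvx_t^{(l)}-\rvx_s^{(l)}$ is a nonzero multiple of $\bm{d}$ the same computation gives $\bm{d}^\top(\rvx_s^{(l)}+\rvx_t^{(l)}) = 0$. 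Then I would decompose the four points along and across $\bm{d}$: $\rvx_i^{(l)} = \tfrac12(\rvx_i^{(l)}+\rvx_j^{(l)}) - \tfrac12\bm{d}$, $\rvx_j^{(l)} = \tfrac12(\rvx_i^{(l)}+\rvx_j^{(l)}) + \tfrac12\bm{d}$, $\rvx_s^{(l)} = \tfrac12(\rvx_s^{(l)}+\rvx_t^{(l)}) - \tfrac{\mu}{2}\bm{d}$, $\rvx_t^{(l)} = \tfrac12(\rvx_s^{(l)}+\rvx_t^{(l)}) + \tfrac{\mu}{2}\bm{d}$. Expanding $(\rvx_j^{(l)})^\top\rvx_s^{(l)}$ and $(\rvx_i^{(l)})^\top\rvx_t^{(l)}$ and cancelling the cross terms via the two orthogonality relations, both collapse to $\tfrac14(\rvx_i^{(l)}+\rvx_j^{(l)})^\top(\rvx_s^{(l)}+\rvx_t^{(l)}) - \tfrac{\mu}{4}\|\bm{d}\|^2$, hence are equal; this is exactly the required identity.

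It is worth recording the geometric picture behind this, which also shows the statement is the classical fact that parallel chords subtend equal arcs: the perpendicular-bisector hyperplanes of the two chords both contain $O$ and both equal the hyperplane orthogonal to the common chord direction $\bm{d}$, so reflection in that hyperplane fixes $O$ and swaps $\rvx_i^{(l)}\leftrightarrow\rvx_j^{(l)}$ and $\rvx_s^{(l)}\leftrightarrow\rvx_t^{(l)}$, which immediately gives $\angle\rvx_j^{(l)}O\rvx_s^{(l)} = \angle\rvx_i^{(l)}O\rvx_t^{(l)}$. There is no genuine obstacle here; the only thing requiring attention is the bookkeeping point that the four points be distinct and the two chords nondegenerate (so that ``parallel'' is meaningful and $\mu\neq0$), and that the lemma be invoked downstream with the endpoint pairing $(\rvx_i^{(l)},\rvx_t^{(l)})$, $(\rvx_j^{(l)},\rvx_s^{(l)})$ dictated by the chords' orientation --- though, as the computation shows, the identity in fact holds for this pairing regardless of the sign of $\mu$.
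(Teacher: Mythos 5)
Your proposal is correct and follows essentially the same route as the paper's proof: both reduce the parallelism hypothesis to the single inner-product identity $(\rvx_j^{(l)})^\top\rvx_s^{(l)} = (\rvx_i^{(l)})^\top\rvx_t^{(l)}$ using only the unit-norm condition, and then translate that into equality of the central angles (the paper via the cosine theorem on chord lengths plus the central-angle theorem, you directly via the cosine of the angle between unit vectors). The only difference is cosmetic — the paper obtains the identity by rearranging $\rvx_j^{(l)} + k\,\rvx_s^{(l)} = \rvx_i^{(l)} + k\,\rvx_t^{(l)}$ and squaring, whereas you use the midpoint-plus-half-chord decomposition with the chord-orthogonality relations.
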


    \begin{proof}
        For $\overline{\rvx_i^{(l)} \rvx_j^{(l)}} /\mskip-2.5mu/  \overline{\rvx_s^{(l)} \rvx_t^{(l)}} $, we have
        \begin{equation}
         \rvx_j^{(l)} - \rvx_i^{(l)} = k \ ( \rvx_t^{(l)} - \rvx_s^{(l)} )
        \end{equation}
        where $k \ne 0$. 
    
        Accordingly, we figure out that
        \begin{equation}
            \rvx_j^{(l)} + k \ \rvx_s^{(l)} =  \rvx_i^{(l)} + k \ \rvx_t^{(l)},
        \end{equation}
        and furthermore, 
        \begin{equation}
            (\rvx_j^{(l)})^2 + 2k \ \rvx_j^{(l)}\cdot \rvx_s^{(l)} + k^2\ (\rvx_s^{(l)})^2  =  (\rvx_i^{(l)})^2 + 2k \ \rvx_i^{(l)}\cdot\rvx_t^{(l)} + k^2\ (\rvx_t^{(l)})^2.
        \end{equation}
        Since $\rvx_i^{(l)}, \rvx_j^{(l)}, \rvx_s^{(l)}, \rvx_t^{(l)}$ are all on the unit sphere, we have $(\rvx_j^{(l)})^2=(\rvx_j^{(l)})^2=(\rvx_j^{(l)})^2=(\rvx_j^{(l)})^2=1$. Therefore, we have $\rvx_j^{(l)}\cdot\rvx_s^{(l)} =\rvx_i^{(l)}\cdot\rvx_t^{(l)}$
    
        According to the cosine theorem, we have $|\overline{ \rvx_j^{(l)}\rvx_s^{(l)}}| = \overline{| \rvx_i^{(l)}\rvx_t^{(l)}}|$. Furthermore, according to the central angle theorem, we have $\angle\rvx_j^{(l)} O  \rvx_s^{(l)} =  \angle \rvx_i^{(l)} O \rvx_t^{(l)}$.
    
    \end{proof}
    \begin{lemma}
        \label{lemma:9}
        Given $ \vp_i^{(l)}, \vp_j^{(l)}, \vp_s^{(l)}, \vp_t^{(l)} $ which are different from each other, the solution space
        \begin{equation}
            \sB_4 (\vp_i^{(l)}, \vp_j^{(l)}, \vp_s^{(l)}, \vp_t^{(l)}) = \left\{ \vb \in \R^n: \frac{(\vp_i^{(l)} + \vb)^\top ( \vp_s^{(l)} + \vb)}{\|\vp_i^{(l)} + \vb\|_2 \|\vp_s^{(l)} + \vb\|_2} = \frac{(\vp_j^{(l)} + \vb)^\top ( \vp_t^{(l)} + \vb)}{\|\vp_j^{(l)} + \vb\|_2 \|\vp_t^{(l)} + \vb\|_2} \right\}
        \end{equation}
        is contained in a hypersurface of $n-1$ dimension. 
    \end{lemma}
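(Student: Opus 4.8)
The plan is to turn the angle‑equality condition defining $\sB_4$ into a polynomial equation and then show that polynomial is not identically zero; throughout I drop the layer superscript and write $\vp_i,\vp_j,\vp_s,\vp_t$ for $\vp_i^{(l)},\vp_j^{(l)},\vp_s^{(l)},\vp_t^{(l)}$, which are pairwise distinct, and I assume $n\ge 2$ (the case used in the paper, $n=2$). Writing $c(\vu,\vw)=\vu^\top\vw/(\|\vu\|_2\|\vw\|_2)$ for the cosine between two nonzero vectors, $\vb\in\sB_4$ means $c(\vp_i+\vb,\vp_s+\vb)=c(\vp_j+\vb,\vp_t+\vb)$, which already forces $\vb$ into $U=\R^n\setminus\{-\vp_i,-\vp_j,-\vp_s,-\vp_t\}$. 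Since equal numbers have equal squares, squaring and cross‑multiplying by the positive quantities $\|\vp_j+\vb\|_2^2\|\vp_t+\vb\|_2^2$ and $\|\vp_i+\vb\|_2^2\|\vp_s+\vb\|_2^2$ gives $\sB_4\subseteq Z:=\{\vb\in\R^n:g(\vb)=0\}$, where
\begin{equation}
g(\vb)=\bigl[(\vp_i+\vb)^\top(\vp_s+\vb)\bigr]^2\|\vp_j+\vb\|_2^2\|\vp_t+\vb\|_2^2-\bigl[(\vp_j+\vb)^\top(\vp_t+\vb)\bigr]^2\|\vp_i+\vb\|_2^2\|\vp_s+\vb\|_2^2
\end{equation}
is a polynomial in the coordinates of $\vb$ (the four excluded points lie in $Z$ too, since at $\vb=-\vp_i$ both factors $\vp_i+\vb$ and $\|\vp_i+\vb\|_2$ vanish). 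Granting that $g$ is not the zero polynomial, $Z$ is the zero locus of a nonconstant polynomial, hence a proper algebraic subset of $\R^n$ of dimension at most $n-1$, i.e.\ a hypersurface of dimension $n-1$, which contains $\sB_4$; that is the lemma.

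So the one real step is to verify $g\not\equiv 0$, and I would do this by probing $g$ along a line through the excluded point $-\vp_i$. Put $\vb=-\vp_i+\eps\vw$ for a direction $\vw\in\R^n$; then $\vp_i+\vb=\eps\vw$, while $\vp_s+\vb\to\vp_s-\vp_i$, $\vp_j+\vb\to\vp_j-\vp_i$, $\vp_t+\vb\to\vp_t-\vp_i$ as $\eps\to0$, and these three limit vectors are nonzero because the four points are pairwise distinct. Expanding, $g(-\vp_i+\eps\vw)$ vanishes to second order in $\eps$, with
\begin{equation}
\lim_{\eps\to0}\frac{g(-\vp_i+\eps\vw)}{\eps^2}=\bigl(\vw^\top(\vp_s-\vp_i)\bigr)^2\|\vp_j-\vp_i\|_2^2\|\vp_t-\vp_i\|_2^2-\bigl((\vp_j-\vp_i)^\top(\vp_t-\vp_i)\bigr)^2\|\vw\|_2^2\|\vp_s-\vp_i\|_2^2.
\end{equation}
The second term here does not depend on $\vw$. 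If it is zero, I would take $\vw=\vp_s-\vp_i\ne\bm0$, which makes the first term strictly positive; if it is nonzero, I would take any $\vw\ne\bm0$ with $\vw^\top(\vp_s-\vp_i)=0$ (such $\vw$ exists since $n\ge2$), which kills the first term and leaves a strictly negative value. In either case the displayed limit is nonzero, so $\eps\mapsto g(-\vp_i+\eps\vw)$ is a nonzero polynomial in $\eps$, hence $g\not\equiv0$ on $\R^n$.

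I expect this nonvanishing step to be the only delicate point. The naive attempts — separating the two cosines by letting $\vb\to\infty$, or by moving $\vb$ onto the ``orthogonality sphere'' $(\vp_i+\vb)^\top(\vp_s+\vb)=0$ — both break down precisely in the degenerate configuration where $\{\vp_i,\vp_s\}$ and $\{\vp_j,\vp_t\}$ have the same midpoint and the same spread (a degenerate parallelogram), since there the two orthogonality spheres coincide; the local probe at $-\vp_i$ above is engineered to sidestep that, and the care goes into checking that its $\eps^2$‑coefficient really has the stated form and can always be made nonzero without colliding with an excluded point. As an alternative that avoids clearing denominators entirely, one may observe that $F(\vb)=c(\vp_i+\vb,\vp_s+\vb)-c(\vp_j+\vb,\vp_t+\vb)$ is real‑analytic on the connected open set $U$, note that the very same probe shows $F\not\equiv0$, and then invoke the identity theorem to conclude that $\{F=0\}$, and with it $\sB_4$, lies in a set of dimension at most $n-1$.
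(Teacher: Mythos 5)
Your proof is correct. It shares the paper's first move exactly: both arguments square the cosine equality and clear denominators to land $\sB_4$ inside the zero set of the polynomial $g$ (the paper's $\sB_4'$), so everything reduces to showing that polynomial is not identically zero. Where you diverge is in how that nonvanishing is certified. The paper exhibits an explicit witness $\vb$ outside the zero set via a three-way geometric case analysis on the affine configuration of $\vp_i^{(l)},\vp_j^{(l)},\vp_s^{(l)},\vp_t^{(l)}$ (not coplanar; coplanar but not collinear; collinear), in each case placing $-\vb$ so that one of the two angles at the origin becomes $\pi$ (or $\pi/2$) while the other provably cannot. You instead expand $g$ along a line through the singular point $-\vp_i^{(l)}$, compute the leading $\eps^2$ coefficient $(\vw^\top(\vp_s^{(l)}-\vp_i^{(l)}))^2\|\vp_j^{(l)}-\vp_i^{(l)}\|^2\|\vp_t^{(l)}-\vp_i^{(l)}\|^2-((\vp_j^{(l)}-\vp_i^{(l)})^\top(\vp_t^{(l)}-\vp_i^{(l)}))^2\|\vw\|^2\|\vp_s^{(l)}-\vp_i^{(l)}\|^2$, and choose the direction $\vw$ (either $\vp_s^{(l)}-\vp_i^{(l)}$ or a nonzero orthogonal complement vector, which is where your $n\ge2$ hypothesis enters, consistent with the paper's use of the lemma) to force that coefficient nonzero; pairwise distinctness of the four points is used only to keep the three difference vectors nonzero. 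Your route buys uniformity — no case split on collinearity or coplanarity, and the degenerate-parallelogram configuration that defeats naive global separations is sidestepped automatically — at the cost of a short Taylor computation; the paper's route is computation-free but needs the geometric trichotomy and a picture for each case. Both correctly reduce "contained in a hypersurface of dimension $n-1$" to "zero locus of a nonconstant polynomial," and your observation that $g$ vanishes at the excluded points rules out $g$ being a nonzero constant.
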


    \begin{proof} 
        We first loose the equation in $ \sB_4 (\vp_i^{(l)}, \vp_j^{(l)}, \vp_s^{(l)}, \vp_t^{(l)}) $ to a polynomial equation. 
        
        Ignoring the case $ \vb\in\{ -\vp_i^{(l)}, -\vp_j^{(l)}, -\vp_s^{(l)}, -\vp_t^{(l)} \} $, we can loose the equation in $ \sB_4 (\vp_i, \vp_j, \vp_s, \vp_t) $ as
        \begin{equation}
            (\vp_i^{(l)} + \vb)^\top ( \vp_s^{(l)} + \vb) \|\vp_j^{(l)} + \vb\|_2 \|\vp_t^{(l)} + \vb\|_2 = (\vp_j^{(l)} + \vb)^\top ( \vp_t^{(l)} + \vb)\|\vp_i^{(l)} + \vb\|_2 \|\vp_s^{(l)} + \vb\|_2. 
        \end{equation}
        Furthermore, we loose it again to
        \begin{equation}
            \label{eqn:b4}
            [(\vp_i^{(l)} + \vb)^\top ( \vp_s^{(l)} + \vb)]^2 \|\vp_j^{(l)} + \vb\|_2^2 \|\vp_t^{(l)} + \vb\|_2^2 = [(\vp_j^{(l)} + \vb)^\top ( \vp_t^{(l)} + \vb)]^2 \|\vp_i^{(l)} + \vb\|_2^2 \|\vp_s^{(l)} + \vb\|_2^2.
        \end{equation}

        We define
        \begin{equation}
            \sB_4' (\vp_i^{(l)}, \vp_j^{(l)}, \vp_s^{(l)}, \vp_t^{(l)}) = \{\vb:\vb \text{ satisfies \Eqn\ref{eqn:b4}.}\}. 
        \end{equation}
        
        We find that $\vb \in \sB_4' (\vp_i^{(l)}, \vp_j^{(l)}, \vp_s^{(l)}, \vp_t^{(l)})$, for each $\vb\in\sB_4 (\vp_i^{(l)}, \vp_j^{(l)}, \vp_s^{(l)}, \vp_t^{(l)})$. Since \Eqn\ref{eqn:b4} is a polynomial equation about $\vb$, its solution space $\sB_4' (\vp_i^{(l)}, \vp_j^{(l)}, \vp_s^{(l)}, \vp_t^{(l)}) $  is a hypersurface. 

        From $\sB_4 (\vp_i^{(l)}, \vp_j^{(l)}, \vp_s^{(l)}, \vp_t^{(l)})$ to $\sB_4' (\vp_i^{(l)}, \vp_j^{(l)}, \vp_s^{(l)}, \vp_t^{(l)})$, we add the four singularities $\{ -\vp_i^{(l)}, -\vp_j^{(l)}, -\vp_s^{(l)}, -\vp_t^{(l)} \}$, and we extend $\cos\angle\rvx_j^{(l)} O  \rvx_s^{(l)} =  \cos\angle \rvx_i^{(l)} O \rvx_t^{(l)}$ to $\cos^2\angle\rvx_j^{(l)} O  \rvx_s^{(l)} =  \cos^2\angle \rvx_i^{(l)} O \rvx_t^{(l)}$. 
        

        We then prove $ \sB_4' (\vp_i^{(l)}, \vp_j^{(l)}, \vp_s^{(l)}, \vp_t^{(l)}) \subset \R^n $, to ensure it is a hypersurface of $d-1$ dimension. 

        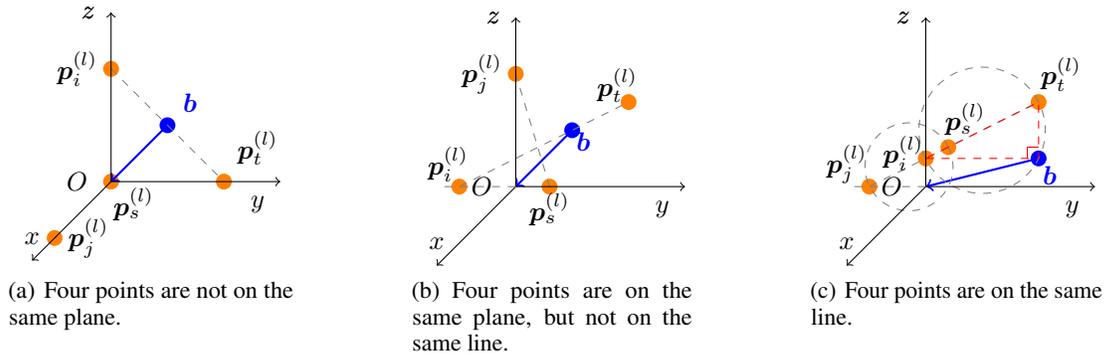
\begin{figure*}[h]
            \centering
            \subfigure[Four points are not on the same plane.]{
            \label{fig:A3-a}
			\begin{tikzpicture}[scale = 1.5]
				\fill[orange, shift={(0,0)}] (0,0) circle (2pt);
				\fill[orange, shift={(0,1)}] (0,0) circle (2pt);
				\fill[orange, shift={(1,0)}] (0,0) circle (2pt);
				\fill[orange, shift={(-0.5,-0.5)}] (0,0) circle (2pt);
                \fill[blue, shift={(0.5,0.5)}] (0,0) circle (2pt);

                \draw [->, blue, thick] (0.5,0.5) -- (0,0);
                
				\draw [dashed, gray] (0,1) -- (1,0);
				\draw [->, black] (0,0) -- (1.5,0);
				\draw [->, black] (0,0) -- (0,1.5);
                \draw [->, black] (0,0) -- (-0.7,-0.7);
                \node[] at (-0.7,-0.5) {$x$};
                \node[] at (-0.2,1.5) {$z$};
                \node[] at (1.3,-0.2) {$y$};
                \node[] at (-0.3,1) {$\vp_i^{(l)}$};
                \node[] at (1.3,0.3) {$\vp_t^{(l)}$};
                \node[] at (-0.2,-0.5) {$\vp_j^{(l)}$};
                \node[] at (0.2,-0.2) {$\vp_s^{(l)}$};
                \node[blue] at (0.7,0.7) {$\vb$};
                \node[] at (-0.2,1.5) {$z$};
                \node[] at (-0.3,0) {$O$};
			\end{tikzpicture}
		}
            \hspace{4em} 
            \subfigure[Four points are on the same plane, but not on the same line.]{
            \label{fig:A3-b}
			\begin{tikzpicture}[scale = 1.5]
				\fill[orange, shift={(0.3,0)}] (0,0) circle (2pt);
				\fill[orange, shift={(0,1)}] (0,0) circle (2pt);
				\fill[orange, shift={(1,0.75)}] (0,0) circle (2pt);
				\fill[orange, shift={(-0.5,0)}] (0,0) circle (2pt);
                \fill[blue, shift={(0.5,0.5)}] (0,0) circle (2pt);

                \draw [->, blue, thick] (0.5,0.5) -- (0,0);
                
				\draw [dashed, gray] (-0.5,0) -- (1,0.75);
                \draw [dashed, gray] (0.3,0) -- (0,1);
				\draw [->, black] (0,0) -- (1.5,0);
				\draw [->, black] (0,0) -- (0,1.5);
                \draw [->, black] (0,0) -- (-0.7,-0.7);
                \draw [gray, dashed] (0,0) -- (-0.7,0);
                \node[] at (-0.7,-0.5) {$x$};
                \node[] at (-0.2,1.5) {$z$};
                \node[] at (1.3,-0.2) {$y$};
                \node[] at (-0.6,0.2) {$\vp_i^{(l)}$};
                \node[] at (0.9,0.9) {$\vp_t^{(l)}$};
                \node[] at (-0.3,1) {$\vp_j^{(l)}$};
                \node[] at (0.3,-0.2) {$\vp_s^{(l)}$};
                \node[blue] at (0.6,0.4) {$\vb$};
                \node[] at (-0.2,1.5) {$z$};
                \node[] at (-0.3,0) {$O$};
			\end{tikzpicture}
		      }
            \hspace{4em}
            \subfigure[Four points are on the same line.]{
            \label{fig:A3-c}
			\begin{tikzpicture}[scale = 1.5]
				\fill[orange, shift={(0.2,0.35)}] (0,0) circle (2pt);
				\fill[orange, shift={(0,0.25)}] (0,0) circle (2pt);
				\fill[orange, shift={(1,0.75)}] (0,0) circle (2pt);
				\fill[orange, shift={(-0.5,0)}] (0,0) circle (2pt);
                \fill[blue, shift={(1,0.25)}] (0,0) circle (2pt);

                \draw [->, blue, thick] (1,0.25) -- (0,0);

                \draw [dashed, gray] (1,0.75) arc [start angle=26.57,end angle=386.57, radius=0.56];

                \draw [dashed, gray] (0.2,0.35) arc [start angle=26.57,end angle=386.57, radius=0.39];
                
				\draw [dashed, gray] (-0.5,0) -- (1,0.75);
                \draw [dashed, red] (0,0.25) -- (1,0.25) -- (1,0.75) -- (0,0.25);
                \draw [red] (0.9,0.25) -- (0.9,0.35) -- (1,0.35);
				\draw [->, black] (0,0) -- (1.5,0);
				\draw [->, black] (0,0) -- (0,1.5);
                \draw [->, black] (0,0) -- (-0.7,-0.7);
                \draw [gray, dashed] (0,0) -- (-0.7,0);
                \node[] at (-0.7,-0.5) {$x$};
                \node[] at (-0.2,1.5) {$z$};
                \node[] at (1.3,-0.2) {$y$};
                \node[] at (-0.2,0.3) {$\vp_i^{(l)}$};
                \node[] at (1.2,1) {$\vp_t^{(l)}$};
                \node[] at (-0.7,0.2) {$\vp_j^{(l)}$};
                \node[] at (0.35,0.6) {$\vp_s^{(l)}$};
                \node[blue] at (1.1,0.1) {$\vb$};
                \node[] at (-0.2,1.5) {$z$};
                \node[] at (-0.3,0) {$O$};
			\end{tikzpicture}
		      }
            \caption{Three cases of the four points. We figure out that $\vb$ is the shift direction and distance, and becomes the new origin when we translate $\mP^{(l)}$ to $\mH^{(l)}$.}
        \end{figure*}
        
        Case 1: Suppose the four points $ \vp_i^{(l)}, \vp_j^{(l)}, \vp_s^{(l)}, \vp_t^{(l)} $ are not on the same plane, as shown in Figure A\ref{fig:A3-a}. Choose $ \vb = -(\vp_i^{(l)} + \vp_t^{(l)})/2 $, we thus have $ \angle\vh_i^{(l)} O  \vh_t^{(l)} = \pi $. However, $ \angle \vh_j^{(l)} O \vh_s^{(l)} \in(0,\pi) $, otherwise the four points will belong to the same plane. Therefore, $\vb\notin\sB_4' (\vp_i^{(l)}, \vp_j^{(l)}, \vp_s^{(l)}, \vp_t^{(l)})$. 

        Case 2: Suppose the four points $ \vp_i^{(l)}, \vp_j^{(l)}, \vp_s^{(l)}, \vp_t^{(l)} $ are on the same plane, but not on the same line, as shown in Figure A\ref{fig:A3-b}. We can always find $-\vb$ on the \textbf{line segment} $\overline{\vp_i^{(l)}\vp_t^{(l)}} $, and ensure $-\vb$ is not on the \textbf{line} $\overline{\vp_j^{(l)}\vp_s^{(l)}} $, otherwise the four points will be on the same line. We thus have $ \angle\vh_i^{(l)} O  \vh_t^{(l)} = \pi $, but $ \angle \vh_j^{(l)} O \vh_s^{(l)} \in(0,\pi) $. Therefore, $\vb\notin\sB_4' (\vp_i^{(l)}, \vp_j^{(l)}, \vp_s^{(l)}, \vp_t^{(l)})$. 
        
        Case 3: Suppose the four points $ \vp_i^{(l)}, \vp_j^{(l)}, \vp_s^{(l)}, \vp_t^{(l)} $ are on the same line, as shown in Figure A\ref{fig:A3-c}. We can draw circles with $\overline{\vp_i^{(l)}\vp_t^{(l)}} $ and $\overline{\vp_j^{(l)}\vp_s^{(l)}} $, respectively. We can always find $-\vb$ on the previous circle, but not on the later one, otherwise they will be not different from each other. We thus have $ \angle\vh_i^{(l)} O  \vh_t^{(l)} = \frac{\pi}{2}$, but $ \angle \vh_j^{(l)} O \vh_s^{(l)} \ne \frac{\pi}{2} $. Therefore, $\vb\notin\sB_4' (\vp_i^{(l)}, \vp_j^{(l)}, \vp_s^{(l)}, \vp_t^{(l)})$. 

        Conclusively, we can always find some $ \vb \notin \sB_4' (\vp_i^{(l)}, \vp_j^{(l)}, \vp_s^{(l)}, \vp_t^{(l)})  $, then we have $ \sB_4' (\vp_i^{(l)}, \vp_j^{(l)}, \vp_s^{(l)}, \vp_t^{(l)}) \subset \R^n $. Further, $\sB_4' (\vp_i^{(l)}, \vp_j^{(l)}, \vp_s^{(l)}, \vp_t^{(l)})$ is a hypersurface of $d-1$ dimension, and $\sB_4 (\vp_i^{(l)}, \vp_j^{(l)}, \vp_s^{(l)}, \vp_t^{(l)})\subset \sB_4' (\vp_i^{(l)}, \vp_j^{(l)}, \vp_s^{(l)}, \vp_t^{(l)})$
    \end{proof}

    Here we propose the proposition of a wider LN-Net as follows. 

    \begin{proposition}
        A wider LN-Net can classify $m$ samples with any label assignment. 
    \end{proposition}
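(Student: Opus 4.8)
The plan is to extend the Projection Merge Algorithm (Algorithm~\ref{alg:PMA}) and the Parallelization Breaking Algorithm (Algorithm~\ref{alg:PBA}), which were tailored to width-$3$ LN-Nets, to an LN-Net of arbitrary width $d\ge 3$. First I would apply Lemma~\ref{lemma:equivalence} to rewrite the wide LN-Net as an alternating composition of spherical projections $SP(\cdot)$ onto the unit sphere $S^{d-2}\subset\R^{d-1}$ and linear maps, exactly as in the width-$3$ reduction (with adjacent linear layers merged). The goal then becomes purely geometric: build a finite sequence of (linear map, $SP$) pairs that, starting from $m$ distinct points $\mP^{(0)}$, collapses each label class to a single point while never identifying two points carrying different labels; a final linear layer then separates the surviving (at most $k$, one per class) points, and re-expanding $SP$ via Lemma~\ref{lemma:equivalence} and absorbing linear layers produces the desired width-$d$ LN-Net.

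The core is a single higher-dimensional merge step replacing lines 8--14 of Algorithm~\ref{alg:PMA}. At a generic stage the current points $\mP^{(l)}$ are distinct; I would select a same-label pair $\vp_i^{(l)},\vp_j^{(l)}$ to merge, translate the whole configuration by a bias $\vb$ so that $\vp_i^{(l)}+\vb$ and $\vp_j^{(l)}+\vb$ become reflections of one another across a fixed hyperplane $\Pi$ through the origin (e.g.\ choose $\vb$ with $\vp_i^{(l)}+\vb=-(\vp_j^{(l)}+\vb)$), apply $SP(\cdot)$, and then compose with the linear projection that kills the coordinate normal to $\Pi$. This identifies precisely the pairs whose spherical images are $\Pi$-symmetric, hence in particular merges the chosen pair, strictly decreasing the number of distinct points by at least one. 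The danger is that some \emph{other} pair $\vp_s^{(l)},\vp_t^{(l)}$ of points (possibly of different labels) is simultaneously identified; by Lemma~\ref{lemma:8} this can only occur when the chords $\overline{\rvx_s^{(l)}\rvx_t^{(l)}}$ and $\overline{\rvx_i^{(l)}\rvx_j^{(l)}}$ become parallel after the translation, i.e.\ when the subtended angles coincide, which by Lemma~\ref{lemma:9} forces $\vb$ to lie in the set $\sB_4(\vp_i^{(l)},\vp_j^{(l)},\vp_s^{(l)},\vp_t^{(l)})$, contained in an $(n-1)$-dimensional hypersurface of $\R^{n}$.

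Ranging over all pairs (for ``do not identify two already-distinct points'', as in Proposition~\ref{prop:layer property}) and all $O(m^4)$ quadruples of distinct points (for ``no confusion'', the analogue of Propositions~\ref{prop:confusion} and~\ref{prop:PBA}), the set of bad biases is a finite union of polynomial hypersurfaces, hence nowhere dense and of measure zero; since $\R^{n}$ cannot be exhausted by finitely many such hypersurfaces, a valid bias $\vb$ always exists --- so the generalized merge step is well-defined at every stage, and the bias-selection already absorbs the role of Algorithm~\ref{alg:PBA}. Because each step removes at least one distinct point and the input has $m$ distinct points split among $k$ classes, the procedure terminates in at most $m-k$ steps, so the resulting LN-Net has $O(m)$ LN layers. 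Taking $d=3$ recovers Theorems~\ref{thm:network} and~\ref{thm:network-multi} as the special case, and for general $d\ge 3$ this proves the proposition for both binary and multi-class label assignments.

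The main obstacle I expect is making the single merge step airtight in dimension $d-1$: showing that it collapses the targeted same-label pair and, crucially, \emph{nothing else}. In the width-$3$ case this reduced to the elementary observation that a chord of the circle and the $y$-axis interact in only one way, so the argument was essentially one-dimensional; in higher dimensions one must control every chord-parallelism configuration, which is exactly what Lemmas~\ref{lemma:8} and~\ref{lemma:9} are designed to do. The delicate part is the genericity/dimension count --- verifying that each $\sB_4$ is genuinely a proper subset of $\R^{n}$ (the three-case analysis of Lemma~\ref{lemma:9}, over whether the four points are non-coplanar, coplanar non-collinear, or collinear) and that the finite union of these hypersurfaces together with the ``no-collision'' hyperplanes still leaves room to choose $\vb$ --- together with the bookkeeping that guarantees the depth remains $O(m)$.
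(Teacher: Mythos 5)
Your proposal matches the paper's proof essentially step for step: the same reduction via Lemma~\ref{lemma:equivalence}, the same translate-then-$SP$-then-project merge step, the same use of Lemmas~\ref{lemma:8} and~\ref{lemma:9} to confine the bad biases to a finite union ($\hat\sB_2\cup\hat\sB_4$, at most $m^2+m^4$ pieces) of $(n-1)$-dimensional hypersurfaces that cannot exhaust $\R^n$, and the same counting argument giving $O(m)$ depth with no separate PBA step. The only caveat is that your parenthetical choice $\vp_i^{(l)}+\vb=-(\vp_j^{(l)}+\vb)$ pins $\vb$ to a single point and would forfeit the freedom needed to dodge the bad set; the paper instead leaves $\vb$ entirely free and takes the projection direction to be the chord $\overline{\rvx_i^{(l)}\rvx_j^{(l)}}$ of the spherical images, which are automatically reflections across the hyperplane through the origin normal to that chord, so your reflection framing should be read with $\Pi$ determined after $\vb$ rather than before.
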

    \begin{proof}
	Similarly, we hope to merge two points from the same class, and do not merge other points meanwhile. Suppose LN acts on $\R^{n+1}$ by Lemma \ref{lemma:equivalence}, we thus use SP on $\R^n$ for convenience. Given $\mP^{(l)}\in\R^{n \times m}$ on a $n-1$ dimensional hyperplane, we consider to shift the points by $ \vb \in \R^n $ and get $\mH^{(l)}$. After that, we spherically project $\mH^{(l)}$ onto the unit sphere $ \|\rvx\|_2=1 $, represented by $ \mX^{(l+1)} $. Hereafter, we linearly project $ \mX^{(l+1)} $ onto another $ n - 1 $ dimensional hyperplane. 
	
	Different from our method on $\R^2$, we can not sort the points, it is hence much harder to design a suitable algorithm in a high dimensional space. But we can consider to merge some $\vp_i^{(l)} $ and $ \vp_j^{(l)} $ only, without merging the other points. We analyze the merging progress backward, and show how to find the projection direction and the bias $\vb$. 
	
	To get $ \mP^{(l+1)} $ from $ \mX^{(l)} $, without doubt the projection direction is along $ \overline{\rvx_i^{(l)} \rvx_j^{(l)}} $, and the target is some $ n - 1 $ dimensional hyperplane. Now we need to ensure doing so will not merge other points. Obviously, its necessary and sufficient condition is that there are no other different points $ \rvx_s^{(l)}, \rvx_t^{(l)} $, such that
    \begin{equation}
        \overline{\rvx_i^{(l)} \rvx_j^{(l)}} /\mskip-2.5mu/  \overline{\rvx_s^{(l)} \rvx_t^{(l)}}, 
    \end{equation}
    namely $ \overline{\rvx_s^{(l)} \rvx_t^{(l)}} $ is parallel to the projection direction. 
    
    According to Lemma \ref{lemma:8}, for $ \mX^{(l)} $ is on the unit sphere, the necessary condition of $ \overline{\rvx_i^{(l)} \rvx_j^{(l)}} /\mskip-2.5mu/  \overline{\rvx_s^{(l)} \rvx_t^{(l)}} $ is that---$ \angle\rvx_i^{(l)} O  \rvx_s^{(l)} =  \angle \rvx_j^{(l)} O \rvx_t^{(l)} $, where $O$ is the origin of coordinates.

    Since $ \mX^{(l)} = SP(\mH^{(l)})$, we have
    \begin{equation}
        \angle\rvx_i^{(l)} O  \rvx_s^{(l)} =  \angle \rvx_j^{(l)} O \rvx_t^{(l)} \Leftrightarrow  \angle\vh_i^{(l)} O  \vh_s^{(l)} =  \angle \vh_j^{(l)} O \vh_t^{(l)}.
    \end{equation}
    If we ensure any four different points in $ \mH^{(l)} $ to satisfy $ \angle\vh_i^{(l)} O  \vh_s^{(l)} \ne  \angle \vh_j^{(l)} O \vh_t^{(l)} $, we will not merge other points when we merge $ \rvx_i^{(l+1)} $ and $ \rvx_j^{(l+1)} $. Since $ \vh_k^{(l)} = \vp_k^{(l)} +\vb $, according to the cosine theorem, we point out that $ \angle\vh_i^{(l)} O  \vh_s^{(l)} =  \angle \vh_j^{(l)} O \vh_t^{(l)} $ is equivalent to 
	\begin{equation}
		\label{eqn:96}
		\frac{(\vp_i^{(l)} + \vb)^\top ( \vp_s^{(l)} + \vb)}{\|\vp_i^{(l)} + \vb\|_2 \|\vp_s^{(l)} + \vb\|_2} = \frac{(\vp_j^{(l)} + \vb)^\top ( \vp_t^{(l)} + \vb)}{\|\vp_j^{(l)} + \vb\|_2 \|\vp_t^{(l)} + \vb\|_2}. 
	\end{equation}

    We define
    \begin{equation}
        \sB_4 (\vp_i^{(l)}, \vp_j^{(l)}, \vp_s^{(l)}, \vp_t^{(l)}) = \left\{ \vb \in \R^n: \frac{(\vp_i^{(l)} + \vb)^\top ( \vp_s^{(l)} + \vb)}{\|\vp_i^{(l)} + \vb\|_2 \|\vp_s^{(l)} + \vb\|_2} = \frac{(\vp_j^{(l)} + \vb)^\top ( \vp_t^{(l)} + \vb)}{\|\vp_j^{(l)} + \vb\|_2 \|\vp_t^{(l)} + \vb\|_2} \right\}. 
    \end{equation}
    
    Since $ \vp_i^{(l)}, \vp_j^{(l)}, \vp_s^{(l)}, \vp_t^{(l)} $ are different from each other, the solution space of Eqn.\ref{eqn:96} about $ \vb $ is contained in a hypersurface of $n-1$ dimension, by Lemma \ref{lemma:9}. 
 
    Again, we define 
    \begin{equation}
          \hat\sB_4 (\mP^{(l)}) = \bigcup_{(i,j,s,t)\in \sI_{4}(\mP^{(l)})}  \sB_4 (\vp_i^{(l)}, \vp_j^{(l)}, \vp_s^{(l)}, \vp_t^{(l)}),
    \end{equation} 
    where
    \begin{equation}
        \sI_{4} (\mP^{(l)}) = \{ (i,j,s,t): \vp_i^{(l)}, \vp_j^{(l)}, \vp_s^{(l)}, \vp_t^{(l)} \text{ are different with each other} \}.
    \end{equation}
    We figure out that $\hat\sB_4 (\mP^{(l)})$ is contained in a union of no more than $ m^4 $ hypersurfaces of $n-1$ dimension.
	
	Besides, from $ \mP^{(l)} $ to $ \mX^{(l)} $, we can not merge any two different points. Therefore, given $ \vp_i^{(l)} \ne \vp_j^{(l)} $, we need
    \begin{equation*}
        (\vp_{i}^{(l)} + \vb)/\|\vp_{i}^{(l)} + \vb\|_2 \ne (\vp_{j}^{(l)} + \vb)/\|\vp_{j}^{(l)} + \vb\|_2.
    \end{equation*}
   Given two different points $ \vp_i, \vp_j $, we define 
    \begin{equation}
        \sB_2 (\vp_i^{(l)}, \vp_j^{(l)}) = \left\{ \vb \in \R^n: \frac{\vp_i^{(l)} + \vb}{\|\vp_i^{(l)} + \vb\|_2 } = \frac{\vp_j^{(l)} + \vb}{\|\vp_j^{(l)} + \vb\|_2 } \right\}.
    \end{equation}
    Similarly, we can prove that $ \sB_2 (\vp_i, \vp_j) $ is contained in a hypersurface of $n-1$ dimension. We find $ \hat\sB_2 (\mP^{(l)}) $ is contained in the union of no more than $ m^2 $ hypersurfaces of $n-1$ dimension, where
    \begin{equation}
        \hat\sB_2 (\mP^{(l)}) = \bigcup_{ \vp_i^{(l)} \ne \vp_j^{(l)} } \sB_2 (\vp_i^{(l)}, \vp_j^{(l)}). 
    \end{equation}

    We figure out that $ \hat\sB_2 (\mP^{(l)}) \cup \hat\sB_4 (\mP^{(l)}) $ is contained in a union of no more than $ m^2+m^4 $ hypersurfaces of $n-1$ dimension. 
    
    Therefore, we have
    \begin{equation}
        [\hat\sB_2 (\mP^{(l)}) \cup \hat\sB_4 (\mP^{(l)})] \subset \R^n
    \end{equation}
    
    Choose some $ \vb \in \R^n / [\hat\sB_2 (\mP^{(l)}) \cup \hat\sB_4 (\mP^{(l)})] $, then $ \angle\vh_j^{(l)} O  \vh_s^{(l)} =  \angle \vh_i^{(l)} O \vh_t^{(l)} $ will not holds. Furthermore, by Lemma \ref{lemma:8}, $ \overline{\rvx_i^{(l)} \rvx_j^{(l)}} /\mskip-2.5mu/  \overline{\rvx_s^{(l)} \rvx_t^{(l)}}  $ will not holds either. As a result, we can only merge $ \vp_i^{(l)} $ and $ \vp_j^{(l)} $ by projection. 

    In conclusion, we can choose to only merge two samples with the same label each step by the method above. Furthermore, we can construct an LN-Net with depth $O(m)$ to classify $m$ samples with any label assignment. Note the width of LN-Net here is wider than $3$, and we do not require the widths of each layer are equal. 
    \end{proof}

    \section{Proof of Proposition \ref{prop:Hessian}}
    \label{section:proofofhessian}

    \paragraph{Proposition 8.}\textit{Given $ g\le d/3 $, we have
        \begin{equation}
            \frac{\mathcal{H}( \psi_G(g;\cdot); \rvx )}{\mathcal{H}( \psi_L(\cdot); \rvx )} \ge 1. 
        \end{equation}
        Specifically, when $ g = d/4 $, we figure out that
        \begin{equation}
            \frac{\mathcal{H}( \psi_G(g;\cdot); \rvx )}{\mathcal{H}( \psi_L(\cdot); \rvx )} \ge \frac{d}{8} . 
        \end{equation}}

    In the proof of Proposition \ref{prop:Hessian}, we consider a single sample only. We use $ x_i $ as the $i$-th ordinate of $ \rvx $ instead of $ x^{(i)} $ in this proof, we thus use $x_i^2$ to denote the squares rather than $[x^{(i)}]^2$. 

    \subsection{Required Lemmas for the Proof}

    \begin{lemma}
        Given $ \rvx \in \R^d $, $ \mu = (x_1 + \cdots + x_d)/d $ and $ \sigma^2 = [(x_1-\mu)^2 + \cdots + (x_d-\mu)^2]/d $, we denote $ LN(\rvx) $ as $ \hat\rvx = (\rvx-\mu\bm1)/\sigma $. We point out that 
        \begin{equation}
            \mathcal{H}(\psi_L(\cdot);\rvx) = \frac{3}{\sigma^4} - \frac{6}{d\sigma^4} 
        \end{equation} 
    \end{lemma}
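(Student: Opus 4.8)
The plan is to obtain a closed form for the Hessian tensor $\{\partial^2\hat x_i/\partial x_j\partial x_k\}_{i,j,k}$, recognize that it is a fully symmetric tensor built out of a rank-$(d-2)$ orthogonal projector, and then contract it. First I would compute the first derivatives of $\hat x_i = (x_i-\mu)/\sigma$ with $\mu=\frac1d\sum_k x_k$ and $\sigma^2=\frac1d\sum_k(x_k-\mu)^2$. Here $\partial\mu/\partial x_j = 1/d$, and using $\sum_k(x_k-\mu)=0$ one gets $\partial\sigma^2/\partial x_j = \frac2d(x_j-\mu)$, hence $\partial\sigma/\partial x_j = \hat x_j/d$. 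Substituting these gives the compact formula $\partial\hat x_i/\partial x_j = \sigma^{-1}T_{ij}$, where $T_{ij} := \delta_{ij} - \tfrac1d - \tfrac1d\hat x_i\hat x_j$.

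Next I would differentiate once more. Applying the product rule to $\sigma^{-1}T_{ij}$ and repeatedly using $x_k-\mu=\sigma\hat x_k$, $\partial\hat x_l/\partial x_k = \sigma^{-1}T_{lk}$, and $\partial\sigma^{-1}/\partial x_k = -\hat x_k/(d\sigma^2)$, the terms reassemble into the fully symmetric expression
\begin{equation*}
\frac{\partial^2\hat x_i}{\partial x_j\,\partial x_k} = -\frac{1}{d\sigma^2}\left(\hat x_i T_{jk} + \hat x_j T_{ik} + \hat x_k T_{ij}\right).
\end{equation*}
The key structural fact is then that $T = \mI - \tfrac1d\bm1\bm1^\top - \tfrac1d\hat\rvx\hat\rvx^\top$ is the orthogonal projector onto the orthocomplement of $\mathrm{span}\{\bm1,\hat\rvx\}$: since $\|\bm1\|_2^2 = \|\hat\rvx\|_2^2 = d$ ($\hat\rvx$ is standardized) and $\bm1^\top\hat\rvx = 0$ ($\hat\rvx$ is centered), the two rank-one pieces are mutually orthogonal rank-one projectors. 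Hence $T^2=T$, $\mathrm{tr}(T) = d-2$, and in particular $T\hat\rvx = \bm0$.

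Finally I would contract: $\mathcal{H}(\psi_L;\rvx) = \sum_{i,j,k}\big(\partial^2\hat x_i/\partial x_j\partial x_k\big)^2$. Expanding the square of the symmetric sum and using the symmetry of the triple sum, the three contributions of the form $\hat x_i^2 T_{jk}^2$ coincide and the three of the form $\hat x_j\hat x_k T_{ij}T_{ik}$ coincide, so the sum equals $\frac{1}{d^2\sigma^4}\big(3\sum_{i,j,k}\hat x_i^2 T_{jk}^2 + 6\sum_{i,j,k}\hat x_j\hat x_k T_{ij}T_{ik}\big)$. The first sum factors as $3\big(\sum_i\hat x_i^2\big)\big(\sum_{j,k}T_{jk}^2\big) = 3\cdot d\cdot\mathrm{tr}(T^2) = 3d(d-2)$, using $\sum_i\hat x_i^2=d$ and $\|T\|_F^2=\mathrm{tr}(T^2)=\mathrm{tr}(T)$; the second sum equals $6\sum_i(T\hat\rvx)_i^2 = 6\|T\hat\rvx\|_2^2 = 0$. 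Dividing by $d^2\sigma^4$ yields $\mathcal{H}(\psi_L;\rvx) = \frac{3(d-2)}{d\sigma^4} = \frac{3}{\sigma^4} - \frac{6}{d\sigma^4}$, as claimed.

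The main obstacle is the second-derivative computation: a direct application of the product and quotient rules produces several asymmetric-looking terms, and the work lies in recognizing and verifying that they collapse to the fully symmetric tensor displayed above. Once this symmetric form is secured, identifying $T$ as an orthogonal projector — so that the cross-term vanishes because $T\hat\rvx=\bm0$ — makes the remaining contraction entirely routine.
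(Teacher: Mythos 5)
Your proposal is correct and arrives at the same closed form, but it organizes the computation in a genuinely different and cleaner way than the paper. The paper computes $\partial\hat x_i/\partial x_j$ in two cases ($i=j$ and $i\neq j$), then works out the second derivatives in four separate index cases, and finally evaluates $\bigl\Vert \partial^2\hat x_i/\partial\rvx^2\bigr\Vert_F^2$ by a term-by-term expansion that repeatedly invokes $\sum_j\hat x_j=0$ and $\sum_j\hat x_j^2=d$. Your four cases are exactly the specializations of your single symmetric formula $\partial^2\hat x_i/\partial x_j\partial x_k=-\tfrac{1}{d\sigma^2}(\hat x_iT_{jk}+\hat x_jT_{ik}+\hat x_kT_{ij})$ (I checked all four against the paper's expressions and they agree), so the underlying calculus is the same; what you add is the observation that $T=\mI-\tfrac1d\bm1\bm1^\top-\tfrac1d\hat\rvx\hat\rvx^\top$ is the orthogonal projector onto $\mathrm{span}\{\bm1,\hat\rvx\}^\perp$, which reduces the final contraction to $\mathrm{tr}(T^2)=\mathrm{tr}(T)=d-2$ and $T\hat\rvx=\bm0$. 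This buys a much shorter endgame, makes the vanishing of the cross terms structural rather than coincidental, and explains conceptually why $\mathcal H$ vanishes at $d=2$ (the projector has rank $d-2$). The paper's route is more pedestrian but requires no insight beyond bookkeeping. One small presentational gap in your writeup: you assert that the product-rule terms "reassemble into the fully symmetric expression" without displaying the intermediate step; since $\partial T_{ij}/\partial x_k=-\tfrac1d(T_{ik}\hat x_j+\hat x_iT_{jk})$ and $\partial\sigma^{-1}/\partial x_k=-\hat x_k/(d\sigma^2)$, the three terms appear immediately and symmetrically, so this is a one-line verification you should include, but it is not a gap in substance.
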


    \begin{proof}
        To begin with, we regard $ \hatxn{i} $ as $ \psi_{i} (\rvx) $, and then give the gradient $ \nabla_\rvx \psi_{i} (\rvx) $. Let $ s = \frac{1}{d} \sum\limits_{i=1}^d (\xn{i} - \mu)^2 $ and $ \sigma = \sqrt{s} $. We have
	\begin{equation}
		\pf{\mu}{\xn{i}} = \frac1d, \forall i,
	\end{equation}
    \begin{equation}
        \begin{aligned}
            \pf{s}{\xn{i}}&= \frac1d\pf{}{\xn{i}} \sum_{j=1}^d (\xn{j} - \mu)^2\\
			&= \frac1d\pf{}{\xn{i}}\sum_{j=1}^d \xn{j}^2 - \frac1d \pf{}{\xn{i}}d\mu^2 \\
			&=\frac2d(\xn{i}-\mu), \forall i,
        \end{aligned}
    \end{equation}
    and
    \begin{equation}
        \begin{aligned}
            \pf{\sigma}{\xn{i}} &= \frac1{2\sqrt{s}} \pf{s}{\xn{i}}\\
			&=\frac{\xn{i}-\mu}{d\sigma} = \frac{\hatxn{i}}{d}, \forall i. 
        \end{aligned}
    \end{equation}
    We thus obtain
	\begin{equation}
		\begin{aligned}
			\pf{\hatxn{i}}{\xn{i}} &= \frac1{\sigma} \pf{}{\xn{i}}(\xn{i}-\mu) + (\xn{i}-\mu)  \pf{}{\xn{i}}(\frac1\sigma)\\
			&= \frac1{\sigma}(1-\frac1d) - \frac{\hatxn{i}}{\sigma} \pf{\sigma}{\xn{i}} \\
			&= \frac1{d\sigma} (d-1-\hatxn{i}^2). 
		\end{aligned}
	\end{equation}
	While for $ j\ne i $, we have
	\begin{equation}
		\begin{aligned}
			\pf{\hatxn{i}}{\xn{j}} &= \frac1{\sigma} \pf{}{\xn{j}}(\xn{i}-\mu) + (\xn{i}-\mu)  \pf{}{\xn{j}}(\frac1\sigma)\\
			&= \frac1{\sigma}(0-\frac1d) - \frac{\hatxn{i}}{\sigma} \pf{\sigma}{\xn{j}} \\
			&= \frac1{d\sigma} (-1-\hatxn{i}\hatxn{j}).
		\end{aligned}
	\end{equation}
	
	Based above, we calculate the Hessian matrix. For each term $\displaystyle \pf{^2\hatxn{i}}{\xn{j} \partial \xn{k}} $, we figure out that there are four kinds of the second order derivative.
	
	Case $1$, $i=j=k$:
	\begin{equation}
		\begin{aligned}
			\pf{^2\hatxn{i}}{\xn{i}^2} &= -\frac1{d\sigma^2}(d-1-\hatxn{i}^2)\pf{\sigma}{\xn{i}} - \frac{2\hatxn{i}}{d\sigma}\pf{\hatxn{i}}{\xn{i}} \\
			&= - \frac1{d^2\sigma^2} (d-1-\hatxn{i}^2) \hatxn{i} - \frac{2\hatxn{i}}{d^2\sigma^2} (d-1-\hatxn{i}^2)\\
			&= \frac{1}{d^2\sigma^2}[3\hatxn{i}^3 - 3(d-1)\hatxn{i}]\\
			&= \frac{1}{d^2\sigma^2} (3\hatxn{i}^3 + 3\hatxn{i}) - \frac{3\hatxn{i}}{d\sigma^2}. 
		\end{aligned}
	\end{equation}
	
	Case $2$, only one of $j,k$ equals to $i$, assume $i=k$:
	\begin{equation}
		\begin{aligned}
			\pf{^2\hatxn{i}}{\xn{i}\partial\xn{j}} &= -\frac1{d\sigma^2}(d-1-\hatxn{i}^2)\pf{\sigma}{\xn{j}} - \frac{2\hatxn{i}}{d\sigma}\pf{\hatxn{i}}{\xn{j}} \\
			&= - \frac1{d^2\sigma^2} (d-1-\hatxn{i}^2) \hatxn{j} - \frac{2\hatxn{i}}{d^2\sigma^2} (-1-\hatxn{i}\hatxn{j})\\
			&= \frac{1}{d^2\sigma^2} [3\hatxn{i}^2\hatxn{j} + 2\hatxn{i} - (d-1)\hatxn{j}]\\
			&= \frac{1}{d^2\sigma^2} (3\hatxn{i}^2\hatxn{j} + 2\hatxn{i} +\hatxn{j}) - \frac{\hatxn{j}}{d\sigma^2}. 
		\end{aligned}
	\end{equation}
    We have that $\displaystyle\pf{^2\hatxn{i}}{\xn{j} \partial \xn{k}} = \pf{^2\hatxn{i}}{\xn{k} \partial \xn{j}}$, so the result of the other case $i=j$ has the same form with that of $i=k$. 
	
	Case $3$, $j=k$, but $i\ne j$:
	\begin{equation}
		\begin{aligned}
			\pf{^2\hatxn{i}}{\xn{j}^2} &= -\frac1{d\sigma^2}(-1-\hatxn{i}\hatxn{j})\pf{\sigma}{\xn{j}} - \frac{\hatxn{i}}{d\sigma}\pf{\hatxn{j}}{\xn{j}} - \frac{\hatxn{j}}{d\sigma}\pf{\hatxn{i}}{\xn{j}} \\
			&= - \frac1{d^2\sigma^2} (- 1 - \hatxn{i} \hatxn{j}) \hatxn{j} - \frac{\hatxn{i}}{d^2 \sigma^2}(d-1-\hatxn{j}^2) - \frac{\hatxn{j}}{d^2\sigma^2} (-1-\hatxn{i}\hatxn{j}) \\
			&= \frac{1}{d^2\sigma^2} [3\hatxn{i}\hatxn{j}^2 + 2\hatxn{j} - (d-1)\hatxn{i}]\\
			&= \frac{1}{d^2\sigma^2} (3\hatxn{i}\hatxn{j}^2 + 2\hatxn{j} +\hatxn{i}) - \frac{\hatxn{i}}{d\sigma^2}. 
		\end{aligned}
	\end{equation}
	
	Case $4$, $i,j,k$ are different from each other:
	\begin{equation}
		\begin{aligned}
			\pf{^2\hatxn{i}}{\xn{j} \partial \xn{k}} &= -\frac1{d\sigma^2}(-1-\hatxn{i}\hatxn{j})\pf{\sigma}{\xn{k}} - \frac{\hatxn{i}}{d\sigma}\pf{\hatxn{j}}{\xn{k}} - \frac{\hatxn{j}}{d\sigma}\pf{\hatxn{i}}{\xn{k}} \\
			&= - \frac1{d^2\sigma^2} (- 1 - \hatxn{i} \hatxn{j}) \hatxn{k} - \frac{\hatxn{i}}{d^2 \sigma^2}(-1-\hatxn{j}\hatxn{k}) - \frac{\hatxn{j}}{d^2\sigma^2} (-1-\hatxn{i}\hatxn{k}) \\
			&= \frac{1}{d^2\sigma^2}(3\hatxn{i} \hatxn{j} \hatxn{k} + \hatxn{i} + \hatxn{j} + \hatxn{k}). 
		\end{aligned}
	\end{equation}
	
	It is hard to calculate the operator norm of the Hessian matrix is too difficult, so we calculate the Frobenius norm instead. 
	\begin{equation}
		\label{eqn:50}
		\begin{aligned}
			\left\Vert \pf{^2\hatxn{i}}{\rvx^2} \right\Vert_F^2 =& \sum_{j=1}^d \sum_{k=1}^d \left(\pf{^2\hatxn{i}}{\xn{j} \partial \xn{k}}\right)^2 \\
			=& \sum_{j=1}^d \sum_{k=1}^d \frac{1}{d^4\sigma^4}(3\hatxn{i} \hatxn{j} \hatxn{k} + \hatxn{i} + \hatxn{j} + \hatxn{k})^2 
			+ \sum_{j\ne i} \left[\frac{\hatxn{i}^2}{d^2\sigma^4}  -2 \frac{\hatxn{i}}{d^3\sigma^4} (3\hatxn{i}\hatxn{j}^2 + 2\hatxn{j} +\hatxn{i})\right]\\
			&+ 2 \sum_{j\ne i} \left[\frac{\hatxn{j}^2}{d^2\sigma^4} - 2  \frac{\hatxn{j}}{d^3\sigma^4}  (3\hatxn{i}^2\hatxn{j} + 2\hatxn{i} +\hatxn{j})\right]
			+ \frac{9\hatxn{i}^2}{d^2\sigma^4} -2 \frac{3\hatxn{i}}{d^3\sigma^4} (3\hatxn{i}^3 + 3\hatxn{i}) \\
			=& \sum_{j=1}^d \sum_{k=1}^d \frac{1}{d^4\sigma^4}(3\hatxn{i} \hatxn{j} \hatxn{k} + \hatxn{i} + \hatxn{j} + \hatxn{k})^2 
			+ \sum_{j=1}^d \left[\frac{\hatxn{i}^2}{d^2\sigma^4} - 2 \frac{\hatxn{i}}{d^3\sigma^4} (3\hatxn{i}\hatxn{j}^2 + 2\hatxn{j} +\hatxn{i})\right]\\
			&+ 2 \sum_{j=1}^d \left[\frac{\hatxn{j}^2}{d^2\sigma^4} - 2 \frac{\hatxn{j}}{d^3\sigma^4}  (3\hatxn{i}^2\hatxn{j} + 2\hatxn{i} +\hatxn{j})\right]
			+ \frac{6\hatxn{i}^2}{d^2\sigma^4} \\
		\end{aligned}
	\end{equation}
	
	We note that 
	\begin{equation}
		\sum_{j=1}^d \hatxn{j} = 0, \sum_{j=1}^d \hatxn{j}^2 = d. 
	\end{equation}
	We thus have 
	\begin{equation}
		\label{eqn:52}
		\begin{aligned}
			& \sum_{j=1}^d \sum_{k=1}^d \frac{1}{d^4\sigma^4}(3\hatxn{i} \hatxn{j} \hatxn{k} + \hatxn{i} + \hatxn{j} + \hatxn{k})^2 \\
			=& \sum_{j=1}^d \sum_{k=1}^d \frac{1}{d^4\sigma^4}[9\hatxn{i}^2 \hatxn{j}^2 \hatxn{k}^2 + 6\hatxn{i} \hatxn{j} \hatxn{k}(\hatxn{i} + \hatxn{j} + \hatxn{k}) + (\hatxn{i} + \hatxn{j} + \hatxn{k})^2] \\
			=& \frac{9\hatxn{i}^2}{d^2\sigma^4} + 0 + \sum_{j=1}^d \sum_{k=1}^d \frac{1}{d^4\sigma^4} (\hatxn{i} + \hatxn{j} + \hatxn{k})^2\\
			=& \frac{10\hatxn{i}^2}{d^2\sigma^4} + \frac{2}{d^2\sigma^4} ,
		\end{aligned}
	\end{equation}
	
	\begin{equation}
		\label{eqn:53}
		\sum_{j=1}^d \left[\frac{\hatxn{i}^2}{d^2\sigma^4}  -2 \frac{\hatxn{i}}{d^3\sigma^4} (3\hatxn{i}\hatxn{j}^2 + 2\hatxn{j} +\hatxn{i})\right] =  \frac{\hatxn{i}^2}{d\sigma^4} - \frac{8\hatxn{i}^2}{d^2\sigma^4}, 
	\end{equation}
	and
	\begin{equation}
		\label{eqn:54}
		2 \sum_{j=1}^d \left[\frac{\hatxn{j}^2}{d^2\sigma^4} - 2 \frac{\hatxn{j}}{d^3\sigma^4}  (3\hatxn{i}^2\hatxn{j} + 2\hatxn{i} +\hatxn{j})\right] = \frac{2}{d\sigma^4} - \frac{12\hatxn{i}^2}{d^2\sigma^4} - \frac{4}{d^2\sigma^4}. 
	\end{equation}
	
	Take Eqn.\ref{eqn:52}, Eqn.\ref{eqn:53} and Eqn.\ref{eqn:54} into Eqn.\ref{eqn:50}, we obtain
	\begin{equation}
		\left\Vert \pf{^2\hatxn{i}}{\rvx^2} \right\Vert_F^2 = \frac{\hatxn{i}^2+2}{d\sigma^4} - \frac{4\hatxn{i}^2+2}{d^2\sigma^4}
	\end{equation}
	
	Now we add up all the dimensions, as LN's information of the second order
	\begin{equation}
            \label{eqn:138}
		\mathcal{H}(\psi_L(\cdot);\rvx) = \sum_{i=1}^d \left\Vert \pf{^2\hatxn{i}}{\rvx^2} \right\Vert_F^2 = \frac{3}{\sigma^4} - \frac{6}{d\sigma^4} = \frac{3}{d\sigma^4}(d-2). 
	\end{equation}
	When $ d = 2 $, we have $ \hat x_i^2 = 1 $, and $ \mathcal{H}(\psi_L(\cdot);\rvx)|_{d=2} = 0 $ naturally. 

    \end{proof}

    \begin{lemma}
        Given $ \rvx \in \R^d $, let the group number of GN be $g$. Suppose $\sigma_i^2 $ is the variance of the $i$-th group, we have that 
        \begin{equation}
            \mathcal{H}(\psi_G(g;\cdot);\rvx) = \sum_{i=1}^{g} \left( \frac{3}{\sigma_i^4} - \frac{6g}{d \sigma_i^4} \right) 
        \end{equation}
    \end{lemma}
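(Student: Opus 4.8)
The plan is to reduce the computation to the single-LN case settled in the preceding lemma, by exploiting the separable block structure of LN-G. First I would fix notation: with group number $g$, LN-G partitions the coordinate set $\{1,\dots,d\}$ into $g$ consecutive blocks $G_1,\dots,G_g$, each of size $d/g$, and on the $i$-th block it applies ordinary LN on $\R^{d/g}$ to the subvector $\rvx_{G_i}$; denote this per-group output by $\hat\rvx^{(i)} = LN(\rvx_{G_i}) \in \R^{d/g}$, whose group statistics $\mu_i,\sigma_i$ depend only on $\rvx_{G_i}$. Writing $\vy = \psi_G(g;\rvx)$, each output coordinate $y_j$ with $j\in G_i$ equals a coordinate of $\hat\rvx^{(i)}$ and is therefore a function of $\rvx_{G_i}$ alone, so $\partial^2 y_j/(\partial x_k\,\partial x_\ell)=0$ unless $k,\ell\in G_i$. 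Hence each Hessian $\partial^2 y_j/\partial\rvx^2$ is block-diagonal, supported on the $G_i\times G_i$ block, where it coincides with the Hessian of $LN$ on $\R^{d/g}$ at $\rvx_{G_i}$.

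Second, since the Frobenius norm only sees the nonzero block, $\|\partial^2 y_j/\partial\rvx^2\|_F^2$ equals the corresponding quantity computed for $LN$ on $\R^{d/g}$. Summing over $j\in G_i$ is exactly the quantity $\mathcal{H}(\psi_L(\cdot);\cdot)$ for LN of ambient dimension $d/g$ applied to $\rvx_{G_i}$, which by the preceding lemma (with $d$ replaced by $d/g$ and $\sigma$ by $\sigma_i$; cf. Eqn.~\ref{eqn:138}) equals $\frac{3}{\sigma_i^4}-\frac{6}{(d/g)\sigma_i^4}=\frac{3}{\sigma_i^4}-\frac{6g}{d\sigma_i^4}$. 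Finally, using $\mathcal{H}(\psi_G(g;\cdot);\rvx)=\sum_{j=1}^d\|\partial^2 y_j/\partial\rvx^2\|_F^2=\sum_{i=1}^g\sum_{j\in G_i}\|\partial^2 y_j/\partial\rvx^2\|_F^2$ and adding the $g$ per-group contributions gives $\mathcal{H}(\psi_G(g;\cdot);\rvx)=\sum_{i=1}^{g}\left(\frac{3}{\sigma_i^4}-\frac{6g}{d\sigma_i^4}\right)$, as claimed.

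The argument carries no real obstacle: the only point needing care is the vanishing of the cross-group second derivatives, which is immediate from the fact that LN-G never mixes coordinates from distinct groups, after which everything is a bookkeeping reduction to the already-established single-LN formula. I would also remark that the hypothesis $g\le d/3$ in Proposition~\ref{prop:Hessian} ensures $d/g\ge 3$, so every per-group term is the genuine (nondegenerate, nonnegative) LN Hessian value and the degenerate case $d/g=2$ does not occur.
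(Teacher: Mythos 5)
Your proposal is correct and follows essentially the same route as the paper: both exploit the block-diagonal structure of each output coordinate's Hessian (since LN-G never mixes groups) and reduce each group's contribution to the single-LN formula with ambient dimension $c=d/g$ and variance $\sigma_i^2$, then sum over groups. The only cosmetic difference is that you invoke the preceding lemma per group directly, while the paper re-expands the per-coordinate Frobenius norms and uses $\sum_j \hat{z}_{ij}^2=c$ before summing; the content is identical.
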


    \begin{proof}
        We simplify $ \psi_G(g;\cdot) $ as $ \psi(\cdot) $ in the proof here. As for Group Normalization, suppose the number of groups is $g$, and $ d = g \times c $. Let $ \rvx = [ \vz_1^\top, \cdots, \vz_g^\top]^\top $, where $ \vz_i = [ z_{i1}, \cdots, z_{ic} ]^\top , (i=1,\cdots,g) $. Assume $ \rvx = [x_1, \cdots, x_d]^\top $, we denote that $ z_{ij} = x_{(i - 1) \times c + j} $.
    
    Let $ \hat\rvx = GN(\rvx) $, where $ GN(\cdot) $ denotes the Group Normalization operation. GN can be calculated by $ \mu_i = (z_{i1} + \cdots + z_{ic})/c $, $ \sigma_i^2 = [(z_{i1}-\mu_i)^2 + \cdots + (z_{ic} - \mu_i)^2]/c $, and then $ \hat z_{ij} = (z_{ij} - \mu_i)/\sigma_i $. Accordingly, we denote $ \hat\rvx = [ \hat\vz_1^\top, \cdots, \hat\vz_g^\top]^\top $, where $ \hat\vz_i = LN(\vz_i), (i=1,\cdots,g) $. To begin with ,we denote $ GN(\rvx) $ as $\psi(\rvx) = [\psi_{11}(\rvx), \psi_{12}(\rvx), \cdots, \psi_{gc}(\rvx)]$. We thus have
    \begin{equation}
        \nabla_\rvx \psi_{ij}(\rvx) = \mat{ \nabla_{\vz_1} \psi_{ij}(\rvx) \\ \vdots \\ \nabla_{\vz_g}\psi_{ij}(\rvx) }, (i=1,\cdots,g;j=1,\cdots,c). 
    \end{equation}

    We denote that $ z_{ij} = \psi_{ij}(\rvx) $. When $k \ne i$, we have $ \nabla_{\vz_k}\psi_{ij}(\rvx) = \bm0 $. When $k \ne i$, we have $ \nabla_{\vz_i}\psi_{ij}(\rvx) $ is a gradient of LN, for $ [\psi_{i1}(\rvx), \cdots, \psi_{ic}(\rvx)]^\top = LN(\vz_i) $. We can give the Hessian matrix of $ GN $, denoted as
    \begin{equation}
        \nabla_{\rvx}^2\psi_{ij}(\rvx) = \mat{\mO & & \cdots & & \mO \\ 
        & \ddots & & &  \\ 
        \vdots & & \nabla_{\vz_i}^2\psi_{ij}(\rvx) & & \vdots \\
        & & & \ddots & \\
        \mO & & \cdots & & \mO}, (i=1,\cdots,g;j=1,\cdots,c)
    \end{equation}
    By the discussion about LN above, we obtain that 
    \begin{equation}
        \| \nabla_{\vz_i}^2\psi_{ij}(\rvx) \|_F^2 = \frac{\hat z_{ij}^2 + 2}{c\sigma_i^4} - \frac{4\hat z_{ij}^2 + 2}{c^2 \sigma_i^4}.
    \end{equation}
    Obviously, we have $ \| \nabla_{\rvx}^2\psi_{ij}(\rvx) \|_F^2 = \| \nabla_{\vz_i}^2\psi_{ij}(\rvx) \|_F^2 $. Although there are many zeros in $ \nabla_{\rvx}^2\psi_{ij}(\rvx) $, for $ \sum\limits_{j=1}^c \hat x_{ij}^2 = c $, we obtain
    \begin{equation}
        \label{eqn:142}
        \begin{aligned}
            \mathcal{H}(\psi_G(g;\cdot);\rvx) &= 
            \sum_{i=1}^d \left\| \pf{^2\hat x_i}{\rvx^2} \right\|_F^2 \\
            &= \sum_{i=1}^{g} \sum_{j=1}^{c} \| \nabla_{\rvx}^2\psi_{ij}(\rvx) \|_F^2 \\
            &= \sum_{i=1}^{g} \sum_{j=1}^{c} \left( \frac{\hat x_{ij}^2 + 2}{c\sigma_i^4} - \frac{4\hat x_{ij}^2 + 2}{c^2 \sigma_i^4} \right) \\ 
            &= \sum_{i=1}^{g} \left( \frac{3}{\sigma_i^4} - \frac{6}{c \sigma_i^4} \right) 
        \end{aligned}
    \end{equation}
    \end{proof}
    
	


        \begin{lemma}
        \label{lemma:11}
        In group normalization, we have
        \begin{equation}
            \sigma^2 \geq \frac{1}{g}\sum_{i=1}^g\sigma_i^2 .
        \end{equation}        
    \end{lemma}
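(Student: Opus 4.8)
The plan is to prove the inequality by the standard analysis-of-variance (law of total variance) decomposition of the sum of squares. Write $c = d/g$ for the common group size, so that $\rvx = [\vz_1^\top,\dots,\vz_g^\top]^\top$ with $\vz_i = [z_{i1},\dots,z_{ic}]^\top$, group means $\mu_i = \frac1c\sum_{j=1}^c z_{ij}$, and group variances $\sigma_i^2 = \frac1c\sum_{j=1}^c(z_{ij}-\mu_i)^2$. Since all groups have the same size, the global mean satisfies $\mu = \frac1d\sum_{i,j} z_{ij} = \frac1g\sum_{i=1}^g \mu_i$, which is the one fact about uniform grouping I would want to record first.

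Next I would expand the total sum of squares by inserting $\pm\mu_i$ inside each term:
\[
d\sigma^2 = \sum_{i=1}^g\sum_{j=1}^c (z_{ij}-\mu)^2 = \sum_{i=1}^g\sum_{j=1}^c \big[(z_{ij}-\mu_i) + (\mu_i-\mu)\big]^2 .
\]
Expanding the square produces three contributions. The cross term equals $2\sum_{i=1}^g (\mu_i-\mu)\sum_{j=1}^c (z_{ij}-\mu_i)$, and this vanishes because $\sum_{j=1}^c (z_{ij}-\mu_i)=0$ by the definition of $\mu_i$. The first contribution is $\sum_{i=1}^g\sum_{j=1}^c (z_{ij}-\mu_i)^2 = c\sum_{i=1}^g \sigma_i^2$, and the last is $\sum_{i=1}^g\sum_{j=1}^c (\mu_i-\mu)^2 = c\sum_{i=1}^g (\mu_i-\mu)^2$. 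Hence
\[
d\sigma^2 = c\sum_{i=1}^g \sigma_i^2 + c\sum_{i=1}^g (\mu_i-\mu)^2 .
\]
Dividing by $d = gc$ and discarding the nonnegative between-group term $\tfrac1g\sum_{i=1}^g(\mu_i-\mu)^2 \ge 0$ gives $\sigma^2 \ge \tfrac1g\sum_{i=1}^g \sigma_i^2$, with equality precisely when all the group means coincide.

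There is essentially no hard step here: the only things to verify are the vanishing of the cross term (immediate from the definition of the group mean) and the nonnegativity of $\sum_{i=1}^g(\mu_i-\mu)^2$. An alternative route would invoke convexity of $t\mapsto t^2$ (Jensen's inequality applied to the group means) to bound $\tfrac1g\sum_i\mu_i^2 \ge \mu^2$ and then rearrange, but the direct decomposition above is cleaner and additionally yields the exact equality condition, which may be convenient when this lemma is used in Appendix~\ref{section:proofofhessian}.
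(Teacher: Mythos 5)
Your proof is correct and is essentially the paper's argument: both reduce the claim to the identity $\sigma^2 - \frac{1}{g}\sum_{i=1}^g \sigma_i^2 = \frac{1}{g}\sum_{i=1}^g(\mu_i-\mu)^2 \ge 0$, relying on the equal group sizes so that $\mu = \frac{1}{g}\sum_i \mu_i$. The paper gets there by expanding both sums of squares and cancelling the $\sum z_{ij}^2$ terms rather than by inserting $\pm\mu_i$ and killing the cross term, but this is only a cosmetic difference in the algebra.
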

    \begin{proof}
    According to the definition, we have
    \begin{equation}
        \begin{aligned}
             \sigma^2 - \frac{1}{g} \sum_{i=1}^{g}\sigma_i^2 &= \frac{1}{cg} \sum_{i=1}^g \sum_{j=1}^{c} (z_{ij} - \mu)^2 - \frac{1}{cg} \sum_{i=1}^g \sum_{j=1}^{c} (z_{ij} - \mu_i)^2\\
             &= \frac{1}{cg} \sum_{i=1}^g \sum_{j=1}^{c} (z_{ij}^2 - \mu^2) - \frac{1}{cg} \sum_{i=1}^g \sum_{j=1}^{c} (z_{ij}^2 - \mu_i^2)\\
            &= \frac1{g} \sum_{i=1}^g \mu_i^2 - \mu^2.
        \end{aligned}
    \end{equation}
    Since $ c(\mu_1 + \cdots + \mu_g) = cg\mu $, we have
    \begin{equation}
             \sigma^2 - \frac{1}{g} \sum_{i=1}^{g}\sigma_i^2 = \frac1{g} \sum_{i=1}^g \mu_i^2 - \mu^2 = \frac{1}{g} \sum_{i=1}^g (\mu_i - \mu)^2 \ge 0.
    \end{equation}
    \end{proof}

    \begin{lemma}
        \label{lemma:12}
        $f(x)=\frac{1}{x^2}$ is a monotonically decreasing and convex function on $x > 0$.
    \end{lemma}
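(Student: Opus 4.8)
The plan is to verify both claims directly from the first and second derivatives of $f$ on the open half-line $x>0$, where $f$ is smooth. First I would compute $f'(x) = -2x^{-3}$; since $x>0$ forces $x^{3}>0$, we get $f'(x) < 0$ for all $x>0$, which gives strict monotonic decrease on $(0,\infty)$. Next I would compute $f''(x) = 6x^{-4}$; again $x>0$ forces $x^{4}>0$, so $f''(x) > 0$ for all $x>0$, and convexity on $(0,\infty)$ follows from the standard second-derivative characterization of convex functions.

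Alternatively, if one prefers to avoid invoking the second-derivative test, convexity can be obtained from the fact that $f'(x) = -2x^{-3}$ is itself monotonically increasing on $(0,\infty)$ (because $x \mapsto x^{-3}$ is decreasing there, so its negative is increasing), and a differentiable function with monotonically increasing derivative is convex. Either route is immediate.

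I do not expect any real obstacle here; the statement is elementary. The only point deserving care is to keep the domain restriction $x>0$ explicit throughout, so that $x^{3}$ and $x^{4}$ have determined signs and the expressions $x^{-3}$, $x^{-4}$ are well defined — this is exactly the regime in which the lemma will be applied later (to variances $\sigma^2, \sigma_i^2 > 0$ in the proof of Proposition~\ref{prop:Hessian}), so the restriction costs nothing.
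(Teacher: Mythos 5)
Your proposal is correct and matches the paper's own proof essentially verbatim: both compute $f'(x)=-2/x^{3}<0$ for monotone decrease and $f''(x)=6/x^{4}>0$ for convexity on $x>0$. The alternative route via the increasing derivative and the remark about the domain restriction are fine but add nothing beyond the paper's argument.
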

    \begin{proof}
        For $f(x)=\frac{1}{x^2}$, we have $f'(x)=-\frac{2}{x^3}<0$, namely, $f(x)$ is monotonically decreasing. Furthermore, we have $f''(x)=\frac{6}{x^4}>0$, namely, $f(x)$ is a convex function.
    \end{proof}

    \begin{lemma}
        \label{lemma:13} Given that $ \sigma_1^2,\cdots,\sigma_g^2 $ and $ \sigma^2 $ are variances in LN-G and LN respectively, we have 
        \begin{equation}
            \sum_{i=1}^g \frac{1}{\sigma_i^4} \ge \frac{g}{\sigma^4}.
        \end{equation}
    \end{lemma}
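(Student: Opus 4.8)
The plan is to chain Jensen's inequality (using the convexity half of Lemma~\ref{lemma:12}) with Lemma~\ref{lemma:11} (using the monotonicity half of Lemma~\ref{lemma:12}). No genuinely new ingredient is needed; everything required is already available.

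First I would dispose of the degenerate case: if some group has $\sigma_i^2 = 0$, then the left-hand side $\sum_i 1/\sigma_i^4$ is infinite (or the quantity is simply not well-defined, in which case the statement is vacuous), so the inequality holds trivially. Thus I may assume $\sigma_i^2 > 0$ for all $i$, and also $\sigma^2 > 0$ (which follows from $\sigma^2 \ge \frac1g\sum_i \sigma_i^2 > 0$ by Lemma~\ref{lemma:11}), so that $f(x) = 1/x^2$ is defined and smooth at all the points involved.

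Next, since $f(x) = 1/x^2$ is convex on $x > 0$ by Lemma~\ref{lemma:12}, Jensen's inequality applied with the uniform weights $1/g$ at the points $\sigma_1^2, \ldots, \sigma_g^2$ gives
\begin{equation}
\frac1g \sum_{i=1}^g \frac{1}{\sigma_i^4} = \frac1g \sum_{i=1}^g f(\sigma_i^2) \ge f\!\left( \frac1g \sum_{i=1}^g \sigma_i^2 \right) = \frac{1}{\left( \frac1g \sum_{i=1}^g \sigma_i^2 \right)^2}.
\end{equation}
Then, since $f$ is monotonically decreasing on $x > 0$ (again Lemma~\ref{lemma:12}) and $\frac1g \sum_{i=1}^g \sigma_i^2 \le \sigma^2$ by Lemma~\ref{lemma:11}, we have $f\!\left( \frac1g \sum_{i=1}^g \sigma_i^2 \right) \ge f(\sigma^2) = \frac{1}{\sigma^4}$. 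Combining the two displays yields $\frac1g \sum_{i=1}^g \frac{1}{\sigma_i^4} \ge \frac{1}{\sigma^4}$, and multiplying by $g$ gives the claim $\sum_{i=1}^g \frac{1}{\sigma_i^4} \ge \frac{g}{\sigma^4}$.

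The only point requiring any care is ensuring the two inequalities are composed in the correct direction: Jensen produces a lower bound in terms of $1/(\text{mean of }\sigma_i^2)^2$, and then we need monotonic decrease of $f$ so that replacing the mean $\frac1g\sum_i\sigma_i^2$ by the \emph{larger} quantity $\sigma^2$ (Lemma~\ref{lemma:11}) still gives a lower bound. Both facts are exactly what Lemma~\ref{lemma:12} supplies, so there is no real obstacle.
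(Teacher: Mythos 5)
Your proposal is correct and follows essentially the same route as the paper: Jensen's inequality via the convexity of $f(x)=1/x^2$ (Lemma~\ref{lemma:12}), followed by the monotonic decrease of $f$ together with Lemma~\ref{lemma:11} to replace the group-mean of the $\sigma_i^2$ by the larger $\sigma^2$. Your explicit handling of the degenerate case $\sigma_i^2=0$ is a small addition the paper omits, but the argument is otherwise identical.
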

    \begin{proof}
        According to Lemma \ref{lemma:12}, we have $f(x)=\frac{1}{x^2}$ is a convex function.  By Jensen's inequality, 
        we obtain
        \begin{equation}
            \sum_{i=1}^g\frac{1}{g} f(\sigma_k^2) \ge f\left(\frac{1}{g}\sum_{i=1}^g\sigma_i^2\right)
        \end{equation}
        According to Lemma \ref{lemma:11} and Lemma \ref{lemma:12}, we have
        \begin{equation}
            \sum_{i=1}^g\frac{1}{g} f(\sigma_i^2)\ge f(\sigma^2),  
        \end{equation}
        namely 
        \begin{equation}
            \frac{1}{g}\sum_{i=1}^g \frac{1}{\sigma_i^4} \ge \frac{1}{\sigma^4}. 
        \end{equation}
    \end{proof}
    \subsection{Proof of Proposition \ref{prop:Hessian}}
    \begin{proof}
    To prove
        \begin{equation}
            \frac{\mathcal{H}( \psi_G(g;\cdot); \rvx )}{\mathcal{H}( \psi_L(\cdot); \rvx )} \ge 1, 
        \end{equation}
    we can prove \Eqn\ref{eqn:186} instead: 
        \begin{equation}
            \label{eqn:186}
            \mathcal{H}( \psi_G(g;\cdot); \rvx )-\mathcal{H}( \psi_L(\cdot); \rvx ) \ge 0. 
        \end{equation}
        According to Eqn.\ref{eqn:138}, Eqn.\ref{eqn:142} and Lemma \ref{lemma:13}, we obtain
        \begin{equation}
            \label{eqn:187}
            \begin{aligned}
                \mathcal{H}( \psi_G(g;\cdot); \rvx )-\mathcal{H}( \psi_L(\cdot); \rvx ) &= \sum_{i=1}^g\left(\frac{3}{\sigma_i^4} - \frac{6}{c\sigma_i^4} \right) - \frac{3}{d\sigma^4}(d-2) \\
                &= 3\left(1-\frac{2}{c}\right)\sum_{i=1}^g\frac{1}{\sigma_i^4} - \frac{3}{\sigma^4}\left(1-\frac{2}{d}\right) \\
                &\ge \frac{3}{\sigma^4}\left(g-\frac{2g}{c} -1 + \frac{2}{d}\right) \\
                &= \frac{3}{d\sigma^4} (d-2g-2)(g-1).
            \end{aligned}
        \end{equation}
        
        When $ g \ge 2 $, we have $ d \ge 6 $. Therefore, we obtain 
        \begin{equation}
            d - 2g -2 = d - \frac{2d}{c} -2 \ge \frac 13d - 2 \ge 0
        \end{equation}

        According to \Eqn\ref{eqn:187}, we give the necessary condition for equality in
        \Eqn\ref{eqn:186}. One of the cases is $g=1$ obviously. The other case is $d=2g+2$ --- but we note that $g|d$, we hence have $g|2$. Namely $g=2,d=6$ is the only other case for equality. 
        
        Therefore, we have proved
        \begin{equation}
            \frac{\mathcal{H}( \psi_G(g;\cdot); \rvx )}{\mathcal{H}( \psi_L(\cdot); \rvx )} \ge 1. 
        \end{equation}
    As for the case $ g=d/4 $, we have that 
        \begin{equation}
            \begin{aligned}
                \mathcal{H}( \psi_G(g;\cdot); \rvx ) &\ge \frac{3}{\sigma^4}\left(g-\frac{2g}{c} \right) \\
                & = \frac{3}{\sigma^4}\left(g-\frac{2g^2}{d} \right) \\
                &= \frac{6}{d \sigma^4}\left(-g^2 +\frac{d}2 g \right) \\
                &= \frac{6}{d \sigma^4}\left( \frac{d^2}{16} - (g - \frac d4)^2 \right).
            \end{aligned}
        \end{equation}
        When $ g = d/4 $, the right term reaches its maximum, where we have 
        \begin{equation}
            \mathcal{H}( \psi_G(g;\cdot); \rvx ) \ge \frac{3d}{8\sigma^4}. 
        \end{equation}
        On the other hand, we have that
        \begin{equation}
            \mathcal{H}(\psi_L(\cdot);\rvx) = \frac{3}{\sigma^4} - \frac{6}{d\sigma^4} \le \frac{3}{\sigma^4}. 
        \end{equation}
        As a result, we obtain
        \begin{equation}
            \frac{\mathcal{H}( \psi_G(g;\cdot); \rvx )}{\mathcal{H}( \psi_L(\cdot); \rvx )} \ge \frac{d}{8} . 
        \end{equation}
    \end{proof}

    \subsection{$\mathcal{H}$ about ReLU}

    \REVISE{
    We conduct additional analyses to compare the nonlinearity of ReLU and LN during the phase of rebuttal. ReLU is defined as $\max(0,x)$, which is not differentiable strictly. To compare ReLU with LN, we consider to introduce the Dirac function $\delta(x)$ as ReLU's second-order derivative, namely $\nabla^2 ReLU(x)=\delta(x)$. We know that $\int_I\delta(x)dx=1$ and $\int_I f(x)\delta(x)dx=f(0)$. To apply the integral, we introduce the expectation, and assume $\rvx\sim N(0,\pmb I)$ is $d$-dimensional. Since we do not know how to calculate $\int_I f(x)\delta^2(x)dx$, we remove the square sign in $\mathcal{H}$. Specifically, we define $\bar{\mathcal{H}}(f;\rvx)$ as }
    \begin{equation}
        \bar{\mathcal{H}}(f;\rvx)=\sum_{i=1}^d\mathbb{E}_{\rvx}\left\Vert\frac{\partial^2 y_i}{\partial \rvx^2}\right\Vert_F,
    \end{equation}

    \REVISE{like Eqn.20 in our paper, and $y_i$ is defined similarly.}
    
    \REVISE{Based on the assumptions above, we have that}
    
    \begin{equation}
        \bar{\mathcal{H}}(relu(\cdot);\rvx) =\frac{d}{\sqrt{2\pi}}=O(d),
    \end{equation}
    
    \REVISE{and}
    
    \begin{equation}
        \bar{\mathcal{H}}(\psi_{L}(\cdot);\rvx) =\sum_{i=1}^d\mathbb{E}_{\rvx}\frac{1}{d\sigma^2}\sqrt{d(\hat x_i^2+2)-(4\hat x_i^2+2)}=O(\sqrt {d}).
    \end{equation}
    
    \REVISE{Furthermore, we have}
    
    \begin{equation}
        \bar{\mathcal{H}}(\psi_G(g;\cdot);\rvx) =g\cdot O(\sqrt c)=O(\sqrt{dg}).
    \end{equation}
    
    \REVISE{Note that we removed the square sign in $\mathcal{H}$, and there is a square root sign in $\bar{\mathcal{H}}$.}
    
    \REVISE{We hope the analysis above can help compare ReLU with LN, to some extent.}

    \section{Experiments}
    \subsection{Details of Experiments on Comparison of Representation Capacity by Fitting Random Labels.}
    \label{section:experiments}
    In this section, we provide the details of experimental setup in comparing the representation capacity by fitting random labels, as stated in Section~\ref{sec:Experiments}.

    \subsubsection{Dataset with Random Labels} 
    We conduct the random label datasets based on CIFAR-10 and MNIST, referred to as CIFAR-10-RL and MNIST-RL. In particular, for each sample of these datasets, we randomly assign a class label to this sample and save all the samples as a dataset. Even though the labels are random, the label assignment is certain once the dataset is conducted. Therefore, it is meaningful to compare the results of different methods by fitting random labels.


    \paragraph{MNIST-RL is more challenging.}
    Here, we highlight that MNIST-RL is more challenging in training a classifier for fitting the labels, compare to CIFAR-10-RL.
    Let $X_c$ represents examples belong to class $c$. It is clear that the features in $X_c$ are very close for the normal MNIST dataset. For example, all the digits of "0" are very similar in representation, they all have rounded curves. However, if we use the random label (the MNIST-RL dataset), the samples in $X_c$ will have different labels. In this case, the network will need to map $X_c$ --- which is very close in representation --- to different labels. As a result, we need more powerful model to fit MNIST-RL and is more difficult to train.


    \begin{figure*}[b]
        \vspace{-0.1in}
        \centering
        \hspace{0.05in}	
            \subfigure[]{
            \label{fig:5a}
            \includegraphics[height=4.5cm]{./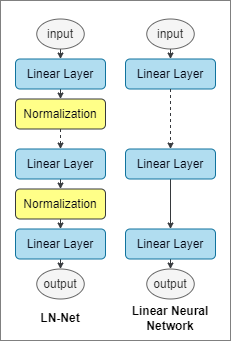}
        }
        \centering
        \hspace{0.05in}	
            \subfigure[]{
            \centering
            \label{fig:5b}
            \includegraphics[height=4.5cm]{./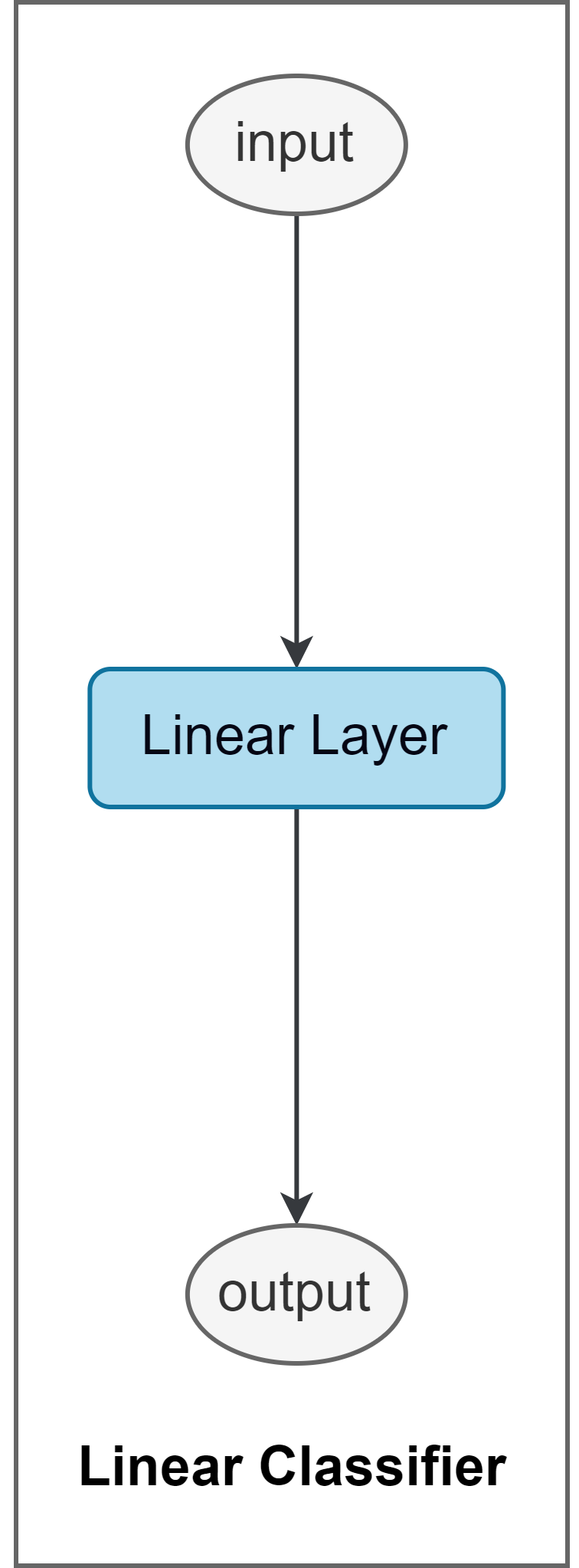}
        }
        \centering
        \hspace{0.05in}	
            \subfigure[]{
            \centering
            \label{fig:5c}
            \includegraphics[height=4.5cm]{./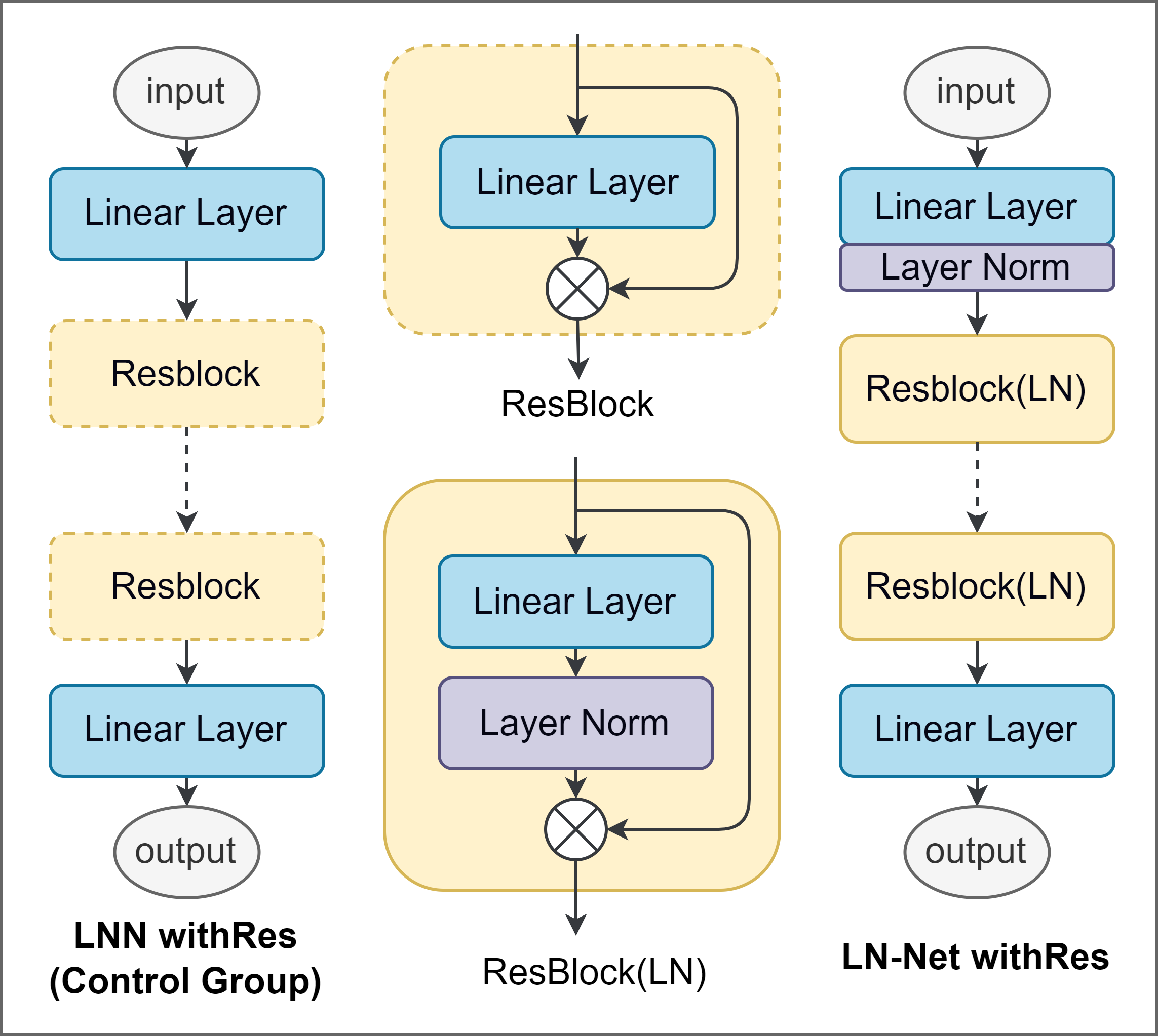}
        }
            \centering
        \hspace{0.05in}	
            \subfigure[]{
            \centering
            \label{fig:5d}
            \includegraphics[height=4.5cm]{./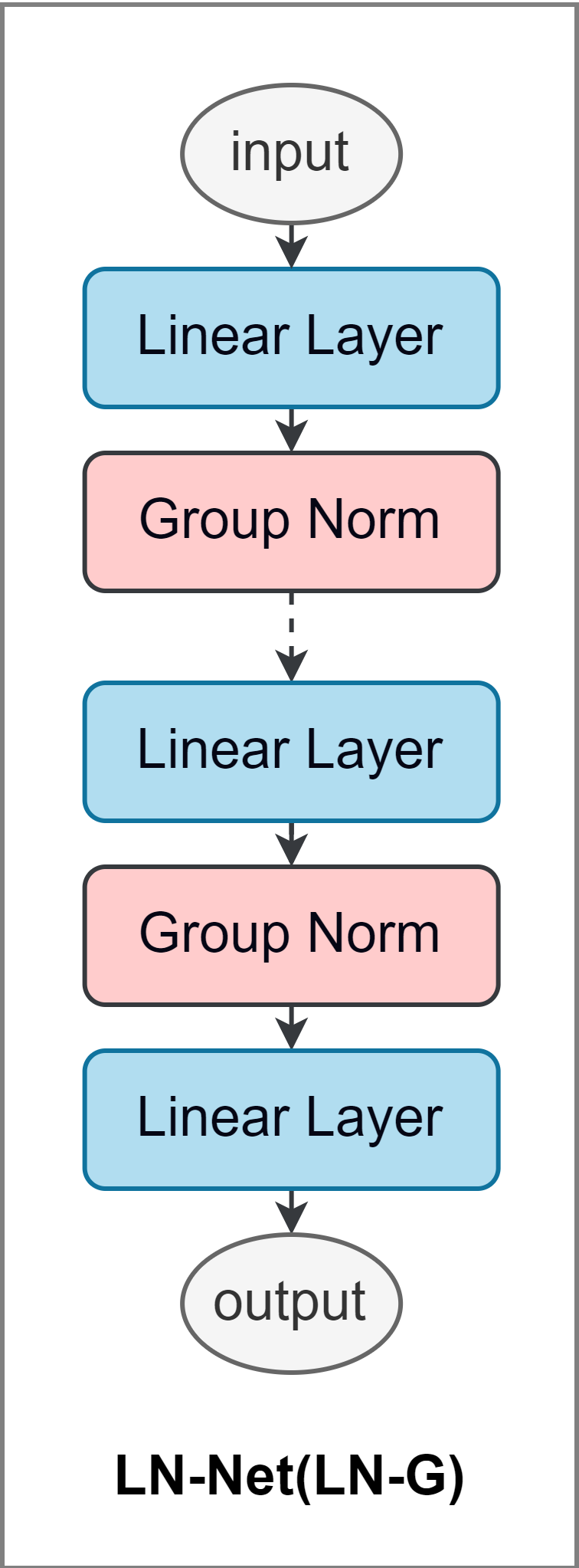}
        }
            \centering
        \vspace{-0.13in}
        \caption{Schematic representation of the networks used in the experiment. (a) Original LN-Net and Linear Neural Network (LNN). (b) Linear classifier. (c) LN-Net and LNN using residual connections. (d) LN-Net using LN-G.}
        \label{fig:5}
        \vspace{-0.12in}
    \end{figure*}

    \subsubsection{Details on Verifying Nonlinearity of LN}
    In this part, we use various  configurations of hyper-parameters to train our models, aiming at reducing the effect from the optimization.  
    We first sufficiently train a linear classifier (Figure~\ref{fig:5} (b)), as the baseline, which provides the (nearly) upper bound accuracy of linear classifier.  We then compare the results under linear neural network and LN-Net with residential structure for better optimization as shown in Figure~\ref{fig:5} (c). We vary the depths ranging in $\{2, 4, 6, 8, 10, 12  14\}$, and each hidden layer has a dimension of 256.
    
  \paragraph{Training protocols.}
	For the training of liner classifier, we apply both SGD optimizer with momentum (0.1) and Adam optimizer with betas $(0.9, 0.999)$. We train the model for 150 epochs and use a learning rate schedule with a decay 0.5 per 20 epochs. We search the batch sizes ranging in $\{128, 256\}$, the initial learning rates ranging in $\{0.001, 0.003, 0.005, 0.008, 0.05, 0.08, 0.1, 0.15\}$ and 5 random seeds,  and report the best accuracy from these configurations of hyper-parameters. 
    For the training of linear neural networks and LN-Nets, we follow the settings in training linear classifier, except that: 1) we use a batch size of 128 and a fixed random seed; 2) we search the initial learning rates ranging in $\{0.01, 0.03, 0.05, 0.08, 0.1\}$ for SGD and the initial learning rates ranging in  $\{0.001, 0.003, 0.005, 0.008, 0.05, 0.08, 0.1, 0.15 \}$ for Adam. 

    \paragraph{Results.}
    In Figure~\ref{fig:Res-CIFAR10} of the main paper, we show the best result of linear classifier as black dashed line, which is $18.51\%$ on CIFAR-10-RL and $15.38\%$ on MNIST-RL. We also provide the detailed results for linear neural network and LN-Net, shown in Table~\ref{table:1}.

   \begin{table*}[h]
       \centering
          \caption{The result of linear neural network and LN-Net model on classification task on CIFAR-10-RL and MNIST-RL. The bold numbers refer to those outperform linear classifier. We can see layer normalization breaks the bound of linearity.}
                 \label{table:1}
           \begin{tabular}{lcccc}
           \hline
           dataset & \multicolumn{2}{c}{RL-CIFAR-10}  & \multicolumn{2}{c}{RL-MNIST}     \\ \hline
           depth   & Linear+Res & LN+Res           & Linear+Res & LN+Res           \\ \hline
           2       & 17.37\%    & \textbf{20.45\%} & 14.71\%    & 14.45\%          \\
           4       & 17.00\%    & \textbf{27.97\%} & 14.54\%    & 15.26\%          \\
           6       & 16.97\%    & \textbf{39.24\%} & 14.29\%    & 15.28\%          \\
           8       & 17.02\%    & \textbf{39.39\%} & 14.32\%    & \textbf{15.76\%} \\
           10      & 16.98\%    & \textbf{31.12\%} & 13.89\%    & \textbf{18.26\%} \\
           12      & 16.91\%    & \textbf{50.48\%} & 13.35\%    & \textbf{18.47\%} \\
           14      & 15.19\%    & \textbf{55.58\%} & 13.98\%    & \textbf{19.44\%} \\ \hline
           best    & 17.37\%    & \textbf{55.58\%} & 14.71\%    & \textbf{19.44\%}\\ \hline
           \end{tabular}
       \vspace{-0.15in}
   \end{table*}   
    

    \subsubsection{Details on Amplifying the Nonlinearity using Group}
     We use the origin LN-Net and replace LN with LN-G, as shown in Figure~\ref{fig:5} (d). For the configuration of networks, we fix the number of neurons as 256 and vary the depths ranging in $ \{1, 2, 4, 6, 8, 10, 12, 14\}$. We vary the group numbers of LN-G ranging in $\{2, 4, 8, 16, 32, 64, 128\}$. 
    \paragraph{Training protocols.}
    We use the same training protocols as the experiment above, except that we only use SGD optimizer with fixed momentum of 0.1 and search the initial learning rate ranging in $ \{0.01, 0.03, 0.05, 0.1\}$. 
    
    \paragraph{Results.}
       We provide the detailed results of CIFAR-10-RL in Table~\ref{table:GN_in_CIFAR} and MNIST-RL in Table~\ref{table:GN_in_MNIST} for linear neural network and LN-Net.
       %
    

    \begin{table*}[h]
        \centering
            \caption{The performance of LN-Net with LN-G on CIFAR-10-RL. The rows of the table represent the model depth and the columns represent the group number of LN-G in the model. The percentage is the best accuracy of model under such setting. The bold number refers to the best accuracy among all group numbers under such depth.}
                    \label{table:GN_in_CIFAR}
        \begin{tabular}{lcccccccc}
        \hline
        CIFAR & 1                & 2       & 4       & 6        & 8        & 10       & 12       & 14       \\ \hline
        2        & 20.51\%          & 29.09\% & 52.17\% & 60.70\%  & 67.21\%  & 71.45\%  & 74.10\%  & 68.53\%  \\
        4        & 26.63\%          & 45.19\% & 72.41\% & 84.08\%  & 91.36\%  & 94.02\%  & 95.76\%  & 96.76\%  \\
        8        & 35.02\%          & 60.65\% & 91.74\% & 98.57\%  & 99.72\%  & 99.94\%  & 99.99\%  & 99.96\%  \\
        16       & 46.42\%          & 79.71\% & 99.58\% & 99.99\%  & \textbf{100.00\%} & \textbf{100.00\%} & \textbf{100.00\%} & \textbf{100.00\%} \\
        32 & 59.89\% & \textbf{93.67\%} & \textbf{99.96\%} & \textbf{100.00\%} & \textbf{100.00\%} & \textbf{100.00\%} & \textbf{100.00\%} & \textbf{100.00\%} \\
        64       & \textbf{69.40\%} & 91.62\% & 99.44\% & 99.66\%  & 96.58\%  & 88.20\%  & 77.22\%  & 44.48\%  \\
        128      & 26.48\%          & 14.66\% & 12.28\% & 10.38\%  & 10.23\%  & 10.26\%  & 10.37\%  & 10.22\%  \\ \hline
        best     & 69.40\%          & 93.67\% & 99.96\% & 100.00\% & 100.00\% & 100.00\% & 100.00\% & 100.00\% \\ \hline
        \end{tabular}
    \end{table*}

    \begin{table*}[h]

        \centering
        \caption{The performance of LN-Net with LN-G on MNIST-RL. The rows of the table represent the model depth and the columns represent the group number of LN-G in the model. The percentage is the best accuracy of model under such setting. The bold number refers to the best accuracy among all group numbers under such depth.}
               \label{table:GN_in_MNIST}
        \begin{tabular}{lcccccccc}
        \hline
        MNIST & 1       & 2       & 4       & 6       & 8       & 10      & 12      & 14      \\ \hline
        2        & 14.53\% & 18.25\% & 26.83\% & 27.76\% & 27.96\% & 27.56\% & 30.39\% & 30.81\% \\
        4        & 14.77\% & 20.98\% & 33.35\% & 40.67\% & 50.00\% & 53.52\% & 57.44\% & 58.78\% \\
        8        & 15.61\% & 25.38\% & 46.48\% & 64.51\% & 74.91\% & 81.34\% & 85.98\% & 89.97\% \\
        16       & 19.13\% & 32.43\% & 66.59\% & 86.20\% & 92.16\% & 94.03\% & 95.32\% & 95.25\% \\
        32 & 24.92\%          & 47.08\%          & \textbf{82.34\%} & \textbf{92.40\%} & \textbf{94.47\%} & \textbf{95.56\%} & \textbf{95.68\%} & \textbf{95.96\%} \\
        64 & \textbf{33.95\%} & \textbf{54.00\%} & 70.61\%          & 68.63\%          & 56.89\%          & 42.89\%          & 13.43\%          & 10.21\%          \\
        128      & 10.22\% & 10.17\% & 10.16\% & 10.22\% & 10.30\% & 10.25\% & 10.32\% & 10.31\% \\ \hline
        best     & 33.95\% & 54.00\% & 82.34\% & 92.40\% & 94.47\% & 95.56\% & 95.68\% & 95.96\% \\ \hline
        \end{tabular}
    \end{table*}

   \subsection{More Results of CNN without Activation Functions}
   \label{section:CNN-extension}
   As stated in Section~\ref{sec:CNN_EX}, we conduct more experiments on different networks, including the results on VGGs, and the 20-layer ResNet with the original configuration of channel number.
    
 \paragraph{Results on VGGs.}
 Following the experimental setup shown in Section ~\ref{sec:CNN_EX}, we also conduct experiments on CIFAR-10 classification using different normalization methods in the VGG-style networks (the network architecture used is ResNet-20, but with the residual connections removed.) with ReLU activation removed, where the group number $g$ ranging in \{2, 4, 8, 16, 32, 64\}. The experimental results of different normalization methods are shown in the Table ~\ref{table:vggs-na}. The results of different groups of GN and LN-G-Position are shown in the Figure ~\ref{fig:vgg}. We have the similar observations as in the ResNet-20 shown in the main paper. 
    \begin{figure}[h!]
        \begin{minipage}[c]{.43\linewidth}
            \subfigure[Training.]{
                \begin{minipage}[c]{.43\linewidth}
                    \centering
                    \includegraphics[width=4cm]{./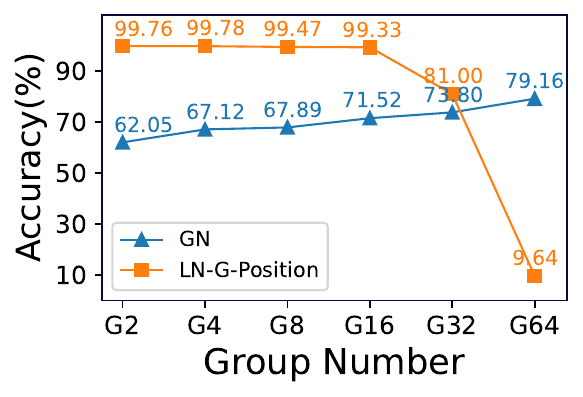}
                \end{minipage}
            }
            \hspace{0.15in}		
            \subfigure[Test.]{
                \begin{minipage}[c]{.43\linewidth}
                    \centering
                    \includegraphics[width=4cm]{./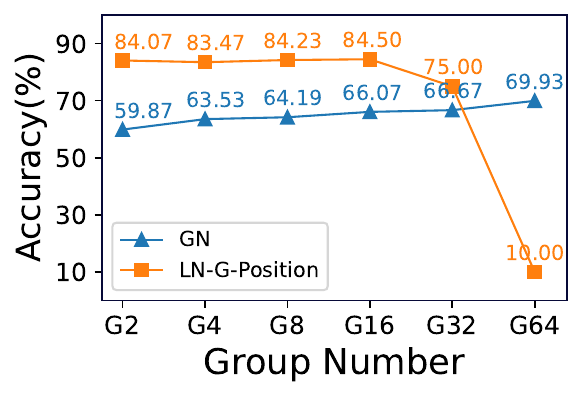}
                \end{minipage}
            }	
             \vspace{-0.12in}
            \caption{Results of the variants of LN-G (GN and LN-G-Position)  when using different group number. The experiments are conducted on CIFAR-10 dataset using a 20-layer VGG-style network without ReLU activation. We show (a) the training accuracy and (b) the test accuracy. In the x-axis, G2 refers to a group number of 2.}
            \label{fig:vgg}
        \end{minipage}
        \hspace{0.45in}
        \begin{minipage}[c]{.43\linewidth}
            \vspace{-0.25in}
            \subfigure[Training.]{
                \begin{minipage}[c]{.43\linewidth}
                    \centering
                    \includegraphics[width=4cm]{./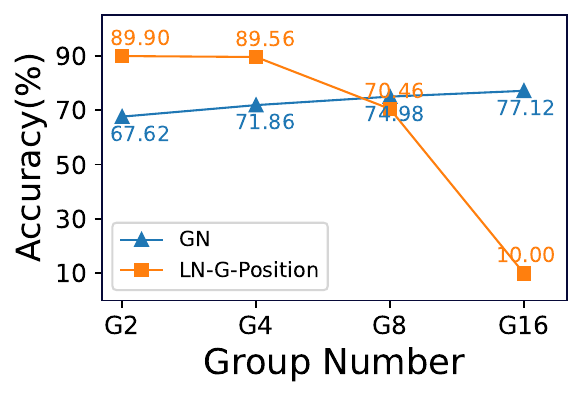}
                \end{minipage}
            }
            \hspace{0.15in}	
            \subfigure[Test.]{
                \begin{minipage}[c]{.43\linewidth}
                    \centering
                    \includegraphics[width=4cm]{./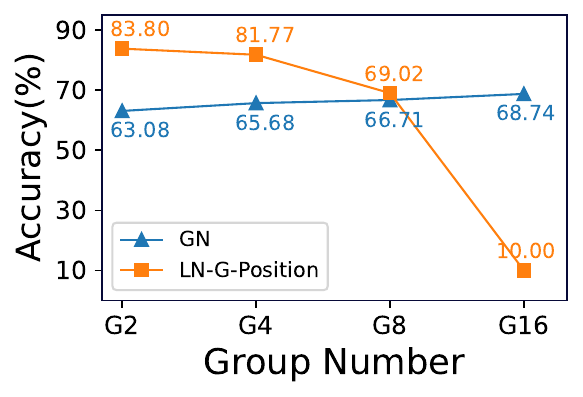}
                \end{minipage}
            }	
             \vspace{-0.12in}
            \caption{Results of the variants of LN-G (GN and LN-G-Position)  when using different group number. The experiments are conducted on CIFAR-10 dataset using ResNet-20-Original without ReLU activation. We show (a) the training accuracy and (b) the test accuracy.
            }
            \label{fig:resnet_ori}
        \end{minipage}
    \end{figure}
    \begin{table}[h!]
    	    \vspace{-0.12in}
        \centering
        \caption{Comparison of different normalization methods on CIFAR-10 using VGGs-NA (VGGs without ReLU activation).}
                \label{table:vggs-na}
        \begin{tabular}{c|c|c}
          \hline
          Normalization methods   & Train Acc(\%) & Test  Acc(\%) \\ \hline
          IN  & 9.76  & 10  \\
          BN  & 39.41 & 39.52 \\
          LN  & 51.51 & 51.06 \\ 
          GN  & 79.16 & 69.93 \\
          LN-G-Position & 99.33 & 84.5 \\ \hline
        \end{tabular}
    \end{table}
 \paragraph{Results on original ResNet-20}
  Following the experimental setup shown in Section ~\ref{sec:CNN_EX}, We also conduct experiments on the original ResNet-20-NA (with ReLU removed), where the group number $g$ ranging in \{2, 4, 8, 16\}. The experimental results of different normalization methods are shown in the Table~\ref{table:resnet20-ori}. The results of different groups of GN and LN-G-Position are shown in the Figure~\ref{fig:resnet_ori}. We have also the similar observations as in the ResNet-20 shown in the main paper. 
     \begin{table}[h!]
        \centering
        \caption{Comparison of different normalization methods on CIFAR-10 using ResNet-20-original-NA (the ResNet-20 using original configuration of channel numbers without ReLU activation).}
                \label{table:resnet20-ori}
        \begin{tabular}{c|c|c}
          \hline
          Normalization methods   & Train Acc(\%) & Test  Acc(\%) \\ \hline
          IN  & 10  & 10  \\
          BN  & 36.16 & 39.34 \\
          LN  & 61.12 & 58.69 \\ 
          GN  & 77.12 & 68.74 \\
          LN-G-Position & 89.9 & 83.8 \\ \hline
        \end{tabular}
    \end{table}

\end{document}